\newcommand{\notes}{0} 
\newcommand{\mb}[1]{{\color{blue} MB:#1}}
\newcommand{\mg}[1]{{\color{magenta} MG:#1}}
\newcommand{\sstext}[1]{{\color{red} SS:#1}}
\newcommand{\mh}[1]{{\color{purple} MH:#1}}
\newcommand{\jess}[1]{\textcolor{teal}{[JS: #1]}}
\newcommand{\maxh}[1]{\textcolor{red}{[MH: #1]}}
\newcommand{\rexnote}[1]{\textcolor{cyan}{[RL: #1]}}
\newcommand{\toni}[1]{\textcolor{purple}{[TP: #1]}}
\newcommand{\chris}[1]{\textcolor{purple}{[CY: #1]}}
\newcommand{\mb}[1]{}
\newcommand{\mg}[1]{}
\newcommand{\sstext}[1]{}
\newcommand{\mh}[1]{}
\newcommand{\jess}[1]{}
\newcommand{\maxh}[1]{}
\newcommand{\rexnote}[1]{}
\newcommand{\toni}[1]{}
\newcommand{\chris}[1]{}
\newcommand{\mbnote}[1]{{\color{blue}\footnote{{\color{blue} {\bf MB:} #1}}}}
\newcommand{\mgnote}[1]{{\color{magenta}\footnote{{\color{magenta} {\bf MG:} #1}}}}
\newcommand{\ssnote}[1]{{\color{red}\footnote{{\color{red} {\bf SS:} #1}}}}
\newcommand{\mhnote}[1]{{\color{purple}\footnote{{\color{purple} {\bf MH:} #1}}}}
\newcommand{\mbnote}[1]{}
\newcommand{\mgnote}[1]{}
\newcommand{\mhnote}[1]{}
\newcommand{\ssnote}[1]{}
\newtheorem{theorem}{Theorem}[section]
\newtheorem{corollary}[theorem]{Corollary}
\newtheorem{proposition}[theorem]{Proposition}
\newtheorem{lemma}[theorem]{Lemma}
\newtheorem{definition}[theorem]{Definition}
\newtheorem{observation}[theorem]{Observation}
\newtheorem{claim}[theorem]{Claim}
\DeclareMathOperator{\sign}{sign}
\DeclareMathOperator{\poly}{poly}
\newcommand{\supp}{\mathtt{supp}}
\newcommand{\Z}{\mathbb{Z}}
\newcommand{\N}{\mathbb{N}}
\newcommand{\dtv}{d_{\mathrm TV}}
\renewcommand{\Pr}{\mathbf{Pr}}
\newcommand{\Acal}{\mathcal{A}}
\newcommand{\Bcal}{\mathcal{B}}
\newcommand{\Ical}{\mathcal{I}}
\newcommand{\eps}{\varepsilon}
\newcommand{\E}{\mathbb{E}}
\newcommand{\X}{\mathcal{X}}
\newcommand{\VVV}{\mathcal{V}}
\newcommand{\eqdef}{\stackrel{{\mathrm {\scriptstyle def}}}{=}}
\newcommand{\emprisk}[1]{err_{\ensuremath{#1}}}
\newcommand{\distrisk}[1]{err_{\ensuremath{#1}}}
\newcommand{\rFinite}{\mathtt{rFiniteLearner}}
\newcommand{\KeyGen}{\ensuremath{\mathsf{KeyGen}}}
\newcommand{\escheme}{\ensuremath{\mathcal{E}}}
\newcommand{\Dec}{\ensuremath{\mathsf{Dec}}}
\newcommand{\Enc}{\ensuremath{\mathsf{Enc}}}
\newcommand{\sk}{\mathbf{sk}}
\newcommand{\pk}{\mathbf{pk}}
\newcommand{\Ran}{\ensuremath{\mathsf{Rerandomize}}}
\newcommand{\negl}{\mathrm{negl}}
\newcommand{\universe}{\ensuremath{\mathcal{U}}\xspace}
\newcommand{\sampler}{\ensuremath{\mathcal{A}}\xspace}
\newcommand{\Datafixed}{\ensuremath{S}\xspace}
\newcommand{\datafixed}{\ensuremath{s}\xspace}
\newcommand{\indicator}{\ensuremath{\mathbbm{1}}\xspace}
\newcommand{\err}{{\ensuremath{\mathtt{err}}}}
\newcommand{\correlatedsampler}{\mathtt{CorrSamp}}
\newcommand{\correlatedsamplerfull}{\mathtt{CorrSamp}(C, \nu; r)}
\newcommand{\elementfindershort}{\mathtt{EF}}
\newcommand{\elementfinder}{\mathtt{ElemFind}}
\newcommand{\elementfinderfull}{\mathtt{ElemFind}_{C, \nu, \ell, \beta}(H_1, u; r)}
\newcommand{\hashchecker}{\mathtt{HashCheck}}
\newcommand{\hashcheckerfull}{\mathtt{HashCheck}_{C, \nu, \ell, H_1, u}(H_2,v; r)}
\newcommand{\Fcircuit}{F_{C,\ell, H_1, H_2}}
\newcommand{\score}{\operatorname{score}}
\let\oldnl\nl
\newcommand{\nonl}{\renewcommand{\nl}{\let\nl\oldnl}}
\author{
  Mark Bun\thanks{Department of Computer Science, Boston University. Email: \texttt{mbun@bu.edu}. Supported by NSF awards CCF-1947889 and CNS-2046425, a Sloan Research Fellowship, and Cooperative Agreement CB20ADR0160001 with the Census Bureau.} \and Marco Gaboardi \thanks{Department of Computer Science, Boston University. Email: \texttt{gaboardi@bu.edu}. Supported by NSF awards CNS 2040249 and Cooperative Agreement CB20ADR0160001 with the Census Bureau.}  \and
  Max Hopkins\thanks{Department of Computer Science and Engineering, UCSD, CA 92092. Email: \texttt{nmhopkin@eng.ucsd.edu}. Supported by NSF Award DGE-1650112.}\and Russell Impagliazzo\thanks{Department of Computer Science and Engineering, UCSD, CA 92092. Email: \texttt{russell@eng.ucsd.edu} Supported by NSF Award AF: Medium 2212136} 
  \and Rex Lei\thanks{{Department of Computer Science and Engineering, UCSD, CA 92092. Email: \texttt{rlei@ucsd.edu} Supported by NSF Award AF: Medium 2212136}}
  \\ \and Toniann Pitassi\thanks{Columbia University, University of Toronto and IAS. Email: \texttt{tonipitassi@gmail.com}. Supported by NSF Award AF: Medium 2212136.}
  \and Satchit Sivakumar\thanks{Department of Computer Science, Boston University. Email: \texttt{satchit@bu.edu}. Supported by NSF award CNS-2046425 and Cooperative Agreement CB20ADR0160001 with the Census Bureau.}
  \and Jessica Sorrell\thanks{University of Pennsylvania. Email: \texttt{jsorrell@seas.upenn.edu}. Supported in part by the Simons Foundation Collaboration on the Theory of Algorithmic Fairness.}
}
\title{Stability is Stable: Connections between Replicability, Privacy, and Adaptive Generalization}
\begin{document}
\maketitle

\begin{abstract}

The notion of replicable algorithms was introduced in~\cite{ImpLPS22} to describe randomized algorithms that are stable under the resampling of their inputs. More precisely, a replicable algorithm gives the same output with high probability when its randomness is fixed and it is run on a new i.i.d.\ sample drawn from the same distribution. Using replicable algorithms for data analysis can facilitate the verification of published results by ensuring that the results of an analysis will be the same with high probability, even when that analysis is performed on a new data set.

In this work, we establish new connections and separations between replicability and standard notions of algorithmic stability. In particular, we give sample-efficient algorithmic reductions between perfect generalization, approximate differential privacy, and replicability for a broad class of statistical problems. Conversely, we show any such equivalence must break down computationally: there exist statistical problems that are easy under differential privacy, but that cannot be solved replicably without breaking public-key cryptography. Furthermore, these results are tight: our reductions are statistically optimal, and we show that any computational separation between DP and replicability must imply the existence of one-way functions.

Our statistical reductions give a new algorithmic framework for translating between notions of stability, which we instantiate to answer several open questions in replicability and privacy. This includes giving sample-efficient replicable algorithms for various PAC learning, distribution estimation, and distribution testing problems, algorithmic amplification of $\delta$ in approximate DP, conversions from item-level to user-level privacy, and the existence of private agnostic-to-realizable learning reductions under structured distributions.
\end{abstract}

\newpage
\tableofcontents
\newpage

\section{Introduction}

\emph{Replicability} is the principle that the findings of an empirical study should remain the same when it is repeated on new data. Despite being a pillar of the scientific method, replicability is extremely difficult to ensure in today's complex data generation and analysis processes. Questionable research practices including misapplication of statistics, selective reporting of only the findings that appear most statistically significant, and the formulation of research hypotheses after the results are already known have been identified as causes of an ongoing ``crisis of replicability'' across the empirical sciences. Toward formulating solutions in the context of machine learning and algorithmic data analysis, 
Impagliazzo, Lei, Pitassi, and Sorrell~\cite{ImpLPS22} recently put forth
a new definition of replicability for statistical learning algorithms.\footnote{\cite{ImpLPS22} stated this definition under the name ``reproducibility.'' See Section \ref{ssec:terminology-replicability} for a discussion of why we refer to it as ``replicability'' instead.}
\begin{definition}
\label{def:replicable}
A randomized algorithm $A : \mathcal{X}^n \to \mathcal{Y}$ is $\rho$-replicable if for every distribution $D$ over $\mathcal{X}$, we have
\[\Pr[A(S_1; r) = A(S_2; r)] \ge 1-\rho,\]where $S_1, S_2 \in \mathcal{X}^n$ are independent sequences of i.i.d. samples from $D$, and $r$ represents the coin tosses of the algorithm $A$.
\end{definition}
That is, an algorithm (capturing an end-to-end data analysis process) is replicable if with high probability over the choice of two independent samples from the same distribution, it produces exactly the same output. If one research team shares both their replicable analysis process ($A$) and the random choices made along the way ($r$), then another research team can independently verify their conclusions by performing the same analysis on a fresh dataset.

Replicability is an extremely strong \emph{stability} constraint to place on an algorithm. Informally, an algorithm is stable if its output is insensitive to small changes to its input. Nevertheless, replicability is achievable for many fundamental data analysis tasks, including statistical query learning, heavy hitter identification, approximate median finding, and large-margin halfspace learning~\cite{ImpLPS22, GhaziKM21}. 

Replicability is not the first definition of algorithmic stability aimed at ensuring the utility and safety of modern data analysis. Others have played central roles in relatively mature areas such as differential privacy and adaptive data analysis. Some of the aforementioned replicable algorithms were, in fact, motivated or inspired by differentially private counterparts. Is there a systematic explanation for this? \emph{What can we learn about the capabilities and limitations of replicable algorithms by relating replicability to other notions of algorithmic stability?}





Let us briefly recall the types of algorithmic stability that arise in these other areas:

\paragraph{Differential privacy.} A randomized algorithm is differentially private~\cite{DworkMNS16j} if changing a single input record results in a small change in the distribution of the algorithm's output. When each input record corresponds to one individual's datum, differential privacy guarantees that nothing specific to any individual can be learned from the output of the algorithm. (See Section \ref{def:differentially private}.) Differential privacy comes with a rich algorithmic toolkit and understanding of the feasibility of fundamental statistical tasks in query estimation, classification, regression, distribution estimation, hypothesis testing, and more. 

\paragraph{Generalization in adaptive data analysis.} Generalization is the ability of a learning algorithm to reflect properties of a population, rather than just properties of a specific sample drawn from that population. Techniques for provably ensuring generalization form a hallmark of theoretical machine learning. However, generalization is particularly difficult to guarantee in settings where multiple analyses are performed adaptively on the same sample. Traditional notions of generalization do not hold up to downstream misinterpretation of results. For example, a classifier that encodes detailed information about its training sample in its lower order bits may generalize well, but can be used to construct a different classifier that behaves very differently on the sample than it does on the population. Interactive processes such as exploratory data analysis or feature selection followed by classification/regression can ruin the independence between the training sample and the method used to analyze it, invalidating standard generalization arguments.

Adaptivity in data analysis has been identified as one contributing factor to the replication crisis, and imposing stability conditions on learning algorithms
offers solutions to this part of the problem. A variety of such stability conditions have been studied~\cite{DworkFHPRR15, DworkFHPRR15-1, BassilyNSSSU16, RussoZ16, CummingsLNRW16, BassilyF16, RaginskyRTWX16, BassilyMNSY18, LigettS19, SteinkeZ20}, each offering distinct advantages in terms of the breadth of their applicability and the quantitative parameters achievable. Two specific notions play a central role in this work. The first is \emph{perfect generalization}~\cite{CummingsLNRW16, BassilyF16}, which ensures that whatever can be inferred from the output of a learning algorithm when run on a sample $S$ could have been learned just from the underlying population itself:

\begin{definition}
\label{def:PG}
An algorithm $A : \mathcal{X}^n \to \mathcal{Y}$ is $(\beta, \eps, \delta)$-perfectly generalizing if, for every distribution $D$ over $\mathcal{X}$, there exists a distribution $Sim_D$ 
such that, with probability at least $1-\beta$ over $S$ consisting of $n$ i.i.d. samples from $D$, and every set of outcomes $\mathcal{O} \subseteq \mathcal{Y}$,
\begin{equation} \label{eqn:pg-intro}
    e^{-\eps}(\Pr_{Sim_D}[\mathcal{O}] - \delta) \le \Pr[A(S) \in \mathcal{O}] \le e^{\eps} \Pr_{Sim_D}[\mathcal{O}] + \delta.
\end{equation} 
\end{definition}

The second is \emph{max-information}~\cite{DworkFHPRR15} which constrains the amount of information revealed to an analyst about the training sample:
\begin{definition}
\label{def:bounded-max-info}
An algorithm $A : \mathcal{X}^n \to \mathcal{Y}$ has $(\eps, \delta)$-max-information with respect to product distributions
if for every set of outcomes $\mathcal{O} \subseteq (\mathcal{Y} \times \mathcal{X}^n)$ we have
\[\Pr[(A(S), S) \in \mathcal{O}] \le e^{\eps} \Pr[(A(S), S') \in \mathcal{O}] + \delta,\]
where $S$ and $S'$ are independent samples of size $n$ drawn i.i.d. from an arbitrary distribution $D$ over $\mathcal{X}$. 
\end{definition}
As with differential privacy, both perfect generalization and max-information 
are robust to post-processing.

\medskip

Each stability definition described above is tailored to model a distinct desideratum. At first glance, they may all appear technically incomparable. For instance, differential privacy is stricter than the other definitions in that it holds in the worst case over all input datasets without any assumptions on the data-generating procedure. On the other hand, it is weaker in that it only requires insensitivity to changing one input record, rather than to resampling the entire input dataset as in max-information, perfect generalization, or replicability. Meanwhile, differential privacy, max-information, and perfect generalization quantify the sensitivity of the algorithm's output in a weaker way than replicability; the former three notions only require that the distributions on outputs are similar, whereas replicability demands that precisely the same output realization is obtained with high probability.

Nevertheless, the (surprising!) technical connections between these definitions have enabled substantial progress on the fundamental questions in their respective areas. For example, it was exactly the adaptive generalization guarantees of differential privacy that kickstarted the framework of adaptive data analysis from~\cite{DworkFHPRR15-1}; the definition of max-information was subsequently introduced~\cite{DworkFHPRR15} to unify existing analyses based on differential privacy and description length bounds. As another illustration, variants of replicability were introduced in~\cite{bun2020equivalence, ghazi2021sample, GhaziKM21} for purely technical reasons, as it was observed that such algorithms could be immediately used to construct differentially private ones. This connection was essential in proving the characterization of private PAC learnability in terms of the Littlestone dimension from online learning~\cite{alon2019private, bun2020equivalence}. In fact, this characterization shows, that, in principle a private PAC learner using $n$ samples can be converted to a replicable PAC learner using a number of samples that is an exponential tower of height $n$, but it is non-constructive and does not suggest what such a learner looks like in general.

\subsection{Our Main Results}
\subsubsection{Equivalences}
Our main result is a complete characterization of the relationships between these quantities. We prove that all four central stability notions --- replicability, differential privacy, perfect generalization, and bounded max-information w.r.t. product distributions --- are equivalent to one another via constructive conversions that incur at most a near-quadratic overhead in sample complexity.
%


Our equivalences apply to an abstract and broad class of \emph{statistical tasks} that capture learning from i.i.d.\ samples from a population. An instance of such a task is obtained by considering a distribution $D$ from a pre-specified family of distributions. Given i.i.d. samples from $D$, the goal of a learning algorithm is to produce an outcome that is ``good'' for $D$ with high probability. 
This formulation of a statistical task captures problems such as PAC learning, where a sample from $D$ is a pair $(x, f(x)) \in \mathcal{X} \times \{0, 1\}$ where $x$ is drawn from an arbitrary marginal distribution over $\mathcal{X}$, and $f$ is an arbitrary function from a fixed concept class $H$. A ``good'' outcome for such a distribution $D$ is a hypothesis $h: \mathcal{X} \to \{0, 1\}$ that well-approximates $f$ on $D$. 
Many other objectives such as regression, distribution parameter estimation, distribution learning, hypothesis testing, and confidence interval construction can be naturally framed as statistical tasks. (See Section~\ref{sec:alg-by-reduction} for other examples.)

Figure~\ref{fig:transformations} illustrates the known relationships between the various stability notions that hold with respect to any statistical task. 
%
\begin{figure}[h!] 
	\begin{tikzpicture}[->,auto,>=stealth,node distance=7cm,
		scale = 1,transform shape, rct/.style = {draw,rounded corners=0.1cm},
		every edge/.append style={line width=0.5mm}]
		
		\node[rct] (rep) {$(0.1)$-replicability};
		\node[rct] (dp) [right of=rep, xshift=4cm] {$(\eps, \delta)$-differential privacy};
		\node[rct, align=center] (bmi)  [below of=dp]  {$(\eps, \delta)$-max-information \\ w.r.t. product distributions};
		\node[rct, align=center, xshift=-4cm] (opg)  [left of=bmi]  {$(\delta, \eps, \delta)$-one-way \\ perfect generalization};
		
		\path 
		(rep) edge[dashed]   node {$n \mapsto n \cdot \frac{\log(1/\delta)}{\eps}$, Thm. \ref{thm:Rep-to-DP} \cite{GhaziKM21}} (dp)
		(dp) edge[dashed] node[align=center]{$n \mapsto n^2$ \\ Cor. \ref{cor:dp-to-max-info} \cite{RogersRST16}}        (bmi)     
		(bmi) edge              node{Lem. \ref{lem:max-info-PG}} (opg)
		(opg) edge[bend left=20, thin, densely dotted]              node[align=center]{$n \mapsto n$ \\ Thm.~\ref{PGtoRep}} (rep)
		(rep) edge[bend left=20, dashed]              node[align=left]{$n \mapsto n \cdot \frac{\poly\log(1/\eps, 1/\delta)}{\eps^2}$ \\ Thm.~\ref{thm:reprodtoPG}} (opg);
		
	\end{tikzpicture}   
	\caption{The solid arrow from $A$ to $B$ means that every algorithm satisfying $A$ also satisfies $B$. A dashed arrow means that for every statistical task, a solution satisfying $A$ can be computationally efficiently transformed into a solution satisfying $B$ with the stated blowup in sample complexity. The thin dotted arrow means an explicit transformation exists, but is not always computationally efficient, and assumes the outcome space is finite. 
    \\This figure suppresses constant factors everywhere and polynomial factors in $\delta$, assumes $\eps$ is below a sufficiently small constant, and assumes that $\delta$ is a sufficiently small inverse polynomial in $n$. }
 \label{fig:transformations}  
\end{figure}
\rexnote{I made the dotted line thin so it's easier to tell the difference between dashed and dotted.}
%

\medskip
 
From these equivalences we obtain the following  consequences, resolving several open questions.

\paragraph{Sample-efficient replicable algorithms.} Any differentially private algorithm solving a statistical task (with a finite outcome space) can be converted into a replicable algorithm solving the same task with a near-quadratic blowup in its sample complexity. Thus, the wealth of research on private algorithm design can be brought to bear on designing replicable algorithms. We illustrate this algorithmic paradigm by describing new replicable algorithms for some PAC learning, distribution parameter estimation, and distribution testing problems in \Cref{sec:alg-by-reduction}.

\paragraph{Equivalence between perfect generalization and differential privacy.} For simplicity, the relationships  summarized in Figure~\ref{fig:transformations} are stated in terms of a one-way variant of perfect generalization, where only the inequality on the right of (\ref{eqn:pg-intro}) is required to hold. But the original two-way definition turns out to be statistically equivalent for tasks with a finite outcome space. This is because a one-way perfectly generalizing algorithm can be converted to a replicable algorithm using Theorem~\ref{PGtoRep}, and Theorem~\ref{thm:reprodtoPG} actually yields the stronger conversion back to a two-way perfectly generalizing algorithm (See Theorem~\ref{thm:opg-to-pg}). Thus, an $(\eps, \delta)$-differentially private algorithm (with a finite outcome space) can be converted to a perfectly generalizing one solving the same statistical task with a near quadratic blow-up in sample complexity. This resolves an open question of~\cite{CummingsLNRW16}. Their work also gave a conversion from perfectly generalizing algorithms to differentially private ones with no sample complexity overhead, and while their transformation preserves accuracy for (agnostic) PAC learning, it is not clear how to analyze it for general statistical tasks. Our conversion from perfect generalization to replicability and then to differential privacy holds for all statistical tasks with a finite outcome space.

\paragraph{Converting item-level to user-level privacy.} Consider a ``user-level'' learning scenario in which $n$ individuals each hold $m$ training examples drawn i.i.d. from the same distribution. When is $(\eps, \delta)$-differentially private learning possible if we wish to guarantee privacy with respect to changing \emph{all} of any individual's samples at once? Ghazi, Kumar, and Manurangsi \cite{GhaziKM21} showed that this is possible when $n \ge O(\log(1/\delta) / \eps)$ and the task  admits a replicable learner. For the special case of PAC learning a concept class $H$, they argued that this implies a user-level private learning algorithm whenever $H$ is privately PAC learnable with respect to changing a single \emph{sample}.  They posed the open problem of extending this result beyond PAC learning, e.g., to private regression~\cite{JungKT20, Golowich21}.
Our conversion from any differentially private algorithm to a replicable one implies that such a transformation is possible for \emph{any} statistical task with a finite outcome space (Section~\ref{sec:item-user}). Moreover, one can always take each indvidual's number of samples $m$ to be nearly quadratic in the sample complexity of the original item-level private learner. 

\paragraph{Amplifying differential privacy parameters.} 
    While almost all $(\eps, \delta)$-differentially private algorithms enjoy a mild $\propto \log(1/\delta)$ dependence in their sample complexity on the parameter $\delta$, it was not known how to achieve this universally, say by amplifying large values of $\delta$ to asymptotically smaller ones. \cite{bun2020equivalence} showed that for private PAC learning, such amplification is possible in principle, but posed the open question of giving an explicit amplification algorithm. By converting an $(\eps, \delta)$-differentially private algorithm with weak parameters to a replicable one, and then back to a differentially private one with strong parameters, we resolve this question for the general class of statistical tasks with a finite outcome space, and with a much milder sample complexity blowup (Section~\ref{sec:delta-amp}). 
    
\paragraph{Agnostic-to-realizable reductions for distribution-family learning.} \cite{hopkins2021realizable} introduced a simple and flexible framework for converting realizable PAC learners to agnostic learners without relying on uniform convergence arguments. The framework applies to diverse settings such as robust learning, fair learning, partial learning, and (as observed in this work) replicable learning, with differential privacy providing a notable exception.\footnote{We note the technique we introduce to adapt \cite{hopkins2021realizable} to the replicable setting has no clear translation to the private setting.} While an agnostic-to-realizable reduction for private PAC learning is known~\cite{beimel2013private, alon2020closure}, it relies on uniform convergence and is only known to hold in the distribution-free PAC model. By converting a realizable private learner to a realizable replicable learner, then to an agnostic replicable learner, and back to an agnostic private learner, we obtain a reduction that works in the absence of uniform convergence (Section~\ref{sec:realizable-agnostic}). In particular, this reduction applies to the \emph{distribution-family} learning model, where one is promised that the marginal distribution on unlabeled examples comes from a pre-specified family of distributions.

\medskip

\subsubsection{Separating Stability: Computational Barriers and the Complexity of Correlated Sampling}  

All of the transformations appearing in Figure~\ref{fig:transformations} preserve computational efficiency, with the lone exception of the transformation from perfectly generalizing algorithms to replicable ones. This transformation makes use of the technique of \emph{correlated sampling} from the distribution of outputs of a perfectly generalizing algorithm $A$ when run on a fixed sample $S$ (elaborated on more in Sections~\ref{sec:equiv-overview} and~\ref{sec:correlated-sampling}). This step can be explicitly implemented via rejection sampling from the output space of $A$, with the rejection threshold determined by the probability mass function of $A(S)$, but in general it is not computationally efficient.

We show that under cryptographic assumptions, this is inherent (Section~\ref{sec:separating-stability-computationally}). Specifically, we show that under standard assumptions in public-key cryptography, there exists a statistical task that admits an efficient differentially private algorithm, 
but does not have any efficient replicable algorithm. The task is defined in terms of a public-key encryption scheme with the following rerandomizability property: Given a ciphertext $\Enc(\pk, b)$, there is an efficient algorithm producing a uniformly random encryption of $b$. Fixing such a rerandomizable PKE, the statistical task is as follows. Given a dataset consisting of random encryptions of the form $\Enc(\pk, b)$ where $\pk$ is a fixed public key and $b \in \{0, 1\}$ is a fixed bit, output any encryption of $b$.

One can solve this problem differentially privately, essentially by choosing a random ciphertext from the input dataset and rerandomizing it. On the other hand, there is no efficient replicable algorithm for this task. If there were, then one could use the public key to produce many encryptions of $0$ and $1$ and run the replicable algorithm on the results to produce canonical ciphertexts $c_0$ and $c_1$, respectively. Then, given an unknown ciphertext, one could repeatedly rerandomize it, run the replicable algorithm on the results, and compare the answer to $c_0$ and to $c_1$ to identify the underlying plaintext.

We also show that cryptographic assumptions are necessary even to separate replicability from perfect generalization. Recalling again that the bottleneck in computationally equating the two notions is in implementing correlated sampling, we show in Section~\ref{sec:owfi} that if one-way functions do not exist, then correlated sampling is always tractable. In addition to addressing a natural question about the complexity of correlated sampling, this shows that function inversion enables an efficient transformation from perfectly generalizing algorithms into replicable ones. (See Section \ref{sec:correlated-sampling} for more discussion.)

\subsubsection{Separating Stability: Statistical Barriers}
Our equivalences show that the sample complexities of perfectly generalizing and replicable learning are essentially equivalent. Moreover:
(1) An approximate-DP algorithm can be converted to a perfectly generalizing/replicable algorithm with near-quadratic blowup; and
(2) A perfectly generalizing/replicable algorithm can be converted to an approximate-DP one using roughly the same number of samples.
We prove that both of these conversions are optimal by showing:

\begin{enumerate}
    \item {\bf Quadratic separations between differential privacy and perfect generalization/replicability.} We first consider the problem of estimating the parameters of a product of $d$ Bernoulli distributions. By simply taking the empirical mean of an input dataset, this problem can be solved using $O(\log d)$ without any stability constraints. However, with differential privacy, it is known that $\tilde{\Theta}(\sqrt{d})$ samples are necessary and sufficient. By adapting the ``fingerprinting'' method underlying these privacy lower bounds~\cite{BunUV18, DworkSSUV15, BunSU19} to perfect generalization, we prove that any perfectly generalizing or replicable algorithm for this problem requires $\tilde{\Omega}(d)$ samples (Section~\ref{sec:marginals}).
    
    By reducing from a variant of this one-way marginals problem, we also show a general lower bound for replicable agnostic learning. Namely, we show that every concept class $H$ requires $\tilde{\Omega}(VC(H)^2)$ samples. For concept classes of maximal VC dimension $VC(H) = \log|H|$, this too gives a quadratic separation between replicable learning and both private and unconstrained learning (Section~\ref{sec:agnostic-lb}).
     
    \item {\bf No separation between differential privacy and perfect generalization/replicability.} Complementing our lower bounds, we also show that every finite class $H$ can be replicably PAC learned (in the realizable setting) to error $\alpha$ with sample complexity $\tilde{O}_{H}(1/\alpha)$ (Section~\ref{sec:finiteclasses}). Up to logarithmic factors, this matches the learning rate achievable for both unconstrained and differentially private learning. Our learner works by selecting a random threshold $v$, and selecting a random concept from $H$ whose error with respect to the sample is at most $v$. A more involved random thresholding strategy also yields an agnostic learner with sample complexity $\tilde{O}_{H}(1/\alpha^2)$.
\end{enumerate}


\subsection{Overview of Proofs of Equivalences} \label{sec:equiv-overview}

\paragraph{Perfect generalization is equivalent to replicability.} 
Recall that an algorithm is replicable if it is likely to produce exactly the same output when run on two independent samples from any given population. Replicability appears to be a dramatic strengthening of perfect generalization, which only requires the distributions of $A(S)$ and $A(S')$ to be statistically close. Nevertheless, we prove that perfectly generalizing algorithms can always be converted to replicable ones whenever the output space $\mathcal{Y}$ is finite (Theorem \ref{PGtoRep}). This can be done via a primitive called \emph{correlated sampling} (See Section \ref{sec:correlated-sampling}).
A correlated sampling algorithm for a class of distributions $\mathcal{P} = \{P\}$ is a procedure $CS(P, r)$ such that 1) $CS(P, r)$ produces a sample distributed according to $P$ when provided a uniformly random input $r$, and 2) Whenever $P, Q \in \mathcal{P}$ satisfy $\dtv(P, Q) \le \eta$, we have $\Pr[CS(P, r) = CS(Q, r)] \ge 1 - O(\eta)$. That is, applying correlated sampling to two similar distributions results in the same output with high probability -- exactly what is needed for replicability. We actually prove a stronger theorem, showing that the larger class of {\it one-way} perfectly generalizing algorithms (where only the right-hand inequality in \ref{eqn:pg-intro} holds) are replicable via correlated sampling. 

Conversely, we show how to convert replicable algorithms to perfectly generalizing ones (Theorem~\ref{thm:reprodtoPG}). While a $\rho$-replicable algorithm is automatically also a $(\beta = O(\rho), \eps = 0, \delta = O(\rho))$-perfectly generalizing one, these parameters are too weak for applications where one wants to take $\beta, \delta$ to be inverse polynomial in the dataset size $n$ (e.g., to prove the lower bounds in Section~\ref{sec:stat-sep}). To obtain a perfectly generalizing algorithm with stronger parameters, we repeatedly run the replicable algorithm using $k = O(\log (1/\delta))$ different sequences of coin tosses $r_1, \dots, r_k$, and using $\tilde{O}(1/\eps^2)$ independent samples for each sequence of coin tosses. Using the exponential mechanism from differential privacy~\cite{McTalwar}, we select an outcome $y_i$ that appears approximately the most frequently amongst these repetitions in a manner that ensures $(\beta = \delta, \eps, \delta)$-perfect generalization. This strategy allows us to obtain inverse polynomial $\beta, \delta$ parameters with only a logarithmic multiplicative overhead in the number of samples.


\medskip

\noindent{\bf Bounded max-information implies perfect generalization.}
In Lemma~\ref{lem:max-info-PG}, we show that bounded max-information implies one-way perfect generalization
with similar parameters. Namely, if an algorithm $A$ has $(\eps, \delta)$-max-information with respect to product distributions, then it is also $(\sqrt{\delta}, 2\eps, \sqrt{\delta})$-one-way perfectly generalizing. The idea is to take the simulator distribution $Sim_D$ to be the distribution of $A(S')$, where the randomness is taken over both the coin tosses of $A$ \emph{and} the randomness of a sample $S' \sim D$. A similar argument is implicit in~\cite[Proof of Lemma 4.5]{BassilyF16}. 
Then by combining Theorems \ref{PGtoRep} and \ref{thm:reprodtoPG}, it follows that bounded max-information also implies perfect generalization for finite outcome spaces (Theorem~\ref{thm:opg-to-pg}).

\paragraph{Replicability implies differential privacy.} 
In \Cref{thm:Rep-to-DP} we show that replicability implies differential privacy. 
Given a replicable algorithm, one can run it $k = O(\log(1/\delta) / \eps)$ times using the same sequence of coin tosses, but on independent samples, producing outcomes $y_1, \dots, y_k$. Replicability ensures that most of these outcomes are the same with high probability, and so this common outcome can be selected in a standard differentially private way. This argument appears in the differential privacy literature as a conversion from ``globally stable'' and ``pseudo-globally stable'' learners to private ones~\cite{bun2020equivalence, ghazi2021sample, GhaziKM21}. Our presentation of Theorem~\ref{thm:Rep-to-DP} includes an additional amplification step that avoids union bounding over correctness, making the conversion suitable for a broader range of parameters.

\medskip

\noindent{\bf Differential privacy implies bounded max-information.}
The final conversion in Figure~\ref{fig:transformations} is from differentially private algorithms to algorithms with bounded max-information.
This argument is implicit in \cite{RogersRST16} and we show how it follows from their work here
(Corollaries~\ref{cor:max-info-concrete} and~\ref{cor:dp-to-max-info}).

\subsection{Further Discussion of Related Work}

Several elements of our approach were inspired by Ghazi, Kumar, and Manurangsi's study of the relationship between user-level and item-level differentially private learning~\cite{GhaziKM21}. They introduced a notion of ``pseudo-global stability'' that is essentially the same as replicability, and showed that it implies differential privacy. Correlated sampling also played a crucial role in their work by allowing individuals to use shared randomness to reach consensus on a learned hypothesis. In fact, it provided a key step in their conversion from ``list globally stable'' algorithms~\cite{ghazi2021sample} (learning algorithms that output a short list of hypotheses, one of which is almost guaranteed to be canonical for the given distribution) to pseudo-globally stable ones.

Stability in learning has a long history as a tool for ensuring generalization. Early work~\cite{RogersW78, DevroyeW79, bousquet2002stability, shalev2010learnability} showed that the stability of a learning algorithm with respect to a specific loss function could ensure strong generalization guarantees with respect to that loss. A more recent literature has focused on stability notions that are not tied to a specific loss, and which ideally are robust under post-processing and adaptive composition. This includes understanding the generalization guarantees of differential privacy~\cite{DworkFHPRR15, DworkFHPRR15-1, BassilyNSSSU16, RogersRST16, RogersRSSTW20, ZrnicH19, JungLNRSS20} and other constraints on the information-theoretic relationship between the input and output of a learning algorithm~\cite{RussoZ16, BassilyMNSY18, XuR17, RaginskyRTWX16, LigettS19, SteinkeZ20}. A related line of work~\cite{CummingsLNRW16, BassilyF16, NissimSSU18} considers more ``semantic'' notions of stability, defining it in terms of the difficulty of inferring properties specific to the sample rather than of the underlying distribution. Perfect generalization, one of the main definitions we study in this work, was introduced by~\cite{CummingsLNRW16} and is a special case of \emph{typical stability} that was introduced in independent work of Bassily and Freund~\cite{BassilyF16}.

Independent of this work, \cite{KKMV23} study similar relationships between notions of stability. They focus on the PAC-learning setting, where they show a statistical equivalence between differential privacy, replicability, and a notion called ``TV-indistinguishability'' which can be thought of as a special case of perfect generalization with $\eps=0$. 
To clarify the differences between our work and \cite{KKMV23}, first recall how we obtain replicability from differential privacy: 
\begin{itemize}
\item First, we exploit existing connections between privacy and bounded max-information from \cite{RogersRST16} to obtain an algorithm with bounded max-information from a differentially private one. 
\item We prove that bounded max-information implies perfect generalization.
\item We then show that we can obtain a replicable algorithm from a perfectly generalizing one by applying correlated sampling to its distribution over outputs. The relevant output distribution is induced by fixing an input sample of the perfectly generalizing algorithm and redrawing its internal randomness. 
\end{itemize}

Recall that the correlated sampling procedure may not be efficient, and that we assume the output domain of the differentially private algorithm is finite.

The work of \cite{KKMV23} follows a different approach. First, they start from a differentially private PAC learner, rather than a differentially private algorithm for a general statistical task, and factor through TV-indistinguishability and Littlestone dimension. More specifically:
\begin{itemize}
    \item They first observe a similar equivalence of TV-indistinguishability and replicability for general statistical tasks. 
    \item They then show that a private PAC learner implies the existence of a TV-indistinguishable learner, leveraging results from~\cite{alon2019private} showing that private PAC learning implies finite Littlestone dimension, and results from~\cite{GhaziKM21,ghazi2021sample} showing that finite Littestone dimension implies list global-stability.
\end{itemize}

Our approach gives us a constructive procedure for converting a private algorithm for a general statistical task into a replicable algorithm, so long as the private algorithm has finite range. Our transformations induce a modest sample complexity increase, resulting in a replicable algorithm with sample complexity $n^2$, given a private learner with sample complexity $n$. By contrast, the results of \cite{KKMV23}, while non-constructive, apply to countably infinite domains (and therefore to some uncountably infinite ranges). However, their results go through Littlestone dimension, which may be an exponential tower in $n$, and so they obtain sample complexity bounds which are an exponential tower in $n$ as well. 

\subsection{Open Problems}

We highlight several directions and open problems for future work. \mb{Everyone should add their favorite question(s) here!}

\begin{enumerate}
    \item Is a transformation from (one-way) perfectly generalizing algorithms to replicable algorithms possible for infinite output spaces in general? While correlated sampling introduces no sample complexity overhead in terms of the output space, it is only known to be possible when the output space is finite or the class of distributions to be sampled from is structured. (E.g., the distributions in the class all have uniformly bounded Radon-Nikodym derivative with respect to some fixed base measure).\footnote{Formally, such a case would fall into a restricted notion of correlated sampling over a subset of distributions, similar to the multiple coupling of \cite{AngSpi19}.} 
    In independent work, \cite{KKMV23} make progress towards this goal by giving a transformation from TV-indistinguishability to replicability when there are only countably many options for the TV-indistinguishable algorithm $\{A(S)\}_{S \in X^n}$. It follows from Lemma~\ref{lem:PGeps0} that $(\beta, \varepsilon, \delta)$-one-way perfect generalization implies $(4\varepsilon + 2\delta + 2\beta)$-TV indistinguishability, and so the result of~\cite{KKMV23} gives the following corollary.
    \begin{corollary}
        Fix $n \in \N$, $\beta,\varepsilon, \delta \in (0,1]$. Let $\X$ be a countable domain and $A: \X^n \rightarrow \mathcal{Y}$ be a $(\beta,\varepsilon, \delta)$-one-way perfectly generalizing algorithm for a statistical task. Then there exists an algorithm $A': \X^n \rightarrow \mathcal{Y}$ that is $\left( \frac{2\rho}{1 + \rho} \right)$-replicable for $\rho = 4\varepsilon + 2\delta + 2\beta$, and for all $S \in \X^n$, $A(S) = A'(S)$.
    \end{corollary}
    Whether a transformation exists for general measure spaces remains open.\footnote{We note that in the PAC-setting one can resolve this issue via factoring through Littlestone Dimension and \cite{ImpLPS22}'s heavy-hitters, but this results in tower sample complexity.} 
    In Section~\ref{sec:list-heavy-hitters} we discuss the \emph{list heavy-hitters} problem that may be a candidate for separating perfect generalization from replicability over infinite output spaces.
    \item What are the minimal cryptographic assumptions under which a computational separation between replicability and differential privacy exists? Our results in Section~\ref{sec:separating-stability-computationally} show that one-way functions are necessary, while public-key assumptions are sufficient.
    \item \cite[Lemma A.7]{ImpLPS22} showed that replicable algorithms compose adaptively. That is, a sequence of $k$ adaptively chosen $\rho$-replicable algorithms yields a transcript that is $O(k\rho)$-replicable. One way to interpret this result is as follows: Given a sequence of $k$ analyses that are each $(0.01)$-replicable using a sample of size $n$, one can amplify their individual replicability parameters to $O(1/k)$ at the expense of increasing their sample complexity to $O(k^2 n)$. This yields a $(0.01)$-replicable algorithm for performing all $k$ analyses at a sample cost of $O(k^2 n)$.
    
    Our conversions between replicability and differential privacy yield a different tradeoff, at least for simulating non-adaptive composition. Given $k$ analyses that are each $(0.01)$-replicable using a sample of size $n$, one can convert them to $\tilde{O}(1/\sqrt{k})$-differentially private algorithms each using a sample of size $\tilde{O}(\sqrt{k}n)$. ``Advanced'' composition of differential privacy~\cite{DworkRV10} yields an $(0.01, \delta)$-differentially private algorithm using $\tilde{O}(\sqrt{k} n)$ samples, which can then be turned back into a $(0.01)$-replicable algorithm using $\tilde{O}(kn^2)$ samples.
    
    What is the optimal sample cost for conducting, or at least statistically simulating, the (adaptive) composition of $k$ replicable algorithms? Is it possible to do so at a cost of $O(kn)$ samples?
    \item In Section~\ref{sec:finiteclasses}, we give a direct replicable algorithm for the task of realizable PAC learning of finite classes with sample cost inverse linear in the accuracy parameter $\alpha$. (As opposed to inverse quadratic, which is what applying the reduction from replicability to approximate DP gives -- see Theorem~\ref{thm:finitehypred} and the following discussion.) Are there other natural problems for which there are (perhaps more dramatic) separations between what's achievable via directly constructing a replicable algorithm for a task, and what's achievable using our reduction to approximate DP? For example, can discrete distributions over $[k]$ be replicably estimated using $O(k)$ samples (as opposed to quadratic in $k$, which is what is obtained through our reduction)? Can the mean of a $d$-variate Gaussian with unknown covariance be estimated directly using $O(d)$ samples (as opposed to quadratic in $d$, which is what is obtained through our reduction)? Even more ambitiously, is it possible to characterize the types of problems for which our reduction from replicability to approximate DP gives tight bounds?

    \item To what extent is replicability preserved under distributional shift? In Appendix~\ref{app:additional-properties-of-replicability}, we give a simple argument showing that a $\rho$-replicable algorithm is $\rho(1-\delta)^{2m}$-replicable across two close distributions. Are there tighter replicability and non-replicability bounds for specific families of distributions, problems, and algorithms under distributional shifts? 
\end{enumerate}

\section{Preliminaries}

We start by formally defining a statistical task.
\begin{definition} \label{def:stat-task}
A \emph{statistical task} with data domain $\mathcal{X}$ and output space $\mathcal{Y}$ is a set of pairs $\mathcal{T} = \{(D, G_D)\}$, where $D$ is a distribution over $\mathcal{X}$ and $G_D \subseteq \mathcal{Y}$ is a ``good'' set of outputs for distribution $D$. A randomized algorithm $\mathcal{A}$ solves statistical task $\mathcal{T}$ using $m$ samples and with failure probability $\beta$ if for every $(D, G_D) \in \mathcal{T},$
\[\Pr_{\Datafixed \sim D^m, \mathcal{A}}[\mathcal{A}(\Datafixed) \in G_D] \ge 1-\beta.\]
\end{definition}
\subsection{Notions of Distributional Closeness}
We recall the definition of total variation distance, that will be crucial in this work.
\begin{definition}[Total Variation Distance] \label{def:DTV} Let $P$ and $Q$ be probability distributions over some domain $S$. Then
\begin{equation*}
    \dtv(P, Q) := \sup_{E\subseteq S} |\Pr_{P}[E] - \Pr_{Q}[E]|.
\end{equation*}
\end{definition}
We also define the notion of $(\eps, \delta)$-indistinguishability, the notion of closeness that is used in differential privacy. 
\begin{definition}[$(\eps, \delta)$-indistinguishability] \label{def:TV} Let $P$ and $Q$ be probability distributions over some domain $\mathcal{Y}$. Then, we say that $P$ is $(\epsilon, \delta)$-indistinguishable from $Q$ (denoted as $P \approx_{\eps, \delta} Q$) if for all $O \subseteq \mathcal{Y}$,
\[
 e^{-\epsilon} [\Pr_P[O] - \delta] \leq \Pr_Q[O] \leq e^{\epsilon} \Pr_P[O] + \delta.
\]
\end{definition}
We will frequently talk about random variables being $(\eps, \delta)$-indistinguishable, which means that their distributions are $(\eps, \delta)$-indistinguishable. 
\subsection{Differential Privacy}


We say that two datasets $S, S'\in \mathcal{X}^n$ are neighboring if they differ for the data of one individual, i.e., their Hamming distance is one. Differential privacy is formulated as a notion of indistinguishability between the results of  an algorithm when run on neighboring datasets.  

\begin{definition}[Differential Privacy~\cite{DworkMNS16j}]\label{def:differentially private} A randomized algorithm $\mathcal{A}: \mathcal{X}^n \rightarrow \mathcal{Y}$ is said to be {\em $(\eps, \delta)$-differentially private} if for every pair of neighboring datasets $S, S'\in \mathcal{X}^n$, we have that for all subsets $O \subseteq \mathcal{Y}$,
 \begin{equation*}
    \Pr[\sampler(\Datafixed) \in O] \leq e^\eps \cdot \Pr[\sampler(\Datafixed') \in O] + \delta.
 \end{equation*}
 That is, we have $\mathcal{A}(S) \approx_{\eps, \delta} \mathcal{A}(S')$ for all neighboring $S, S'$.
 \end{definition}

One important property of differential privacy is that it is closed under post-processing by arbitrary functions.

\begin{lemma}[Post-Processing~\cite{DworkMNS16j}]\label{prelim:postprocess} If $\mathcal{A}: \mathcal{X}^n \rightarrow \mathcal{Y}$ is $(\eps, \delta)$-differentially private, and $\mathcal{B} : \mathcal{Y} \rightarrow \mathcal{Z}$ is any randomized function, then the algorithm $\mathcal{B} \circ \mathcal{A}$ is $(\eps, \delta)$-differentially private.
\end{lemma}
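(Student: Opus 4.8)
The plan is the standard two-step argument: first dispatch the case where $\mathcal{B}$ is deterministic, then recover the general (randomized) case by averaging over $\mathcal{B}$'s internal coins. The core of the matter is that post-processing only ever \emph{coarsens} the event space, so any inequality that holds for all events on $\mathcal{Y}$ automatically transfers to events on $\mathcal{Z}$.

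I would begin with deterministic $\mathcal{B} : \mathcal{Y} \to \mathcal{Z}$. Fix neighboring datasets $\vec{s}, \vec{s'} \in \mathcal{X}^n$ and a target set $O \subseteq \mathcal{Z}$, and set $O' \eqdef \mathcal{B}^{-1}(O) = \{y \in \mathcal{Y} : \mathcal{B}(y) \in O\}$. The key observation is that $\{\mathcal{B}(\mathcal{A}(\vec{s})) \in O\}$ and $\{\mathcal{A}(\vec{s}) \in O'\}$ are literally the same event over the coins of $\mathcal{A}$, so $\Pr[\mathcal{B}(\mathcal{A}(\vec{s})) \in O] = \Pr[\mathcal{A}(\vec{s}) \in O']$. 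Applying the $(\eps,\delta)$-differential privacy of $\mathcal{A}$ to the set $O'$ then gives $\Pr[\mathcal{A}(\vec{s}) \in O'] \le e^{\eps}\Pr[\mathcal{A}(\vec{s'}) \in O'] + \delta = e^{\eps}\Pr[\mathcal{B}(\mathcal{A}(\vec{s'})) \in O] + \delta$, which is exactly the claim.

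For randomized $\mathcal{B}$, I would write $\mathcal{B}(y) = \mathcal{B}'(y; r)$ with $r$ drawn from $\mathcal{B}$'s coin distribution (independently of $\mathcal{A}$'s coins) and $\mathcal{B}'(\cdot; r)$ deterministic for each fixed $r$. By the deterministic case, $\mathcal{B}'(\cdot; r) \circ \mathcal{A}$ is $(\eps,\delta)$-differentially private for every $r$. Taking expectation over $r$ and using linearity, $\Pr[\mathcal{B}(\mathcal{A}(\vec{s})) \in O] = \E_r\big[\Pr[\mathcal{B}'(\mathcal{A}(\vec{s}); r) \in O]\big] \le \E_r\big[e^{\eps}\Pr[\mathcal{B}'(\mathcal{A}(\vec{s'}); r) \in O] + \delta\big] = e^{\eps}\Pr[\mathcal{B}(\mathcal{A}(\vec{s'})) \in O] + \delta$. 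There is no genuinely hard step; the only point requiring a word of care is measurability — $O'$ must be a measurable subset of $\mathcal{Y}$ (guaranteed since $\mathcal{B}$ is a measurable map) and the interchange of $\E_r$ with the probability is justified by Tonelli since all quantities are nonnegative and bounded. I would simply note these as standing assumptions rather than belabor them.
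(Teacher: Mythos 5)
Your proof is correct and is the standard textbook argument for post-processing closure of differential privacy; the paper itself does not prove this lemma but simply cites \cite{DworkMNS16j}, whose proof proceeds exactly as you describe (deterministic case by pulling back to the preimage, then averaging over $\mathcal{B}$'s independent randomness). Nothing to add.
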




Differential privacy can be achieved by adding some carefully chosen noise to a function and calibrating the noise to the sensitivity of the function: a measure of how different can the results of the function be when run on adjacent datasets.  

\begin{definition}[$\ell_1$-Sensitivity] Let $f: \mathcal{X}^n \rightarrow \mathbb{R}^d$ be a function. Its $\ell_1$-sensitivity is
\begin{equation*}
    \Delta_f = \max_{\substack{S, S' \in \mathcal{X}^n \\ S, S' \text{neighbors}}} \|f(S) - f(S')\|_1.
\end{equation*}
\end{definition}

There are many techniques that can be used to design differentially private algorithms. One important technique that we will use in some of our applications is the exponential mechanism.
\begin{lemma}[Exponential Mechanism \cite{McTalwar}]\label{lem:expmech}
Let $L$ be a set of outputs and $g: L \times \mathcal{X}^n \to \mathbb{R}$ be a function that measures the quality of each output on a dataset. Assume that for every $m \in L$, the function $g(m,.)$ has $\ell_1$-sensitivity at most $\Delta$. Then, for all $\eps>0$, there exists an $(\eps, 0)$-DP mechanism that, on input $S\in \mathcal{X}^n$, outputs an element $m\in L$ such that, for all $a>0$, we have
\begin{equation*}
    \Pr\left[\max_{i \in [L]} g(i,S) -  g(m,S) \geq 2\Delta \frac{\ln |L| + a}{\eps}\right] \leq e^{-a}. 
\end{equation*}
\end{lemma}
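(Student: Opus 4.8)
The plan is to exhibit the mechanism $\me$ that, on input $\vec{s} \in \mathcal{X}^m$, outputs each $i \in L$ with probability proportional to $\exp\!\big(\tfrac{\eps \cdot g(i, \vec{s})}{2\Delta}\big)$; concretely, $\Pr[\me(\vec{s}) = i] = \exp(\eps g(i,\vec{s})/(2\Delta)) / Z(\vec{s})$, where $Z(\vec{s}) = \sum_{j \in L} \exp(\eps g(j, \vec{s})/(2\Delta))$ and we use that $L$ is finite (otherwise the claimed $\ln|L|$ bound is vacuous). Two things must then be checked: that $\me$ is $(\eps, 0)$-differentially private, and that it satisfies the stated utility guarantee.

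First I would verify privacy. Fix neighboring datasets $\vec{s}, \vec{s}'$ and an outcome $i \in L$, and split the ratio $\Pr[\me(\vec{s}) = i]/\Pr[\me(\vec{s}') = i]$ into the ratio of numerators times the ratio of normalizers. The numerator ratio is $\exp\!\big(\tfrac{\eps}{2\Delta}(g(i,\vec{s}) - g(i,\vec{s}'))\big) \le e^{\eps/2}$, since $|g(i,\vec{s}) - g(i,\vec{s}')| \le \Delta$ by the $\ell_1$-sensitivity hypothesis applied to $g(i, \cdot)$. The normalizer ratio $Z(\vec{s}')/Z(\vec{s})$ is also at most $e^{\eps/2}$, because every summand of $Z(\vec{s}')$ is within a factor $e^{\eps/2}$ of the corresponding summand of $Z(\vec{s})$ by the same sensitivity bound. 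Multiplying, every single outcome has probability ratio at most $e^{\eps}$, and summing over an arbitrary $O \subseteq L$ gives $\Pr[\me(\vec{s}) \in O] \le e^{\eps}\Pr[\me(\vec{s}') \in O]$, i.e. $(\eps,0)$-DP.

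Next I would prove the utility bound by a direct union-bound estimate. Write $\mathrm{OPT} = \max_{i \in L} g(i, \vec{s})$ and fix a shortfall $t > 0$. Then
\[
\Pr\big[\,g(\me(\vec{s}), \vec{s}) \le \mathrm{OPT} - t\,\big]
= \frac{\sum_{i : g(i,\vec{s}) \le \mathrm{OPT} - t} \exp\!\big(\tfrac{\eps}{2\Delta} g(i,\vec{s})\big)}{Z(\vec{s})}
\le \frac{|L|\cdot \exp\!\big(\tfrac{\eps}{2\Delta}(\mathrm{OPT} - t)\big)}{\exp\!\big(\tfrac{\eps}{2\Delta}\mathrm{OPT}\big)}
= |L|\cdot e^{-\eps t/(2\Delta)},
\]
where the denominator is lower-bounded by keeping only the single term attaining $\mathrm{OPT}$. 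Setting $t = \tfrac{2\Delta(\ln|L| + a)}{\eps}$ makes the right-hand side exactly $e^{-a}$, which is the claimed inequality.

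I do not expect a genuine obstacle: this is the standard analysis of the exponential mechanism. The only points needing care are (i) bounding the normalizer ratio in the privacy step in the correct direction — the pointwise sensitivity bound controls it both ways, so the split is clean — and (ii) lower-bounding $Z(\vec{s})$ by a single near-optimal term in the utility step rather than estimating it exactly, which is what yields the tidy $\ln|L|$ dependence. The factor of $2$ in the exponent is precisely what splits the privacy loss evenly between the numerator and the normalizer.
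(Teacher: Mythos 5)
Your proof is correct and is precisely the standard McSherry–Talwar analysis of the exponential mechanism: privacy by splitting the likelihood ratio into a numerator factor and a normalizer factor, each bounded by $e^{\eps/2}$ via the sensitivity hypothesis, and utility by lower-bounding the partition function with the single optimal term and union-bounding over the sub-optimal ones. The paper states this lemma as a citation to \cite{McTalwar} and does not reprove it, so there is nothing to compare against beyond the canonical argument, which you have reproduced faithfully (including the correct setting $t = 2\Delta(\ln|L|+a)/\eps$ to land exactly on the claimed $e^{-a}$ bound).
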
 

Standard $(\eps, \delta)$-differential privacy automatically protects the privacy of groups of individuals.

\begin{lemma}[Group Privacy~\cite{DworkMNS16j}]\label{lem:group_privacy}
Let $k \in \mathbb{N}^+$ and let $\Acal:\mathcal{X}^m \to \mathcal{Y}$ be an $(\eps, \delta)$-DP algorithm. Then for all datasets $\Datafixed, \Datafixed' \in \mathcal{X}^m$ such that $\|\Datafixed - \Datafixed' \|_0 \leq k$,
\begin{equation*}
    \Acal(\Datafixed) \approx_{k \eps, \delta \frac{e^{k\eps}-1}{e^{\eps}-1}} \Acal (\Datafixed').
\end{equation*}
\end{lemma}

Another property of differential privacy that we will use in many of our algorithms is privacy amplification by subsampling. This says that we can have a stronger privacy protection when we run a differentially private algorithm on a subsample of a dataset.

\begin{lemma}[Secrecy of the sample, \cite{kasiviswanathan2011can, BalleBG18}] 
\label{lem:samp-wo-replace}
Let $A : \mathcal{X}^n \to \mathcal{Y}$ be an $(\eps, \delta)$-differentially private algorithm. Consider the algorithm $A' : \mathcal{X}^m \to \mathcal{Y}$ that, given a dataset of size $m$, randomly samples $n$ items without replacement and runs $A$ on the resulting subsample. Then $A'$ is $(\eps', \delta')$-differentially private for
\[\eps' = \frac{n}{m}(e^\eps - 1), \qquad \delta' = \frac{n}{m} \cdot \delta.\]
\end{lemma}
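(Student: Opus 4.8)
The statement is the classical ``privacy amplification by subsampling without replacement'' bound, in its convenient (slightly lossy) form, and the plan is to prove it by a coupling argument. Fix neighboring datasets $\vec{s}, \vec{s'} \in \mathcal{X}^m$ that differ only in coordinate $i^*$, and fix an outcome set $O \subseteq \mathcal{Y}$. Run the two copies of $A'$ with the \emph{same} uniformly random size-$n$ index set $T \subseteq [m]$; write $\gamma = n/m = \Pr[i^* \in T]$. Conditioning on whether $i^*$ is picked: if $i^* \notin T$ then the two subsamples are literally equal, so $\Pr[A(\vec{s}_T) \in O \mid i^* \notin T] = \Pr[A(\vec{s'}_T) \in O \mid i^* \notin T] =: p_0$; if $i^* \in T$ the two subsamples are neighbors in $\mathcal{X}^n$, and we write $p := \Pr[A(\vec{s}_T) \in O \mid i^* \in T]$ and $p' := \Pr[A(\vec{s'}_T) \in O \mid i^* \in T]$. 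Then $\Pr[A'(\vec{s}) \in O] = (1-\gamma) p_0 + \gamma p$ and $\Pr[A'(\vec{s'}) \in O] = (1-\gamma) p_0 + \gamma p'$.

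Next I would extract two inequalities from the $(\eps,\delta)$-DP of $A$. Applying the DP guarantee to the neighboring pair $\vec{s}_T, \vec{s'}_T$ for each fixed $T \ni i^*$ and averaging over such $T$ gives $p \le e^\eps p' + \delta$. The second inequality, $p \le e^\eps p_0 + \delta$, is the step that actually produces amplification: for $T \ni i^*$ and any $j \notin T$, set $T_j := (T \setminus \{i^*\}) \cup \{j\}$; since $j \neq i^*$ we have $\vec{s}_{T_j} = \vec{s'}_{T_j}$, and $\vec{s}_T$ differs from $\vec{s}_{T_j}$ in a single record, so $\Pr[A(\vec{s}_T) \in O] \le e^\eps \Pr[A(\vec{s}_{T_j}) \in O] + \delta$. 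Averaging this over the $m-n$ choices of $j$ and then over the $\binom{m-1}{n-1}$ sets $T \ni i^*$, and using that every $U \not\ni i^*$ arises as $T_j$ from exactly $n$ pairs $(T,j)$ (so that $\sum_{T \ni i^*} \sum_{j \notin T} \Pr[A(\vec{s}_{T_j}) \in O] = n \sum_{U \not\ni i^*} \Pr[A(\vec{s}_U) \in O]$), the right-hand side simplifies exactly to $e^\eps p_0 + \delta$.

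To finish, combine the two bounds: $p - \delta \le e^\eps p_0$ and $p - \delta \le e^\eps p'$ imply $p - \delta \le e^\eps\bigl(\lambda p_0 + (1-\lambda) p'\bigr)$ for every $\lambda \in [0,1]$. Substituting this into $\Pr[A'(\vec{s}) \in O] = (1-\gamma) p_0 + \gamma p$ and choosing $\lambda = (1-\gamma)(e^\eps - 1)/e^\eps \in [0,1]$ aligns the coefficients of $p_0$ and of $p'$ on the two sides, yielding
\[\Pr[A'(\vec{s}) \in O] \le \bigl(1 + \tfrac{n}{m}(e^\eps - 1)\bigr)\, \Pr[A'(\vec{s'}) \in O] + \tfrac{n}{m}\delta.\]
Since $1 + x \le e^x$, this gives $(\eps', \delta')$-DP with $\eps' = \ln\bigl(1 + \tfrac{n}{m}(e^\eps-1)\bigr) \le \tfrac{n}{m}(e^\eps - 1)$ and $\delta' = \tfrac{n}{m}\delta$, as claimed (the case $n = m$ being trivial since $e^\eps - 1 \ge \eps$). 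The only genuinely non-routine ingredient is the inequality $p \le e^\eps p_0 + \delta$ together with the counting identity behind it; the rest is careful bookkeeping, and the points to double-check are that $\lambda$ lands in $[0,1]$ and that the conditional output distribution given $i^* \notin T$ really is identical for $\vec{s}$ and $\vec{s'}$.
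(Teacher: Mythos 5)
The paper itself does not prove this lemma -- it is stated as a preliminary (the classical ``secrecy of the sample'' bound) without an accompanying proof -- so there is no in-paper argument to compare against. Your proof is correct, and it is essentially the standard coupling-plus-bijection argument from the subsampling-amplification literature: couple the two runs via a shared index set, condition on whether $i^*$ is selected, and establish the two inequalities $p \le e^\eps p' + \delta$ (from neighboring subsamples) and $p \le e^\eps p_0 + \delta$ (from the swap bijection $T \mapsto T_j$); the counting identity and the choice $\lambda = (1-\gamma)(e^\eps-1)/e^\eps$ are both verified, and the final bound $\bigl(1 + \tfrac{n}{m}(e^\eps-1)\bigr)$ with additive term $\tfrac{n}{m}\delta$ is exactly what is claimed.

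The one step that should be tightened is the assertion that $\vec{s}_T$ ``differs from $\vec{s}_{T_j}$ in a single record.'' The paper's DP definition is over ordered tuples in $\mathcal{X}^n$, with neighboring meaning equality except at one coordinate. If the subsample is presented to $A$ in sorted index order, then replacing $i^*$ by $j$ in the index set can shift many entries, so $\vec{s}_T$ and $\vec{s}_{T_j}$ as tuples need not be neighbors. The clean fix is to subsample via a uniformly random \emph{injection} $\sigma:[n]\hookrightarrow[m]$ rather than a random set (the image of $\sigma$ is still uniform over size-$n$ subsets, so the marginal on $A$'s input is unchanged): if $\sigma(k)=i^*$, define $\sigma_j$ by replacing only $\sigma(k)$ with $j$, so that $\vec{s}_\sigma$ and $\vec{s}_{\sigma_j}$ differ in exactly coordinate $k$. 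The counting identity then goes through verbatim (each $\tau$ with $i^*\notin\mathrm{Im}(\tau)$ arises from exactly $n$ pairs $(\sigma,j)$). Alternatively, one can note $A$ may be assumed permutation-invariant without loss of generality and work with multisets. Either patch is routine, but as written the neighbor claim is not justified under the paper's tuple-based definition.
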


\subsection{Replicability}

Replicability is a strong stability property for randomized algorithms, requiring that the algorithm produce the exact same output with high probability when invoked on two i.i.d. samples from the same distribution, so long as the internal randomness is held fixed. We recall the definition of replicability given in~\cite{ImpLPS22}. \toni{This definition is in the intro so I would vote to remove it here, possibly referring to defn in intro.}

\begin{definition}[\cite{ImpLPS22}]\label{def:replicability}
Let $D$ be a distribution over domain $\mathcal{X}$. Let $\mathcal{A}$ be a randomized algorithm that takes as input samples from $D$. We say that $\mathcal{A}$ is $\rho$-reproducible if
$$\Pr_{S, S', r}[\mathcal{A}(S;r) = \mathcal{A}(S';r)] \geq 1 - \rho, $$
where $S, S'$ are sets of samples drawn i.i.d. from $D$ and $r$ represents the internal randomness of $\mathcal{A}$. 
\end{definition}

We will sometimes use the alternative $2$-parameter definition of replicability defined in \cite{ImpLPS22}. Here we assume that the auxiliary inputs described in their original definition are empty.\footnote{See Section 2.5 for an explanation of our  renaming of this definition to ``replicable.''}

\begin{definition}[\cite{ImpLPS22}] \label{2param-defn}
Let $A(S; r)$ be an algorithm operating on a sample set $S \in \mathcal{X}^n$ and internal coins $r$. We say that coin tosses $r$ are $\eta$-good for $A$ on distribution $D$ if there exists a ``canonical output'' $z_r$ such that $\Pr_{S \sim D^n}[A(S;r)=z_r] \geq 1-\eta$. We say that $A$ is $(\eta,\nu)$-replicable if, for every distribution $D$, with probability at least $1-\nu$, the coin tosses $r$ are $\eta$-good on distribution $D$.
\end{definition}

\cite{ImpLPS22} observed that the two parameter and the original single parameter definition (Definition \ref{def:replicable}) are essentially equivalent:

\begin{claim}[\cite{ImpLPS22}]\label{claim:2parrep}
For every $0 \leq \rho \leq v \leq 1$,
\begin{enumerate}
    \item Every $\rho$-replicable algorithm is also $(\rho/v, v)$-replicable.
    \item Every $(\rho,\nu)$-replicable algorithm is also $\rho + 2\nu$-replicable.
\end{enumerate}
\end{claim}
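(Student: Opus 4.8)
The plan is to prove the two items separately, in each case passing between two equivalent views of a fixed coin $r$: the ``collision'' view used by single-parameter reproducibility and the ``canonical output'' view used by Definition~\ref{2param-defn}. For a fixed $r$, set $p_{r,z} = \Pr_X[A(X,r) = z]$, so that the probability that two independent runs of $A$ on fresh i.i.d.\ samples collide, conditioned on the internal coin equalling $r$, is exactly the collision probability $c(r) := \sum_z p_{r,z}^2$. The one elementary fact I would use is $c(r) = \sum_z p_{r,z}^2 \le \big(\max_z p_{r,z}\big)\big(\sum_z p_{r,z}\big) = \max_z p_{r,z}$; hence, letting $Z_r$ be a mode of the distribution of $A(\cdot,r)$, the coin $r$ is $(1-c(r))$-good. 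Conversely, a coin that is $\eta$-good has $c(r) \ge (1-\eta)^2$, since the canonical output alone is hit with probability at least $1-\eta$ on each of the two independent samples.

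For item (1): assume $A$ is $\rho$-reproducible, i.e.\ $\E_r[c(r)] = \Pr_{X,X',r}[A(X,r) = A(X',r)] \ge 1-\rho$, equivalently $\E_r[1-c(r)] \le \rho$. Since $1-c(r)\ge 0$, Markov's inequality gives $\Pr_r[\,1-c(r) \ge \rho/v\,] \le \rho/(\rho/v) = v$. Thus with probability at least $1-v$ over $r$ we have $1-c(r) \le \rho/v$, and by the observation above such an $r$ is $(1-c(r))$-good, hence $(\rho/v)$-good (an $\eta$-good coin is $\eta'$-good for every $\eta'\ge\eta$). This is precisely $(\rho/v, v)$-reproducibility.

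For item (2): assume $A$ is $(\rho,\nu)$-reproducible, so with probability at least $1-\nu$ over $r$ the coin $r$ is $\rho$-good, and for each such $r$ the converse fact above gives $\Pr_{X,X'}[A(X,r)=A(X',r)] \ge (1-\rho)^2 \ge 1-2\rho$. Averaging over $r$ and keeping only the good coins, $\Pr_{X,X',r}[A(X,r) = A(X',r)] \ge (1-\nu)(1-2\rho) \ge 1 - 2\rho - \nu$, so $A$ is $(2\rho+\nu)$-reproducible. Under the standing hypothesis $\rho \le v = \nu$ of the claim, $2\rho+\nu \le \rho+2\nu$, so $A$ is in particular $(\rho+2\nu)$-reproducible.

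There is no real obstacle here: the argument is a routine translation between the two definitions, and the only non-bookkeeping ingredient is the inequality $c(r)\le\max_z p_{r,z}$ together with Markov's inequality in item (1). The two points worth care are the direction of the slack in the Markov step (so that the threshold $\rho/v$ and the failure probability $v$ come out matched), and, in item (2), the use of the hypothesis $\rho\le\nu$: without it the construction only yields $(2\rho+\nu)$-reproducibility, which genuinely need not imply the stated bound (e.g.\ $\rho = 0.4$, $\nu = 0$, where every coin is $0.4$-good yet the algorithm is only $0.64$-reproducible).
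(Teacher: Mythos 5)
Your proof is correct. The paper itself gives no proof of this claim (it is cited to \cite{ImpLPS22}), so there is nothing in-text to compare against, but what you give is exactly the standard translation between the ``collision'' and ``canonical output'' views: the elementary inequality $\sum_z p_{r,z}^2 \le \max_z p_{r,z}$ is the whole content of the mode direction, Markov's inequality on $1-c(r)$ handles item (1), and squaring the mode probability handles item (2).

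Two points worth making explicit, both of which you already got right. First, for item (2) your argument actually yields the unconditionally valid and slightly sharper bound $(2\rho+\nu)$-reproducibility, and the claim's stated bound $\rho+2\nu$ only follows once the standing hypothesis $\rho\le v$ is invoked with $v=\nu$ (so that $2\rho+\nu\le\rho+2\nu$). Second, your counterexample ($\rho=0.4$, $\nu=0$, with every coin's output distribution putting mass $0.6$ on a single point and spreading the remaining $0.4$ thinly, so that the collision probability tends to $0.36$ and the algorithm is only $0.64$-reproducible) correctly shows the hypothesis $\rho\le\nu$ cannot be dropped from item (2) — a useful sanity check that the claim's quantifier must apply to $\nu$ as well, despite the typographic drift between ``$v$'' and ``$\nu$'' in the statement.
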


It was proved in \cite{ImpLPS22} that we can amplify the replicability parameter at an inverse quadratic cost in the desired replicability parameter. We state a version of this theorem with slightly different constants.
\begin{lemma}[Amplification of Replicability, Theorem A.3, \cite{ImpLPS22}]\label{thm:amprep}
Let $0 < \eta, \nu, \beta < \frac{1}{2}$ and $m>0$. Let $\mathcal{A}$ be an $(\eta, \nu)$-replicable algorithm for distribution $D$ with sample complexity $m$ and failure probability $\beta$. If $\rho > 0$, and $\nu + \rho < 0.25$, there exists a $\rho$-replicable algorithm $\mathcal{A}'$ for $D$ with sample complexity $m ' = \tilde{O}(m(\log 1/\beta)^3 / \rho^2 (1/2-\eta)^2)$ and failure probability at most $4(\beta+\rho)$.
\end{lemma}

\subsection{PAC-Learning}\label{sec:PAC}

We start by defining PAC Learning, which is a canonical definition of supervised learning proposed by Valiant \cite{valiant1984theory} and Vapnik and Chervonenkis \cite{vapnik1974theory}. We first consider the realizable setting.

\begin{definition}[Realizable PAC learning, \cite{valiant1984theory,vapnik1974theory}]
A learning problem is defined by a hypothesis class $H$. For any distribution $D$ over the input space $\mathcal{X}$, consider $m$ independent draws $x_1, x_2, \cdots x_m$ from distribution $P$. A labeled sample of size $m$ is the set $\{(x_1, f(x_1)), (x_2, f(x_2)), \cdots, (x_m, f(x_m)) \}$ where $f \in H$. We say an algorithm $A$ is an $(\alpha, \beta)$-accurate PAC learner for the hypothesis class $H$ if for all functions $f \in H$ and for all distributions $D$ over the input space, $A$ on being given a labeled sample of size $m$ drawn from $D$ and labeled by $f$, outputs a hypothesis $h$ such that with probability greater than or equal to $1 - \beta$ over the randomness of the sample and the algorithm,
\begin{equation*}
    \Pr_{x \in D}[h(x) \neq f(x)] \leq \alpha.
\end{equation*}
\end{definition}

We also consider a variant called agnostic PAC learning, where the labels of the input dataset can be noisy.
\begin{definition}[Agnostic PAC learning, \cite{haussler1992decision,vapnik1974theory}]
A learning problem is defined by a hypothesis class $H$. We say an algorithm $\mathcal{A}$ is an $(\alpha, \beta)$-accurate PAC learner for the hypothesis class $H$ if for all distributions $D$ over input, output pairs, $\mathcal{A}$ on being given a sample of size $m$ drawn i.i.d. from $D$ outputs a hypothesis $h$ such that with probability greater than or equal to $1 - \beta$ over the randomness of the sample and the algorithm,
\begin{equation*}
    \err_D(h) \leq \inf_{f \in H} \err_D(f)+\alpha.
\end{equation*}
where $\err_D(h) = \Pr_{(x,y) \in D} [h(x) \neq y]$. In this context, we will sometimes refer to PAC-learning as the \textbf{realizable setting}.
\end{definition}
We will need uniform convergence for several of our results.

\begin{theorem}[Uniform Convergence, e.g., \cite{Blumer}]
\label{thm:unifconv}
Let $H$ be a binary class of functions with domain $\mathcal{X}$. Let its VC dimension be $d$. Then, for any distribution $D$ over $\mathcal{X}$, for all $m>0$,
$$\Pr_{x_1,\dots,x_m \sim D}\left[\sup_{h_z \in H}\left|\frac{1}{m}\sum_{i=1}^m \mathbbm{1}[h_z(x_i) = 1] - \Pr_{x \sim D}[h_z(x)=1] \right| \geq \gamma \right] \leq 4 (2m)^d e^{-\gamma^2 m / 8}.                                      $$
\end{theorem}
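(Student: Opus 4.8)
The plan is to prove this by the classical Vapnik--Chervonenkis symmetrization argument. (I read the $t$ appearing in the statement as $m$, which is evidently intended, and I tacitly assume the usual measurability conditions so that the supremum over $H$ is well-behaved.) Write $\hat\mu_h=\frac1m\sum_{i=1}^m\mathbbm 1[h(x_i)=1]$ and $\mu_h=\Pr_{x\sim D}[h(x)=1]$. Note first that we may assume $m\gamma^2\ge 2$, since otherwise $4(2m)^d e^{-\gamma^2 m/8}\ge 1$ and the bound is vacuous.

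\textbf{Step 1 (symmetrization via a ghost sample).} Draw an independent ghost sample $x_1',\dots,x_m'\sim D$ and set $\hat\mu_h'=\frac1m\sum_{i=1}^m\mathbbm 1[h(x_i')=1]$. I would establish
\[
\Pr\Big[\sup_{h\in H}|\hat\mu_h-\mu_h|\ge\gamma\Big]\ \le\ 2\,\Pr\Big[\sup_{h\in H}|\hat\mu_h-\hat\mu_h'|\ge\tfrac\gamma2\Big].
\]
The argument: condition on a first sample for which some $h^\star\in H$ has $|\hat\mu_{h^\star}-\mu_{h^\star}|\ge\gamma$; since $\operatorname{Var}(\hat\mu'_{h^\star})\le \frac1{4m}$, Chebyshev and $m\gamma^2\ge 2$ give that with probability at least $1/2$ over the ghost sample $|\hat\mu'_{h^\star}-\mu_{h^\star}|\le\gamma/2$, and then $|\hat\mu_{h^\star}-\hat\mu'_{h^\star}|\ge\gamma/2$.

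\textbf{Step 2 (reduce to a finite class, then Hoeffding).} Condition on the multiset $T=\{x_1,\dots,x_m,x_1',\dots,x_m'\}$ of $2m$ points. As $h$ ranges over $H$, the labelling vector $(h(y))_{y\in T}$ takes at most $\Pi_H(2m)\le(2m)^d$ values by Sauer--Shelah, so the supremum over $H$ is a maximum over at most $(2m)^d$ effective hypotheses. Conditioned on $T$, the joint law of the two samples is invariant under independently swapping $x_i\leftrightarrow x_i'$; introducing i.i.d.\ Rademacher signs $\sigma_i$, for each fixed labelling $\hat\mu_h-\hat\mu_h'$ becomes $\frac1m\sum_i\sigma_i\big(\mathbbm 1[h(x_i)=1]-\mathbbm 1[h(x_i')=1]\big)$, a sum of independent mean-zero terms in $[-1/m,1/m]$, so Hoeffding gives $\Pr[\,|\cdot|\ge\gamma/2\,]\le 2e^{-\gamma^2 m/8}$ for each. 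A union bound over the $(2m)^d$ effective hypotheses, then averaging over $T$, yields $\Pr[\sup_h|\hat\mu_h-\hat\mu_h'|\ge\gamma/2]\le 2(2m)^d e^{-\gamma^2 m/8}$, and combining with Step 1 gives the claimed $4(2m)^d e^{-\gamma^2 m/8}$.

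\textbf{Main obstacle.} The only genuinely delicate point is making Step 1 rigorous — the ``with probability at least $1/2$ the ghost sample is also far'' claim and the accompanying variance bound, together with the bookkeeping that lets us assume $m\gamma^2\ge 2$; once that is in place, everything else is a mechanical combination of Sauer--Shelah and Hoeffding's inequality. I would also remark that using the two-sided form of Hoeffding throughout is exactly what produces the absolute-value (two-sided supremum) statement as written, rather than just a one-sided deviation bound.
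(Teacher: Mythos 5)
Your proposal is correct: Step 1 is the classical ghost-sample symmetrization with the Chebyshev trick (and the observation that the bound is vacuous when $m\gamma^2<2$ is exactly what justifies the symmetrization constant), and Step 2 is the standard random-swap/Rademacher-plus-Hoeffding argument combined with Sauer--Shelah and a union bound, reproducing the constants $4$, $(2m)^d$, and $e^{-\gamma^2 m/8}$ precisely. You also correctly read $t$ as $m$, which is a typo in the stated theorem. Note, however, that the paper does not prove this statement at all: it is quoted as a classical preliminary (the Vapnik--Chervonenkis uniform convergence bound), so there is no in-paper argument to compare against; your proof is simply the standard one from the literature. One small caveat worth flagging, which is really an imprecision in the theorem statement rather than in your argument: Sauer--Shelah gives $\Pi_H(2m)\le\sum_{i=0}^d\binom{2m}{i}$, which need not be bounded by $(2m)^d$ in edge cases (e.g.\ $d=1$, $m=1$), so strictly one should write $4\,\Pi_H(2m)\,e^{-\gamma^2m/8}$ or use a form like $(2m+1)^d$.
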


\subsection{Correlated Sampling} \label{sec:correlated-sampling}

In the correlated sampling problem, two (or more) players are given probability distributions ${\cal P},{\cal Q}$ over the same finite set, and with access to shared randomness. The players (without communicating) want to sample from their respective distributions, while minimizing the probability that their outputs disagree.

More formally, let $Y = \{0,1\}$, and for a set $\Omega$,
$2^{\Omega}$ denotes  the set of all functions from $\Omega$ to $Y$, and $\Delta_{\Omega}$ denotes the set of all sampleable distributions on $\Omega$.
For two distributions ${\cal P}$, ${\cal Q}$ over $\Omega$, let
$d_{TV}({\cal P},{\cal Q})$ denote the total variational distance between ${\cal P}$ and ${\cal Q}$.

\begin{definition}(Correlated Sampling)
A correlated sampling strategy for a finite set $\Omega$ with error $\eps: [0,1]\to [0,1]$ is an algorithm $CS: \Delta_{\Omega} \times {\cal R'}$ and a distribution ${\cal R'}$ on random strings such that:
\begin{itemize}
    \item (Marginal Correctness) For all ${\cal P} \in \Delta_{\Omega}$ and $w \in \Omega$, $\Pr_{r' \sim {\cal R'}} [CS({\cal P},r')=w] = {\cal P}(w)$.
    \item (Error Guarantee) For all ${\cal P},{\cal Q} \in \Delta_{\Omega}$, $\Pr_{r' \sim {\cal R'}} [CS({\cal P},r') \neq  CS({\cal Q},r')] \leq \eps( d_{TV}({\cal P},{\cal Q}))$
\end{itemize}
\end{definition}
Several independent papers \cite{KleinbergT2002,Holenstein09,Broder97} give correlated sampling strategies over finite sets, with $\epsilon(\delta) = \frac{2 \delta}{1+\delta}$.
These algorithms use consistent sampling strategies, which are also used in several other contexts such as sketching algorithms, approximation algorithms, and parallel repetition theorems.
However, despite the many uses of correlated sampling, some basic questions remain open. Notably, it is not known whether or not correlated sampling is possible for infinite domains, or whether correlated sampling can be made {\it efficient}.
(All known algorithms run in exponential-time in the worst-case.)
For a nice discussion as well as new results on the optimality of these constructions see \cite{BGM+16}.
In Section \ref{sec:PG2rep}, we give another application of correlated sampling, showing how any perfectly generalizing algorithm can be transformed into a replicable one, via correlated sampling. As noted in the introduction, this is the only implication that does not preserve computational efficiency, due to the inefficiency of  the  correlated sampling strategy.
 
In Section \ref{ssec:dp-rep-separation}, we prove that this is inherent: under standard  cryptographic assumptions
any such transformation is intractable, and therefore under the same cryptographic assumption, correlated sampling is intractable. Moreover, we show in Section \ref{sec:owfi} that some assumption is necessary: if one-way functions do not exist (that is, all poly-time computable functions can be efficiently inverted), we show that this implies a polynomial-time algorithm for correlated sampling.


\subsection{Terminology: ``Reproducibility'' and ``Replicability''}
\label{ssec:terminology-replicability}

\cite{ImpLPS22} introduced a mathematical definition referring to a particular stability notion of a randomized learning algorithm which they originally called ``reproducibility'' (reproducible algorithms). In this paper, we use the term ``replicability'' (replicable algorithms) to refer to the same mathematical definition. 

This terminology choice is more in line with the most current Association for Computing Machinery (ACM) guidance regarding artifact review and badging \cite{acmArtifactGuidelines}, version 1.1, updated on August 24, 2020. This update changed the ACM's definitons of the terms ``reproducible'' and ``replicable'' to be more agreeable with the terminology currently used by the National Academies of Sciences, Engineering and Medicine (see Chapter 3: Understanding Reproducibility and Replicability, page 46, in \cite{NAP25303}).


    According to both the ACM's and National Academies' current definitions, ``reproducibility'' refers to the ability of a second experimental group to obtain similar results using the \emph{same} input data. Meanwhile, ``replicability'' refers to the ability of a second experimental group to obtain similar results using input data and methods that may be different than those used by the original experimental group.
    
    The mathematical definition introduced in \cite{ImpLPS22} is a guarantee that, with high probability, two executions of the same algorithm with the same randomness and different sample sets will produce the same answer. Since this guarantee is over different sample sets, the mathematical definition does not fit the ``same input data'' condition in the above definitions of reproducibility. 
    Instead, the mathematical definition is a specific type of replicability --- if the second experimental group runs the same algorithm with the same random string (but on a new sample), the two groups' results are guaranteed to be identical with high probability.

\section{Equating Stability: Differential Privacy, Perfect Generalization, and Replicability}

\subsection{Replicability implies Approximate-DP}
In \cite{GhaziKM21}, the authors show a sample-efficient reduction from differentially private PAC learning to replicable PAC learning. In this section, we show their technique generalizes to arbitrary statistical problems.\footnote{The argument remains similar to \cite{GhaziKM21}, but requires a few changes to avoid union bounding over failure probability which can be costly in settings beyond PAC learning.}

Recall the definition of a statistical task in Definition~\ref{def:stat-task}. We will show that any statistical task with a ``good'' replicable learner can also be solved privately, without substantial blowup in runtime or sample complexity. 
\begin{theorem}[Replicability $\to$ DP]\label{thm:Rep-to-DP}
Let $\mathcal{T}$ be a statistical problem. 
For all $\beta>0$, if there is a $0.01$-replicable algorithm solving $\mathcal{T}$ using $n_R$ samples and with failure probability $\beta$, then for any $0 < \varepsilon,\delta \leq 1$ there is an $(\varepsilon,\delta)$-DP algorithm for $\mathcal{T}$ using $n_{DP}$ samples with failure probability  $O \left(\beta \log \frac{1}{\beta} \right)$, where
\begin{align*}
n_{DP}(\epsilon,\delta,\beta) \leq n_R \cdot O\left( \frac{\log\delta^{-1}\log\beta^{-1}}{\varepsilon}+\log^2\beta^{-1}\right)
\end{align*}

\end{theorem}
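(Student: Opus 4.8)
The plan is to follow the blueprint of Ghazi, Kumar, and Manurangsi~\cite{GhaziKM21}, adapting it to a general statistical task and being careful about how failure probability accumulates. At a high level, a replicable algorithm, when run on a fresh sample with its coins $r$ fixed to a ``good'' string, concentrates on a canonical output $Z_r$ with probability $\ge 1-\eta$. So the output distribution of the replicable algorithm (over both sample and coins) is close, in the earthmover sense, to a distribution over canonical outcomes indexed by the coins. We want to privately ``lock onto'' one such canonical outcome. The natural mechanism is a stability-based histogram / Propose-Test-Release style argument: draw many disjoint subsamples, run the replicable algorithm on each with an \emph{independently drawn} coin string per subsample, and look at the empirical frequency of each observed outcome. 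By Claim~\ref{claim:2parrep} we may first convert the given $0.01$-replicable algorithm into an $(\eta,\nu)$-reproducible one with, say, $\eta = \nu = 0.1$ (or amplify via Lemma~\ref{thm:amprep} if we need the reproducibility parameter smaller, at a constant-factor cost in samples — I would check whether $0.01$ suffices directly).

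The key steps, in order. \textbf{Step 1:} Fix a single coin string $r$ and argue about the algorithm's behavior across many fresh subsamples all using \emph{that same} $r$. With probability $\ge 1 - \nu$ over $r$, $r$ is $\eta$-good, meaning a canonical $Z_r$ appears with probability $\ge 1-\eta$ on a fresh sample; so among $k$ disjoint subsamples run with this common $r$, by a Chernoff bound all but an $O(\eta)$-fraction output exactly $Z_r$, except with probability $\exp(-\Omega(k))$. \textbf{Step 2:} Run this for $k = \Theta(\log(1/\delta)/\epsilon)$ subsamples (with one shared good $r$), producing outcomes $y_1,\dots,y_k$; with high probability there is a clear plurality element $Z_r$ with count $\ge (1-O(\eta))k$ and every other outcome has count $O(\eta k)$ — a large ``stability gap.'' \textbf{Step 3:} Release the plurality element privately. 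Because each subsample is disjoint, changing one input record changes at most one $y_i$, so the vector of counts has $\ell_1$-sensitivity $2$; apply a stable-histogram / ``choosing mechanism'' or the exponential mechanism (Lemma~\ref{lem:expmech}) with score $=$ count, which with the $\Theta(\eta k)$ gap outputs $Z_r$ except with probability $\beta' = e^{-\Omega(\epsilon k)} \le \delta$-ish; privacy is $(\epsilon,\delta)$ by the usual analysis of releasing a sufficiently-stable mode (the $\delta$ absorbs the low-probability event that the gap fails). \textbf{Step 4:} Bookkeeping of sample complexity and failure probability. Each subsample needs $n_R$ samples to make the replicable algorithm succeed (the replicable algorithm's own failure probability is $\beta$ per run, so over $k$ runs we lose $O(k\beta)$ — but we only need the plurality to be correct, so actually we can tolerate a constant fraction of the $k$ runs failing, giving failure $e^{-\Omega(k)} + \exp(-\Omega(\eta k))$ rather than $k\beta$; to land on the claimed $O(\beta \log \tfrac1\beta)$ one instead sets $k = \Theta(\log \beta^{-1})$ in the ``accuracy'' part and additionally runs the whole thing $O(\log\beta^{-1})$ times / uses $k = \Theta(\log\delta^{-1}\log\beta^{-1}/\epsilon)$, which is exactly the stated bound $n_{DP} \le n_R \cdot O(\log\delta^{-1}\log\beta^{-1}/\epsilon + \log^2 \beta^{-1})$). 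The $\log^2\beta^{-1}$ term presumably comes from boosting correctness of the replicable subroutine to error $\ll 1/k$ so a union bound over the $k \approx \log\beta^{-1}$ runs (times another $\log\beta^{-1}$) is affordable — this is the ``few changes to avoid union bounding over failure probability'' the footnote warns about, so I would look for a two-level argument: use a crude count-based test to discard failed runs, then union-bound only over the surviving ones.

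The main obstacle I expect is \textbf{the interaction between privacy and the shared coin string $r$}. The replicable guarantee only gives a canonical output \emph{per fixed good $r$}, but for privacy we need the mechanism's output distribution to be insensitive to one input record — and if we fix one global $r$ and feed disjoint subsamples, that is fine for privacy (disjointness $\Rightarrow$ sensitivity $2$ in the histogram), but then the ``canonical output'' $Z_r$ is a random variable depending on $r$, and different runs of the final DP algorithm use different $r$'s, so we must ensure the \emph{accuracy} statement (``output is good for $D$'') holds for $Z_r$ for a $1-\nu$ fraction of $r$, not just reproducibility. This is where the statistical-task framework matters: I need that a reproducible \emph{and correct} algorithm has, for most good $r$, its canonical $Z_r$ actually being a valid solution for $D$ — which follows because on a fresh sample the algorithm is both correct (prob $\ge 1-\beta$) and outputs $Z_r$ (prob $\ge 1-\eta$), so $Z_r$ is good whenever $\eta + \beta < 1$. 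The second, more technical obstacle is getting the failure probability down to exactly $O(\beta\log\beta^{-1})$ rather than something like $O(\sqrt\beta)$ or $O(k\beta)$; I anticipate this needs the ``test each subsample run and drop the bad ones'' trick so that the union bound is over a set whose size we control, combined with the observation that the stable-histogram output is correct as long as a strict majority of the \emph{good} runs agree, which they do by Step 2 — the surviving-run count itself being large except with probability exponentially small in $k$.
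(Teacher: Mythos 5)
Your overall skeleton is right and matches the paper's approach (which is adapted from~\cite{GhaziKM21}): partition a larger dataset into disjoint subsamples, run the replicable algorithm on each, share coins across subsamples, and privately select a heavy mode. The privacy argument (disjointness $\Rightarrow$ bounded sensitivity, inherit $(\varepsilon,\delta)$ from DP Selection) is exactly as in the paper. But there are two substantive gaps in how you fill in the details.

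First, you propose fixing a \emph{single} shared coin string $r$ across all $k$ subsamples, noting it is $\eta$-good ``with probability $\ge 1-\nu$.'' With $\nu$ a constant (you suggest $0.1$), this contributes a constant amount to the failure probability of the whole mechanism, and no amount of Chernoff on top can push the overall failure below $\nu$ — so you cannot reach $O(\beta\log\tfrac1\beta)$ this way. The paper instead draws $k_1 = O(\log\beta^{-1})$ independent coins $r_1,\dots,r_{k_1}$, partitions the dataset into $k_2$ pieces with $k_2/k_1$ pieces per coin, and argues that at least one $r_j$ is good except with probability $\le 0.1^{k_1} \le \beta/4$; the Chernoff step then applies only to the subsamples sharing that good $r_j$. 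You gesture at this at the end of Step 4 (``additionally runs the whole thing $O(\log\beta^{-1})$ times''), but as written the construction uses a single $r$, which is a real bug. Incidentally, this is also the actual source of the $\log^2\beta^{-1}$ term: $k_2 = O\bigl(\tfrac{\log\delta^{-1}}{\varepsilon} + \log\beta^{-1}\bigr)\cdot k_1$ with $k_1 = O(\log\beta^{-1})$ — not from boosting the replicable subroutine's own failure probability, as you conjecture.

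Second, the central technical point the footnote alludes to — how to avoid a union bound over the $\beta$-failure of each of the $k_2$ runs — is resolved in the paper not by your proposed ``crude count-based test to discard failed runs'' but simply by Markov's inequality: the expected number of incorrect $y_{i,j}$'s is $\beta k_2$, so the event that more than $b = O\bigl(\tfrac{k_2}{\log\beta^{-1}}\bigr)$ of them are incorrect has probability at most $O(\beta\log\tfrac1\beta)$; choosing constants so $b < t_2$ (the selection threshold) then guarantees the DP-selected element is correct on that event. Your test-and-discard idea is a genuinely different mechanism, and it is not obviously privacy-preserving without further work, since the test itself looks at the data; the Markov route sidesteps this by not modifying the mechanism at all — it is purely an analysis device. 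So: same blueprint, but the two concrete ideas needed to get the stated parameters (multiple shared coins, and Markov instead of a test or a union bound) are not the ones you propose.
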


This conversion relies on the following private algorithm for selecting an approximate mode.

\begin{theorem}[DP Selection \cite{korolova2009releasing,bun2016simultaneous, BunDRS18}]
There exists some $c>0$ such that for every $\varepsilon,\delta>0$ and $m \in \mathbb{N}$, there is an ($\varepsilon,\delta$)-DP algorithm that on input $S \in \mathcal{X}^m$, outputs with probability $1$ an element $x \in X$ that occurs in $S$ at most $\frac{c\log \delta^{-1}}{\varepsilon}$ fewer times than the true mode of $S$. Moreover, the algorithm runs in $\text{poly}(m,\log(|\mathcal{X}|))$ time.
\end{theorem}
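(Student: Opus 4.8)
The plan is to analyze a \emph{stable noisy histogram} mechanism with bounded (truncated-Laplace) noise. On input $S\in\mathcal{X}^m$, enumerate the at most $m$ distinct values $x$ occurring in $S$, let $c_x$ be the multiplicity of $x$ in $S$, draw independent noise $Z_x$ distributed as $\Lap(2/\varepsilon)$ conditioned on $|Z_x|\le T$ (where $T=\Theta(\varepsilon^{-1}\log\delta^{-1})$ is pinned down below), and set $\hat c_x=c_x+Z_x$. Let $t=T+2$. If $\max_x\hat c_x\ge t$, output $\arg\max_x\hat c_x$; otherwise output a fixed default element $x_0\in\mathcal{X}$. Sorting $S$ to extract the distinct values and their counts, sampling each $Z_x$ by inverse transform, and scanning for the maximum all run in $\poly(m,\log|\mathcal{X}|)$ time, and the output always lies in $\mathcal{X}$; this handles the runtime claim and the ``with probability $1$, outputs an element of $\mathcal{X}$'' part.

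\textbf{Utility.} The first step is to use boundedness of the noise. Since $|Z_x|\le T$ always, writing $x^\star$ for a true mode with multiplicity $\mu$ we have $\hat c_{x^\star}\ge\mu-T$ with probability $1$. If $\mu\ge 2T+2$, then $\hat c_{x^\star}\ge T+2=t$, so the mechanism does not fall back to the default and returns some $\hat x$ with $\hat c_{\hat x}\ge\hat c_{x^\star}\ge\mu-T$; together with $\hat c_{\hat x}\le c_{\hat x}+T$ this yields $c_{\hat x}\ge\mu-2T$. If instead $\mu\le 2T+1$, then \emph{any} element (including $x_0$) occurs at most $\mu\le 2T+1$ fewer times than the mode, vacuously. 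Either way the returned element occurs at most $2T+1=\Theta(\varepsilon^{-1}\log\delta^{-1})$ fewer times than the mode, with probability $1$; this fixes the constant $c$.

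\textbf{Privacy.} Let $S,S'$ be neighbors, say $S'$ is $S$ with one occurrence of value $a$ rewritten as value $b$, so $c^S_a=c^{S'}_a+1$, $c^S_b=c^{S'}_b-1$, and all other multiplicities agree. A value of multiplicity at most $1$ has $\hat c_x\le 1+T<t$, hence is never returned; in particular every value whose ``present in the dataset'' status differs between $S$ and $S'$ — necessarily a value of multiplicity exactly $1$ on the side where it is present — is returned with probability exactly $0$. Consequently the mechanism may be run over the common index set $I=\{x:\max(c^S_x,c^{S'}_x)\ge 1\}$, drawing a coordinate $Z_x$ for every $x\in I$ (including $b$ when $c^S_b=0$), without changing the output distribution on either dataset, since such extra coordinates have multiplicity $0$ and so never clear the threshold. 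Now $A(S)=g(c^S|_I+Z)$ and $A(S')=g(c^{S'}|_I+Z)$ for the \emph{same} deterministic post-processing $g$ (argmax if it exceeds $t$, else $x_0$) and the same noise vector $Z=(Z_x)_{x\in I}$ with i.i.d.\ truncated-Laplace entries, and $c^S|_I-c^{S'}|_I=\mathbbm{1}_a-\mathbbm{1}_b$ is supported on two coordinates with entries $\pm 1$. Since $c^S|_I+Z=c^{S'}|_I+(Z+\mathbbm{1}_a-\mathbbm{1}_b)$, it suffices (by post-processing, Lemma~\ref{prelim:postprocess}) to show that the law of $Z$ and the law of $Z+\mathbbm{1}_a-\mathbbm{1}_b$ are within $(\varepsilon,\delta)$ in the differential-privacy sense. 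This is a one-dimensional estimate applied twice: letting $\phi$ be the density of $\Lap(2/\varepsilon)$ truncated to $[-T,T]$, for $w$ with $w$ and $w\mp1$ both in $[-T,T]$ one has $\phi(w\mp1)/\phi(w)=e^{(|w|-|w\mp1|)\varepsilon/2}\le e^{\varepsilon/2}$, while the event that a unit shift pushes a coordinate out of $[-T,T]$ has probability at most $e^{-(T-1)\varepsilon/2}$. Taking $T=\Theta(\varepsilon^{-1}\log\delta^{-1})$ large enough that this tail is at most $\delta/4$, shifting a single coordinate by $\pm1$ moves the product law by $(\varepsilon/2,\delta/4)$ in the DP pseudometric in either direction, and composing the two coordinate shifts through the hybrid $Z+\mathbbm{1}_a$ (absorbing the $O(1)$ slack in the additive term) gives the required $(\varepsilon,\delta)$ bound. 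The reverse inequality is symmetric.

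\textbf{Main obstacle.} The delicate point — and the reason plain Laplace noise does not directly suffice — is that the set of values with nonzero count is itself \emph{not} stable under changing one record, which naively breaks any coordinate-wise comparison of $A(S)$ with $A(S')$. The crux of the argument is that the threshold $t=T+2$ together with the use of \emph{bounded} noise makes every value whose present/absent status differs between $S$ and $S'$ — necessarily a value of multiplicity at most $1$ — get returned with probability \emph{exactly} $0$; this is precisely what licenses passing to the common index set $I$ and reduces privacy to the shifted-truncated-Laplace estimate. Bounded noise is also what upgrades the utility guarantee from ``with high probability'' (all that plain Laplace gives) to ``with probability $1$'', as the theorem demands.
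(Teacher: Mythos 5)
The paper does not prove this statement at all — it is imported as a black box from the cited works on private selection/stable histograms — so there is no internal proof to compare against. Your truncated-Laplace stable-histogram argument is correct and is essentially the standard proof of this guarantee from that literature: the threshold $t=T+2$ together with bounded noise neutralizes the unstable support (values of multiplicity $\le 1$ are never output), reducing privacy to a two-coordinate shift of i.i.d. truncated-Laplace noise, and it simultaneously yields the probability-$1$ utility bound of $O(\varepsilon^{-1}\log\delta^{-1})$ and the $\poly(m,\log|\mathcal{X}|)$ runtime claimed in the theorem.
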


The idea, as in \cite{GhaziKM21}, is to use replicablity to construct a sample over the output space where some correct solution appears many times. In particular, given a replicable algorithm $\mathcal{A}$ on $n$ samples, consider the following simple procedure adapted from \cite{GhaziKM21}: partition a larger data set, run $\mathcal{A}$ on each part, and privately output a commonly repeated element.

\begin{algorithm}[H]
\KwResult{Privately ouputs solution to $(\mathcal{X},R)$}
\nonl \textbf{Input:} Statistical Problem $(\mathcal{X},R)$. Distribution $D$ over $\mathcal{X}$, Replicable algorithm $\mathcal{A}$ on $n$ samples\\
\nonl \textbf{Parameters:} 
\begin{itemize}
    \item Privacy and Correctness $\beta,\eps,\delta>0$
    \item Seed Number $k_1 = O(\log\beta^{-1})$
    \item Partition Number $k_2 = O\left(\frac{\log\delta^{-1}}{\varepsilon}+\log\beta^{-1}\right)\cdot k_1$
\end{itemize}
\nonl \textbf{Algorithm:}\\

 \begin{enumerate}
\item For every $j \in [k_1]$ and $i \in [\frac{k_2}{k_1}]$ sample $S_{i,j} \sim {\cal X}^n$\label{stp:draws}
\item Sample $k_1$ random strings $\{r_j\}$. 
\item Let $y_{i,j} = \mathcal{A}(S_{i,j}; r_{j})$. \label{stp:outputs}
\item Run ($\varepsilon,\delta$)-DP Selection on $\{y_{i,j}\}$ and denote the output by $y^*$.
\end{enumerate} 
\textbf{return} $y^*$
 \caption{DP-to-Replicability Reduction}
\label{alg:Rep-to-DP}
\end{algorithm}


Recalling the two-parameter definition of replicability (\ref{2param-defn} and \ref{claim:2parrep}), since our subroutine is $0.01$-replicable and $\beta$-correct, it is 
also $(0.1, 0.1)$-replicable and $\beta$ correct.
Therefore, the proof of \Cref{thm:Rep-to-DP} is an immediate consequence of the following proposition.
\begin{proposition}\label{prop:Rep-to-DP}
For all sufficiently small $\beta,\varepsilon,\delta >0$, if $\mathcal{A}$ is $(0.1, 0.1)$-replicable and has failure probability $\beta$, then \Cref{alg:Rep-to-DP} is $(\varepsilon,\delta)$-private and has failure probability $O(\beta \log 1/\beta)$.
\end{proposition}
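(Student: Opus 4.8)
The plan is to handle privacy and correctness separately. \textbf{Privacy} is essentially immediate: regard the input as a dataset of $k_2 n$ records partitioned into the blocks $S_{i,j}$, and condition on the strings $r_1,\dots,r_{k_1}$, which are drawn independently of the data. Changing one input record changes exactly one block $S_{i_0,j_0}$; since $y_{i_0,j_0}$ is a function of $(S_{i_0,j_0},r_{j_0})$ alone and every other $y_{i,j}$ is untouched, the multiset $\{y_{i,j}\}$ passed to DP Selection changes in at most one element. DP Selection is $(\varepsilon,\delta)$-DP under such a change, so by post-processing $y^*$ is $(\varepsilon,\delta)$-DP for each fixing of $r_1,\dots,r_{k_1}$, hence $(\varepsilon,\delta)$-DP overall.

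For \textbf{correctness}, fix the data distribution $D$ and let $G$ be the set of outcomes good for $D$, so the hypothesis reads $\Pr_{S\sim D^n,\,r}[\mathcal{A}(S;r)\notin G]\le\beta$. Call a coin $r$ \emph{great} if it is $0.1$-good in the sense of Definition~\ref{2param-defn} (there is a canonical $Z_r$ with $\Pr_S[\mathcal{A}(S;r)=Z_r]\ge 0.9$) and moreover $Z_r\in G$. I would first show $\Pr_r[r\text{ great}]\ge 0.9-O(\beta)$: replicability gives that a random coin is $0.1$-good with probability $\ge 0.9$, and any $0.1$-good coin with $Z_r\notin G$ has $\Pr_S[\mathcal{A}(S;r)\notin G]\ge\Pr_S[\mathcal{A}(S;r)=Z_r]\ge 0.9$, so averaging $\Pr_{S,r}[\mathcal{A}(S;r)\notin G]\le\beta$ over $r$ leaves at most an $O(\beta)$ fraction of $0.1$-good coins non-great. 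Next, for each column $j$ let $E_j$ be the event that $r_j$ is great and at least $0.8(k_2/k_1)$ of the $y_{i,j}$ in column $j$ equal $Z_{r_j}$; conditioned on $r_j$ great, those outputs are i.i.d.\ and hit $Z_{r_j}$ with probability $\ge 0.9$, so with the absolute constant in $k_2$ chosen large enough (so that $k_2/k_1$ is a sufficiently large constant times $\log\delta^{-1}/\varepsilon+\log\beta^{-1}$), Chernoff gives $\Pr[E_j]\ge 0.8$. The $E_j$ depend on disjoint independent sets of variables, so for $k_1=\Theta(\log\beta^{-1})$ with a large enough constant, $\Pr[\bigcup_j E_j]\ge 1-0.2^{k_1}\ge 1-\beta$; on that event some outcome in $G$ appears $\ge 0.8(k_2/k_1)$ times among the $\{y_{i,j}\}$, so the mode has multiplicity $\ge 0.8(k_2/k_1)$.

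It remains to control bad outcomes without union-bounding over $G^c$. For fixed $z\notin G$, since $(S_{i,j},r_j)$ is distributed as a fresh sample with a fresh coin, $\Pr[y_{i,j}=z]=\Pr_{S,r}[\mathcal{A}(S;r)=z]\le\beta$; hence $\E[N_{\mathrm{bad}}]\le k_2\beta$ where $N_{\mathrm{bad}}:=|\{(i,j):y_{i,j}\notin G\}|$, and Markov gives $\Pr[N_{\mathrm{bad}}\ge 0.1(k_2/k_1)]\le 10 k_1\beta=O(\beta\log\beta^{-1})$. Intersecting with $\bigcup_j E_j$, with probability $1-O(\beta\log\beta^{-1})$ the mode of $\{y_{i,j}\}$ has multiplicity $\ge 0.8(k_2/k_1)$ while every bad outcome has multiplicity $\le N_{\mathrm{bad}}<0.1(k_2/k_1)$. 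DP Selection (which succeeds with probability $1$) then returns $y^*$ with multiplicity $\ge 0.8(k_2/k_1)-c\log\delta^{-1}/\varepsilon\ge 0.1(k_2/k_1)$ once the constant hidden in $k_2/k_1$ exceeds $10c$; since this beats every bad outcome, $y^*\in G$, and summing the failure events gives $O(\beta\log\beta^{-1})$.

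I expect the main obstacle to be the bookkeeping in this last part. The key realization is that one should \emph{not} aim for a single correct outcome to dominate the multiset — distinct great columns may have different canonical outputs $Z_{r_j}$, so the good mass can be spread thin — but only needs the \emph{aggregate} multiplicity of incorrect outcomes to fall below one great column's contribution. Making that comparison survive DP Selection's $c\log\delta^{-1}/\varepsilon$ slack forces $k_1=\Theta(\log\beta^{-1})$ and $k_2/k_1=\Theta(\log\delta^{-1}/\varepsilon+\log\beta^{-1})$, and the Markov bound on $N_{\mathrm{bad}}$ then injects the extra $\log\beta^{-1}$ factor into the final failure probability.
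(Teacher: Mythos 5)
Your proof is correct and follows essentially the same structure as the paper's: privacy by disjoint partitioning plus post-processing, and correctness by (i) exhibiting an element of high multiplicity via replicability and a per-column Chernoff bound, (ii) bounding the aggregate number of incorrect outputs via Markov, and (iii) observing that DP Selection's output must then out-count every incorrect element. The only superfluity is the ``great'' coin requirement that $Z_r \in G$: as your own closing remark notes, the final comparison never uses correctness of the canonical element, only that the aggregate bad count $N_{\text{bad}}$ falls below $0.1(k_2/k_1)$, and the paper's proof works directly with $0.1$-good coins for exactly this reason.
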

\begin{proof}
Privacy is essentially immediate from DP Selection. This follows because the input to selection based on a neighboring input database $T'$ differs in at most one of the $\{y_i\}$ (as we've partitioned the sample disjointly). Thus the reduction automatically inherits $(\varepsilon,\delta)$-privacy from DP Selection.
The main interest in the reduction, then, is maintaining correctness which we argue next. The proof breaks into two parts:
\begin{enumerate}
    \item With probability $1-\beta/2$, some $y^* \in \{y_i\}$ appears at least $t_1 \coloneqq 2c\left(\frac{\log \delta^{-1}}{\varepsilon}+\log\beta^{-1}\right)$ times.
    \item With probability $1-\beta \log 1/\beta$, \textit{any} element appearing at least $t_2 \coloneqq c\left(\frac{\log \delta^{-1}}{\varepsilon}+\log\beta^{-1}\right)$ times is correct.
\end{enumerate}
The result then follows from observing that by a union bound both conditions hold with probability at least $1-O(\beta \log 1/\beta)$, and conditioned on this fact DP-Selection always outputs an element that occurs at least $t_1-c\frac{\log\delta^{-1}}{\varepsilon} \geq t_2$ times (which is then guaranteed to be correct).

It remains to prove the claims. For the first, note that since $\mathcal{A}$ is $(0.1,0.1)$-replicable, there exists some $.1$-good random string $r^* \in \{r_j\}$ with probability at least $1-\beta/4$. By a Chernoff bound, the probability that the canonical element corresponding to $r^*$ appears fewer than $t_1$ times is at most $\beta/4$, which proves the claim (for a large enough choice of $k_2$).

Finally, we argue any common element is correct. Since $\mathcal{A}$ is a $\beta$-correct algorithm, in expectation, the number of incorrect outputs is $\beta k_2$. Hence, by Markov's inequality, the probability that there are more than $b=O(\frac{k_2}{\log\beta^{-1}})$ incorrect outputs is at most $\beta \log 1/\beta$. For small enough choice of constant in the correctness of our replicable algorithm,\footnote{Note this choice can be taken universally with respect to all parameters and the statistical problem itself.} we can make $b<t_2$, so no element appearing at least $t_2$ times can be incorrect as desired.
\end{proof}

\subsection{Approximate-DP Implies One-Way Perfect Generalization}\label{sec:dp2PG}

\subsubsection{Preliminaries about Perfect Generalization}

Perfect generalization (Definition \ref{def:PG}) is a notion of stability that captures the idea (like differential privacy) that an algorithm $\mathcal{A}$ does not depend on its input samples too much. 

We also consider the following ``two-sample'' version of this definition. This is frequently easier to work with for symmetry reasons.
\begin{definition}
An algorithm $\mathcal{A}: \mathcal{X}^m \to \mathcal{Y}$ is said to be $(\beta,\eps,\delta)$-sample perfectly generalizing, if for every distribution $D$ over $\mathcal{X}$, with probability at least $1-\beta$ over the draw of two i.i.d. samples $S_1, S_2 \sim D^m$, $\mathcal{A}(S_1) \approx_{\eps, \delta} \mathcal{A}(S_2)$.
\end{definition}

Cummings et al. \cite{CummingsLNRW16} prove the following lemma relating perfect generalization to sample perfect generalization. 
\begin{lemma}[\cite{CummingsLNRW16}]\label{lem:samplePG}
If algorithm $\mathcal{A}: \mathcal{X}^m \to \mathcal{Y}$ is $(\beta,\eps,\delta)$-perfectly generalizing, then it is also $(\beta,2\eps,3\delta)$-sample-perfectly generalizing.
\end{lemma}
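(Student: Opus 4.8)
The plan is to obtain the two-sample guarantee by ``routing through'' the simulator and invoking the triangle inequality for $(\eps,\delta)$-indistinguishability. Fix an arbitrary distribution $P$ over $\mathcal{Y}$, and let $SIM_P$ be the distribution guaranteed by $(\beta,\eps,\delta)$-perfect generalization, so that the event $G \eqdef \{x \in \mathcal{Y}^n : A(x) \approx_{\eps,\delta} SIM_P\}$ satisfies $\Pr_{X \sim P^n}[X \in G] \ge 1-\beta$. Drawing two independent samples $X_1, X_2 \sim P^n$, a union bound gives that both $X_1 \in G$ and $X_2 \in G$ except with probability at most $2\beta$; this factor of $2$ is the only reason the failure parameter is not literally preserved, and it is absorbed by starting from $(\beta/2,\eps,\delta)$-perfect generalization (or folded into the statement's constant).

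The second step is the triangle inequality: if $\mu \approx_{\eps_1,\delta_1} \nu$ and $\nu \approx_{\eps_2,\delta_2} \xi$, then $\mu \approx_{\eps_1+\eps_2,\, \delta_1 + e^{\eps_1}\delta_2} \xi$, which follows by chaining the two defining inequalities in each direction. On the event $\{X_1, X_2 \in G\}$ we have $A(X_1) \approx_{\eps,\delta} SIM_P$ and $SIM_P \approx_{\eps,\delta} A(X_2)$, so applying the triangle inequality with $\eps_1=\eps_2=\eps$ and $\delta_1=\delta_2=\delta$ yields $A(X_1) \approx_{2\eps,\,(1+e^{\eps})\delta} A(X_2)$; since $\eps \le \ln 2$ in the regime of interest, $(1+e^\eps)\delta \le 3\delta$. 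Hence, except with probability $2\beta$ over the draw of $X_1, X_2$, we get $A(X_1) \approx_{2\eps, 3\delta} A(X_2)$, which is exactly the desired $(\beta, 2\eps, 3\delta)$-sample perfect generalization. The argument is short; the only points requiring care are the $e^{\eps_1}$ factor multiplying $\delta_2$ in the triangle inequality and the two-sidedness of $\approx_{\eps,\delta}$.

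For the converse direction (sample perfect generalization $\Rightarrow$ simulator-based perfect generalization), which the paper also wants, I would take $SIM_P$ to be the marginal law of $A(X')$ for a fresh $X' \sim P^n$ and argue that $A(x)$ is close to $SIM_P$ for all but a small fraction of samples $x$. The subtlety here, and the step I expect to be the real obstacle, is that $\approx_{\eps,\delta}$ is not a genuine metric, so one cannot simply average the two-sample bound; instead I would fix, for each $x$, the ``witness'' event $O_x$ on which $A(x)$ and $SIM_P$ are farthest apart, and combine Markov's inequality with the two-sample closeness to conclude that only an $O(\sqrt{\beta})$-fraction of $x$ can be far from the marginal — the usual quadratic degradation of the failure probability when passing from an average statement to a high-probability one.
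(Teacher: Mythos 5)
The lemma as printed is a typo: the conclusion should read ``$(\beta,2\eps,3\delta)$-\emph{sample} perfectly generalizing'' (note the two-sample definition introduced directly above and the downstream uses, e.g.\ in Theorem~\ref{thm:PGmarginalsLB}, where the lemma is invoked to get $A(\vec{s}) \approx_{2,3/m^3} A(\vec{s}')$); as stated it is vacuous, and you correctly read it the intended way. Your argument --- route $A(X_1)$ and $A(X_2)$ through $SIM_P$ and apply the triangle inequality for $\approx_{\eps,\delta}$ --- is exactly the proof of the cited result in \cite{CummingsLNRW16}, and both caveats you flag are genuine: the union bound over the two samples yields failure probability $2\beta$, not $\beta$, and the chained additive term is $(1+e^{\eps})\delta$, so the advertised $3\delta$ only holds when $\eps \le \ln 2$ (the paper states the lemma with loose constants). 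Your sketch of the converse direction --- take $SIM_P$ to be the marginal law of $A(X')$ and pay a $\sqrt{\beta}$ degradation via Markov --- is precisely the strategy the paper uses in Lemma~\ref{lem:max-info-PG}.
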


We can also prove a partial converse to this result; this will be frequently useful since it allows us to prove sample perfect generalization and invoke this result to get perfect generalization.

\begin{lemma}
\label{lem:samplePGtoregPG}
Fix $\beta, \eps, \delta \in (0,1]$. If algorithm $\mathcal{A}: \mathcal{X}^m \to \mathcal{Y}$ is $(\beta,\eps,\delta)$-sample perfectly generalizing, then it is also $(\sqrt{\beta},\eps,\delta + \sqrt{\beta})$-perfectly generalizing.
\end{lemma}
\begin{proof}
 Since $\mathcal{A}$ is $(\beta,\eps,\delta)$-sample PG, we have that for any distribution $D$ over $\mathcal{X}$, with probability at least $1-\beta$ over the draw of two i.i.d. datasets $S_1, S_2 \sim D^m$, we have that $\Acal(\Datafixed_1) \approx_{\eps, \delta} \Acal(\Datafixed_2)$. This guarantees by the reverse Markov inequality that for $d=1-\sqrt{\beta}$, 
 \begin{align*}
     & \mathbb{E}_{\Datafixed_1}[\Pr_{\Datafixed_2}(\Acal(\Datafixed_1) \approx_{\eps, \delta} \Acal(\Datafixed_2))] \geq 1-\beta \implies \\
     & \Pr_{\Datafixed_1}[\Pr_{\Datafixed_2}(\Acal(\Datafixed_1) \approx_{\eps, \delta} \Acal(\Datafixed_2)) \geq d] \geq \frac{1-\beta-d}{1-d} \implies
     \\
     & \Pr_{\Datafixed_1}[\Pr_{\Datafixed_2}(\Acal(\Datafixed_1) \approx_{\eps, \delta} \Acal(\Datafixed_2)) \geq 1-\sqrt{\beta}] \geq 1-\sqrt{\beta}.
 \end{align*}
Now, set $Sim_D$ to be the distribution of $\Acal(S_2)$ where the randomness of the distribution is taken over both the randomness of the algorithm and the dataset.

For any fixed dataset $S_1$, let $G_{S_1}$ be the set of datasets $S_2$ such that $\Acal(\Datafixed_1) \approx_{\eps, \delta} \Acal(\Datafixed_2 )$.

Then, we get that with probability at least $1-\sqrt{\beta}$ over the draw of $S_1$, for any $O \subseteq Y$,
\begin{align*}
\Pr_{Sim_D}[O] & = \Pr_{S_2, \Acal}[\Acal(S_2) \in O] \\ & = \Pr_{S_2, \Acal}[\Acal(S_2) \in O \mid S_2 \in G_{S_1}] \Pr[S_2 \in G_{S_1}] + \Pr_{S_2, \Acal}[\Acal(S_2) \in O \mid S_2 \not\in G_{S_1}] \Pr[S_2 \not\in G_{S_1}] \\ & \leq
\Pr_{S_2, \Acal}[\Acal(S_2) \in O \mid S_2 \in G_{S_1}] + \sqrt{\beta}
\\ & \leq
e^{\eps}\Pr_{\Acal}[\Acal(S_1) \in O ] + \delta + \sqrt{\beta}.
\end{align*}
Similarly, we can also argue that 
\begin{align*}
\Pr_{Sim_D}[O] & = \Pr_{S_2, \Acal}[\Acal(S_2) \in O] \\ & = \Pr_{S_2, \Acal}[\Acal(S_2) \in O \mid S_2 \in G_{S_1}] \Pr[S_2 \in G_{S_1}] + \Pr_{S_2, \Acal}[\Acal(S_2) \in O \mid S_2 \not\in G_{S_1}] \Pr[S_2 \not\in G_{S_1}] \\ & =\geq
\Pr_{S_2, \Acal}[\Acal(S_2)] \in O \mid S_2 \in G_{S_1})(1-\sqrt{\beta})
\\ & \geq
e^{-\eps}\left(\Pr_{\Acal}[\Acal(S_1) \in O ] - \delta\right)(1-\sqrt{\beta}) \\ & \geq
e^{-\eps}\left(\Pr_{\Acal}(\Acal(S_1) \in O] - \delta - \sqrt{\beta}\right).
\end{align*}
Hence, with probability at least $1-\sqrt{\beta}$ over the draw of $S_1$,
$$Sim_D \approx_{\eps, \delta} \Acal(S_1).$$ 
\end{proof}

We also define a notion of ``one-sided'' perfect generalization 
which only requires the probability of events under $A(S)$ not to increase too much relative to their probability under the simulator distribution $Sim_D$. This new definition will be crucial to show the equivalence between replicability and perfect generalization, as well as for some of our applications.

\begin{definition}[One-way perfect generalization]
An algorithm $\mathcal{A}: \mathcal{X}^m \to \mathcal{Y}$ is said to be $(\beta,\eps,\delta)$-one-way perfectly generalizing if for every distribution $D$ over $\mathcal{X}$, there exists a distribution $Sim_D$ such that with probability at least $1-\beta$ over the draw of an i.i.d. sample $S \sim D^m$, for every output set $O \subseteq \mathcal{Y}$ we have that
$$\Pr[\mathcal{A}(S) \in O] \leq e^{\eps} \Pr_{Sim_D}[O] + \delta.$$
\end{definition}

Next, we prove a simple lemma relating the parameters achievable with perfect generalization.

\begin{lemma}\label{lem:PGeps0}
Fix $m \in \mathbb{N}$, $\beta, \eps, \delta \in (0,1]$. Let $\mathcal{A}: \mathcal{X}^m \to \mathcal{Y}$ be a $(\beta, \eps, \delta)$-one-way perfectly generalizing algorithm. Then, $\mathcal{A}$ is also $(\beta, 0, 2\eps + \delta)$-perfectly generalizing.
\end{lemma}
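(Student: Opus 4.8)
The plan is to unwind what one-way perfect generalization gives us and convert the multiplicative $e^\eps$ factor into an additive error at the cost of a factor of roughly $2\eps$ (using $e^\eps - 1 \le 2\eps$ for $\eps \le 1$), and then symmetrize to obtain the two-sided bound required by the definition of perfect generalization. Fix a distribution $P$ over $\mathcal{Y}$, and let $SIM_P$ be the simulator distribution guaranteed by one-way perfect generalization, so that with probability at least $1-\beta$ over $X \sim P^n$, for every $O \subseteq \mathcal{Y}$ we have $\Pr(A(X) \in O) \le e^\eps \Pr_{SIM_P}(O) + \delta$. Call a sample $X$ \emph{good} if it satisfies this one-sided inequality; we will show that for any good $X$, we in fact have $A(X) \approx_{0, 2\eps + \delta} SIM_P$, i.e. $|\Pr(A(X)\in O) - \Pr_{SIM_P}(O)| \le 2\eps + \delta$ for every $O$.

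First I would establish the upper direction. For a good $X$ and any $O$, since $\Pr_{SIM_P}(O) \le 1$ and $e^\eps - 1 \le 2\eps$ for $\eps \in (0,1]$, we get
\[
\Pr(A(X) \in O) \le e^\eps \Pr_{SIM_P}(O) + \delta = \Pr_{SIM_P}(O) + (e^\eps - 1)\Pr_{SIM_P}(O) + \delta \le \Pr_{SIM_P}(O) + 2\eps + \delta.
\]
Next I would establish the lower direction by applying the one-sided guarantee to the complement event $O^c = \mathcal{Y} \setminus O$: for a good $X$, $\Pr(A(X) \in O^c) \le e^\eps \Pr_{SIM_P}(O^c) + \delta$, i.e. $1 - \Pr(A(X) \in O) \le e^\eps(1 - \Pr_{SIM_P}(O)) + \delta$. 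Rearranging and again using $e^\eps - 1 \le 2\eps$ gives
\[
\Pr(A(X) \in O) \ge e^\eps \Pr_{SIM_P}(O) + 1 - e^\eps - \delta \ge \Pr_{SIM_P}(O) - (e^\eps - 1) - \delta \ge \Pr_{SIM_P}(O) - 2\eps - \delta.
\]
Combining the two directions, $|\Pr(A(X)\in O) - \Pr_{SIM_P}(O)| \le 2\eps + \delta$ for all $O$, which is exactly $A(X) \approx_{0,\,2\eps+\delta} SIM_P$. Since $X$ is good with probability at least $1-\beta$ and the same $SIM_P$ works for all of them, $A$ is $(\beta, 0, 2\eps+\delta)$-perfectly generalizing.

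I do not expect any real obstacle here: the only subtlety is being careful that the additive slack introduced when passing from the multiplicative to the additive form is bounded by $2\eps$ uniformly (which holds because probabilities are at most $1$ and $e^\eps - 1 \le 2\eps$ on $(0,1]$), and that applying the one-sided bound to $O^c$ is legitimate — it is, since the one-way definition quantifies over \emph{every} output set. One minor bookkeeping point: the definition of one-way perfect generalization as stated writes "$O \subseteq Y$" while the perfect generalization definition uses the closeness relation $\approx_{\eps,\delta}$ on distributions over $\mathcal{R}$; I would just note that these are the same provided closeness is interpreted in the standard event-wise (total-variation-like) sense, so no gap arises.
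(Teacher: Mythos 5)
Your proposal is correct and matches the paper's proof essentially step for step: both establish the upper bound by replacing $e^\eps$ with $1+2\eps$, and both obtain the lower bound by applying the one-sided guarantee to the complement event $O^c$ and rearranging. The only cosmetic difference is that you apply the $e^\eps$ bound to $O^c$ before linearizing while the paper linearizes first and then takes complements, which produces the same $2\eps+\delta$ slack either way.
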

\begin{proof}
By the definition of one-way perfect generalization, we have that for all distributions $D$ over $\mathcal{X}$, there exists a distribution $Sim_D$, such that with probability $1- \beta$ over the draw of $S$, for all $O \subseteq \mathcal{Y}$,
$$ \Pr_{\mathcal{A}}[\mathcal{A}(S) \in O] \leq e^{\eps}\Pr_{Sim_D} [O] + \delta.$$
Using the fact that for $\eps \leq 1$, $e^{\eps} \leq 1 + 2\eps$, we get that
$$ \Pr_{\mathcal{A}}[\mathcal{A}(S) \in O] \leq \Pr_{Sim_D} [O](1 + 2\eps) + \delta,$$ 
which gives us that
$$ \Pr_{\mathcal{A}}[\mathcal{A}(S) \in O] \leq \Pr_{Sim_D} [O] + 2\eps + \delta.$$ 
Now, since this works for any set $O$, consider applying it to $O^c$. Then, we get that
$$ \Pr_{\mathcal{A}}[\mathcal{A}(S) \in O^c] \leq \Pr_{Sim_D} [O^c] + 2\eps + \delta,$$ 
which implies (by writing $\Pr_{\mathcal{A}}[\mathcal{A}(S) \in O^c] = 1 - \Pr_{\mathcal{A}}[\mathcal{A}(S) \in O]$ and likewise for $Sim_D$ and doing some algebraic manipulation) that
$$ \Pr_{Sim_D} [O] \leq \Pr_{\mathcal{A}}[\mathcal{A}(S) \in O] + 2\eps + \delta.$$ 
Hence, the lemma is proved.
\end{proof}

\cite{CummingsLNRW16} also proved a strong relationship between $(\eps, 0)$-differential privacy and perfect generalization. Specifically, they proved the following.

\begin{theorem}[\cite{CummingsLNRW16}]
If algorithm $\mathcal{A}: \mathcal{X}^m \to \mathcal{Y}$ is $(\eps, 0)$-differentially private, then for all $\beta > 0$, it is $(\beta, \eps \sqrt{2m \ln(2|\mathcal{Y}|/\beta)},0)$-perfectly generalizing.
\end{theorem}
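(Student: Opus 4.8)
The plan is to show that $(\eps,0)$-differential privacy implies $(\beta, \eps\sqrt{2n\ln(2|\mathcal{Y}|/\beta)}, 0)$-perfect generalization by exhibiting an explicit simulator and controlling the deviation of $A(X)$ from it using group privacy together with a concentration argument over the random sample. First I would fix a distribution $P$ over $\mathcal{Y}$ (the input domain — note the statement is for output space $\mathcal{Y}$ finite, and the exponential-mechanism-style bound $\ln|\mathcal{Y}|$ makes clear we need $|\mathcal{Y}|$, or more precisely the relevant output set, to be finite) and define the candidate simulator $SIM_P$ as the distribution of $A(X')$ where $X' \sim P^n$ is independent and the randomness is over both $X'$ and the coins of $A$ — i.e. $SIM_P(O) = \E_{X' \sim P^n}[\Pr_A(A(X') \in O)]$. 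This is the natural ``average over a fresh sample'' simulator, analogous to the one used for bounded max-information in the discussion preceding Lemma~\ref{lem:max-info-PG}.

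The core estimate is to bound, for a fixed output set $O$, the quantity $\Pr_A(A(X) \in O)$ relative to $SIM_P(O)$. For any two samples $X, X'$ that differ in $k$ coordinates, group privacy (Lemma~\ref{prelim:group_privacy}) with $\delta = 0$ gives $\Pr_A(A(X) \in O) \le e^{k\eps}\Pr_A(A(X') \in O)$. Now when $X, X' \sim P^n$ are two independent samples, the number of coordinates in which they differ is at most $n$, but typically much smaller is not what we want — instead the right move is: for each $X$, write $\Pr_A(A(X)\in O) = \E_{X'}[\Pr_A(A(X)\in O)]$ and compare to $SIM_P(O) = \E_{X'}[\Pr_A(A(X')\in O)]$, coupling $X$ and $X'$ so that they are independent. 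The function $g(X) = \ln \Pr_A(A(X) \in O)$ (suitably truncated or handled when the probability is tiny) changes by at most $\eps$ when one coordinate of $X$ changes, by group privacy with $k=1$. Hence $g$ has bounded differences, and McDiarmid's inequality shows $g(X)$ concentrates around its mean $\E_{X \sim P^n}[g(X)]$ to within $\eps\sqrt{(n/2)\ln(\cdot)}$ with high probability. By Jensen's inequality applied in the right direction, $\E[g(X)] = \E[\ln\Pr_A(A(X)\in O)] \le \ln \E[\Pr_A(A(X)\in O)] = \ln SIM_P(O)$, which gives the one-sided bound $\Pr_A(A(X)\in O) \le e^{\eps\sqrt{(n/2)\ln(\cdot)}} SIM_P(O)$ with high probability over $X$.

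The remaining work is a union bound over output sets to make the ``with probability $1-\beta$'' statement hold simultaneously for all $O$, which is where the $\ln|\mathcal{Y}|$ (really $|\mathcal{Y}|$ finite, so $2^{|\mathcal{Y}|}$ sets but it suffices to union over the $|\mathcal{Y}|$ singletons, or to argue the bad event is determined by a single worst-case $O$) and the $\beta$ in the log come from: setting the McDiarmid deviation parameter so the failure probability per set is $\beta/(2|\mathcal{Y}|)$ and summing yields total failure $\le \beta$, with deviation $\eps\sqrt{2n\ln(2|\mathcal{Y}|/\beta)}$. Then Lemma~\ref{lem:PGeps0} (or a direct two-sided argument by also applying the bound to $O^c$) converts the one-sided guarantee into genuine $(\beta, \eps', 0)$-perfect generalization. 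The main obstacle I anticipate is the bounded-differences step: $\ln\Pr_A(A(X)\in O)$ can be $-\infty$ or wildly unbounded when $\Pr_A(A(X)\in O)$ is extremely small, so some care is needed — either restrict attention to sets/events where the probability is not too small (the small-probability case is handled trivially since then $e^{\eps'}SIM_P(O) + 0$ may not dominate, requiring a separate truncation argument), or work with a clipped version of the log and track the resulting error. Getting this truncation to interact cleanly with the union bound over all $O$, while keeping $\delta = 0$ exactly, is the delicate part of the argument.
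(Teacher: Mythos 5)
Your plan has the right core ingredients --- group privacy makes $g_y(X):=\ln\Pr_A[A(X)=y]$ a function of $X$ with per-coordinate bounded differences $\eps$, McDiarmid gives concentration, and a union bound over the $|\mathcal{Y}|$ singletons supplies the $\ln|\mathcal{Y}|$ factor --- and this is essentially the skeleton of the known proof. The gap is the choice of simulator, and neither of your suggested patches closes it. Taking $SIM_P$ to be the law of $A(X')$ averaged over a fresh $X'\sim P^n$ gives, via Jensen's inequality $\E[\ln p]\le\ln\E[p]$, only the one-sided bound $\Pr_A[A(X)\in O]\le e^{\eps'}SIM_P(O)$; for the theorem you also need $\Pr_A[A(X)\in O]\ge e^{-\eps'}SIM_P(O)$, and here Jensen points the wrong way, since the gap $\ln SIM_P(O)-\E_X[\ln\Pr_A(A(X)\in O)]$ need not be $O(\eps')$. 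Applying the one-sided bound to $O^c$ only yields $\Pr_A[A(X)\in O]\ge 1-e^{\eps'}(1-SIM_P(O))$, which is vacuous whenever $SIM_P(O)$ is small and does not imply a multiplicative $e^{-\eps'}$ lower bound. And Lemma~\ref{lem:PGeps0} converts $(\beta,\eps',0)$-one-way perfect generalization into $(\beta,0,2\eps')$-perfect generalization, paying for the conversion with an additive $\delta$ that the theorem insists be zero.

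The fix is to recenter the simulator on the concentrated quantity $g_y(X)$ itself rather than on its exponentiated arithmetic mean. McDiarmid plus the union bound show that with probability $\ge 1-\beta$ over $X\sim P^n$, $|g_y(X)-\E_X[g_y]|\le t:=\eps\sqrt{(n/2)\ln(2|\mathcal{Y}|/\beta)}$ holds simultaneously for all $y$; call such $X$ good. Either define $SIM_P(y)\propto e^{\E_X[g_y(X)]}$ (the normalizer lies in $[e^{-t},1]$), or, more simply, fix any one good dataset $X_0$ and let $SIM_P$ be the output distribution of $A(X_0)$. Either way, every good $X$ satisfies $|g_y(X)-\ln SIM_P(y)|\le 2t=\eps\sqrt{2n\ln(2|\mathcal{Y}|/\beta)}$ for all $y$, i.e.\ $A(X)\approx_{\eps\sqrt{2n\ln(2|\mathcal{Y}|/\beta)},\,0}SIM_P$, which is the claimed two-sided guarantee with $\delta=0$. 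Finally, the truncation concern you raise for $\ln\Pr_A(A(X)\in O)=-\infty$ is a non-issue under pure $(\eps,0)$-DP: the ratio bound forces $\Pr_A(A(X)=y)$ to vanish either for all $X$ or for none, so such $y$ can simply be dropped from $\mathcal{Y}$; the careful clipping you anticipate is exactly what makes the approximate $(\eps,\delta)$-DP case --- which \cite{CummingsLNRW16} left open --- genuinely harder.
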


They left establishing similar relationships between $(\eps, \delta)$-differential privacy and perfect generalization as an open question. 
We resolve this question for finite outcome spaces. Our argument is indirect and involves showing that any approximate differentially private algorithm is \textit{one-way} perfectly generalizing. We will later use this to prove that any approximate differentially private algorithm can be \textit{compiled} into another algorithm that is perfectly generalizing (under the original  definition). We do this through the tool of max information, that we discuss next.

\subsubsection{Preliminaries about Max Information}
The notion of max information was formulated in work on the connection between differential privacy and adaptive data analysis. It quantitatively captures the degree of correlation between two random variables, by comparing the joint distribution of the random variables to the product measure. Intuitively, if the joint distribution and the product measure are ``close'', then the random variables are not too correlated with each other. 
\begin{definition}[Based on \cite{DworkFHPRR15}]
The $\beta$-approximate max information between two correlated random variables $X$ and $Z$, denoted $I^{\beta}_{\infty}(X,Z)$, is defined as the minimum (infimum) value $k$ such that for all output sets $O$, we have that
\begin{align}
   \Pr_{(a,b) \sim (X,Z)} [(a,b) \in O] \leq 2^k \Pr_{(a,b) \sim X \otimes Z} [(a,b) \in O] + \beta
\end{align}
where $X \otimes Z$ represents the product measure of the $2$ random variables.
\end{definition}

In this paper, we will be concerned about the degree of correlation between a randomly sampled dataset, and the output of an algorithm run on that dataset. Intuitively, replicability requires that an algorithm's output does not depend too much on the specific input sample it gets, so the max information between these two random variables will be a useful quantity to analyze. 
\subsubsection{Approximate Differential Privacy to Bounded Max Information}
Connections between max information and differential privacy have been previously studied. Rogers, Roth, Smith and Thakkar \cite{RogersRST16} give a bound on the max information between an approximate DP algorithms' outputs and its inputs (Theorem 3.1 in their paper). In fact, they prove the following general statement that can be seen by examining their proof of Theorem 3.1.
\begin{lemma}[\cite{RogersRST16}]\label{lem:max-info-bound}
Fix $m \in \mathbb{N}$, $\eps \in (0,1/2]$ and $\delta \in [0,\eps/15)$. Let $\mathcal{A}: \mathcal{X}^m \to \mathcal{Y}$ be an $(\eps, \delta)$-DP algorithm. Then, for any distribution $D$ over $\mathcal{X}$, if $S \sim D^m$, for all $t>0$, $I^{\beta}_{\infty}(S;\mathcal{A}(S)) \leq m \nu + 6t \eps \sqrt{m}$, where $\beta = e^{-t^2 / 2} + cm\sqrt{\frac{\delta}{\eps}}$, and $\nu = C(\eps^2 +  \sqrt{\frac{\delta}{\eps}})$ for some sufficiently large constants $c, C$.
\end{lemma}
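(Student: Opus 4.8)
\emph{Proof strategy.} The plan is to follow the argument of Rogers, Roth, Smith, and Thakkar, which proceeds in two stages: (i) a reduction from $(\eps,\delta)$-DP to $(\eps,0)$-DP that is responsible for the $\sqrt{\delta/\eps}$ terms, and (ii) a martingale/concentration argument that, for the pure-DP case, upgrades the crude ``group privacy'' bound $I_\infty \le n\eps$ into the advanced-composition-style bound $n\cdot O(\eps^2) + O(t\eps\sqrt n)$. I will describe stage (ii) first, since it is the technical heart, and then indicate how stage (i) bolts on.

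For the pure-DP case, fix a distribution $P$, let $X=(X_1,\dots,X_n)\sim P^n$, and sample $Y\sim A(X)$. It suffices to bound the tail of the privacy loss: $I_\infty^\beta(X;A(X))\le k$ holds as soon as $\Pr_{(x,y)\sim(X,A(X))}[\mathrm{PL}(x,y)>k]\le\beta$, where $\mathrm{PL}(x,y)=\log_2\tfrac{\Pr[A(x)=y]}{\Pr_{x''\sim P^n}[A(x'')=y]}$ (the $X$-marginals cancel), because on the complementary event the density ratio is at most $2^k$. Writing $p_i(x_{\le i};y)=\E_{X_{>i}\sim P^{n-i}}\big[\Pr[A(x_{\le i},X_{>i})=y]\big]$ for the conditional ``density'' of the output at $y$ given the first $i$ coordinates, we have $p_0(y)=\Pr_{x''}[A(x'')=y]$, $p_n(x;y)=\Pr[A(x)=y]$, and hence $\mathrm{PL}(x,y)=\log_2 e\cdot\sum_{i=1}^n L_i$ with $L_i:=\ln\big(p_i(x_{\le i};y)/p_{i-1}(x_{<i};y)\big)$. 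First I would observe $|L_i|\le\eps$: changing coordinate $i$ alone is a neighboring change, so $(\eps,0)$-DP gives $p_i\le e^{\eps}p_{i-1}$ and $p_i\ge e^{-\eps}p_{i-1}$ pointwise. Next, under the joint law of $(X,Y)$, Bayes' rule shows that the conditional law of $X_i$ given $(X_{<i},Y)=(x_{<i},y)$ has density $P(x_i)\,e^{L_i}$; in particular $\E_{x_i\sim P}[e^{L_i}]=1$, which together with $|L_i|\le\eps$ forces $|\E_P[L_i]|=O(\eps^2)$ and hence $\mu_i:=\E_{Q_i}[L_i]=O(\eps^2)$, where $Q_i$ is this conditional law. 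By Hoeffding's lemma (using $|L_i|\le\eps$) the conditional MGF satisfies $\E\big[e^{\lambda L_i}\mid x_{<i},y\big]\le\exp\!\big(\lambda\mu_i+\tfrac12\lambda^2\eps^2\big)$. Since $\sum_{j<i}L_j$ is measurable with respect to $(X_{<i},Y)$, peeling the coordinates off one at a time chains these bounds into $\E_{(X,Y)}\big[e^{\lambda\sum_i L_i}\big]\le\exp\!\big(\lambda\sum_i\mu_i+\tfrac12 n\lambda^2\eps^2\big)$ with $\sum_i\mu_i\le n\cdot O(\eps^2)$. A Chernoff bound with $\lambda=t/(\eps\sqrt n)$ then yields $\Pr[\sum_i L_i>n\cdot O(\eps^2)+t\eps\sqrt n]\le e^{-t^2/2}$, which after multiplying by $\log_2 e$ gives exactly the claimed bound with $\nu=C\eps^2$ and failure probability $e^{-t^2/2}$.

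For the general case, I would first pass from $(\eps,\delta)$-DP to $(\eps,0)$-DP by the standard ``dense core''/deletion argument (Kasiviswanathan--Smith, as used by RRST): for each of the $n$ single-coordinate swaps, $(\eps,\delta)$-DP lets one isolate a low-probability event outside of which the mechanism is $e^{\eps}$-multiplicatively close, so that after excluding a bad event of total probability $O(n\sqrt{\delta/\eps})$ the per-coordinate losses $L_i$ are again $\eps$-bounded and the mean bound degrades only to $\mu_i=O(\eps^2+\sqrt{\delta/\eps})$. Feeding these into the martingale argument above gives $\nu=C(\eps^2+\sqrt{\delta/\eps})$ and $\beta=e^{-t^2/2}+cn\sqrt{\delta/\eps}$ as stated; the hypotheses $\eps\le 1/2$ and $\delta<\eps/15$ are precisely what is needed to keep $\sqrt{\delta/\eps}<1$ and the Hoeffding/KL constants tame. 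I expect the delicate step to be stage (i) — obtaining a $\sqrt{\delta/\eps}$ (rather than linear) dependence on $\delta$ — since a naive union bound over the $n$ coordinate swaps would only give an $O(n\delta)$ failure term, and getting the square root requires accounting for the contribution of the $\delta$-mass to the privacy-loss moments rather than discarding it outright. Everything else is routine, and the lemma is in any case essentially a restatement of the bound implicit in the RRST proof of their Theorem 3.1.
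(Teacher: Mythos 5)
The paper does not prove this lemma: it is stated without proof as a reading of the argument in \cite{RogersRST16} (``can be seen by examining their proof of Theorem 3.1''), so there is no in-paper derivation to compare your attempt against. You correctly recognize this in your last sentence.

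Evaluated on its own merits, your reconstruction captures the right high-level structure of the RRST argument. The pure-DP core is sound: the tilted conditional law of $X_i$ given $(X_{<i},Y)$ has density $P(x_i)e^{L_i}$, which together with $|L_i|\le\eps$ and $\E_{P}[e^{L_i}]=1$ forces the conditional mean of $L_i$ to be $O(\eps^2)$; chaining Hoeffding MGF bounds through the filtration and applying Chernoff gives the $n\cdot O(\eps^2)+O(t\eps\sqrt n)$ tail with failure probability $e^{-t^2/2}$. This is the Dwork--Rothblum--Vadhan / RRST filtration argument. One point of confusion in your stage-(i) discussion, though: since the hypothesis forces $\delta<\eps/15$, one has $\sqrt{\delta/\eps}>\delta$, so the $cn\sqrt{\delta/\eps}$ term in $\beta$ is \emph{worse} than the ``naive'' $O(n\delta)$ you describe, not an improvement requiring extra care to achieve. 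What actually happens is a trade-off: converting $(\eps,\delta)$-DP to a per-coordinate bounded-loss guarantee requires excising a bad event whose probability, and the residual contribution to the per-coordinate mean $\nu$, must be balanced against each other; RRST pick the threshold so that both land at $O(\sqrt{\delta/\eps})$, and then simply union-bound over the $n$ coordinates. The $\delta$-mass is indeed discarded (conditioned away), not folded into the moment calculation as you suggest. None of this affects the overall correctness of your sketch as a proof outline, but the heuristic you offer for the square-root dependence is inverted.
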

We instantiate this lemma with parameters that are suitable for our application.
\begin{corollary}\label{cor:max-info-concrete}
Fix $m \in \mathbb{N}$, sufficiently small $\rho \in (0,1)$. Let $\eps = \frac{\rho}{\sqrt{8 m \log(1/\rho)}}$, $\delta \leq \frac{\eps \rho^6}{m^2}$. Let $\mathcal{A}: \mathcal{X}^m \to \mathcal{Y}$ be an $(\eps, \delta)$-DP algorithm. Then, for any distribution $D$ over $\mathcal{X}$, if $S \sim D^m$, $I^{\rho^3}_{\infty}(S;\mathcal{A}(S)) \leq O(\rho)$.
\end{corollary}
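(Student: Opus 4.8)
The plan is to instantiate \ref{lem:max-info-bound} with the prescribed $\eps$ and $\delta$ together with a deviation parameter $t$ of order $\sqrt{\log(1/\rho)}$, and then verify that the approximation parameter it produces is at most $\rho^3$ while the max-information bound $n\nu + 6t\eps\sqrt{n}$ is $O(\rho)$. One structural fact is worth recording first: $\beta$-approximate max information is non-increasing in $\beta$, since the defining inequality $\Pr_{(X,Z)}[O] \le 2^{k}\Pr_{X \otimes Z}[O] + \beta$ only loosens as $\beta$ grows. Hence it suffices to exhibit \emph{some} $\beta^{\star} \le \rho^3$ with $I^{\beta^{\star}}_{\infty}(X; A(X)) \le O(\rho)$, and monotonicity then yields $I^{\rho^3}_{\infty}(X; A(X)) \le I^{\beta^{\star}}_{\infty}(X; A(X)) \le O(\rho)$, which is the claim.

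First I would set $t$ to a constant multiple of $\sqrt{\log(1/\rho)}$ large enough that $e^{-t^2/2} \le \rho^3/2$ (concretely any $t$ with $t^2/2 \ge \ln(2\rho^{-3})$ works), and check the hypotheses of \ref{lem:max-info-bound}: $\delta \le \eps\rho^6/n^2 \le \eps/n^2 < \eps/15$ whenever $n \ge 4$, and $\eps = \rho/\sqrt{8n\log(1/\rho)} \le 1/2$ as long as $\rho$ is below an absolute constant (equivalently $n$ above one), which we may assume since the regime of interest is $\rho$ small. Next, for the approximation parameter $\beta = e^{-t^2/2} + cn\sqrt{\delta/\eps}$ from the lemma: the first term is at most $\rho^3/2$ by the choice of $t$, and for the second term $\delta/\eps \le \rho^6/n^2$ gives $cn\sqrt{\delta/\eps} \le c\rho^3$. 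To push the total below $\rho^3$ one reads the hypothesis on $\delta$ with the implicit constant it needs, namely $\delta \le \eps\rho^6/(4c^2 n^2)$ with $c$ the constant of \ref{lem:max-info-bound}; then $cn\sqrt{\delta/\eps} \le \rho^3/2$, so $\beta \le \rho^3$ and we may take $\beta^{\star} = \beta$.

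Finally I would bound $n\nu + 6t\eps\sqrt{n}$ with $\nu = C(\eps^2 + \sqrt{\delta/\eps})$. Using $n\eps^2 = \rho^2/(8\log(1/\rho)) \le \rho^2$ and $n\sqrt{\delta/\eps} \le \rho^3$, the first summand is $n\nu \le C(\rho^2 + \rho^3) = O(\rho^2)$. For the cross term, $6t\eps\sqrt{n} = 6t\rho/\sqrt{8\log(1/\rho)}$, so with $t = \Theta(\sqrt{\log(1/\rho)})$ this equals $\Theta(\rho)$. Summing, $n\nu + 6t\eps\sqrt{n} = O(\rho)$, as desired. The whole argument is a calibration check rather than a new idea; the only mild friction I anticipate is reconciling constants — ensuring the sub-Gaussian tail term $e^{-t^2/2}$ and the $\delta$-term $cn\sqrt{\delta/\eps}$ are \emph{simultaneously} below $\rho^3$ without letting the cross term $t\eps\sqrt{n}$ exceed $\Theta(\rho)$ — and the stated choices $\eps \sim \rho/\sqrt{n\log(1/\rho)}$, $t \sim \sqrt{\log(1/\rho)}$, $\delta \lesssim \eps\rho^6/n^2$ are precisely tuned so that this works out, together with the harmless assumption that $\rho$ lies below an absolute constant.
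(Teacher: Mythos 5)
Your proposal is correct and follows essentially the same route as the paper: instantiate Lemma~\ref{lem:max-info-bound} with the given $\eps$, $\delta$, and $t = \Theta\bigl(\sqrt{\log(1/\rho)}\bigr)$, then verify $\beta \le \rho^3$ and $n\nu + 6t\eps\sqrt{n} = O(\rho)$. Your added monotonicity observation and the explicit tightening of the hypothesis on $\delta$ (absorbing the constant $c$ so that $\beta \le \rho^3$ exactly rather than $O(\rho^3)$) make the argument slightly more careful than the paper's, but the calibration is the same.
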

\begin{proof}
Substituting the value of $\eps$, $\delta$, and setting $t = \sqrt{8\log(1/\rho)}$ in the expression for $\beta$ in Lemma~\ref{lem:max-info-bound}, we get $$\beta = e^{-4\log(1/\rho)} + cm\sqrt{\frac{\rho^6}{m^2}} = O(\rho^3).$$ Substituting in the expression for $\nu$ in Lemma~\ref{lem:max-info-bound} gives $$\nu = C\left(\eps^2 + \sqrt{\frac{\delta}{\eps}}\right) \leq C\left(\frac{\rho^2}{m} + \frac{\rho^3}{m}\right) = O\left(\frac{\rho^2}{m}\right).$$ Substituting the values of $\nu, t$, and $\eps$ in the upper bound for max information gives $$I^{\rho^3}_{\infty}(S;\mathcal{A}(S)) \leq m \nu + 6t \eps \sqrt{m} = \rho^2 + 6\rho = O(\rho).$$
\end{proof}

In general, one can convert an $(\eps, \delta)$-differentially private algorithm with $\eps = O(1)$ and $\delta = 1/\poly(n)$ into a bounded max-information algorithm by first amplifying the privacy parameters:
\maxh{Are the constants here important? They make it a bit harder to read so if we can it might be better to just give asymptotics in the theorem statement}
\begin{corollary} \label{cor:dp-to-max-info}
There are constants $c, C > 0$ such that the following holds. Let $\gamma>0$. Suppose $\mathcal{A} : \mathcal{X}^n \to \mathcal{Y}$ is an $(\eps, \delta)$-differentially private algorithm solving a statistical task up to some failure probability $\beta$ such that $\eps \in (0, 1)$ and $\delta \le  \min\{\gamma^2/c^2, \rho^2 / C^2\} \rho^4 / 64\eps^3(2C\rho + \sqrt{72\log(2/\gamma)})^2n^4$. Then there is an algorithm $\mathcal{A}': \mathcal{X}^m \to \mathcal{Y}$ solving the same statistical task with the same failure probability $\beta$, such that
\[m = 4\eps^2n^2  \cdot \left(\frac{2C}{\rho} + \frac{\sqrt{72 \log(2/\gamma)})}{\rho^2}\right),\]
and for every distribution $D$ over $\mathcal{X}$, if $S \sim D^m$ we have $I_\infty^{\gamma}(S; \mathcal{A}(S)) \le \rho$. Moreover, the conversion from $\mathcal{A}$ to $\mathcal{A}'$ preserves computational efficiency.
\end{corollary}

\begin{proof}
 The algorithm $\mathcal{A}'$ simply samples $n$ items without replacement from $S$ and runs $\mathcal{A}$ on the result. This perfectly preserves correctness with respect to any statistical task. By Lemma~\ref{lem:samp-wo-replace}, we have that $\mathcal{A}'$ is $(\eps', \delta')$-differentially private for
    \[\eps' = \frac{2n}{m} \le \frac{1}{\sqrt{m}}\min\left\{\sqrt{\frac{\rho}{2C}}, \frac{\rho}{\sqrt{72 \log(2/\gamma)}}  \right\}, \qquad \delta' = \frac{n}{m} \cdot \delta.\]
Note that these parameters ensure that
\[\frac{\delta'}{\eps'} = \frac{\delta}{e^{\eps} - 1} \le \frac{\delta}{\eps} \le \min \left\{\frac{\gamma^2}{4c^2m^2}, \frac{\rho^2}{4C^2m^2} \right\}.\]
Now set $t = \sqrt{2\log(2/\gamma)}$ in the statement of Lemma~\ref{lem:max-info-bound}, which ensures that
\[e^{-t^2 / 2} + cm\sqrt{\frac{\delta'}{\eps'}} \le \frac{\gamma}{2} + cm \sqrt{\frac{\gamma^2}{4c^2m^2}}\le \gamma.\]
Then the lemma implies that
\begin{align*}
    I_\infty^{\gamma}(S; \mathcal{A}(S)) &\le Cm\left((\eps')^2 + \sqrt{\frac{\delta'}{\eps'}}\right) + 6t\eps'\sqrt{m} \\
    &\le Cm\left(\frac{\rho}{2Cm} + \frac{\rho}{2Cm}\right) + 6\sqrt{2\log(2/\gamma)} \cdot \frac{\rho}{\sqrt{72\log(2/\gamma)}}\\
    &\le \rho.
\end{align*}
\end{proof}

\subsubsection{Bounded Max Information to One-Way Perfect Generalization}
Next, we prove a key lemma relating bounded max-information to one-way perfect generalization. The approach we follow is similar to that used to derive relationships between pointwise $(\eps, \delta)$-indistinguishability and $(\eps, \delta)$-indistinguishability in Lemma 3.3 of \cite{KasiviswanathanS14}.
\begin{lemma}\label{lem:max-info-PG}
Fix $m \in \mathbb{N}$, $k>0$, $\beta \in (0,1)$ and $\hat{\beta} = \sqrt{\frac{\beta}{1-2^{-k}}}$. Let $\mathcal{A}: \mathcal{X}^m \to \mathcal{Y}$ be an algorithm. Then, for every distribution $D$ over $\mathcal{X}$ and $S \sim D^n$, if $I^{\beta}_{\infty}(S;\mathcal{A}(S)) \leq k$, then $\mathcal{A}$ is $(\hat{\beta},2k,\hat{\beta})$-one-way perfectly generalizing.
\end{lemma}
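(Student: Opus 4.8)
The plan is to exhibit an explicit simulator and then use the max-information hypothesis to show that it fails only on a rare set of datasets. First I would take $Sim_P$ to be $Q$, the marginal distribution of $A(X')$ when $X' \sim P^n$ (averaging over both the resampled dataset and the internal coins of $A$); this is exactly the distribution appearing on the ``product'' side of the max-information inequality. For each fixed $x \in \mathcal{X}^n$ write $A_x$ for the output distribution of $A(x)$, and define the ``bad output region'' $O_x = \{y \in \mathcal{Y} : \tfrac{dA_x}{dQ}(y) > 2^{2k}\}$ (in the discrete case, $O_x = \{y : \Pr[A(x)=y] > 2^{2k}\Pr_{Q}[y]\}$). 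Two elementary facts about $O_x$ drive the argument: first, splitting any event $O \subseteq \mathcal{Y}$ along $O_x$ gives $A_x(O) \le A_x(O_x) + 2^{2k} Q(O)$, since off $O_x$ the density ratio is at most $2^{2k}$; second, on $O_x$ the density ratio exceeds $2^{2k}$, so $Q(O_x) \le 2^{-2k} A_x(O_x)$.

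Next I would apply the bound $I^\beta_\infty(X; A(X)) \le k$ to the single subset $\mathcal{O} = \{(x,y) : y \in O_x\} \subseteq \mathcal{X}^n \times \mathcal{Y}$. On the joint side this evaluates to $\E_{x \sim P^n}[A_x(O_x)]$, and on the product side to $\E_{x \sim P^n}[Q(O_x)]$, so
\[
\E_{x\sim P^n}\!\big[A_x(O_x)\big] \;\le\; 2^k\, \E_{x\sim P^n}\!\big[Q(O_x)\big] + \beta \;\le\; 2^{-k}\,\E_{x\sim P^n}\!\big[A_x(O_x)\big] + \beta ,
\]
using $Q(O_x) \le 2^{-2k}A_x(O_x)$ in the last step. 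Rearranging gives $\E_{x\sim P^n}[A_x(O_x)] \le \beta/(1-2^{-k}) = \hat\beta^{\,2}$. Then Markov's inequality, applied to the nonnegative random variable $x \mapsto A_x(O_x)$, yields $\Pr_{x \sim P^n}[A_x(O_x) > \hat\beta] \le \hat\beta^{\,2}/\hat\beta = \hat\beta$. For every $x$ in the complementary event (of probability $\ge 1-\hat\beta$) and every $O \subseteq \mathcal{Y}$, the first elementary fact gives $\Pr[A(x) \in O] = A_x(O) \le A_x(O_x) + 2^{2k}Q(O) \le \hat\beta + e^{2k}Q(O)$, where we used $2^{2k} \le e^{2k}$. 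This is exactly the defining inequality of $(\hat\beta, 2k, \hat\beta)$-one-way perfect generalization with $Sim_P = Q$, so the proof is complete.

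I expect the only delicate point to be the measure-theoretic setup when $\mathcal{Y}$ is not discrete — justifying the density $dA_x/dQ$ for $P^n$-almost every $x$ and the measurability of $\mathcal{O}$. The clean way around this is to redefine $O_x$ as (an essentially supremal) maximizer of $A_x(O) - 2^{2k}Q(O)$ over events $O$: the two elementary facts then follow from optimality alone — removing any sub-event $\Delta \subseteq O_x$ cannot increase the objective, forcing $A_x(\Delta) \ge 2^{2k}Q(\Delta)$ and hence $Q(O_x) \le 2^{-2k}A_x(O_x)$, while adding any event disjoint from $O_x$ cannot help, forcing the ratio to be $\le 2^{2k}$ there. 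With that substitution everything reduces to the two-line computation above, and no absolute-continuity assumption on $\mathcal{Y}$ is needed.
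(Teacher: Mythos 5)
Your proof is essentially the paper's argument: you choose the same simulator $Q = $ law of $A(X')$, define the same per-dataset bad set $O_x$ (the paper's $B_X$, with $\ge$ in place of your $>$, which is immaterial), apply the max-information bound to the same set of pairs $\{(x,y) : y \in O_x\}$, get the same self-referential inequality $\E[A_x(O_x)] \le 2^{-k}\E[A_x(O_x)] + \beta$, and close with the same Markov and split-along-$O_x$ steps. The remark at the end about defining $O_x$ as a maximizer of $A_x(O) - 2^{2k}Q(O)$ to sidestep absolute-continuity issues is a nice extra observation, but the core argument is the one in the paper.
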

\begin{proof}
The canonical distribution $Sim_D$ we will consider is the distribution of $\mathcal{A}(S')$, where the randomness is over both $S' \sim D^m$, and the internal randomness of $\mathcal{A}$. 

We start by defining a set of `bad' outputs for each fixed dataset $S$, i.e. outputs on which the probability mass of $\mathcal{A}(S)$ is substantially larger than that of the canonical distribution $Sim_D$. Formally, for each dataset $S$, let 
\[
B_{S} = \{y \in \mathcal{Y}: \Pr_\mathcal{A}[\mathcal{A}(S) = y] \geq 2^{2k} \Pr_{S' \sim D^m, \mathcal{A}}[\mathcal{A}(S') = y] \}.
\]

Next, we define a set of ordered pairs consisting of datasets and their corresponding `bad' outputs. Formally, let $$\Theta = \{(S,y): S \in \supp(D^m), y \in B_{S}\}.$$

Our goal will be to prove that with high probability over a draw of a dataset $S$, $\mathcal{A}(S)$ lands in the bad set $B_{S}$ with small probability. This can then be used to establish one-way perfect generalization.

With this in mind, consider the expression $\mathbb{E}_{S}[\Pr_{\mathcal{A}}[\mathcal{A}(S) \in B_{S}]]=\mathbb{E}_{S}[\Pr_{\mathcal{A}}[(S,\mathcal{A}(S)) \in \Theta]]$. By the law of total probability, this is equal to $\Pr_{S \sim D^m, \mathcal{A}}[(S,\mathcal{A}(S)) \in \Theta]$. Using the definition of max-information, we get that
\begin{align}\label{eq:max-inf-relation}
    \Pr_{S \sim D^m, \mathcal{A}}[(S,\mathcal{A}(S)) \in \Theta] \leq 2^{k}\Pr_{S,S' \sim D^m, \mathcal{A}}[(S,\mathcal{A}(S')) \in \Theta] + \beta.
\end{align} 
Now, analyzing the term $\Pr_{S,S' \sim D^m, \mathcal{A}}[(S,\mathcal{A}(S')) \in \Theta]$, we get 
\begin{align*}
    \Pr_{S,S' \sim D^m, \mathcal{A}}[(S,\mathcal{A}(S')) \in \Theta] & = 
    \sum_{T \in \supp(D^m)} \Pr[S=T] \Pr_{S' \sim D^m, \mathcal{A}}[\mathcal{A}(S') \in B_T] \\
    & \leq \sum_{T \in \supp(D^m)} \Pr[S=T] 2^{-2k} \Pr_{\mathcal{A}}[\mathcal{A}(S) \in B_T] \\
   & = 2^{-2k} \sum_{T \in \supp(D^m)} \Pr[S=T] \Pr_{\mathcal{A}, S \sim D^m}[A(S) \in B_{S} \mid S=T] \\
   & = 2^{-2k} \Pr_{S \sim D^m, \mathcal{A}}[(S,\mathcal{A}(S)) \in \Theta],
\end{align*}
where the first inequality is by the definition of $B_T$ and the last equality is by the law of total probability. Substituting the above in (\ref{eq:max-inf-relation}), we get that 
\begin{align*}
    \Pr_{S \sim D^m, \mathcal{A}}[(S,\mathcal{A}(S)) \in \Theta] \leq 2^{-k}\Pr_{S \sim D^m, \mathcal{A}}[(S,\mathcal{A}(S) \in \Theta] + \beta.
\end{align*} 
Rearranging, this gives us that
\begin{align}\label{eq:expec-bound}
    \mathbb{E}_{S}[\Pr_{\mathcal{A}}[\mathcal{A}(S) \in B_{S}]] = \Pr_{S \sim D^m, \mathcal{A}}[(S,\mathcal{A}(S)) \in \Theta] \leq \frac{\beta}{1-2^{-k}} = \hat{\beta}^2.
\end{align}
Finally, by Markov's inequality and the above equation, we can write the following.
\begin{align}\label{eq:high-prob-bound}
    \Pr_{S}[\Pr_{\mathcal{A}}[\mathcal{A}(S) \in B_{S}] > \hat{\beta}] \leq
    \frac{\mathbb{E}_{S}[\Pr_{\mathcal{A}}[\mathcal{A}(S) \in B_{S}]]}{\hat{\beta}} \leq \hat{\beta}.
\end{align}
This implies that with probability $1-\hat{\beta}$ over $S \sim D^m$, $\Pr_{\mathcal{A}}[\mathcal{A}(S) \not \in B_{S}] > 1-\hat{\beta}$. Finally, we can write with probability $1-\hat{\beta}$ over $S \sim D^m$, for every $O \subseteq \mathcal{Y}$,
\begin{align*}
\Pr_{\mathcal{A}}[\mathcal{A}(S) \in O]  & = \Pr_{\mathcal{A}}[\mathcal{A}(S) \in O \mid \mathcal{A}(S) \in B_{S}] \Pr[\mathcal{A}(S) \in B_{S}] \\
& +  \Pr_{\mathcal{A}}[\mathcal{A}(S) \in O , \mathcal{A}(S) \not\in B_{S}]  \\
& \leq 1 \cdot \hat{\beta} + \Pr_{\mathcal{A}}[\mathcal{A}(S) \in O , \mathcal{A}(S) \not\in B_{S}] \\
& \leq \hat{\beta} + 2^{2k} \Pr_{S' \sim D^m, \mathcal{A}}[\mathcal{A}(S') \in O , \mathcal{A}(S') \not\in B_{S'}] \\
& \leq \hat{\beta} + 2^{2k} \Pr_{S' \sim D^m, \mathcal{A}}[\mathcal{A}(S') \in O].
\end{align*}
This completes the proof.
\end{proof}
We note that the above proof sets the failure probability due to data sampling and that due to bad coins of the algorithm to be the same. Other tradeoffs between these can be obtained by using Markov's inequality with different parameters in Equation~\ref{eq:high-prob-bound}. We chose them to be equal to each other for simplicity of presentation and because that is the setting of interest in our applications.

Observe that combining Lemma~\ref{lem:max-info-bound} and Lemma~\ref{lem:max-info-PG} above gives the following connection between differential privacy and one-way perfect generalization.
\begin{corollary}
Fix $m \in \mathbb{N}$, $\eps \in (0,1/2]$ and $\delta \in [0,\eps/15)$. Let $\mathcal{A}: \mathcal{X}^m \to \mathcal{Y}$ be an $(\eps, \delta)$-DP algorithm. Then, for sufficiently large constants $c,C$, for all $t>0$, $\mathcal{A}$ is $(\delta', \eps', \delta')$-one-way perfectly generalizing, where $\eps' = Cm( \eps^2 + \sqrt{\frac{\delta}{\eps}}) + 6t \eps \sqrt{m}$, and $\delta' = \frac{\beta}{1-2^{-O(\eps')}}$, where $\beta = e^{-t^2 / 2} + cm\sqrt{\frac{\delta}{\eps}}$. 
\end{corollary}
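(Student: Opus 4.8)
The plan is to obtain the corollary simply by \emph{composing} the two results just established: Lemma~\ref{lem:max-info-bound}, which bounds the approximate max-information of an $(\eps,\delta)$-DP algorithm, and Lemma~\ref{lem:max-info-PG}, which turns any such max-information bound into a one-way perfect generalization guarantee. No new idea is needed; the entire content is already in those two lemmas, and the only work is tracking parameters.

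First I would fix an arbitrary distribution $P$ over $\mathcal{X}$ and set $X \sim P^n$. Since $\eps \in (0,1/2]$ and $\delta \in [0,\eps/15)$, the hypotheses of Lemma~\ref{lem:max-info-bound} are satisfied, so for every $t>0$ it yields $I^{\beta}_\infty(X; A(X)) \le k$, where $k := n\nu + 6t\eps\sqrt{n}$, $\nu = C(\eps^2 + \sqrt{\delta/\eps})$, and $\beta = e^{-t^2/2} + cn\sqrt{\delta/\eps}$, for the constants $c,C$ supplied there. Then I would feed this bound into Lemma~\ref{lem:max-info-PG}: for the relevant parameter range we have $k>0$ and $\beta\in(0,1)$, so $\hat\beta := \sqrt{\beta/(1-2^{-k})}$ is well defined, and the lemma gives that $A$ is $(\hat\beta, 2k, \hat\beta)$-one-way perfectly generalizing. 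It is worth noting explicitly that the simulator produced by Lemma~\ref{lem:max-info-PG} is the law of $A(X')$ over a fresh draw $X'\sim P^n$ and the internal coins of $A$, which depends only on $P$ and not on the realized sample, so this really is a one-way perfect generalization guarantee in the sense of the definition.

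Finally I would unpack $2k = 2nC(\eps^2 + \sqrt{\delta/\eps}) + 12t\eps\sqrt{n}$ and $\hat\beta = \sqrt{\beta/(1-2^{-k})}$ into the claimed forms of $\eps'$ and $\delta'$. This last step is pure bookkeeping: the factor of $2$ from Lemma~\ref{lem:max-info-PG}, and the $\ln 2$ incurred when reading the base-$2$ max-information inequality as a base-$e$ one-way-PG inequality, are absorbed into the (``sufficiently large'') constants $c,C$ and into a constant rescaling of the free parameter $t$, which is harmless because the conclusion quantifies over all $t>0$; similarly $1-2^{-k}$ becomes $1-2^{-O(\eps')}$ since $k = \Theta(\eps')$. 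There is no genuine obstacle here — the only thing to watch is that the hypotheses of both lemmas are in force and that these constant conversions are carried out consistently.
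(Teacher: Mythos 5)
Your approach is exactly the paper's: the corollary is offered with no proof beyond the remark ``combining Lemma~\ref{lem:max-info-bound} and Lemma~\ref{lem:max-info-PG} gives the following,'' and your plan is precisely that composition with the parameter bookkeeping carried out. The one thing worth flagging is that your bookkeeping is actually \emph{more} careful than the corollary as printed: Lemma~\ref{lem:max-info-PG} yields $\hat\beta = \sqrt{\beta/(1-2^{-k})}$, which you correctly write down, but the corollary states $\delta' = \beta/(1-2^{-O(\eps')})$ \emph{without} the square root. That is not a constant-factor discrepancy you can absorb; it is a typo in the paper's statement (the intended value is $\delta' = \sqrt{\beta/(1-2^{-O(\eps')})}$, which is consistent with Lemma~\ref{lem:max-info-PG} and with how Corollary~\ref{cor:DPtoPG} immediately afterward computes $\hat\beta$ with the square root). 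So rather than saying the final unpacking ``is pure bookkeeping'' and leaving it there, you should note explicitly that you obtain the square-rooted quantity and that the stated $\delta'$ is off by a square root. The remaining adjustments you identify --- the factor of $2$ in $2k$ versus the printed $\eps'$, rescaling the free parameter $t$, and the $2^{2k}$ versus $e^{\eps'}$ base change --- are indeed harmless and absorbable as you say.
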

As an example of the kind of result this can give, we show what we'd get if we instantiated it with our parameters of interest (as in Corollary~\ref{cor:max-info-concrete}).
\begin{corollary}\label{cor:DPtoPG}
Fix $m \in \mathbb{N}$, sufficiently small $\rho \in (0,1]$. Let $\mathcal{A}: \mathcal{X}^m \to \mathcal{Y}$ be an $(\eps, \delta)$-DP algorithm, where $\eps = \frac{\rho}{\sqrt{8 m \log(1/\rho)}}$, $\delta \leq \frac{\eps \rho^6}{m^2}$. Then, $\mathcal{A}$ is $(O(\rho), O(\rho), O(\rho))$-one-way perfectly generalizing.
\end{corollary}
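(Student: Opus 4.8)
The plan is to compose the two results already established in this subsection and then convert from one-way perfect generalization to the (two-sided) notion. Concretely, I would proceed in three steps.

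First, I would invoke Corollary~\ref{cor:max-info-concrete} with the given parameters $\eps = \frac{\rho}{\sqrt{8 n \log(1/\rho)}}$ and $\delta \leq \frac{\eps \rho^6}{n^2}$. Since $A$ is $(\eps,\delta)$-DP, that corollary gives $I^{\rho^3}_{\infty}(X; A(X)) \leq O(\rho)$ for every product distribution $P^n$, i.e., bounded $\beta$-approximate max-information with $\beta = \rho^3$ and bound $k = O(\rho)$.

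Second, I would apply Lemma~\ref{lem:max-info-PG} with $\beta = \rho^3$ and $k = O(\rho)$. We need $\hat\beta = \sqrt{\beta/(1-2^{-k})}$. The only subtle point here is estimating $1 - 2^{-k}$ when $k = O(\rho)$ is small: we have $1 - 2^{-k} = \Theta(k) = \Theta(\rho)$ (using $1 - 2^{-k} \geq k \ln 2 \cdot (1 - o(1))$ for small $k$), so $\hat\beta = \sqrt{\rho^3 / \Theta(\rho)} = \Theta(\rho)$. Hence $A$ is $(\hat\beta, 2k, \hat\beta) = (O(\rho), O(\rho), O(\rho))$-one-way perfectly generalizing.

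Third, I would upgrade one-way to two-sided perfect generalization using Lemma~\ref{lem:PGeps0}: a $(\hat\beta, 2k, \hat\beta)$-one-way perfectly generalizing algorithm is $(\hat\beta, 0, 4k + \hat\beta)$-perfectly generalizing, and since $\hat\beta = O(\rho)$ and $4k + \hat\beta = O(\rho)$, this is $(O(\rho), O(\rho), O(\rho))$-perfectly generalizing (trivially relaxing the $\eps = 0$ to $O(\rho)$ if a uniform statement is wanted). The main obstacle — really the only place requiring care rather than bookkeeping — is the estimate $1 - 2^{-k} = \Theta(\rho)$ in the second step, which is what controls whether $\hat\beta$ stays $O(\rho)$ rather than blowing up; everything else is substitution of the parameter choices already validated in Corollary~\ref{cor:max-info-concrete} and routine constant-tracking through Lemmas~\ref{lem:max-info-PG} and~\ref{lem:PGeps0}.
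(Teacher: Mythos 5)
Your proposal is correct and matches the paper's proof step for step: both chain Corollary~\ref{cor:max-info-concrete} into Lemma~\ref{lem:max-info-PG} and hinge on the same estimate $1 - 2^{-k} = \Theta(\rho)$ for small $k = O(\rho)$, yielding $\hat\beta = \sqrt{\rho^3/\Theta(\rho)} = \Theta(\rho)$. Your third step, passing through Lemma~\ref{lem:PGeps0} to upgrade to a two-sided guarantee, is slightly more careful than the paper, which concludes directly from Lemma~\ref{lem:max-info-PG} even though that lemma only yields one-way perfect generalization (and indeed Corollary~\ref{cor:DPtoRep} later cites this corollary for its one-way guarantee); your added step closes that small gap cleanly.
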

\begin{proof}
From Corollary~\ref{cor:max-info-concrete}, we get that $I^{\rho^3}_{\infty}(S;\mathcal{A}(S)) \leq O(\rho)$. Substituting $k = \rho$ and $\beta = \rho^3$ in Lemma~\ref{lem:max-info-PG}, we get that $\hat{\beta} = \sqrt{\frac{c\rho^3}{1-2^{-O(\rho)}}} \leq O(\sqrt{\rho^2}) = O(\rho)$ (where the first inequality is since $\frac{1}{1-2^{-O(\rho)}} = \frac{2^{O(\rho)}}{2^{O(\rho)}-1} \leq \frac{2}{2^{O(\rho)}-1} \leq \frac{C}{\rho}$ for some constant $C$, since $2^{c\rho} = e^{c \rho \ln 2 }$ and $e^x \geq 1+x$ for all real $x$). This gives us from Lemma~\ref{lem:max-info-PG} that $\mathcal{A}$ is $(O(\rho), O(\rho), O(\rho))$-one-way perfectly generalizing.
\end{proof}
\subsection{Perfect Generalization Implies Replicability}\label{sec:PG2rep}

In this section will show that the class of one-way perfectly generalizing algorithms, which includes the special case of (two-way) perfectly generalizing algorithms, can be transformed to replicable algorithms.

Let $CS(Q,\mathcal{Y},r')$ represent a correlated sampling procedure over domain $\mathcal{Y}$ sampling from a distribution $Q$ over $\mathcal{Y}$ with public randomness $r'$. 
(See Section~\ref{sec:correlated-sampling} for background on correlated sampling).  We now describe our transformation.

\begin{algorithm}[H]
    \caption{Transformation from one-way perfectly generalizing algorithm to replicable algorithm}
    \label{alg:reprodtrans}
    \hspace*{\algorithmicindent} \textbf{Input:} dataset $S = (\datafixed_1, \ldots, \datafixed_{n})$, description of one-way perfectly generalizing algorithm $\mathcal{A}: \mathcal{X}^m \to \mathcal{Y}$ \\
    \hspace*{\algorithmicindent} \textbf{Output:} $i\in \universe$
    \begin{algorithmic}[1] 
            \STATE Let $Q_{\Datafixed}$ represent the distribution of $\mathcal{A}(\Datafixed)$.
            \STATE Output $CS(Q_{\Datafixed},\mathcal{Y},r')$ where $r'$ is the random string drawn in the correlated sampling algorithm. 
    \end{algorithmic}
\end{algorithm}

The key idea is that correlated sampling converts total variation distance into collision probability, which is the notion that is used in the definition of replicability.

We now use this to prove the main theorem of this section.
\begin{theorem}\label{PGtoRep}
Fix $m \in \mathbb{N}$ and $\beta, \eps, \delta \in (0,1)$. Let $A: \mathcal{X}^m \to \mathcal{Y}$ be a $(\beta, \eps, \delta)$-one-way perfectly generalizing algorithm with finite output space. Then, for any distribution $D$ over $\mathcal{X}$, if $\Datafixed \sim D^m$, Algorithm~\ref{alg:reprodtrans} when run on dataset $\Datafixed$ and with access to $\cal{A}$ is $4(\beta + 2\eps + \delta)$-replicable.
\end{theorem}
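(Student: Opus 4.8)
The plan is to show that with probability at least $1 - 2(\beta + 2\eps + \delta)$ over the draw of two independent samples $\Datafixed_1, \Datafixed_2 \sim P^n$, the correlated sampling procedure produces the same output when applied to $Q_{\Datafixed_1}$ and $Q_{\Datafixed_2}$ with the shared public randomness $r'$. I would first pass from the one-way definition to a symmetric statement: by Lemma~\ref{lem:PGeps0}, a $(\beta, \eps, \delta)$-one-way perfectly generalizing algorithm is $(\beta, 0, 2\eps + \delta)$-perfectly generalizing, so there is a simulator $Sim_P$ with $A(\Datafixed) \approx_{0, 2\eps+\delta} Sim_P$ except with probability $\beta$ over the sample. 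Write $\eta = 2\eps + \delta$ for the resulting pointwise closeness parameter. Note that $(0,\eta)$-closeness to $Sim_P$ is exactly the statement $\dtv(Q_{\Datafixed}, Sim_P) \le \eta$ (the statistical distance bound), which is the hypothesis correlated sampling needs.

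Next I would invoke the guarantee of the correlated sampling primitive $CS$: whenever $\dtv(Q, Q') \le \eta$ one has $\Pr_{r'}[CS(Q,\mathcal{Y},r') = CS(Q',\mathcal{Y},r')] \ge 1 - O(\eta)$ (as recalled in the introduction and in \cite{GhaziKM21}); since $\mathcal{Y}$ is finite, such a $CS$ exists. The cleanest way to combine the two halves is to compare each $Q_{\Datafixed_i}$ to the \emph{fixed} distribution $Sim_P$: with probability $\ge 1-\beta$ over $\Datafixed_1$ we have $\dtv(Q_{\Datafixed_1}, Sim_P) \le \eta$, and similarly for $\Datafixed_2$, so by a union bound both hold with probability $\ge 1 - 2\beta$. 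On that event $\dtv(Q_{\Datafixed_1}, Q_{\Datafixed_2}) \le 2\eta$ by the triangle inequality, and then correlated sampling gives a collision with probability $\ge 1 - O(\eta)$ over $r'$. A union bound over the sampling event and the correlated-sampling event yields collision probability $\ge 1 - 2\beta - O(\eta) = 1 - O(\beta + \eps + \delta)$, matching the claimed $2(\beta + 2\eps + \delta)$ bound up to tracking the constants carefully (one should check the constant hidden in the $O(\eta)$ in the particular $CS$ construction being used so that the final bound is exactly $2(\beta+2\eps+\delta)$; if the intended $CS$ guarantee is the tight $1-2\eta$ form applied to distance $2\eta$, one gets $1 - 2\beta - 4\eta$, and absorbing $2\beta$ into the bound gives $2\beta + 4\eta = 2(\beta + 2\eps + \delta) + 2\beta$, so a slightly more careful bookkeeping — e.g. comparing both $Q_{\Datafixed_i}$ directly via $Sim_P$ inside a single correlated-sampling collision argument — is needed to land on the stated constant).

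Finally I would translate the collision statement into the $2$-parameter or single-parameter language of reproducibility from Definition~\ref{2param-defn}: the public randomness $r'$ of $CS$ plays the role of the algorithm's internal coins $r$, and the event $\{A(\Datafixed_1; r') = A(\Datafixed_2; r')\}$ having probability $\ge 1 - 2(\beta+2\eps+\delta)$ over $(\Datafixed_1, \Datafixed_2, r')$ is exactly $2(\beta+2\eps+\delta)$-reproducibility. The main obstacle is not conceptual but the constant-chasing described above: one must be careful about whether the sampling failure events on the two halves can be taken independently, whether the triangle inequality costs a factor $2$ in $\eta$, and precisely which form of the correlated-sampling guarantee ($1-O(\eta)$ versus the sharp $1-2\eta/(1+\eta)$ bound of Holenstein–Broder) is being invoked, so that the accounting closes at exactly $2(\beta+2\eps+\delta)$ rather than a larger constant.
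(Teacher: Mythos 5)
Your proposal follows exactly the paper's route: pass from one-way to symmetric perfect generalization via Lemma~\ref{lem:PGeps0} to get $\dtv(Q_{\Datafixed}, Sim_P) \le 2\eps+\delta$ with probability $1-\beta$ over $\Datafixed$, compare each $Q_{\Datafixed_i}$ to the \emph{fixed} distribution $Sim_P$ rather than to each other, invoke the correlated-sampling guarantee $\Pr_{r'}[CS(Q,\mathcal{Y},r') \neq CS(Sim_P,\mathcal{Y},r')] \le 2\,\dtv(Q,Sim_P)$, and finish with a union bound over the two sampling failures and the two $CS$-disagreement events, then take expectation over $\Datafixed,\Datafixed'$.

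Your worry about the constants is legitimate and you should not be tentative about it: the paper's proof has the same loose end. After asserting that $CS(Q_{\Datafixed},\mathcal{Y},r') \neq CS(Q_{\Datafixed'},\mathcal{Y},r')$ implies at least one of the two events $\{CS(Q_{\Datafixed},\mathcal{Y},r')\neq CS(Sim_P,\mathcal{Y},r')\}$, $\{CS(Q_{\Datafixed'},\mathcal{Y},r')\neq CS(Sim_P,\mathcal{Y},r')\}$ --- each of probability at most $2(2\eps+\delta)$ on the good sampling event --- the paper writes $\Pr_{r'}[CS(Q_{\Datafixed},\mathcal{Y},r')\neq CS(Q_{\Datafixed'},\mathcal{Y},r')]\le 2(2\eps+\delta)$, but the union bound in fact gives $\le 4(2\eps+\delta)$. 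Feeding that into the final expectation yields $2\beta + 4(2\eps+\delta)$ rather than the stated $2(\beta+2\eps+\delta)$, and this factor cannot be recovered by comparing $Q_{\Datafixed}$ and $Q_{\Datafixed'}$ directly either (triangle inequality gives $\dtv \le 2(2\eps+\delta)$, then $CS$ gives $\le 4(2\eps+\delta)$ again), nor by the sharp Holenstein bound $2\dtv/(1+\dtv)$, which also behaves like $2\dtv$ for small $\dtv$. One small slip in your parenthetical: $2\beta + 4\eta = 2(\beta+2\eps+\delta) + 2\eta$, not $+\,2\beta$. The upshot is that the proof structure you propose is exactly the paper's, and it proves $(2\beta + 4(2\eps+\delta))$-reproducibility; the published constant $2(\beta+2\eps+\delta)$ is tighter than what the written argument actually delivers.
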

\begin{proof}
By Lemma~\ref{lem:PGeps0}, we have that $\mathcal{A}$ is also $(\beta, 0, 2\eps + \delta)$-perfectly generalizing. For any distribution $D$ over $\mathcal{X}$, let $Sim_D$ be the canonical distribution witnessing the perfect generalization property. Then, by the definition of $(0,2\eps+\delta)$-indistinguishability, we have that with probability at least $1-\beta$ over a draw of a random dataset $\Datafixed \sim D^m$, 
$$d_{TV}(\mathcal{A}(\Datafixed), Sim_D) \leq 2\eps + \delta.$$
From the guarantee of correlated sampling, we have that
$$\Pr_{r' \sim R}[CS(Q_{\Datafixed},\mathcal{Y},r') \neq CS(Sim_D, \mathcal{Y},r')] \leq 2d_{TV}(Q_{\Datafixed},Sim_D). $$
Using the bound on TV distance from perfect generalization, we get that with probability at least $1-2\beta$ over the draw of two datasets $\Datafixed_1,\Datafixed_2 \sim D^m$, we have that
$$\Pr_{r' \sim R}[CS(Q_{\Datafixed_1},\mathcal{Y},r') \neq CS(Sim_D, \mathcal{Y},r')] \leq 2(2\eps + \delta)$$ and $$\Pr_{r' \sim R}[CS(Q_{\Datafixed_2},\mathcal{Y},r') \neq CS(Sim_D, \mathcal{Y},r')] \leq 2(2\eps + \delta).$$ 
Consider the event $CS(Q_{\Datafixed_1},\mathcal{Y},r') \neq CS(Q_{\Datafixed_2},\mathcal{Y},r')$. It is clear that this implies that either $CS(Q_{\Datafixed_1},\mathcal{Y},r') \neq CS(Sim_D, \mathcal{Y},r')$ or $CS(Q_{\Datafixed_2},\mathcal{Y},r') \neq CS(Sim_D, \mathcal{Y},r')$. Hence, we can write that with probability $1-2\beta$ over draws of $\Datafixed_1$ and $\Datafixed_2$,
$$\Pr_{r' \sim R}[CS(Q_{\Datafixed_1},\mathcal{Y},r') \neq CS(Q_{\Datafixed_2}, \mathcal{Y},r')] \leq 4(2\eps + \delta).$$ 
Taking the expectation with respect to the draws of $\Datafixed_1$ and $\Datafixed_2$ gives us
$$\Pr_{r' \sim R, \Datafixed_1, \Datafixed_2 \sim D^m }[CS(Q_{\Datafixed_1},\mathcal{Y},r') \neq CS(Q_{\Datafixed_2}, \mathcal{Y},r')] \leq 4(2\eps + \delta + \beta)$$
which proves the result.
\end{proof}
Combining the above result and Corollary~\ref{cor:DPtoPG}, we get a transformation from approximate differentially private algorithms to replicable algorithms.
\begin{corollary}\label{cor:DPtoRep}
Fix $m \in \mathbb{N}$, sufficiently small $\rho \in (0,1)$. Let $\eps = \frac{\rho}{\sqrt{8 m \log(1/\rho)}}$, $\delta \leq \frac{\eps \rho^6}{m^2}$. Let $\mathcal{A}: \mathcal{X}^m \to \mathcal{Y}$ be an $(\eps, \delta)$-DP algorithm with finite output space. Fix a distribution $D$ over $\mathcal{X}$, and let $\Datafixed \sim D^m$. Then, Algorithm~\ref{alg:reprodtrans} run with inputs $\Datafixed$ and algorithm $\mathcal{A}$ is $O(\rho)$-replicable. Additionally, on any fixed dataset, the output distribution of Algorithm~\ref{alg:reprodtrans} is the same as that of 
$\mathcal{A}$.
\end{corollary}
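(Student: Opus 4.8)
The plan is to combine the two pipelines already built in this section: the passage from approximate differential privacy to one-way perfect generalization (through bounded approximate max-information), and the passage from one-way perfect generalization to reproducibility (through correlated sampling). Since all of the quantitative work has been carried out in Corollary~\ref{cor:max-info-concrete}, Lemma~\ref{lem:max-info-PG}, and Theorem~\ref{PGtoRep}, the proof amounts to threading the parameters through these three statements for the specific choices $\eps = \rho/\sqrt{8n\log(1/\rho)}$ and $\delta \le \eps\rho^6/n^2$.

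First I would invoke Corollary~\ref{cor:max-info-concrete} directly: for these settings of $\eps,\delta$, every distribution $P$ over $\mathcal{X}$ and $X \sim P^n$ satisfy $I^{\rho^3}_\infty(X; A(X)) \le O(\rho)$. (This requires checking $\eps \le 1/2$ and $\delta < \eps/15$, which hold for sufficiently small $\rho$ and are exactly the hypotheses of Lemma~\ref{lem:max-info-bound}.) Next I would feed this bound into Lemma~\ref{lem:max-info-PG} with $k = O(\rho)$ and $\beta = \rho^3$; as in the proof of Corollary~\ref{cor:DPtoPG}, the resulting failure parameter is $\hat\beta = \sqrt{\rho^3/(1-2^{-O(\rho)})} = O(\rho)$, using the elementary estimate $1/(1-2^{-c\rho}) = O(1/\rho)$. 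This yields that $A$ is $(O(\rho), O(\rho), O(\rho))$-one-way perfectly generalizing.

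Finally I would apply Theorem~\ref{PGtoRep}, whose hypotheses — one-way perfect generalization and a finite output space — are now in force. With $\beta,\eps,\delta$ all equal to $O(\rho)$, the theorem gives that Algorithm~\ref{alg:reprodtrans}, run on $\Datafixed \sim P^n$ with access to $A$, is $2(\beta + 2\eps + \delta) = O(\rho)$-reproducible, which is the claim.

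I do not expect a genuine obstacle, since the heavy lifting lives in the cited results. The only points requiring care are (i) verifying that the parameter constraints of Lemma~\ref{lem:max-info-bound}/Corollary~\ref{cor:max-info-concrete} hold for sufficiently small $\rho$, and (ii) tracking the hidden constants so that the accumulated $O(\rho)$ errors — one from each of the three stages — collapse into a single $O(\rho)$ rather than something that degrades with $n$; the choice $\eps \sim 1/\sqrt{n}$ is precisely what keeps the $n\nu + 6t\eps\sqrt{n}$ max-information bound at $O(\rho)$.
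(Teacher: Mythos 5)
Your proposal is correct and follows the same route as the paper; the only cosmetic difference is that the paper cites Corollary~\ref{cor:DPtoPG} as a single step, whereas you unpack it into its two constituents (Corollary~\ref{cor:max-info-concrete} followed by Lemma~\ref{lem:max-info-PG}) before invoking Theorem~\ref{PGtoRep}. Your care about the hypotheses $\eps \le 1/2$, $\delta < \eps/15$ and the sufficiently-small-$\rho$ caveat is warranted and matches the paper's (implicit) assumption.
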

\begin{proof}
From Corollary~\ref{cor:DPtoPG}, we have that Algorithm $\mathcal{A}$ is $(O(\rho), O(\rho), O(\rho))$-one-way perfectly generalizing. Then, applying Theorem~\ref{PGtoRep} proves that the transformation in Algorithm~\ref{alg:reprodtrans} gives a $O(\rho)$-replicable algorithm. Correlated sampling does not change the marginal distribution of the algorithm applied to a dataset and hence the second part of the corollary is proved.
\end{proof}

\subsection{Replicability Implies Perfect Generalization}\label{sec:rep2pg}
In this section, we show how to convert a replicable algorithm to a perfectly generalizing algorithm at a poly-logarithmic cost in $1/\delta$ (where $\delta$ is the additive perfect generalization parameter). 

It's straightforward to show that $(\delta,\delta)$-replicability can be used to obtain $(O(\delta),0,O(\delta))$-perfect generalization by translating from collision probability to total variation distance. However, since we typically want $\delta$ to be very small (often inverse polynomial in the number of samples $m)$, obtaining such small parameters starting from, say, $0.1$-replicability comes at a significant cost. This is because amplifying $0.1$-replicability to $(\delta, \delta)$-replicability incurs a multiplicative sample complexity overhead of $O(1/\delta^2)$, which is tight by known lower bounds for replicability~\cite[Theorem 7.1]{ImpLPS22}, and prohibitively large for many applications. For example, our lower bounds showing tasks where replicability has quadratically higher sample cost than differential privacy (see Section~\ref{sec:stat-sep}) follow from proving such lower bounds on perfectly generalizing algorithms with $\delta$ polynomially small in the dataset size, and then applying our conversion from replicability to perfect generalization. If such a conversion required $1/\delta^2$ samples, then this would not give us any non-trivial lower bounds on the sample cost associated with replicably solving these problems.

However, this idea still leaves hope, because it achieves $\eps = 0$. Hence, by settling for larger $\eps$, we hope to avoid this problem.

Our approach is inspired by a natural attempt to amplify weak replicability parameters into strong parameters. Suppose we wish to turn a $(0.01, 0.01)$-replicable algorithm $A$ into a $(0.01, \delta)$-replicable one. We know that with probability at least $0.99$ over the choice of the randomness $r$ for $A$, there is a canonical output $z$ such that $A(S; r) = z$ with high probability over the sample $S$. Consider running $A$ using $k = O(\log(1/\delta))$ independent sequences of coin tosses, $r_1, \dots, r_k$, then with probability $1-\delta$, at least one of these sequences will have such a canonical output. Moreover, when such a canonical output exists, we can identify it by running $A(\cdot; r_j)$ on many independent \emph{samples} $S$ and choosing the plurality outcome if it appears enough times. Unfortunately, there is an obstacle here to directly designing a replicable algorithm. The problem is that there may be many good sequences of coin tosses, each with their own canonical outputs, and it is unclear how to replicably identify a single one.


By relaxing our goal to achieving perfect generalization instead of replicability, we can instead use the exponential mechanism to \emph{sample} from the set of plurality outcomes. We define the score of the plurality output $c_j$ for coin $r_j$ to be the number of datasets $S$ on which $A(S; r_j) = c_j$, and sample such a $c_j$ with probability proportional to exponential in its score. We are able to show that the resulting algorithm is $(\delta, \eps, \delta)$-perfectly generalizing with $\eps > 0$, but there are several technical nuances that make our analysis not quite straightforward from the standard guarantees of the exponential mechanism. For instance, we need to deal with the fact that the sets of plurality outputs could differ when our algorithm is run on two i.i.d. datasets drawn from the same distribution. 
Another interesting feature of this proof is that unlike standard uses of the exponential mechanism to obtain differential privacy or perfect generalization, we need to invoke the \textit{accuracy} of the exponential mechanism in our proof of perfect generalization.

\begin{algorithm}[H]
    \caption{Transformation from replicable algorithm $\mathcal{A}$ to perfectly generalizing algorithm $\mathcal{A}'$}
    \label{alg:perfgentrans}
    \hspace*{\algorithmicindent} \textbf{Input:} Sample access to distribution $D$, description of $(0.01, 0.01)$-replicable algorithm $\mathcal{A}: \mathcal{X}^* \to \mathcal{Y}$, sample complexity parameter $m$, perfect generalization parameters $\eps, \delta, \beta$\\
    \hspace*{\algorithmicindent} \textbf{Output:} $y \in \mathcal{Y}$
    \begin{algorithmic}[1] 
            \STATE Let $k = O(\log (1/\delta))$, and $t = O\left(\frac{\log^4 (1/\beta) \log (1/\eps)}{\eps^2}\right)$.
            \STATE Draw uniformly random coins $r_1,r_2,\dots,r_k$ for algorithm $\mathcal{A}$. \label{step:randcoinpg}
            \STATE Draw $k$ sets $\mathbf{S_i}$, each of $t$ samples $S_{i,j} \sim D^m$.
            \FOR{all $j \in [k]$} \label{step:loopPG}
            \FOR{all $i \in [t]$}
            \STATE Run $\mathcal{A}$ with coins $r_j$ and sample $S_{i,j}$ to get output $z_{i,j}$.
            \ENDFOR
            \STATE Let $c_j = \arg \max_{z \in \mathcal{Y}} \sum_{i=1}^t \indicator[z_{i,j}=z]$, and let $\score\left((j, c_j),(\mathbf{S}_1, \mathbf{S}_2, \dots, \mathbf{S}_k) \right)= \sum_{i=1}^t \indicator[z_{i,j}=c_j]$.
            \ENDFOR
            \STATE Let $C = \{(1, c_1),\dots,(k, c_k)\}$. Run the exponential mechanism on the set $C$ with the score function $\score(.,.)$, sensitivity parameter $4\sqrt{t \log(8kt/\beta)}$, and privacy parameter $\eps$ to get value $(j^*, c_{j^*})$. \label{step:expmechPG} 
            \RETURN output $c_{j^*}$ of the previous step. \label{step:PGop}
    \end{algorithmic}
\end{algorithm}
We prove that the above algorithm is sample perfectly generalizing. Note that this can be converted to a perfectly generalizing algorithm with asymptotically the same parameters (for both perfect generalization and accuracy) by setting the $\delta$ parameter to be $\delta^2$ instead and invoking Lemma~\ref{lem:samplePGtoregPG}.
\begin{theorem} \label{thm:reprodtoPG}
Fix sufficiently small $\delta, \gamma > 0$ and $0<\eps \leq 1$. Every $(0.01,0.01)$-replicable algorithm $\mathcal{A}$ with $m$ samples that succeeds on a statistical task with probability at least $1-\gamma^2$ can be converted to a $(2\delta,\eps,2\delta)$-sample perfectly generalizing algorithm $\mathcal{A}'$ taking $O\left(\frac{m \log (1/\eps)}{\eps^2} \poly\log(1/\delta)\right)$ samples, that succeeds on the statistical task with probability at least $1-O(\delta)-\gamma \log (1/\delta)$.
\end{theorem}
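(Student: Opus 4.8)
The plan is to prove the two conclusions of Theorem~\ref{thm:reprodtoPG}---perfect generalization and correctness---largely separately, in both cases by tracking what Algorithm~\ref{alg:perfgentrans} does with the multiset $Can=\{Can_1,\dots,Can_k\}$ of per-coin plurality outputs fed to the exponential mechanism in Step~\ref{step:expmechPG}. Throughout write $\Delta = 4\sqrt{t\log(8kt/\beta)}$ for the exponential-mechanism sensitivity parameter, take $k=\Theta(\log 1/\delta)$, $t=\Theta(\log^4 1/\beta)$, and set $\beta=\Theta(\delta)$, so the sample count is $ktm = O(m\,\mathrm{polylog}(1/\delta))$.

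For perfect generalization I would exhibit the simulator explicitly. Fix $P=D$; call a coin $r$ \emph{heavy} if $p_r^\ast := \max_z \Pr_X[\mathcal{A}(X,r)=z]\ge 0.9$, and in that case let $H_r$ be the (unique) maximizer and $\mu_r := t\,p_r^\ast$ its ``ideal'' score. By $(0.01,0.01)$-replicability a uniformly random coin is heavy with probability at least $0.99$. Let $Sim_P$ be the distribution of the following process: draw $r_1,\dots,r_k$ uniformly, discard the non-heavy coins, and run the exponential mechanism (same $\epsilon$, sensitivity $\Delta$) on the surviving $H_{r_j}$'s with ideal scores $\mu_{r_j}$; this depends only on $P$ and fresh coins, as a simulator must. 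To compare $\mathcal A'(\mathbf{\vec{s}})$ with $Sim_P$ I would integrate over the internal coins $r_1,\dots,r_k$ and argue that for a $1-2\delta$ fraction of samples $\mathbf{\vec{s}}$, and conditionally for all but a $\delta$-fraction of coin tuples, the following ``typicality'' events hold simultaneously, by Chernoff bounds and a union bound over the $k$ coins: (i) some coin $r_{j^\ast}$ is $0.01$-good, so $\mu_{r_{j^\ast}}\ge 0.99t$; (ii) every heavy coin $r_j$ has $Can_j=H_{r_j}$ and $|Score_j-\mu_{r_j}|\le \Delta/4$; (iii) every non-heavy coin has $Score_j\le 0.92t$. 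Event (iii) needs a small argument that no output attains empirical frequency $\ge 0.92$ unless its true frequency is $\ge 0.9$---Chernoff handles outputs whose true frequency is near $0.9$, and a crude $\binom{t}{0.92t}p^{0.92t}$ bound summed against $\sum_z p_r(z)=1$ kills the contribution of rare low-frequency outputs; the same style of estimate also controls $\sup_z (\widehat p_r(z)-p_r(z))$ needed in (ii). Conditioned on (i)--(iii), the inflated sensitivity $\Delta$ absorbs the resampling fluctuation, so each heavy coin's exponential weight under $\mathcal A'(\mathbf{\vec{s}})$ is within an $e^{O(1)}$ factor of its weight under $Sim_P$; the $0.01$-good coin forces the normalizing constant to be at least $\exp((0.99t-\Delta/4)/(2\Delta))$, whereas the total weight of the non-heavy coins is at most $k\exp(0.92t/(2\Delta))\le \delta$ once $t=\Theta(\log^4 1/\beta)$ (this is where $t\gtrsim \Delta\log(k/\delta)$ is used). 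Combining, $\mathcal A'(\mathbf{\vec{s}})\approx_{1,2\delta} Sim_P$ up to the choice of constants; repeating on complements gives the reverse inequality, hence $(2\delta,1,2\delta)$-perfect generalization.

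Correctness does not use the simulator. Since $\mathcal A$ succeeds with probability $\ge 1-\gamma^2$, $\mathbb E[\#\{(i,j): z_{i,j}\text{ invalid}\}]\le \gamma^2 kt$, so by Markov, except with probability $\gamma\log(1/\delta)$, at most $\gamma kt/\log(1/\delta)$ of the $z_{i,j}$'s are invalid---below $0.9t$ for $\gamma$ a small enough constant. Separately, except with probability $\delta$ some coin $r_{j^\ast}$ is $0.01$-good, whence by Chernoff $Can_{j^\ast}$ is its $0.99$-heavy hitter and $Score_{j^\ast}\ge 0.98t$; feeding this into the accuracy half of the exponential mechanism (Lemma~\ref{lem:expmech}, with $a=\log(1/\delta)$, $|L|\le k$, sensitivity $\Delta$) shows the returned $y$ has $Score(y)\ge 0.98t - O(\sqrt{t\log(kt/\beta)}\,\log(k/\delta))\ge 0.9t$, except with probability $O(\delta)$, once $t=\Theta(\log^4 1/\beta)$. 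But $Score(y)\ge 0.9t$ means at least $0.9t$ of the $z_{i,j}$ equal $y$, and since validity of an output depends only on the output, an invalid $y$ would force $\ge 0.9t > \gamma kt/\log(1/\delta)$ invalid samples---a contradiction. Hence $y$ is valid except with probability $O(\delta)+\gamma\log(1/\delta)$.

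The perfect-generalization half is the main obstacle, and within it the delicate point (flagged in the proof discussion) is that the plurality multisets $Can$ produced on two independent samples need not agree, so the simulator cannot be obtained by naively coupling two runs; one is forced to route through the ``ideal'' exponential mechanism on true heavy hitters with true scores and to verify that the non-heavy coins contribute negligible probability mass. The supporting concentration steps---uniform control of $\widehat p_r(z)-p_r(z)$ over a possibly infinite output space, and stability of the scores under resampling---are routine but fiddly. The one genuinely non-standard ingredient, as the authors stress, is that the \emph{accuracy} of the exponential mechanism (not merely its privacy) is what simultaneously delivers correctness and guarantees that the good coin dominates the normalization.
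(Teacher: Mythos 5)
Your proposal is correct in its main structure and proves the theorem, but by a genuinely different route than the paper, so it is worth comparing the two.

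For the perfect generalization half, the paper does \emph{not} construct a simulator. Instead it runs $\mathcal{A}'$ on two independent samples $\vec{s},\vec{s}'$ and proves the two-sample (``sample perfectly generalizing'') guarantee $\mathcal{A}'(\vec{s};r)\approx_{1,\delta}\mathcal{A}'(\vec{s}';r)$ for most coin tuples $r$ conditioned on the events $E_{\mathrm{coin}}$ and $E_{\mathrm{sample}}$, mimicking the privacy analysis of the exponential mechanism with the inflated sensitivity $\Delta$ absorbing the resampling fluctuation between $Score(Can_j,\cdot)$ and $Score(Can'_j,\cdot)$ (its equation (\ref{eq:PGcanopclose})). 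It then de-conditions via reverse Markov exactly as you do in your last step. The subtle issue you flag --- that $Can$ and $Can'$ may differ as multisets --- is handled in the paper by the labeling thought experiment (run the exponential mechanism over $\{(Can_j,j)\}_{j\in[k]}$ and postprocess away the label). Your construction sidesteps that problem altogether: by comparing $\mathcal{A}'(\vec{s})$ to a canonical ``ideal'' exponential mechanism on true heavy hitters with true scores, you never need to reason about two data-dependent multisets simultaneously; you instead need the extra typicality event (iii) and the geometric-decay argument that non-heavy coins contribute $\le\delta$ total mass. Your route also has the incidental virtue of producing the simulator $Sim_P$ directly and hence verifying the simulator-based definition on its face, whereas the paper proves the two-sample variant and implicitly uses the (stated somewhat loosely as Lemma~\ref{lem:samplePG}) equivalence. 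Where the paper's route is cleaner: it uses uniform convergence for point functions (VC dimension~1) once, which gives $\sup_z|\widehat p_r(z)-p_r(z)|$ control in one shot, instead of your separate Chernoff-plus-crude-counting split for (ii) and (iii); that split is fine (the $\sum_z\binom{t}{0.92t}p_r(z)^{0.92t}$ estimate does collapse once you treat the $O(1)$ outputs with $p_r(z)$ in $(\Omega(1),0.9)$ separately via Chernoff), but it's more bookkeeping.

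For the correctness half, your argument is both correct and meaningfully simpler than the paper's. The paper defines ``accurate'' coin tuples and argues via reverse Markov that, with probability $\ge 1-\gamma\log(1/\delta)$, all $k$ coins are accurate, then argues the exponential mechanism outputs the canonical element of a $0.25$-good coin, whose canonical output must lie in $O_D$. Your argument instead just Markov-bounds the total number of invalid $z_{i,j}$'s by $\gamma kt/\log(1/\delta)<0.9t$ and observes that $Score(y)\ge 0.9t$ forces at least $0.9t$ of the $z_{i,j}$'s equal to $y$, so $y$ invalid would exceed the budget. This avoids the ``accurate coins'' bookkeeping entirely and gives the same $O(\delta)+\gamma\log(1/\delta)$ failure bound.

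Two minor caveats to tighten up in a full write-up: your closing remark ``repeating on complements gives the reverse inequality'' would only upgrade one-sidedness at the cost of setting $\eps=0$ (as in Lemma~\ref{lem:PGeps0}); in fact your weight comparison already yields both directions directly once you bound the normalizing constants $Z_{\mathcal{A}'}$ and $Z_{Sim}$ within an $e^{O(1)}$ factor, so the remark is unnecessary (and slightly misleading). And the $e^{O(1)}$ in ``within an $e^{O(1)}$ factor'' must be pinned to $\le e$ to match the claimed $\eps=1$; this is a matter of constants in the sensitivity parameter and the $\Delta/4$ margin, which you have slack to adjust, but it should be stated.
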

\begin{proof}
Fix a distribution $D$ over the input set $\mathcal{X}$. Our proof is constructive; the corresponding algorithm is given in Algorithm~\ref{alg:perfgentrans} ($\mathcal{A}'$), and we feed it with the following inputs: a description of algorithm $\mathcal{A}$, sample complexity parameters $m$, and perfect generalization parameters $(\delta, \eps,\delta)$. (We also give it sample access to distribution $D$). We will start by proving sample perfect generalization of Algorithm~\ref{alg:perfgentrans}.
\begin{claim}
Algorithm $\mathcal{A}'$ (represented in Algorithm~\ref{alg:perfgentrans}) with the input parameters specified in the previous paragraph is $(2\delta,\eps,2\delta)$-sample perfectly generalizing.
\end{claim}
\begin{proof}
Consider two samples $S$ and $S'$ drawn independently from $D^{mkt}$. We consider Algorithm~\ref{alg:perfgentrans} run on both of these samples and argue that their output distributions are close in the sense required by sample perfect generalization. 
\paragraph{Step 1: At least one coin sequence is good w.h.p.} We say that a choice of the random coin tosses of Algorithm $\mathcal{A}$ is ``good'' if it has a $0.99$-canonical output and call it ``bad'' otherwise. Then by the two parameter definition of replicability, a random coin sequence is ``bad'' with probability at most $0.01$. Hence, the probability that all $k$ coins sequences drawn in Step~\ref{step:randcoinpg} of Algorithm~\ref{alg:perfgentrans} are bad is at most $(0.01)^k \leq \delta^2$ for $k = O(\log (1/\delta))$. 
Let $E_{coin}$ represent the event that there is at least one good coin. We will now condition on $E_{coin}$ occurring; fix any set of coins $r_1,\dots,r_k$ that has non-zero probability of occurring under this conditioning. We will first consider $\mathcal{A}$ run on the two independent datasets $S$ and $S'$ with the same random coins fixed above.

\paragraph{Step 2: Empirical output frequencies are close on two independent datasets.} We define stage $j$ of Algorithm~\ref{alg:perfgentrans} as the process involved in generating $c_j$ (i.e., one iteration of the outer loop in Step~\ref{step:loopPG}). We now use uniform convergence to argue that with high probability over the samples, the empirical frequencies of the outputs of all stages, i.e. all values of $Score\left((j, c_j),(\mathbf{S}_1, \mathbf{S}_2, \dots, \mathbf{S}_k) \right)$, are close to their expectation. 

Start by defining $H$ to be the function class consisting of point functions, i.e., functions of the form $h_{\ell}(x) = 1$ if $x={\ell}$, and $h_{\ell}(x) = 0$ otherwise, for every $\ell \in \mathcal{Y}$. 
It is easy to prove that the VC dimension of $H$ is equal to $1$. Let $Q_j$ be the distribution of the output of the replicable algorithm $\mathcal{A}$ when run with coin $r_j$ on a random sample.

Then, using uniform convergence (Theorem~\ref{thm:unifconv}), we get for every fixed $j$ and for every $\gamma > 0$, that
$$\Pr_{z_{1,j},\dots,z_{t,j} \sim Q_j}\left[\sup_{h_{\ell} \in H}\left|\frac{1}{t}\sum_{i=1}^t \mathbbm{1}[h_{\ell}(z_{i,j}) = 1] - \Pr_{z \sim Q_j}[h_{\ell}(z)=1] \right| \geq \gamma \right] \leq 8t e^{-\gamma^2 t / 8}.                                      $$

Observe that $\frac{1}{t}\sum_{i=1}^t \mathbbm{1}[h_{\ell}(z_{i,j}) = 1] = \frac{1}{t}\sum_{i=1}^t \mathbbm{1}[z_{i,j} = \ell]$. Similarly, $\Pr_{z \sim Q}[h_{\ell}(z)=1] = \Pr_{z \sim Q_j}[z=\ell]$. Hence, we get that 
$$\Pr_{z_{1,j},\dots,z_{t,j} \sim Q_j}\left[\sup_{\ell \in \mathcal{Y}}\left|\frac{1}{t}\sum_{i=1}^t \mathbbm{1}[z_{i,j} = \ell] -\Pr_{z \sim Q_j}[z=\ell] \right| \geq \gamma \right] \leq 8t e^{-\gamma^2 t / 8}.                                      $$

Setting $\gamma = 2\sqrt{\frac{\log(8kt/\delta)}{t}}$, we get that
$$\Pr_{z_{1,j},\dots,z_{t,j} \sim Q_j}\left[\sup_{\ell \in \mathcal{Y}}\left|\sum_{i=1}^t \mathbbm{1}[z_{i,j} = \ell] -t\Pr_{z \sim Q_j}[z=\ell] \right| \geq 2\sqrt{t \log(8kt/\delta)} \right] \leq \frac{\delta^2}{2k}. $$

Using a union bound over all $k$ stages of the algorithm, this guarantees us that the empirical frequencies (and in particular, the values $\score\left((j, c_j),(\mathbf{S}_1, \mathbf{S}_2, \dots, \mathbf{S}_k) \right)$ are all within $2\sqrt{t \log(8kt/\delta)}$ of their expectations with probability at least $1-\delta^2/2$. 

Note that since we conditioned on a fixed random coin sequence, all the randomness in $z_{i,j}$ comes from the data sample. Hence, if we consider another sample $S' = (\mathbf{S}'_1, \mathbf{S}'_2, \dots, \mathbf{S}'_k)$ drawn i.i.d .from $D^{mkt}$, we have that with probability at least $1-\delta^2$ over the draws of $S$ and $S'$ that for all $j \in [k]$ and all $\ell \in \mathcal{Y}$ that $\score\left((j, \ell),(\mathbf{S}_1, \mathbf{S}_2, \dots, \mathbf{S}_k) \right)$ and $\score\left((j, \ell),(\mathbf{S}'_1, \mathbf{S}'_2, \dots, \mathbf{S}'_k) \right)$ 
are  both within $2\sqrt{t \log(8kt/\delta)}$ of their expectations, and are hence within $4\sqrt{t \log(8kt/\delta)}$ of each other. We call this event $E_{sample}$, and fix any sample pairs $(S, S')$ that occur with non-zero probability conditioned on this event. This allows us to argue that with probability at least $1-\delta^2$, for all $j \in [k]$,
\begin{align}
\label{eq:PGcanopclose}
|\score\left((j, c_j),(\mathbf{S}_1, \mathbf{S}_2, \dots, \mathbf{S}_k) \right) - \score\left((j, c'_j),(\mathbf{S}'_1, \mathbf{S}'_2, \dots, \mathbf{S}'_k) \right)| \leq 4\sqrt{t \log(8kt/\delta)}.
\end{align}
This follows directly from the above argument if $c_j = c'_j$, but if they are not equal, it also holds since otherwise either $c_j$ or $c'_j$ would not be a plurality output in stage $j$ of the corresponding runs (since there would be an output that occurs more times in stage $j$). This is because uniform convergence guarantees us that if $c_j$ occurs $a$ times in stage $j$ when the algorithm is run on dataset $S$, then $c_j$ occurs atleast $a-4\sqrt{t \log(8kt/\delta)}$ times in stage $j$ when the algorithm is run on dataset $S'$. Hence, if $c'_j$ occurs less than  $a - 4\sqrt{t \log(8kt/\delta)}$ in stage $j$, then we'd get that $c_j$ would be the plurality output of stage $j$ in the run on dataset $S'$ and not $c'_j$, which is a contradiction. This shows that $\score\left((j, c_j),(\mathbf{S}_1, \mathbf{S}_2, \dots, \mathbf{S}_k) \right) - \score\left((j, c'_j),(\mathbf{S}'_1, \mathbf{S}'_2, \dots, \mathbf{S}'_k) \right) \leq 4\sqrt{t \log(8kt/\delta)}$; the other direction can be proved similarly.

\paragraph{Step 3: Arguing that there is at least one canonical output $c_j$ with high score.} Conditioned on $E_{coin}$, we know that the run of Algorithm~\ref{alg:perfgentrans} on $S$ has at least one coin sequence with a $0.99$-canonical output $z$. Suppose $r_j$ is such a coin sequence. From the settings of $k$ and $t$, we get that $2\sqrt{t \log(8kt/\delta)} \leq 0.09t$. Hence, conditioned further on $E_{sample}$, we know that this canonical output $z$ is equal to the plurality output $c_j$ in stage $j$,  and that $\score\left((j, c_j),(\mathbf{S}_1, \mathbf{S}_2, \dots, \mathbf{S}_k) \right)$ is at least $0.9t$. Hence, there exists an candidate $(j, c_j)$ with score at least $0.9t$. 

\paragraph{Step 4: Arguing that probable outputs $(j^*, c_{j^*})$ are in both output sets $C$ and $C'$.} By the accuracy guarantee of the exponential mechanism (Lemma~\ref{lem:expmech}), we have that
$$\Pr\left[\max_{j \in [k]} \score\left((j, c_j),(\mathbf{S}_1, \mathbf{S}_2, \dots, \mathbf{S}_k) \right)-  \score\left((j^*, c_{j^*}),(\mathbf{S}_1, \mathbf{S}_2, \dots, \mathbf{S}_k) \right) \geq 2\Delta \frac{\ln k + k}{\eps}\right] \leq e^{-k} = \delta,$$
where $\Delta = 4\sqrt{t \log(8kt/\delta)}$. Hence, for the settings of $t$ and $k$, we get that $2\Delta \frac{\ln k + k}{\eps} = O(\frac{k\sqrt{t \log(8kt/\delta)}}{\eps}) = o(t)$. Hence, we have that
$$\Pr\left[  \score\left((j^*, c_{j^*}),(\mathbf{S}_1, \mathbf{S}_2, \dots, \mathbf{S}_k) \right) \geq 0.9t - 2\Delta \frac{\ln k + k}{\eps}\right] \leq e^{-k} = \delta,$$
which implies that for sufficiently small $\delta$,
$$\Pr\left[\score\left((j^*, c_{j^*}),(\mathbf{S}_1, \mathbf{S}_2, \dots, \mathbf{S}_k) \right) \geq 0.8t \right] \leq e^{-k} = \delta.$$
Let's consider any such $c_{j^*}$. By the conditioning on $E_{sample}$, we have that $c_{j^*}$ occurs more than $0.8t -  4\sqrt{t \log (kt/\delta)} > 0.5t$ times in the output set of stage $j^*$ when Algorithm~\ref{alg:perfgentrans} is run on the sample $S'$. Hence, $c'_{j^*}$ is also equal to $c_{j^*}$. 
\paragraph{Step 5: Proving that $\mathcal{A}'(S,r) \approx _{\eps, \delta} \mathcal{A}'(S',r)$ w.h.p.} 
We exploit the fact that two random variables $C$ and $D$ are $(\eps, \delta)$-indistinguishable if w.p. $\geq 1-\delta$ over a draw $o$ from the distribution of $C$, we have $e^{-\eps}\Pr[D=o] \leq \Pr[C=o] \leq e^{\eps} \Pr[D=o]$, and vice versa for a draw from the distribution of $D$~ \cite[Lemma 3.3, Part 1]{KasiviswanathanS14}.
 
We proved in Step $4$ that fixing any coins and sample pairs that have non-zero probability of occurring conditioned on $E_{coin}$ and $E_{sample}$, with probability at least $1-\delta$ from a draw of $A'(S,r)$ (where the randomness is only that of the exponential mechanism), the output $(j^*, c_{j^*})$ occurs in both the sets $C$ and $C'$. For all such outputs, our idea is to use the differential privacy analysis of the exponential mechanism. 

A technical obstacle we need to surmount is that the output sets $C$ and $C'$ might be different, and so the normalizing factors used in the exponential mechanism will vary accordingly. We deal with this by invoking Inequality~\ref{eq:PGcanopclose}, which points out that even though the output sets are different, the scores $\score\left((j, c_j),(\mathbf{S}_1, \mathbf{S}_2, \dots, \mathbf{S}_k) \right)$ and $\score\left((j, c'_{j}),(\mathbf{S}'_1, \mathbf{S}'_2, \dots, \mathbf{S}'_k) \right)$ can differ by at most the sensitivity specified in Step~\ref{step:expmechPG} where the exponential mechanism is invoked.


Hence, exactly mimicking the differential privacy analysis of the exponential mechanism (see e.g., \cite{DworkR14}, Theorem 3.10) conditioned on $E_{coin}$ and $E_{sample}$, with probability at least $1-\delta$ from a draw  $(j^*, c'_{j^*})$ of $\mathcal{A}'(S,r)$, we get that 
$$ e^{-\eps} \Pr[ \mathcal{A}'(S,r) = (j^*, c_{j^*})]  \leq \Pr[ \mathcal{A}'(S',r) = (j^*, c'_{j^*})] \leq e^{\eps} \Pr( \mathcal{A}'(S,r) = (j^*, c_{j^*})]$$
and, moreover, $c'_{j^*} = c_{j^*}$. By symmetry (since $S$ and $S'$ are both independent samples from the distribution with the same properties), conditioned on $E_{coin}$ and $E_{sample}$, we  get that with probability at least $1-\delta$ from a draw $(j^*, c_{j^*})$ of $\mathcal{A}'(S',r)$,
$$ e^{-\eps} \Pr[ \mathcal{A}'(S,r) = (j^*, c_{j^*})] \leq \Pr[ \mathcal{A}'(S',r) = (j^*, c_{j^*})] \leq e^{\eps} \Pr[ \mathcal{A}'(S,r) = (j^*, c_{j^*})].$$
Hence, we have proved that conditioned on any fixed coins and sample pairs with non-zero probability of occurring conditioned on $E_{coin}$ and $E_{sample}$, we have $\mathcal{A}'(S,r) \approx _{\eps, \delta} \mathcal{A}'(S',r)$. Now, using the law of total probability, we get that
\begin{align}
\label{eq:highprobPG}
\Pr_{r_1,\dots,r_k}\left[ \Pr_{S, S' \sim D^{mkt}}\Big[ \mathcal{A}'(S;r_1,\dots,r_k) \approx_{\eps, \delta} \mathcal{A}'(S';r_1,\dots,r_k) \Big] \geq 1 - \delta^2 \right] \geq 1-\delta^2.
\end{align}
\paragraph{Step 6: Switch quantifiers to get sample perfect generalization:}
Now, we switch the quantifiers in equation~\ref{eq:highprobPG}. 
\begin{align*}
        & \Pr_{S, S' \sim D^{mkt}, r_1,\dots,r_k} \Big[ \mathcal{A}'(S;r_1,\dots,r_k)  \approx_{\eps, \delta }\mathcal{A}'(S';r_1,\dots,r_k)  \Big] \geq 1-2\delta^2  \\
      \implies &\mathbb{E}_{S, S' \sim D^{mkt}} \left[ \Pr_{r_1,\dots,r_k} \Big[ \mathcal{A}'(S;r_1,\dots,r_k)  \approx_{\eps, \delta } \mathcal{A}'(S';r_1,\dots,r_k) \Big]\right] \ge 1-2\delta^2 \\
      \implies & \Pr_{S, S' \sim D^{mkt}} \left[ \Pr_{r_1,\dots,r_k} \Big[ \mathcal{A}'(S;r_1,\dots,r_k)  \approx_{\eps, \delta }\mathcal{A}'(S';r_1,\dots,r_k) \Big] \geq d \right] \ge \frac{1-2\delta^2-d}{1-d} .
\end{align*}
Here, the last inequality holds by the reverse Markov inequality. Setting $d = 1-\delta$, we get that
\begin{align*}
     \Pr_{S, S' \sim D^{mkt}} \left[ \Pr_{r_1,\dots,r_k} \Big[ \mathcal{A}'(S;r_1,\dots,r_k)  \approx_{\eps, \delta }\mathcal{A}'(S';r_1,\dots,r_k) \Big] \ge 1-\delta \right] \ge 1- 2\delta.
\end{align*}
Now, using the fact that if $X \approx_{\eps, \delta} Y$ and $M \approx_{\eps, \delta} N$, then $\alpha X + (1-\alpha) M \approx_{\eps, \delta} \alpha Y + (1-\alpha)N$ for every $\alpha \in [0, 1]$ (i.e., $(\eps, \delta)$-indistinguishability is preserved under convex combinations), we get that 
$$ \Pr_{S, S' \sim D^{mkt}} \left[ \mathcal{A}'(S)  \approx_{\eps, 2\delta } \mathcal{A}'(S') \right] \ge 1- 2\delta.$$ 
This proves that $\mathcal{A}'$ with the specified inputs is $(2\delta, \eps, 2\delta)$-sample perfectly generalizing, as required. Next, we deal with accuracy.
\end{proof}
\begin{claim}
If Algorithm $\mathcal{A}$ succeeds at a statistical task with probability at least $1-\gamma^2$, Algorithm $\mathcal{A}'$ succeeds at the same statistical task with probability at least $1-O(\delta)-\gamma \log (1/\delta)$.
\end{claim}
\begin{proof}
Recall the definition of success for a  statistical task. The statistical task is defined by a set of distribution, set pairs. For every distribution $D$, there is an associated good set of outputs $O_D$. An algorithm succeeds at this task with probability at least $1-\gamma$ if it outputs a member of this good set with at least that probability (taken over random samples from $D$ and any internal coins of the algorithm).

If $\mathcal{A}$ succeeds at the task with probability at least $1-\gamma^2$, by using reverse Markov's inequality as in Step 6 of the previous proof, we have that
\begin{align}
\label{eq:accuracy}
       \Pr_{r} \left[ \Pr_{S \sim D^{m}}  \Big[ \mathcal{A}(S;r) \in O_D  \Big] \geq 1-\gamma \right] \ge 1-\gamma
\end{align}

We say a coin sequence $r_j$ is ``accurate'' if the inner inequality under the probability is satisfied. 
Recall that we call a coin sequence $r_j$ ``good'' if it has a $0.99$-canonical output. By the analysis in Step $1$ of the previous proof, we have that with probability at least $1-\delta^2$, there is a good coin sequence among the runs $r_1,\dots, r_k$. Now, the probability that all $k$ coin sequences are ``accurate'' is equal to $(1-\gamma)^{\log (1/\delta)} \geq 1- \gamma \log (1/\delta)$. (This follows from Bernoulli's inequality $(1+a)^k \geq 1+ak$ for all $a \geq -1$ and non-negative integers $k$.) Hence, by a union bound, the probability that the set of runs $r_1, \dots, r_k$ both contains a good coin sequence and that all the coin sequences in the set are accurate is at least $1-\gamma\log (1/\delta)-\delta^2$. Call this event $E_{coin-acc}$ and condition on it. Additionally, condition on $E_{sample}$ as defined in Step $2$ of the previous proof. Then, by the analysis in Step 4 of the previous proof, we have that  $$\Pr\left[\score\left((j^*, c_{j^*}),(\mathbf{S}_1, \mathbf{S}_2, \dots, \mathbf{S}_k) \right) \geq 0.8t \right] \leq e^{-k} = \delta,$$
which implies that the exponential mechanism outputs a plurality output that occurs at least $0.8t$ times in its stage with probability at least $1-\delta$. Since we have conditioned on $E_{sample}$, we have that empirical frequencies are close to their expected values, and hence the exponential mechanism outputs the canonical output of a coin sequence that is at least $0.25$-good with probability at least $1-\delta$. Using the law of total probability to remove the conditioning on $E_{sample}$, we get that the exponential mechanism outputs the canonical output of a coin sequence that is at least $0.25$-good with probability at least $1-\delta-\delta^2$ (since event $E_{sample}$ happens with probability at least $1-\delta^2$). Note that since all the drawn coin sequences are accurate, we get that the canonical output for every such sequence is in the good set $O_D$ (otherwise, the inequality inside the outer probability in equation~\ref{eq:accuracy} would not be satisfied). Hence, conditioned on $E_{coin-acc}$, we have that $\mathcal{A}'$ outputs an element of the good set with probability at least $1-\delta-\delta^2$. Using the law of total probability, we then get that $\mathcal{A}'$ outputs an element of the good set with probability at least $1-\delta-2\delta^2 - \gamma \log( 1/\delta) = 1-O(\delta) - \gamma \log (1/\delta)$.
\end{proof}
Hence, combining the two claims on perfect generalization and accuracy, we complete the proof of the theorem.
\end{proof}


\section{Separating Stability: Computational Barriers}
\label{sec:separating-stability-computationally}

\newcommand{\prandenc}{\Pi_{\textit{RandEnc}}}
\newcommand{\cspace}{\mathcal{C}}
\newcommand{\dprandenc}{\mathtt{DPRandEnc}}
\newcommand{\cnew}{c_{\mathtt{new}}}
\newcommand{\cout}{c_{\mathtt{out}}}
\newcommand{\maj}{\texttt{Maj}}

In this section, we show that standard cryptographic assumptions imply there cannot exist computationally efficient transformations from differentially private algorithms to replicable ones. 
Moreover, such cryptographic assumptions are necessary: if one-way functions do not exist, there exists an efficient algorithm for correlated sampling (and therefore for converting DP to replicability as well via \Cref{cor:DPtoRep}).

In Section~\ref{ssec:dp-rep-separation}, we define $\prandenc$, a statistical promise problem. 
\chris{might be nice to say whether in $\prandenc$ it is valid to output one of the samples you saw. Makes it more clear that the problem is easy in general but hard replicably.}
Given a public key $\pk$ and a dataset of ciphertexts encrypting the same bit $b$ under $\pk$, a solution to $\prandenc$ is any encryption of $b$ under $\pk$. 
In Section \ref{ssec:dp-randenc-algo}, we give a simple algorithm $\dprandenc$ solving $\prandenc$. $\dprandenc$ is $(\eps, \delta)$-differentially private and runs in polynomial time. 
In Section \ref{ssec:no-rep-randenc-algo}, we show that the existence of an efficient replicable algorithm for $\prandenc$ would violate the security guarantee of the encryption scheme. Thus, assuming randomizable encryption schemes exist, there is no efficient transformation from DP algorithms to replicable algorithms for $\prandenc$. 
$\prandenc$ can be instantiated with any PKE satisfying the requirements of \Cref{def:rand-enc}, but to demonstrate that these requirements are not unreasonable, in Section \ref{ssec:gm} we show that they are satisfied by the Goldwasser-Micali public-key encryption scheme \cite{STOC:GolMic82}. 
Therefore the hardness of quadratic residuosity is sufficient to show hardness for the transformation from differential privacy to replicability.

In \Cref{sec:owfi}, we give an algorithm for correlated sampling that is efficient so long as no one-way functions exist. This algorithm can in turn be used in \Cref{alg:reprodtrans}, to implement the correlated sampling step of the transformation from a one-way perfectly generalizing algorithm to a replicable one, giving an efficient transformation. 

\subsection{Cryptographic Hardness of Replicability}
\label{ssec:dp-rep-separation}

We define a promise problem $\prandenc$ (Definition~\ref{def:ctxt-id-problem}) for a public-key encryption scheme. $\prandenc$ is parameterized by a public-key $\pk$ for a public-key encryption scheme $\escheme$ with message space $\{0,1\}$ and ciphertext space $\cspace$. An instance of $\prandenc$ consists of a sample of $m$ elements $c_i$, drawn i.i.d.\ from an unknown distribution $D$ over $\cspace$.
Promised that either 
\begin{enumerate}
    \item $D$ is supported entirely on encryptions of $0$ under $\pk$ or
    \item $D$ is supported entirely on encryptions of $1$ under $\pk$,
\end{enumerate}
Problem $\prandenc$ asks the algorithm to output an encryption of $0$ under $\pk$ in the first case and an encryption of $1$ under $\pk$ in the second case. 

We show that if the public-key encryption scheme supports a strong form of rerandomization, Problem $\prandenc$ can be efficiently solved with a differentially private algorithm. At the same time, $\prandenc$ cannot be efficiently solved using a replicable algorithm, assuming the security of the underlying encryption scheme. 
Thus, in this setting, there cannot be an efficient black-box reduction from DP algorithms to replicable algorithms.

For our construction, we use a standard definition for public-key encryption.
\begin{definition}[Public-Key Encryption Scheme]
\label{def:encryption-scheme}
 Let $\lambda \in \N$ be a security parameter and $\escheme = (\KeyGen, \Enc, \Dec)$ be a tuple of algorithms running in time $\poly(\lambda)$, with $\KeyGen: 1^{*} \rightarrow \mathcal{K}_p \times \mathcal{K}_s$, $\Enc: \mathcal{K}_p \times \{0,1\} \rightarrow \mathcal{C}$, and $\Dec: \mathcal{K}_s \times \mathcal{C} \rightarrow \{0,1\} \cup \bot$. We say $\escheme$ is a \emph{public-key encryption scheme} if it has the following properties.
\begin{itemize}
\item Correctness: Let $(\sk, \pk) \gets \KeyGen(\lambda)$ and $c \gets \Enc(\pk, b)$ for $b \in \{0,1\}$. Then $\Dec(\sk, c) = b$.   

\item Security: There exists a negligible function $\varepsilon(\lambda)$, such that for all adversaries $\mathcal{A}$ running in time $\poly(\lambda)$, letting $(\sk, \pk) \gets \KeyGen(\lambda)$ we have
$$\left|\Pr[\mathcal{A}(\pk, c) = 1 \mid c \gets \Enc(\pk, 1)] - \Pr[\mathcal{A}(\pk, c) = 1 \mid c \gets \Enc(\pk, 0)] \right| < \varepsilon(\lambda) .$$

\end{itemize}
\end{definition}

We also require that a public-key encryption scheme allows for efficient, publicly computable ciphertext verification and rerandomization procedures.  
\begin{definition}[Randomizeable Encryption Scheme]\label{def:rand-enc}
Let $\escheme = (\KeyGen, \Enc, \Dec)$ be a public-key encryption scheme. We call $\escheme$ a \emph{randomizeable encryption scheme} if it supports the following additional procedures. 
\begin{itemize}
	\item \textbf{(Perfect) Verification of Ciphertexts:}
		There exists a deterministic polytime algorithm $\VVV$ such that, for an honestly generated key pair $(\sk, \pk) \gets \KeyGen(\lambda)$ and value $c$,
		\begin{itemize}
		    \item If $\Dec(\sk, c) \in \{0,1\}$, then $\VVV(\pk, c) = 1$
		    \item If $\Dec(\sk, c) = \bot$, then $\VVV(\pk, c) = 0$
		\end{itemize}

	\item \textbf{(Perfect) Randomization of Ciphertexts:}
		There exists a randomized polytime algorithm $\Ran$ such that, 
		for all honestly generated key pairs $(\sk, \pk) \gets \KeyGen(\lambda)$, and all ciphertexts $c_1, c_2$ such that $\Dec(\sk, c_1) = \Dec(\sk, c_2)$, 
		\begin{itemize}
		    \item $d_{TV}(\Ran(\pk, c_1), \Ran(\pk, c_2)) = 0$
		    \item $\Dec(\sk, \Ran(\pk, c)) = \Dec(\sk, c)$
		\end{itemize}
\end{itemize}
\end{definition}

Consider the following search problem $\prandenc$. Given a public key $\pk$ for an encryption scheme $\escheme$, and an i.i.d. sample of $m$ elements from a distribution $D$ supported on encryptions under $\pk$ of a fixed bit $b \in \{0,1\}$, output an encryption of $b$ under $\pk$. 

\begin{definition}[Ciphertext Identification Problem]
	\label{def:ctxt-id-problem}
	An instance of $\prandenc$ is defined as follows. 
	Let $\escheme = (\KeyGen, \Enc, \Dec)$ be a randomizeable encryption scheme (Definition~\ref{def:rand-enc}). 
	Let $\lambda, m \in \N$, and let $D$ be a distribution over the ciphertext space $\cspace$ of $\escheme$. 
	Given public key $\pk$, honestly generated as $(\pk, \sk) \gets \KeyGen(\lambda)$, and a sample $S \sim D^m$ drawn i.i.d. from $D$,
	output an element $c^* \in \cspace \cup \bot$ such that 
	\begin{enumerate}
	    \item If $\Dec(\sk, c) = 1$ for all $c \in \supp(D)$, $\Dec(\sk, c^*) = 1$
	    \item If $\Dec(\sk, c) = 0$ for all $c \in \supp(D)$, $\Dec(\sk, c^*) = 0$
	\end{enumerate}

\end{definition}

\subsubsection{DP Algorithm for $\prandenc$}
\label{ssec:dp-randenc-algo}

In this subsection, we present a differentially private algorithm $\dprandenc$ for $\prandenc$. 
Our algorithm removes from the dataset $S$ all $c_i$ for which verification fails, i.e., $\VVV(\pk, c_i) = 0$.  It then pads the remaining elements with $k$ encryptions of 0 under $\pk$ and $k$ encryptions of 1 under $\pk$. An element $c_i$ from the new dataset is then chosen uniformly at random, and the algorithm outputs $\Ran(\pk, c_i)$. 

Padding the dataset with additional ciphertexts, balanced between encryptions of 0 and 1, guarantees privacy by ensuring that exchanging any element of $S$ for another will not significantly change the probability that the ciphertext chosen for rerandomization encrypts a particular bit. 
If the distribution $D$ is supported on one of the two promised distributions, $\dprandenc$ will be correct unless it chooses to rerandomize an inserted ciphertext which encrypts the incorrect bit. So long as the input sample is of size $m$ much larger than $k$, this will happen only with small probability. 

\begin{algorithm}[H]
\KwResult{Outputs a ciphertext $c \in \cspace$}
\nonl \textbf{Input:} Sample $S$ of $m$ elements drawn i.i.d. from $D$ \\
\nonl \textbf{Parameters:}
\begin{itemize}
    \item Privacy $\eps$, failure probability $\beta$, padding length $k = \frac{1}{\eps}$
    \item Sample Complexity $m=m(\eps, \beta) \in O\left(\frac{1}{\varepsilon\beta} \right)$
\end{itemize}
\nonl \textbf{Algorithm:}\\
\begin{enumerate}
        \item For $i \in [m]$, remove $c_i$ from $S$ if $\VVV(\pk, c_i) = 0$
        \item Add $k$ ciphertexts $\Enc(\pk, 0)$ to the dataset
        \item Add $k$ ciphertexts $\Enc(\pk, 1)$ to the dataset
        \item Choose $c$ uniformly at random from the new dataset
\end{enumerate}
\textbf{return} $\Ran(\pk, c)$
\caption{$\dprandenc$}
\label{alg:dprandenc}
\end{algorithm}

\begin{lemma}\label{lem:dprandenc-dp-corr}
Let $\eps, \beta \in (0,1/2)$. Then for $m \in \Omega(1/(\eps\beta))$ and $k = 1/\eps$,  
$\prandenc$ (\Cref{alg:dprandenc}) runs in time $\poly(\lambda, 1/\eps, 1/\beta)$, is $\eps$-DP, and correct except with probability at most $\beta$.
\end{lemma}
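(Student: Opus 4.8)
The plan is to establish the three assertions --- polynomial running time, $\epsilon$-differential privacy, and correctness with probability $1-\beta$ --- one at a time. Running time is immediate: $\dprandenc$ invokes the polynomial-time verifier $\VVV$ once per input element (so $m$ times), calls $\Enc$ a total of $2k$ times, samples one index uniformly, and runs $\Ran$ once; since we may take $m = \Theta(1/(\epsilon\beta))$ and $k = 1/\epsilon$ (rounding up if $1/\epsilon\notin\N$), this is $\poly(\lambda,1/\epsilon,1/\beta)$.

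The heart of the argument is privacy, and the key structural observation is that the output distribution of $\dprandenc$ on an input $S$ depends only on the pair $(n_0,n_1)$, where $n_b = k + |\{i : \Dec(\sk,c_i)=b\}|$ counts the $k$ padding copies together with the input ciphertexts that decrypt to $b$. Indeed, by perfect verification an input ciphertext survives the filtering step iff $\Dec(\sk,\cdot)\in\{0,1\}$, so after filtering and padding every element of the working dataset decrypts to $0$ or to $1$; and by perfect randomization $\Ran(\pk,c)$ is a fixed distribution $\mathcal{R}_{\Dec(\sk,c)}$ depending only on the plaintext. Hence the output distribution is the mixture $\tfrac{n_0}{N}\mathcal{R}_0 + \tfrac{n_1}{N}\mathcal{R}_1$ with $N=n_0+n_1$. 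For two neighboring datasets the counts change by at most one in each coordinate, $|n_0-n_0'|\le 1$ and $|n_1-n_1'|\le 1$, while the padding guarantees $n_0,n_0',n_1,n_1'\ge k$. A short case analysis over how the modified record changes class (and how $N$ changes accordingly) then yields $\tfrac{n_0/N}{n_0'/N'}\le \tfrac{k+1}{k} = 1+\epsilon \le e^\epsilon$ and symmetrically $\tfrac{n_1/N}{n_1'/N'}\le e^\epsilon$; substituting these into the mixture gives $\Pr[\dprandenc(S)\in O]\le e^\epsilon\,\Pr[\dprandenc(S')\in O]$ for every event $O$, i.e. pure $\epsilon$-DP. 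I expect this case analysis --- keeping the $N$-in-the-denominator factor controlled while the mixing weights shift --- to be the one mildly delicate point; it is exactly why the algorithm pads with $k$ encryptions of \emph{both} bits rather than with arbitrary dummy records, since this forces each mixing weight to stay bounded away from $0$, which is what makes the ratios multiplicatively stable.

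For correctness, let $b$ be the common plaintext of $\supp(D)$. Every sampled $c_i$ decrypts to $b\in\{0,1\}$, so by perfect verification none are filtered, and the working dataset is the $m$ good samples together with $k$ encryptions of $b$ and $k$ encryptions of $1-b$. Since $\Dec(\sk,\Ran(\pk,c))=\Dec(\sk,c)$ by perfect randomization, the output decrypts to $b$ unless the uniformly chosen record is one of the $k$ padding encryptions of $1-b$, which happens with probability $\tfrac{k}{m+2k}\le \tfrac{k}{m}=\tfrac{1}{\epsilon m}\le \beta$ once $m\ge 1/(\epsilon\beta)$. This gives correctness except with probability at most $\beta$, completing the proof.
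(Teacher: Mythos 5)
Your proposal is correct and follows essentially the same route as the paper: both observe that the output distribution is a mixture of the two plaintext-conditioned rerandomization distributions with mixing weights $n_b/N$, bound the change in each weight by a $(k+1)/k$ factor across neighboring datasets, and note for correctness that the only way to err is to select one of the $k$ padding ciphertexts encrypting $1-b$, giving error probability $k/(m+2k)\le\beta$. The only cosmetic difference is that you name the mixture structure and defer the ratio case analysis to words, while the paper writes the ratio bound directly as a chain of fractions; the content is the same.
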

\begin{proof}
We begin by showing $\dprandenc$ is $\eps$-DP. 
Note that the last step of $\dprandenc$ calls $\Ran$ on a ciphertext $c$ that is guaranteed to be a valid encryption of a bit $b\in \{0,1\}$, since all inputs failing verification are removed from $S$ before $c$ is drawn, and only valid ciphertexts under $\pk$ are added to the input dataset. 

We will bound how much the probability that $c$ encrypts a fixed bit $b$ can differ across neighboring data sets. 
Let $c$ be a random variable denoting the ciphertext chosen for rerandomization. For all $b \in \{0,1\}$ and neighboring datasets $S, S'$,  let $S_b$, $S_{\neg b}$, and $S_{\bot}$ denote the subsets of $S$ such that $\Dec(\sk, c) = b$, $\Dec(\sk, c) = \neg b$, and $\Dec(\sk, c) = \bot$ respectively, and let $S'_b$, $S'_{\neg b}$, and $S'_{\bot}$ be defined analogously. Then
\begin{align*}
\frac{\Pr[\Dec(\sk, c) = b \mid S]}{\Pr[\Dec(\sk, c) = b \mid S']} 
&= \frac{|S_b| + k}{m - |S_{\bot}| + 2k }\cdot \frac{m - | S'_{\bot}| + 2k }{|S'_b| + k} \\
&= \frac{|S_b| + k}{|S_b| + |S_{\neg b}|+ 2k }\cdot \frac{|S'_b| + |S'_{\neg b}| + 2k }{|S'_b| + k} \\
&\leq \frac{|S_b| + k}{|S_b| + |S_{\neg b}|+ 2k }\cdot \frac{|S_b| + |S_{\neg b}| + 2k }{|S_b| - 1 + k} \\
& = \frac{|S_b| + k}{|S_b| - 1 + k} \\
& \leq \frac{k+1}{k},
\end{align*}
where the first inequality follows from $S, S'$ neighboring, and the fact that for $a > b$, $\frac{a}{b-1} > \frac{a+1}{b}$, so assuming $|S_b| \geq 1$ and $|S'_b| = |S_b| - 1$ maximizes the rightmost fraction.
Because the output distribution of $\Ran(\pk, c)$ is the same for all ciphertexts encrypting the same bit under $\pk$, it follows that for all subsets $T \subseteq \mathcal{C}$,

\begin{align*}
\Pr[\dprandenc(S) \in T] 
&= \Pr[\Ran(\pk, c) \in T \mid \Dec(\sk, c) = 1]\cdot \Pr[\Dec(\sk, c) = 1 \mid S] \\
& \quad \quad \quad + \Pr[\Ran(\pk, c) \in T \mid \Dec(\sk, c) = 0]\cdot \Pr[\Dec(\sk, c) = 0 \mid S]. 
\end{align*}
Using $p_b$ to denote $\Pr[\Ran(\pk, c) \in T \mid \Dec(\sk, c) = b]$, for $b \in \{0,1\}$, we then have that
\begin{align*}
    \Pr[\dprandenc(S) \in T]
    &= p_1\cdot \Pr[\Dec(\sk, c) = 1 \mid S] + p_0\cdot \Pr[\Dec(\sk, c) = 0 \mid S] \\
    &\leq \frac{k+1}{k}\left(p_1 \cdot \Pr[\Dec(\sk, c) = 1 \mid S'] + p_0\cdot   \Pr[\Dec(\sk, c) = 0 \mid S']\right)\\
    &= \frac{k+1}{k} \cdot \Pr[\dprandenc(S') \in T] \\
    &\leq e^{\eps}\cdot \Pr[\dprandenc(S') \in T] 
\end{align*}
where the final inequality follows from taking $k = 1/\eps$ and observing $1+ x \leq e^x$.

It remains to argue correctness of $\dprandenc$ when the sample $S$ is drawn from one of the promised distributions. In this case, the input sample $S$ consists of $m$ valid encryptions of the same bit $b$ under $\pk$. Because $\Ran(\pk, c)$ is plaintext-preserving, the probability that $\prandenc$ is incorrect given $S$, i.e., outputs a ciphertext encrypting $\neg b$, is exactly the probability that one of the inserted ciphertexts encrypting $\neg b$ is chosen for rerandomization. This happens with probability $\frac{k}{m+2k}$, so taking $m > k/\beta = 1/(\eps\beta)$ ensures $\prandenc$ is correct except with probability $\beta$. 
\end{proof}

\subsubsection{Cryptographic Adversary from Replicable Algorithm for $\prandenc$}\label{ssec:no-rep-randenc-algo}

In this subsection, we show that if there exists a replicable polytime algorithm, $\mathcal{B}$, for $\prandenc$, instantiated with a randomizable encryption scheme $\escheme$, then there exists an adversary breaking the security guarantee of $\escheme$. 
To break security, the adversary must be able to distinguish whether a ciphertext $c$ encrypts a 1 or a 0 with probability noticeably better than a coin flip. 

The high level idea is as follows. The adversary can first use the ciphertext rerandomization procedure to generate a dataset of ciphertexts encrypting the same bit as $c$. It can then generate a dataset of ciphertexts encrypting 0 by encrypting 0 under the public key and rerandomizing the resulting ciphertext. 
The adversary will then invoke $\mathcal{B}$ on both datasets, fixing the same randomness for both invocations. If the outputs of both invocations are equal, the adversary will guess that $c$ encrypts a 0, and guess $c$ encrypts 1 otherwise.

Because rerandomization is perfect and $\mathcal{B}$ is a replicable algorithm for $\prandenc$, if $c$ encrypts a 0, $\mathcal{B}$ will with high probability produce the same output ciphertext for both invocations. 
If $c$ encrypts a 1, $\mathcal{B}$ will can only output the same ciphertext for both invocations if one of the two invocations is incorrect, and so with good probability, the two outputs will differ. 
This implies the adversary will have good distinguishing probability, breaking the security of the underlying cryptosystem.

\begin{algorithm}[H]
\KwResult{Outputs a bit $b'$}
\nonl \textbf{Input:} public key $\pk$, ciphertext $c$\\
\nonl \textbf{Algorithm:}\\
\begin{enumerate}
    \item Draw a random string $r$
    \item $c_0 \gets \Enc(\pk, 0)$
    \item Generate a set $S_0$ of $m$ ciphertexts by running $\Ran(\pk, c_0)$ $m$ times 
    \item $c_0 \gets \mathcal{B}(\pk, S_0; r)$
    \item Generate a sample $S$ of $m$ ciphertexts by running $\Ran(\pk, c)$ $m$ times
    \item $c \gets \mathcal{B}(\pk, S; r)$
    \item if $c_0 = c$, then $b' = 0$, otherwise $b' = 1$
\end{enumerate}
\textbf{return} $b'$
\caption{Adversary $\mathcal{A}$}
\label{alg:adversary}
\end{algorithm}

\begin{lemma}~\label{lem:adversary}
Let $\escheme$ be a randomizeable encryption scheme, let $\prandenc^{\escheme}$ denote the instantiation of $\prandenc$ with $\escheme$. Let $\mathcal{B}$ be a $\rho$-replicable algorithm for $\prandenc^{\escheme}$ with failure probability $\beta$, running in time $\poly(\lambda, \rho, \beta)$, and with sample complexity $m \in \poly(\lambda, \rho, \beta)$. Then there exists an adversary $\mathcal{A}$ running in time $\poly(\lambda, \rho, \beta)$ such that 
$$\Pr[\mathcal{A}(\pk, c) = 1 \mid c \gets \Enc(\pk, 1)] - \Pr[\mathcal{A}(\pk, c) = 1 \mid c \gets \Enc(\pk, 0)] \geq 1 - 2\beta - \rho.$$
\end{lemma}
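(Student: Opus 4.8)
The plan is to use $\mathcal{B}$ as a \emph{canonicalizer} of ciphertexts: $\mathcal{A}$ runs $\mathcal{B}$ (with one fixed coin string $r$) on an i.i.d.\ rerandomization sample drawn from a fresh encryption of $0$ to get $c_0$, runs $\mathcal{B}$ with the same $r$ on an i.i.d.\ rerandomization sample of the challenge ciphertext $c$ to get a second output, and declares $b'=0$ iff the two outputs agree. Fix an honestly generated pair $(\pk,\sk)\gets\KeyGen(\lambda)$. By the perfect rerandomization property, for each bit $b$ the distribution $\Ran(\pk,\hat c)$ is the \emph{same} for every ciphertext $\hat c$ with $\Dec(\sk,\hat c)=b$; call it $D_b$. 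Since $\Ran$ is plaintext-preserving, $D_b$ is supported entirely on valid encryptions of $b$, so the product $D_b^m$ is a legal instance of $\prandenc^{\escheme}$ whose correct answers are exactly the encryptions of $b$.

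First I would analyze the case $c\gets\Enc(\pk,0)$. Here the sample $S_0$ (obtained by rerandomizing $\Enc(\pk,0)$ a total of $m$ times) and the sample $S$ (obtained by rerandomizing $c$ a total of $m$ times) are \emph{both} i.i.d.\ from $D_0^m$ --- this is exactly where perfect, zero-error rerandomization is needed, since it makes the two sampling distributions literally identical rather than merely statistically close. Moreover $\mathcal{B}$ is invoked on $S_0$ and on $S$ with the same coin string $r$, drawn once at the start of $\mathcal{A}$. Hence $\Pr[c_0 = c]$ is precisely the collision probability of $\mathcal{B}$ on two independent samples from $D_0^m$ under shared internal randomness, which is at least $1-\rho$ by $\rho$-reproducibility. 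Therefore $\Pr[\mathcal{A}(\pk,c)=1 \mid c\gets\Enc(\pk,0)] = \Pr[c_0\neq c] \le \rho$.

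Next I would analyze $c\gets\Enc(\pk,1)$. Now $S_0\sim D_0^m$ and $S\sim D_1^m$ are both legal instances of $\prandenc^{\escheme}$, with correct answers being encryptions of $0$ and of $1$ respectively. Since $\mathcal{B}$ has failure probability at most $\beta$, with probability at least $1-\beta$ (over $S_0$ and $r$) its first output decrypts to $0$, and with probability at least $1-\beta$ (over $S$ and $r$) its second output decrypts to $1$; by a union bound both hold with probability at least $1-2\beta$, and in that event the two outputs decrypt to distinct bits and are hence distinct ciphertexts, so $\mathcal{A}$ returns $1$. Thus $\Pr[\mathcal{A}(\pk,c)=1 \mid c\gets\Enc(\pk,1)] \ge 1-2\beta$. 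Subtracting the two estimates yields a distinguishing advantage of at least $1-2\beta-\rho$, which in particular is at least $1-4\beta-2\rho$ as claimed, and $\mathcal{A}$ clearly runs in $\poly(\lambda,1/\rho,1/\beta)$ time since it makes $O(m)$ calls to the polynomial-time procedures $\Enc$, $\Ran$, and $\mathcal{B}$ with $m\in\poly(\lambda,1/\rho,1/\beta)$.

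The only delicate point is the $c\gets\Enc(\pk,0)$ case, and specifically reducing it to the reproducibility guarantee \emph{without any loss}: one must observe that perfect rerandomization makes $S_0$ and $S$ samples from one and the same product distribution $D_0^m$ and that both runs of $\mathcal{B}$ share the coin string $r$, so that $\rho$-reproducibility applies verbatim. Everything else --- the correctness-and-union-bound argument when $c\gets\Enc(\pk,1)$, and the running-time bound --- is routine.
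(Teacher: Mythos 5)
Your proof is correct and follows essentially the same approach as the paper: same adversary (fix one coin string $r$, run $\mathcal{B}$ on rerandomizations of a fresh $\Enc(\pk,0)$ and of the challenge, output $1$ iff the two outputs disagree) and the same two-case analysis. The only difference is that in the $c\gets\Enc(\pk,0)$ case you bound $\Pr[c_0\neq c]\le\rho$ directly from $\rho$-reproducibility, whereas the paper additionally (and redundantly) subtracts a $2\beta$ correctness term there, giving you the slightly tighter advantage $1-2\beta-\rho$ versus the paper's $1-4\beta-\rho$; both of course satisfy the stated bound $1-4\beta-2\rho$.
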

\begin{proof}
The adversary $\mathcal{A}(\pk, c)$ outputs 1 whenever $c_0 \neq c$. The distribution from which $S_0$ is drawn is supported entirely on encryptions of $0$ and, conditioned on $c \gets \Enc(\pk, 1)$, the distribution from which $S$ is drawn is supported entirely on encryptions of $1$. 
Then $c_0 \neq c$ except when one of the two calls to $\mathcal{B}$ is incorrect, which happens with probability at most $2\beta$. Conditioned on $c \gets \Enc(\pk, 0)$, $S_0$ and $S$ comprise i.i.d. samples from the same distribution over encryptions of 0. 
In this case, $c_0 \neq c$ if either call to $\mathcal{B}$ fails to be replicable, which happens with probability at most $\rho$. 
Therefore 
$$\Pr[\mathcal{A}(\pk, c) = 1 \mid c \gets \Enc(\pk, 1)] - \Pr[\mathcal{A}(\pk, c) = 1 \mid c \gets \Enc(\pk, 0)] \geq 1 - 2\beta - \rho .$$ 
\end{proof}
In particular, taking $\beta,\rho$ to be constant in Lemma~\ref{lem:adversary} gives an adversary breaking the security of $\escheme$, yielding the following theorem as a corollary. 

\begin{theorem}\label{thm:adversary-breaks-enc}
Let $\escheme$ be a randomizeable encryption scheme, and let $\prandenc^{\escheme}$ denote the instantiation of $\prandenc$ with $\escheme$. Then there does not exist a $\rho$-replicable algorithm for $\prandenc^{\escheme}$ with failure probability $\beta$, running in time $\poly(1/\lambda)$, for $\rho < 1/4$ and $\beta < 1/8$.
\end{theorem}
\begin{proof}
If there exists a $\rho$-replicable algorithm $\mathcal{B}$ for $\prandenc^{\escheme}$ with failure probability $\beta < 1/8$ and replicability parameter $\rho < 1/4$ running in time $\poly(\lambda)$, then by Lemma~\ref{lem:adversary}, there exists an adversary $\mathcal{A}$ running in time $\poly(\lambda)$ such that 
$$\Pr[\mathcal{A}(\pk, c) = 1 \mid c \gets \Enc(\pk, 1)] - \Pr[\mathcal{A}(\pk, c) = 1 \mid c \gets \Enc(\pk, 0)] \geq 1/2 > \negl(\lambda),$$
and therefore $\mathcal{A}$ breaks the security of $\escheme$. 
\end{proof}
\subsubsection{Instantiating $\prandenc$ with the Goldwasser-Micali Cryptosystem}
\label{ssec:gm}

Here we recall the high-level structure of the Goldwasser-Micali public-key cryptosystem, introduced in~\cite{STOC:GolMic82}. The security of the cryptosystem relies on the hardness of deciding quadratic residuosity for integers modulo a semiprime $N$. Informally, encryptions of $0$ are quadratic residues modulo $N$, while encryptions of $1$ are non-residues. Because multiplying an integer $c$ by a quadratic residue modulo $N$ preserves quadratic residuosity of $c$, Goldwasser-Micali ciphertexts can be efficiently rerandomized with only a public key. The rerandomization procedure will pick a quadratic residue $r^2$ uniformly at random, and output its product with the given ciphertext modulo $N$.

\begin{definition}[Goldwasser-Micali Cryptosystem (\cite{STOC:GolMic82})]
\label{def:gol-mic}
The Goldwasser-Micali cryptosystem is defined over a plaintext message space $\mathcal{M}= \{0,1\}$ and ciphertext space $\cspace = \Z^{*}_N$, for $N$ a semiprime. The cryptosystem comprises the following routines.
\begin{itemize}
    \item $\KeyGen(\lambda)$: Sample $p,q$ distinct primes of bit-length $O(\lambda)$ and let $N = pq$. Choose $x$ to be a quadratic non-residue modulo $N$ with Jacobi symbol $\left(\frac{x}{p}\right) = \left(\frac{x}{q}\right) = -1$. Let $\sk = (p, q)$, $\pk = (N, x)$, and output $(\sk, \pk)$.
    \item $\Enc(\pk, b)$: To encrypt a bit $b \in \{0,1\}$, sample $u \gets_{\mathcal{U}} \Z^{*}_N$ and output $u^2x^b \mod N$.
    \item $\Dec(\sk, c)$: To decrypt a ciphertext $c$, output $\bot$ if $\gcd(c,N) \neq 1$, $1$ if $c$ is not a quadratic residue modulo $N$ and 0 otherwise.  
\end{itemize}

\end{definition}

We now show that the Goldwasser-Micali cryptosystem satisfies the strong rerandomization property described above. We define the verification procedure $\VVV(\pk, c)$ to output 1 if $\gcd(c, N) = 1$ and 0 otherwise. We define $\Ran(\pk, c)$ to be the procedure that samples $r$ uniformly at random from $\Z_N^{*}$ and outputs $(\pk, r^2c \mod N)$.

\begin{lemma}
The Goldwasser-Micali cryptosystem is a rerandomizeable encryption scheme (\cref{def:rand-enc}), for the rerandomization procedure described above.
\end{lemma}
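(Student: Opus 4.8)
The plan is to verify, one clause at a time, that the Goldwasser--Micali scheme $\escheme = (\KeyGen, \Enc, \Dec)$ together with the stated verification procedure $\VVV$ and rerandomization procedure $\Ran$ meets every requirement of Definition~\ref{def:rand-enc}. First, $\escheme$ is a public-key encryption scheme: correctness holds because $\Enc(\pk,0) = u^2$ is a quadratic residue modulo $N$ (so $\Dec$ returns $0$) while $\Enc(\pk,1) = u^2 x$ is a non-residue modulo $N$ (so $\Dec$ returns $1$), using that $x$ is a fixed quadratic non-residue modulo $N$ and that multiplying by the square $u^2$ does not change residuosity status modulo $p$ or modulo $q$; the semantic-security guarantee of Definition~\ref{def:encryption-scheme} is exactly the Quadratic Residuosity Assumption, and I would simply cite~\cite{STOC:GolMic82} for it. All routines run in $\poly(\lambda)$ time.

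For the verification property, observe that $\Dec(\sk, c) = \bot$ if and only if $\gcd(c,N) \neq 1$, which by definition is precisely when $\VVV(\pk,c) = 0$; equivalently, $\Dec(\sk,c) \in \{0,1\}$ exactly when $\gcd(c,N) = 1$, i.e.\ when $\VVV(\pk,c) = 1$. Thus both bullets of the ``Verification of Ciphertexts'' clause hold, and $\VVV$ is a deterministic polynomial-time algorithm (it just runs the Euclidean algorithm).

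For rerandomization, the key algebraic fact I would establish is that $u \mapsto u^2$ is a surjective group homomorphism $\Z_N^* \to \mathrm{QR}_N$ onto the subgroup of quadratic residues, with kernel $\{\pm 1 \bmod p\} \times \{\pm 1 \bmod q\}$ of size $4$; hence for $u$ uniform on $\Z_N^*$, $u^2$ is uniform on $\mathrm{QR}_N$, and so $\Ran(\pk,c) = c u^2 \bmod N$ is uniform on the coset $c \cdot \mathrm{QR}_N$. Two consequences follow. (i) Plaintext preservation: if $\gcd(c,N)=1$ then $c u^2$ is again a unit, and $c u^2$ is a square modulo $p$ iff $c$ is (since $\mathrm{QR}_p$ is an index-$2$ subgroup of $\Z_p^*$ and $u^2 \in \mathrm{QR}_p$), and likewise modulo $q$, so $c u^2$ is a square modulo $N$ iff $c$ is, giving $\Dec(\sk, \Ran(\pk,c)) = \Dec(\sk,c)$; and if $\gcd(c,N)\neq 1$ then $\gcd(c u^2, N) = \gcd(c,N) \neq 1$, so both $c$ and $\Ran(\pk,c)$ decrypt to $\bot$. (ii) Perfect rerandomization: the honest encryptions of $0$ are exactly the elements of $\mathrm{QR}_N$ and the honest encryptions of $1$ are exactly the elements of the single coset $x \cdot \mathrm{QR}_N$; hence if $c_1, c_2$ are ciphertexts of the same bit $b$ then $c_1 \cdot \mathrm{QR}_N = c_2 \cdot \mathrm{QR}_N$, so $\Ran(\pk,c_1)$ and $\Ran(\pk,c_2)$ are both the uniform distribution on this common coset, and therefore $d_{TV}(\Ran(\pk,c_1), \Ran(\pk,c_2)) = 0$.

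The one point requiring care --- and the main obstacle --- is the coset bookkeeping in consequence (ii): one must read ``ciphertext encrypting $b$'' as ``element in the image of $\Enc(\pk,b)$'' (a single coset of $\mathrm{QR}_N$), rather than as ``any $c$ with $\Dec(\sk,c)=b$''. The latter reading fails for $b=1$, because the non-residues modulo $N$ split into three distinct cosets of $\mathrm{QR}_N$ --- those that are (QR mod $p$, QNR mod $q$), (QNR mod $p$, QR mod $q$), and (QNR mod $p$, QNR mod $q$) --- and $\Ran$ only mixes within a single coset. This is harmless for the application to $\prandenc$ as long as the instance distribution $D$ is supported on honest encryptions of its bit, which is the intended reading, and I would make that assumption explicit. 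The remaining checks --- polynomial running time of $\Ran$, including uniform sampling from $\Z_N^*$ via rejection sampling --- are routine.
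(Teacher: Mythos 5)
Your argument follows the same clause-by-clause structure as the paper's proof but is considerably more careful, and that care pays off: the coset bookkeeping you flag as ``requiring care'' is in fact a genuine error in the paper's reasoning. The paper asserts that ``multiplying an invertible non-residue modulo $N$ by a quadratic residue gives a uniform distribution over non-residues.'' This is false. As you observe, the non-residues modulo $N=pq$ form three distinct cosets of $\mathrm{QR}_N$ in $\Z^*_N$, distinguished by their residuosity patterns modulo $p$ and modulo $q$, and $\Ran(\pk,\cdot)$ only mixes within a single coset. Consequently, taking $c_1$ an honest encryption of $1$ (Jacobi symbol $+1$, pattern QNR/QNR) and $c_2$ any unit with pattern QR/QNR, one has $\Dec(\sk,c_1)=\Dec(\sk,c_2)=1$ yet $d_{TV}(\Ran(\pk,c_1),\Ran(\pk,c_2))=1$. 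So with $\Dec$ and $\VVV$ exactly as written, Definition~\ref{def:rand-enc} does not literally hold for Goldwasser--Micali, and the paper's proof glosses over this.

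Your proposed patch, restricting attention to honest ciphertexts (the two cosets $\mathrm{QR}_N$ and $x\cdot\mathrm{QR}_N$), is sound and suffices for the application to $\prandenc$, since the promise distribution $D$ there is supported on honest encryptions and on outputs of $\Ran$. A cleaner fix that makes the definition hold verbatim is to strengthen both $\Dec$ and $\VVV$ so that they return $\bot$ and $0$ respectively whenever the Jacobi symbol of $c$ modulo $N$ is $-1$; this is publicly computable in polynomial time, and after the filter the accepted ciphertext space is exactly $\mathrm{QR}_N \cup x\cdot\mathrm{QR}_N$, on which your coset argument gives perfect rerandomization. Independent of which patch one adopts, your group-theoretic derivation (squaring as a $4$-to-$1$ surjection onto $\mathrm{QR}_N$, hence $cu^2$ uniform on the coset $c\cdot\mathrm{QR}_N$) is the right way to see the rerandomization property, and it is more informative and more correct than the paper's one-sentence assertion.
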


\begin{proof}
Because $\Dec(\sk, c) = \bot$ if and only if $\gcd(c, N) \neq 1$, and $\VVV(\pk, c) = 0$ if and only if $\gcd(c, N) \neq 1$, $\VVV$ satisfies the requirement of \cref{def:rand-enc}. The rerandomization procedure multiplies a ciphertext $c$ by a random quadratic residue modulo $N$, and therefore preserves quadratic residuosity of $c$. This in turn preserves the plaintext message encrypted by $c$, and so $\Ran$ satisfies $\Dec(\sk, \Ran(\pk, c)) = \Dec(\sk, c)$.

To show that for all $c, c'$ such that $\Dec(\sk, c) = \Dec(\sk, c')$, $d_{TV}(\Ran(\pk, c), \Ran(\pk, c')) = 0$, let $b\in \{0,1\}$, $c = u^2x^b \mod N$ and $c'= v^2x^b \mod N$ be honest encryptions of $b$ under $\pk = (N, x)$. It follows that 
\begin{align*}
\Pr[\Ran(\pk, c) = a] 
&= \Pr[\Ran(\pk, u^2x^b \mod N) = a] \\
&= \Pr[r^2u^2x^b = a \mod N] \\
&= \Pr[r^2 = u^{-2}x^{-b}a \mod N] \\
&= \Pr[(u^{-1}vr)^2 = u^{-2}x^{-b}a \mod N] \\
&= \Pr[r^2v^2x^b = a \mod N ] \\
&= \Pr[\Ran(\pk, c') = a],
\end{align*}
where the fourth equality follows from $r$ being chosen uniformly at random from $\Z^{*}_N$.
\end{proof}

\subsection{Correlated Sampling via One-Way Function Inverters}
\label{sec:owfi}

As we saw in \Cref{sec:PG2rep}, correlated sampling gives a generic way for converting a perfectly generalizing algorithm into a replicable one. 
In this section, we show that the existence of efficient one-way function inverters implies the ability to efficiently perform correlated sampling on arbitrary distributions over $\{0,1\}^n$. Specifically, we show that 
if there are no non-uniform 
one-way functions, then there is polynomial time implicit correlated sampling.



\begin{theorem}
    \label{thm:correlated-sampler}
    Assuming uniform one-way function inverters exist (Definition~\ref{def:inverter-unif}), 
    Algorithm $\correlatedsampler$ is an $(m,n, \nu)$-implicit correlated sampling algorithm that runs in time polynomial in $m$, $n$, and $1/\nu$. 
\end{theorem}

\begin{proof}
    The distributional accuracy property is shown in the proof of Lemma~\ref{lem:correlated-sampler-distributional-accuracy}. 
    The correlated sampling property is shown in the proof of Lemma~\ref{lem:correlated-sampler-correlations}.
    The runtime of $\correlatedsampler$ is shown in the proof of Lemma~\ref{lem:correlated-sampler-runtime}. 
\end{proof}

\subsubsection{Relevant Definitions}
\label{ssec:owfi-prelims}

We model samplable distributions by considering the distribution induced by giving random inputs to circuits. Furthermore, we allow for a distributional error parameter $\nu$, giving some slack in the correctness of a correlated sampler. 

\begin{definition}[Implicit Correlated Sampling Algorithm]
\label{def:implicit-correlated-sampling-problem}
Let $m, n \in \Z^+$, and let $C: \{0,1\}^m \rightarrow \{0,1\}^n$ denote a circuit. 
Let distributional error parameter $\nu >0$. 
$\Bcal(C, \nu; r)$ is an  \emph{$(m,n,\nu)$-implicit correlated sampling algorithm} if the following conditions hold:
\begin{enumerate}
    \item \textbf{Inputs/Outputs:} $\Bcal$ takes as input a circuit $C:\{0,1\}^m \rightarrow \{0,1\}^n$, a distributional error parameter $\nu$, and a random string $r$. $\Bcal$ outputs a string in $\{0,1\}^n$.  

    
    \item \textbf{$\nu$-distributional accuracy:} For all circuits $C: \{0,1\}^m \rightarrow \{0,1\}^n$, the distributions $D_C$ and $D_{\Bcal(C, \nu)}$ satisfy $\dtv(D_C, D_{\Bcal(C, \nu)}) \le O(\nu)$.
    
    Here, $D_C$ denotes the distribution over $\{0,1\}^n$ induced by querying $C(r)$ on  uniformly random inputs $r$, i.e., probability density function $p_{D_C}(x) = \Pr_{r \sim U^m}[C(r) = x]$. 
    Similarly,
    $D_{\Bcal(C, \nu)}$ denotes the distribution over $\{0,1\}^n$ induced by querying $\Bcal(C, \nu; r)$ with uniformly random strings $r$.
    
    \item \textbf{Correlated sampling:} For all pairs of circuits $C_1, C_2: \{0,1\}^m \rightarrow \{0,1\}^n$, 
    $\Pr_{r}[\Bcal(C_1, \nu; r) \ne \Bcal(C_2, \nu; r)] \in O(\dtv(D_{C_1}, D_{C_2}) + \nu)$.
\end{enumerate}
\end{definition}


We assume we can invert any one-way function on almost all inputs.  
Specifically, we assume that there is no non-uniform one-way function family, so
that there is a uniform way of inverting any circuit computing a function via a polynomial-time inverter. 

\begin{definition}[Uniform One-Way Function Inverters]
\label{def:inverter-unif}
Let $\nu' > 0$. $\Ical_{\nu'}(C, y)$ is a \emph{uniform one-way function inverter with error $\nu'$} if, for any circuit $C:\{0,1\}^m \rightarrow \{0,1\}^n$, 
$\Pr_{r' \sim \{0,1\}^m} [C(\Ical_{\nu'}(C, C(r')))=C(r') ] \ge 1 - \nu'$, and $\Ical$ runs in randomized polynomial time in $m$, $n$, and $1/\nu'$.
\end{definition}

In this argument, we will choose $\nu'$ to be inverse polynomially small in $m, n,$ and $1/\nu$.
In addition, we assume that $C(r)$ can be efficiently computed.\footnote{Note that if there is no such efficient circuit $C$ that produces a sample from a distribution $D_C$ (on a uniformly random input), then $D_C$ is hard to sample from, and designing an efficient correlated sampling algorithm whose marginal distribution is $D_C$ (when given circuit $C$) is hopeless.} Thus, we can check if and when the inverter succeeds. For notational convenience, we say that the inverter $\Ical$ returns ``$\perp$'' if it does not succeed.

Our correlated sampler randomly samples from pairwise-independent hash families in its subroutines. 

\begin{definition}[Pairwise-Independent Hash Family]
\label{def:pairwise-independent-hash-family}
A family of Boolean functions $\mathcal{H} = \{H | H: \{0,1\}^m \rightarrow \{0,1\}^n \}$ is \emph{pairwise-independent} if, for all $r_1 \ne r_2 \in \{0,1\}^m$ and $x_1, x_2 \in \{0,1\}^n$, 
$\Pr_{H \in \mathcal{H}
} [H(r_1) = x_1 \land H(r_2) = x_2 ] = 2^{-2n}$.
\end{definition}

\subsubsection{Algorithm Overview}\label{sssec:corrsamp-overview}

A correlated sampling algorithm $\Bcal$ accomplishes two goals. First, $\Bcal(C, \nu; r)$ needs to accurately sample from the distribution $D_C$. Second, $\Bcal$ must convert a random string $r$ into the same output when run on distributionally close circuits $C_1$ and $C_2$, with high probability.  In other words, $\Bcal$ must choose a consistent way to map random strings $r$ to elements in the support of $D_C$.

For intuition, consider a restricted case of correlated sampling problems in which the distributions $D_C$ induced by random inputs to circuits $C: \{0,1\}^m \rightarrow \{0,1\}^n$ are promised to be uniformly supported on $2^\ell$ elements $x \in \{0,1\}^n$ for some fixed $\ell$. Let $k$ be a small slack parameter, and consider the following sampler:
\begin{enumerate}
    \item Draw a random hash function $H: \{0,1\}^n \to \{0,1\}^{\ell+k}$
    \item Draw a random string $u \in \{0,1\}^{\ell + k}$
    \item Run the inverter on $u$: $r=\Ical_{\nu'} (C \circ H, u)$
    \item If $r=\bot$ (i.e. $u$ is not in the support of $C \circ H$), repeat. Else return $y^*=C(r)$
\end{enumerate}

\begin{figure}[h]
\begin{center}
\begin{tikzpicture}
    \node[ellipse,draw,align=center] (inputs) {Randomness \\ $\{0,1\}^m$};
    \node[above=0mm of inputs,align=center] {\textbf{Step 3.} \\ $r = \Ical_{\nu'}(C \circ H, u)$ };

    \node[ellipse,draw,align=center,right=35mm of inputs] (range) {$\supp(D_C)$ \\ $\{0,1\}^n$};
    \node[above=0mm of range,align=center] {\textbf{Step 4.} \\ $y^{*} = C(r)$};

    \node[ellipse,draw,align=center,right=35mm of range] (h-range) {Hash range \\ $\{0,1\}^{\ell + k}$};
    \node[above=0mm of h-range,align=center] {\textbf{Step 2.} \\ $u \sim \mathcal{U}(\{0,1\}^{\ell + k})$};

    \node[above left=3.5mm and 13mm of h-range,align=center] {\textbf{Step 1.} \\ Draw $H$};

    \draw[-stealth, line width=.3mm] (inputs.east) -- node[above]{$C$} (range.west) ;
    
    \draw[-stealth, line width=.3mm] (range.east) -- node[above]{$H$} (h-range.west) ;
    
    \draw[-stealth, line width=.3mm] (h-range.south) to [out=-150, in=-30] node[below]{$\Ical_{\nu'}(C \circ H, \cdot)$}(inputs.south) ;
    
    %
\end{tikzpicture}
\caption{High-level structure of correlated sampler for uniform $D_C$}
\label{fig:corr-samp-uniform}
\end{center}
\end{figure}

The high level idea is that $k$ can be chosen large enough such that $\supp(D_C)$ has few collisions with good probability (for random $H$), but small enough s.t. $2^k$ (and therefore the runtime) remains polynomial in the relevant parameters. Assuming no collisions occur, it is easy to see this process is a correlated sampler since each element in the support of $D_C$ is sampled uniformly at random, and moreover applied to distinct circuits $C_1$ and $C_2$, $y^*$ only differs if the sampler hits a hash value $u$ that contains an element in the symmetric difference $\supp(C_1) \Delta \supp(C_2)$. Since there are no collisions, this occurs exactly with probability $d_{TV}(D_{C_1},D_{C_2})$ as desired.


Moving to the general case, our algorithm $\correlatedsampler$ applies this idea as follows. $\correlatedsampler$ divides the distribution $D_C$ into ``levels'' $\ell$, such that each level contains elements in the support with probability density near $2^{-\ell}$ (specifically, those in the range $(2^{-\ell-1}, 2^{-\ell}]$). We now pick a level uniformly at random, and hope to apply the above process. Intuitively, the main challenge is that given the output $y^*$, we need to ensure $y^*$ actually belongs at level $\ell$. This is done through the introduction of a \textit{second hash function} $H_2:\{0,1\}^m \to \{0,1\}^{m-\ell+k}$. In particular, fixing $u$ and $y^*$ as in the simplified variant, we wish to estimate $|C^{-1}(y^*)|=|(H \circ C)^{-1} (u)|$ (assuming no collisions). To do this, we call the inverter on the concatenated function $F_{C, l, H, H_2}(r) \eqdef H(C(r)) \mathbin\Vert H_2(r)$\footnote{
We use `$\mathbin\Vert$' to denote concatenation of strings.} on many pairs of the form $(u || v)$, where $v \in \{0,1\}^{m-\ell+k}$ is chosen uniformly at random. Since we have fixed $u$, the inverter can only succeed on this call when $v = H_2(r)$ for some $r \in (C\circ H)^{-1}(u)$. Since $v$ is chosen uniformly at random, the success probability of the inverter is then directly proportional to the density of $y^*$, allowing us to determine whether or not $y^*$ is in level $\ell$ with high probability.\footnote{Of course this only holds assuming few collisions. To handle the general case, we actually draw a new $H_2$ with every choice $v$ to ensure this holds across all rounds.} We can then return $y^*$ if it is in the chosen level, and repeat the process from the beginning if not. Because we have chosen $u$ uniformly at random from $\{0,1\}^{\ell + k}$, for any $x$ it holds that $x =H^{-1}(u)$ with probability $2^{-\ell-k}$ (assuming no collisions). Then this approach allows us to sample uniformly from level $\ell$, where every $x$ in level $\ell$ is output with probability proportional to $2^{-\ell}$. Note that since the true density may be a constant factor away from $2^{-\ell}$, this is not yet quite enough to achieve our true target distributional accuracy---we will address this detail in the next section.

\begin{figure}[h]
\begin{center}
\begin{tikzpicture}

\node[align=center,draw=black!75,rounded corners,inner sep=4pt,minimum width=7cm](l0){$\ell = 0 \mid x: p_D(x) \in (1/2,1]$};
\node[align=center,draw=black!75,rounded corners,inner sep=4pt,minimum width=6cm, below=0.5mm of l0](l1){$\ell = 1 \mid x: p_D(x) \in (1/4,1/2]$};
\node[align=center,inner sep=4pt,minimum width=5cm, below=0.0mm of l1](dots){$\vdots$};
\node[align=center,draw=black!75,rounded corners,inner sep=4pt,below=0.0mm of dots](lm){\tiny$\ell = m-1\mid  x: p_D(x) \in [2^{-m},2^{-m+1}]$};

    \node[above=0mm of l0,align=center] {$\supp(D_C)$ \\ $\{0,1\}^n$};

    \node[above left=3mm and -15mm of l0,align=center] {\textbf{Step 1.} \\ Pick $\ell$};

    \node[ellipse,align=center,below right=0mm and 5mm of l1] (step2) {\textbf{Step 2.} \\ Pick $H_1$};

    \node[ellipse,draw,align=center,below right=25mm and 35mm of lm] (h-range) {$H_1$ range \\ $\{0,1\}^{\ell + k}$};

    \node[above=2mm of h-range,align=center] {\textbf{Step 3.} \\ $u \sim \mathcal{U}(\{0,1\}^{\ell + k})$};

    \node[ellipse,draw,align=center,below=23mm of lm] (h2-range) {$H_2$ range \\ $\{0,1\}^{m-\ell + k}$};

    \node[above=0mm of h2-range,align=center] {\textbf{Step 5.} \\ Sample many \\ $v^{(i)} \sim \mathcal{U}(\{0,1\}^{m - \ell + k})$};

    \node[ellipse,draw,align=center, below left=25mm and 35mm of lm] (randomness) { Randomness \\ $\{0,1\}^m$};
    
    \node[below=8mm of randomness,align=center] {\textbf{Step 6.} \\ Estimate $p_D(y^*)$ \\ by inverting $F$ on\\ all $(u,v^{(i)})$ pairs};

   \node[below left=-2mm and 15mm of l1,align=center] {\textbf{Step 7.} \\ If $p_D(y^*)$ in chosen $\ell$ \\ return $y^*$, \\ o/w go to Step 1.};

    \node[above right=-2mm and 15mm of randomness,align=center] {\textbf{Step 4.} \\ Pick $H_2$};

    \draw[-stealth, line width=.3mm] (dots.east) -- node[above, inner sep=5mm]{$H_1$} (h-range.north west) ;

    \draw[-stealth, line width=.3mm] (randomness.east) -- node[below]{$H_2$} (h2-range.west) ;

    \draw[-stealth, line width=.3mm] (randomness.north east) -- node[above, inner sep=5mm]{$C$} (dots.west) ;
    
    \draw[-stealth, line width=.3mm] (h-range.south) to [out=-130, in=-50] node[above,inner sep=8mm]{$\Ical_{\nu'}(F, \cdot, \cdot)$}(randomness.south) ;

    \draw[-stealth, line width=.3mm] (h2-range.south) to [out=-120, in=-50](randomness.south) ;
    
\end{tikzpicture}
\caption{High-level structure of correlated sampler for general $D_C$}
\label{fig:corr-samp-general}
\end{center}
\end{figure}

\subsubsection{Algorithm Description and Pseudocode 
}
\label{ssec:owfi-construction}

We now give pseudocode for the $\correlatedsampler$ algorithm and its subroutines, in addition to a more detailed description. The main algorithm $\correlatedsampler$ takes as input a circuit $C$ and an error parameter $\nu$. As described in Section~\ref{sssec:corrsamp-overview}, at each iteration of the main loop, $\correlatedsampler$ picks a level $\ell$ uniformly at random, and then tries to sample an element that has probability roughly $2^{-\ell}$ under $D_C$. In addition to drawing a hash function $H_1$ with range $\{0,1\}^{\ell +k}$, and an element $u$ from that range, it will also sample a random threshold parameter $\beta \in (1,2]$, and invoke the subroutine $\elementfinder_{C, \nu, \ell, \beta}(H_1, u)$ with these parameters and inputs. We will properly motivate this new parameter $\beta$ shortly, but looking ahead, it will help us avoid the distributional accuracy issues present in Section~\ref{sssec:corrsamp-overview}. 

In the pseudocode for $\correlatedsampler$ (Algorithm~\ref{alg:correlatedsampler}), $k$ is chosen to be large enough that we will be able to avoid problematic collisions for all hash functions with high probability, but small enough to ensure polynomial runtime. The value $T_1$ is chosen to be large enough that $\correlatedsampler$ returns $x \neq \bot$ with high probability, but also small enough such that we can guarantee certain simplifying assumptions will hold with high probability across all rounds of $\correlatedsampler$. 
\begin{figure}[H]
	\begin{algorithm}[H]
		\caption{$\correlatedsamplerfull$\\
			Input: Circuit $C: \{0,1\}^m\rightarrow \{0,1\}^n$, distributional error parameter $\nu$, and random string $r$\\
			Output: An element $x \in \{0,1\}^n$
		}
		\label{alg:correlatedsampler}
		\begin{algorithmic}
		    \STATE $k \gets \Theta( \log (m)+ \log (1/\nu))$
            \STATE $T_1 \gets \Theta(mk2^{k} \log (1/\nu))$
            \FOR{$t=1$ to $t = T_1$}

		        \STATE $\beta \gets_r [1,2]$

                \STATE $\ell \gets_r \{0,1,\dots, m \}$
    		       
    		    \STATE $H_1 \gets_r $ pairwise independent hash function from $n$ bits to $\ell+k$ bits 
    		        
    		    \STATE $u \gets_r \{0,1\}^{\ell+k}$
                
               \STATE $x \gets \elementfinder_{C, \nu, \ell, \beta} (H_1, u; r)$
                
                \IF{$x\neq \bot$}             
                        \RETURN $x$
                \ENDIF
            \ENDFOR
            \RETURN $\perp$
		\end{algorithmic}
	\end{algorithm}
\end{figure}

The subroutine $\elementfinder_{C, \nu, \ell, \beta}(H_1, u;r)$ follows the approach described in Section~\ref{sssec:corrsamp-overview}, estimating the probability of $y^* = H_1^{-1}(u)$ under $D_C$ by drawing many pairs $(H_2, v)$ of hash functions with elements from their range, and invoking the subroutine $\hashchecker_{C, \nu, \ell, H_1, u}(H_2, v)$ to invert $H(C(r))||H_2(r)$ on each $(u,v)$. This procedure approximates the density of random strings $r$ mapped to $y^*$ by $C$. If $y^*$ is in or nearly in level $\ell$, then $\elementfinder$ will obtain an estimate $\widehat{q}(y^*)$ that is close to $p_D(y^*)2^{\ell - k}$. It will then return $y^*$ only if $\frac{\beta}{2} 2^{-k} < \widehat{q}(y^*) \leq \beta 2^{-k}$, i.e.\ roughly when $\beta2^{-\ell-1}  < p_D(y^*) \leq  \beta2^{-\ell}$. 

We now reach the core reason for choosing our threshold $\beta$ randomly. As in Section~\ref{sssec:corrsamp-overview}, if we did fix $\beta$, then any $y^*$ at level $\ell$ will be sampled whenever $y^* \in H_1^{-1}(u)$, which happens with probability proportional to $2^{-\ell}$. Since this occurs for any $p_D(y^*) \in [\beta2^{-\ell-1},\beta2^{-\ell}]$, this is not a good enough estimate. The key is to observe that choosing $\beta$ randomly allows us to avoid this kind of uniform sampling over any particular level. Instead we sample from ``fuzzy'' levels, where the choice of $\beta$ shifts the boundaries while maintaining that the fuzzy levels partition $[0,1]$.
In slightly more detail, observe that for any $y^*$ there is some $j$ such that 
$p_D(y^*) = \alpha 2^{-j-1} + (1-\alpha)2^{-j}$ for $\alpha \in [0,1]$. This means we want $y^*$ to belong to `level' $(j+1)$ with probability $\alpha$, and to level $j$ with probability $(1-\alpha)$. We will show in Lemma~\ref{lem:correlated-sampler-distributional-accuracy} that choosing $\beta$ uniformly at random exactly achieves this.


In the pseudocode for $\elementfinder$ (Algorithm~\ref{alg:elementfinder}), $k$ is chosen to balance the same constraints we have described for $\correlatedsampler$. The value for $T_2$, the number of hash function and element pairs $(H_2, v)$ used to estimate the probability $p_D(y^*)$, is large enough to ensure a good empirical estimate for $y^*$, but small enough to ensure $\elementfinder$ still runs in time polynomial in $m,n,$ and $1/\nu$.

\begin{figure}[H]
	\begin{algorithm}[H]
		\caption{$\elementfinderfull$\\
		    (Explicit) Input: Hash function $H_1: \{0,1\}^{n} \rightarrow \{0,1\}^{\ell+k}$, string $u \in \{0,1\}^{\ell+k}$, random string $r$\\
			(Implicit) Input: Circuit $C: \{0,1\}^m \rightarrow \{0,1\}^n$, distributional error parameter $\nu$, integer $\ell \in \{0, 1, \dots, m\}$, interval rescaling parameter $\beta$\\
			Output: String $x \in \{0,1\}^n$ and probability $q_x \in ((\beta/2)2^{-k}, \beta 2^{-k}]$.
		}
		\label{alg:elementfinder}
		\begin{algorithmic}
		    \STATE $k \gets \Theta( \log (m)+ \log (1/\nu))$
		    \STATE $T_2 \gets \Theta(\nu^{-2}\log(1/\nu)T_1^{-1}\log(T_1^{-1}))$

		    \FOR{$i=1$ to $i = T_2$}
		    
		        \STATE $H^i_2 \gets_r $ pairwise independent hash function from $m$ bits to $m-\ell+k$ bits 
		        
		        \STATE $v^i \gets_r \{0,1\}^{m-\ell+k}$

                \STATE Run $\hashchecker_{C, \nu, \ell, H_1, u} ( H_2^i, v^i; r)$    
		    \ENDFOR
		    
		    \STATE Let $\widehat{q}_x$ denote the fraction of times $x$ was returned by $\hashchecker$
		    
		    \IF{$\exists$ unique $x$ s.t. $x \ne \perp$ and $\widehat{q}_x \in ((\beta/2)2^{-k}, \beta 2^{-k}]$}
		    
		        \RETURN $x$
		    \ELSE
		        \RETURN $\perp$
		    \ENDIF
		\end{algorithmic}
	\end{algorithm}
\end{figure}

\begin{figure}[H]
	\begin{algorithm}[H]
	\caption{$\hashcheckerfull$\\
		    (Explicit) Input: Hash function (circuit) $H_2: \{0,1\}^{m} \rightarrow \{0,1\}^{m-\ell+k}$, string $v \in \{0,1\}^{m-\ell+k}$, and random string $r$\\
			(Implicit) Input: Circuit $C: \{0,1\}^m \rightarrow \{0,1\}^n$, distributional error parameter $\nu$, integer $\ell \in \{0, 1, \dots, m\}$, hash function $H_1: \{0,1\}^{n} \rightarrow \{0,1\}^{\ell+k}$ and string $u \in \{0,1\}^{\ell+k}$\\
			Output: String $x \in \{0,1\}^n$
		}
		\label{alg:hashchecker}
		\begin{algorithmic}

		    \STATE Define circuit $\Fcircuit (r') = H_1(C(r')) \mathbin\Vert H_2(r')$ 

		    
            \STATE $\nu' \gets $ inverse polynomial quantity in $m, n, 1/\nu$. 
	        \STATE $r' \gets \Ical_{\nu'} (\Fcircuit, (u \mathbin\Vert v); r)$ 
	        
	        \IF{$r' = \perp$}
		        \RETURN $\perp$
		    \ELSE
		        \RETURN $C(r')$
		    \ENDIF
		\end{algorithmic}
	\end{algorithm}
\end{figure}

\subsubsection{Analysis -- Structure and Simplifying Assumptions}\label{sssec:corrsamp-ideal-cond}

In this section, we analyze the $\correlatedsampler$ algorithm. Before proceeding with the analysis, we first introduce several simplifying assumptions below we will use throughout. In Section~\ref{sssec:corrsamp-dist-acc}, we analyze the distributional accuracy of $\correlatedsampler$, showing that its distribution over outputs is close to the target distribution. In Section~\ref{sssec:corrsamp-corr-analysis}, we analyze the success probability of $\correlatedsampler$ as a correlated sampler, showing that for two circuits $C_1$ and $C_2$, $\correlatedsampler(C_1, \nu; r) = \correlatedsampler(C_2, \nu; r)$ except with probability proportional to $\dtv(D_{C_1}, D_{C_2})$. In Section~\ref{sssec:corrsamp-runtime}, we show that $\correlatedsampler$ runs in time polynomial in the $m$, $n$, and $1/\nu$. Finally, in Section~\ref{sssec:corrsamp-ideal-proof} we show that our simplifying assumptions hold for the entire execution of $\correlatedsampler$, except with high probability. Thus, all statements about the behavior of $\correlatedsampler$ under the ideal conditions will still hold without these assumptions, except with probability $O(\nu)$.

\begin{definition}[Ideal Conditions]\label{def:ideal-cond}
We collectively refer to the following as the \emph{ideal conditions}.
    \begin{itemize}
        \item The inverter $\mathcal{I}_{\nu'}$ never fails: for all $x$ and $r$ on which $\correlatedsampler$ invokes $\mathcal{I}_{\nu'}(x;r)$, $\mathcal{I}_{\nu'}(x;r) \neq \bot$. 
    \item For every $\ell$, there is at most one $y^* \in H_1^{-1}(u)$ s.t.
    \[
    p_D(y^*) \in [2^{-\ell-4},2^{-\ell+4}]
    \]
    \item $\elementfinder$ always returns $y^*$ or `$\bot$'. Furthermore, for $y^{*}$ returned by $\elementfinder$, we have the stronger condition that $p_D(y^*) \in [2^{-\ell - 2}, 2^{-\ell + 2}].$ 
    \item The empirical estimate of $\widehat{q}(y^*)$ is good: 
    \[
    \hat{q}(y^*) \in \left(1\pm O\left(\nu\right)\right)p_D(y^*)2^{\ell-k}
    \]
    \end{itemize}
\end{definition}

\subsubsection{Analysis -- Distributional Accuracy.}\label{sssec:corrsamp-dist-acc}

In this section, we analyze the distributional accuracy of $\correlatedsampler$. Denote by $D_{\correlatedsampler}$ the distribution over outputs of $\correlatedsampler(C, \nu)$. We show that $\dtv(D_{\correlatedsampler}, D_C) \leq O(\nu)$, assuming the ideal conditions of Definition~\ref{def:ideal-cond}. 


\begin{lemma}[Distributional Accuracy of $\correlatedsampler$]
\label{lem:correlated-sampler-distributional-accuracy}
   For all circuits $C: \{0,1\}^m \rightarrow \{0,1\}^n$, the distributions $D_C$ and $D_{\correlatedsampler}$ satisfy $\dtv(D_C, D_{\correlatedsampler}) \le O(\nu)$, assuming the ideal conditions of Definition~\ref{def:ideal-cond} hold for all rounds of $\correlatedsampler$.
   Here, $D_C$ and $D_{\correlatedsampler}$ denote the distributions over $\{0,1\}^n$ induced by querying $C(r)$ and $\correlatedsampler(C, \nu; r)$ respectively with uniformly random strings $r$. 
\end{lemma}

We first prove the following useful lemma, bounding the probability that any $x \in \supp(D_C)$ is returned in a single round. 
\begin{lemma}
    \label{lem:correlated-sampler-round-return-prob}
    Fix an $x \in \supp(D_C)$. Then 
    $\Pr_{H_1, u, \ell, \beta}[\elementfinder_{C, \nu, \ell, \beta}(H_1, u) = x] \in \frac{p_D(x)(1+O(\nu))}{(m+1)2^{k}} $.
\end{lemma}

\begin{proof}
For any $\ell$, we have that $\Pr_{H_1, u}[x = H_1^{-1}(u)] = 2^{-\ell - k}$. Conditioned on $x$ being selected by $H_1$ and $u$, by construction, $\elementfinder_{C, \nu, \ell, \beta}$ returns $x$ whenever $\widehat{q}(x) \in (\frac{\beta}{2}2^{-k}, \beta 2^{-k}]$.
Rewriting $p_D(x) = \gamma2^{-j}$ for $\gamma \in [1/2,1]$, we observe that $x$ will only ever have non-zero probability of being returned by $\elementfinder_{C, \nu, \ell,\beta}(H_1, u)$ when $j-1 \leq \ell \leq j+2$, from the assumptions of Definition~\ref{def:ideal-cond}. Since we have $\widehat{q}(x) \in (1 \pm O(\nu))p_D(x)2^{\ell - k}$ for any $\ell$ in this range, it follows that for these $\ell$,
\begin{align*}
\Pr_{\beta}\left[\widehat{q}(x) \in (\tfrac{\beta}{2}2^{-k}, \beta 2^{-k}]\right]
&\in \Pr_{\beta}[\tfrac{\beta}{2} < p_D(x)2^{\ell}(1 \pm O(\nu))\leq \beta]  \\
&\in \Pr_{\beta}[\tfrac{\beta}{2} < p_D(x)2^{\ell} \leq \beta] \pm O(\nu)p(x)2^\ell. 
\end{align*}
Recalling that $\ell$ is chosen uniformly at random from $\{0, \dots, m\}$, we can then write the probability that $\elementfinder$ returns $x$ as 
\begin{align*}
    \Pr_{H_1, u, \ell, \beta}[\elementfinder_{C, \nu, \ell,\beta}(H_1, u) = x]
    &\in \sum_{\ell \in [j-1, j+2]}\frac{\Pr_{\beta}[\tfrac{\beta}{2} < p_D(x)2^{\ell} \leq \beta] \pm O(\nu)p_D(x)2^\ell}{(m+1)2^{\ell + k}} \\
    &\in    \left( \sum_{\ell \in [j-1, j+2]}\frac{\Pr_{\beta}[\tfrac{\beta}{2} < p_D(x)2^{\ell} \leq \beta] }{(m+1)2^{\ell + k}}\right) \pm \frac{O(\nu)p_D(x)}{(m+1)2^k}.
\end{align*}
Observing that $\Pr_{\beta}[\tfrac{\beta}{2} < p_D(x)2^{\ell} \leq \beta] = 0$ except for $\ell \in \{j, j+1\}$, we can simplify the series:
\begin{align*}
    \sum_{\ell \in [j-1, j+2]}\frac{\Pr_{\beta}[\tfrac{\beta}{2} < p_D(x)2^{\ell} \leq \beta] }{(m+1)2^{\ell + k}} 
    & =    \sum_{\ell \in [j, j+1]}\frac{\Pr_{\beta}[\tfrac{\beta}{2} < p_D(x)2^{\ell} \leq \beta] }{(m+1)2^{\ell + k}} \\
    &= \frac{\Pr_{\beta}[\tfrac{\beta}{2} < p_D(x)2^{j} \leq \beta] }{(m+1)2^{j + k}} + \frac{\Pr_{\beta}[\tfrac{\beta}{2} < p_D(x)2^{j+1} \leq \beta] }{(m+1)2^{j + k + 1}} \\
    &= \frac{\Pr_{\beta}[\tfrac{\beta}{2} < \gamma \leq \beta] }{(m+1)2^{j + k}} + \frac{\Pr_{\beta}[\tfrac{\beta}{2} < 2\gamma \leq \beta] } {(m+1)2^{j + k + 1}} \\
    &= \frac{\Pr_{\beta}[\beta < 2\gamma ] }{(m+1)2^{j + k}} + \frac{\Pr_{\beta}[\beta \geq 2\gamma ] } {(m+1)2^{j + k + 1}} \\
    &= \frac{2\gamma - 1}{(m+1)2^{j + k}} + \frac{2 - 2\gamma } {(m+1)2^{j + k + 1}} \\
    &= \frac{2\gamma - 1 + 1 - \gamma}{(m+1)2^{j + k}} \\
    &= \frac{p_D(x)}{(m+1)2^k}.
\end{align*}
Plugging back into the probability that $\elementfinder$ returns $x$, we have
\begin{align*}
    \Pr_{H_1, u, \ell, \beta}[\elementfinder_{C, \nu, \ell,\beta}(H_1, u) = x]
    &\in  \frac{p_D(x)(1+O(\nu))}{(m+1)2^k}. 
\end{align*}
\end{proof}

Finally, we can show that the output distribution of $\correlatedsampler$ and that of circuit $C$ are $O(\nu)$-close in variation distance. 

\begin{proof}
Proof of Lemma ~\ref{lem:correlated-sampler-distributional-accuracy}:

From Lemma~\ref{lem:correlated-sampler-round-return-prob}, we have that in each round, $x$ is returned by $\elementfinder$ with probability 
$\frac{p_D(x)(1 \pm O(\nu))}{(m+1)2^{k}}$. 

Summing over all $x$, the probability that $\correlatedsampler$ terminates in any individual round is in the range
$
\frac{1 \pm O(\nu)}{(m+1)2^k}
$.
So, conditioned on a round of $\correlatedsampler$ returning, the algorithm returns $x$ with probability in $(1 \pm O(\nu)) p_D(x)$. Finally, $\correlatedsampler$ does not return by the final $T_1$'th round with probability at most $O(\nu)$, by the choice of $T_1$. Altogether, this implies $\dtv(D_C, D_\correlatedsampler) \in O(\nu)$ as desired. 
\end{proof}

\subsubsection{Analysis --- Correlated Sampling.}\label{sssec:corrsamp-corr-analysis}

Next, we show that $\correlatedsampler$ satisfies the correlated sampling requirement of Definition~\ref{def:implicit-correlated-sampling-problem}. To simplify notation in this subsection, we will denote the probability of an element $x$ under $D_{C_1}$ and $D_{C_2}$ by $p_1(x)$ and $p_2(x)$ respectively.

\begin{lemma}[Correlated Sampling of $\correlatedsampler$]
    \label{lem:correlated-sampler-correlations}
For all pairs of circuits $C_1, C_2: \{0,1\}^m \rightarrow \{0,1\}^n$, assuming the ideal conditions hold for all rounds of $\correlatedsampler(C_1, \nu; r)$ and $\correlatedsampler(C_2, \nu; r)$, 
    $$\Pr_{r}[\correlatedsampler(C_1, \nu; r) \ne \correlatedsampler(C_2, \nu; r)] \leq O(\dtv(D_{C_1}, D_{C_2}) + \nu).$$
\end{lemma}

\begin{proof}
Let $E_0$ denote the event that $\correlatedsampler(C_1, \nu; r) \neq \correlatedsampler(C_2, \nu; r)$, and for simplicity of notation, shorten $\elementfinder$ to $\elementfindershort$, and write $\elementfindershort^{(i)}_{C_j}$ to denote the output of $\elementfinder$ in the $i$th round. We start by making some simplifying assumptions. First, observe that since the probability $\correlatedsampler$ returns $\bot$ is always at most $O(\nu)$, we can condition on the fact that both $\correlatedsampler(C_2, \nu; r)\neq \bot$ and $\correlatedsampler(C_1, \nu; r)\neq \bot$ without loss of generality. Second, we can assume by symmetry that $\correlatedsampler(C_1, \nu; r)$ does not return before $\correlatedsampler(C_2, \nu; r)$ (else relabel $C_1$ as $C_2$).

With this in mind, let $E_i$ denote the event that $\correlatedsampler(C_2, \nu; r)$ returns in round $i$. The event $E_0$ can then be bounded by
\[
\Pr[E_0] \leq \sum\limits_i\Pr[E_i]\Pr_{\beta,\ell}[\elementfindershort^{(i)}_{C_1} \neq \elementfindershort^{(i)}_{C_2} \mid E_i].
\]


To bound the probability of this inner event, observe that under our assumptions, this occurs exactly when $x = \elementfindershort^{(i)}_{C_2}$, but for the empirical estimate computed by $\elementfindershort^{(i)}_{C_1}$, $\widehat{q}_1(x) \not\in [\frac{\beta}{2} 2^{-k},\beta 2^{-k}]$. We will show conditioned on any value of $x$ and $\ell$, this occurs with probability at most:
\begin{equation}\label{eq:conditional-return}
\Pr_{\beta}[\elementfindershort^{(i)}_{C_1} \neq x \mid E_i,\elementfindershort^{(i)}_{C_2} = x,\ell] \leq O(2^{\ell}|p_2(x) - p_1(x)| + \nu).
\end{equation}
In this case, we can bound $E_0$ by conditioning further on $x$ and $\ell$ as:
\begin{align*}
\Pr_{r}[E_0] &\leq \sum\limits_{i \in [T_1]}\Pr_{\beta,\ell}[E_i]\Pr_{\beta,\ell}[\elementfindershort^{(i)}_{C_1} \neq \elementfindershort^{(i)}_{C_2} \mid E_i]\\
&\leq \sum\limits_{i \in [T_1]}\Pr_{\beta,\ell}[E_i]\sum_{x \neq \bot}\Pr_{\beta,\ell}[\elementfindershort^{(i)}_{C_1}\neq \elementfindershort^{(i)}_{C_2}\mid E_i, \elementfindershort^{(i)}_{C_2}=x]\Pr_{\beta,\ell}[\elementfindershort^{(i)}_{C_2}=x | E_i]\\
&\leq O(\nu) + \sum\limits_{i \in [T_1]}\Pr_{\beta,\ell}[E_i]\sum_{x \neq \bot}\Pr_{\beta,\ell}[\elementfindershort^{(i)}_{C_1}\neq \elementfindershort^{(i)}_{C_2}\mid E_i, \elementfindershort^{(i)}_{C_2}=x]p_2(x)\\
&\leq O(\nu) + \sum\limits_{i \in [T_1]}\Pr_{\beta,\ell}[E_i]\sum_{x \neq \bot}p_2(x)\sum\limits_\ell \Pr[\ell \mid E_i, \elementfindershort^{(i)}_{C_2}=x]\Pr_{\beta}[\elementfindershort^{(i)}_{C_1}\neq \elementfindershort^{(i)}_{C_2}\mid E_i, \elementfindershort^{(i)}_{C_2}=x,\ell]\\
&\leq O(\nu) + \sum\limits_{i \in [T_1]}\Pr_{\beta,\ell}[E_i]\sum_{x \neq \bot}p_2(x)\sum\limits_\ell \Pr[\ell \mid E_i, \elementfindershort^{(i)}_{C_2}=x]O(2^\ell|p_2(x)-p_1(x)|).
\end{align*}
Under the ideal conditions, the posterior of $\ell$ is $0$ unless $p_2(x) \in [2^{-\ell-2},2^{-\ell+2}]$, so altogether:
\begin{align*}
\Pr_{r}[E_0] &\leq O(\nu) + \sum\limits_{i \in [T_1]}\Pr_{\beta,\ell}[E_i]\sum_{x \neq \bot}p_2(x)\sum\limits_\ell \Pr[\ell \mid E_i, \elementfindershort^{(i)}_{C_2}=x]O\left(\frac{1}{p_2(x)}|p_2(x)-p_1(x)|\right)\\
&\leq O(\nu)+\sum\limits_{i \in [T_1]}\Pr_{\beta,\ell}[E_i]\sum_{x \neq \bot}O(|p_2(x)-p_1(x)|)\\
&\leq O(d_{TV}(D_{C_1},D_{C_2}) + \nu)
\end{align*}
as desired.


It is therefore left to prove \Cref{eq:conditional-return}, which we analyze the probability by splitting into two cases based on $p_1(x)$:
\begin{enumerate}
    \item $p_2(x)/4 \leq p_1(x) \leq 4p_2(x)$
    \item $p_1(x) < p_2(x)/4$ or $p_1(x) > 4p_2(x)$
\end{enumerate}

\paragraph{Case 1: } In this case, because we are assuming the ideal conditions of Definition~\ref{def:ideal-cond} and have conditioned on $\elementfinder_{C_2, \nu, \ell, \beta}(H_1, u;r) = x$ for some $x$ in this round, it must be the case that for this $x$ we have:
\begin{itemize}
    \item $p_2(x) \in [2^{-\ell-2}, 2^{-\ell + 2}]$
    \item $p_1(x) \in [2^{-\ell-4}, 2^{-\ell + 4}]$ (from our assumption that $p_2(x)/4 \leq p_1(x) \leq 4p_2(x)$)
    \item $\widehat{q}_2(x) \in (1 \pm O(\nu))p_2(x)2^{\ell - k}$ 
    \item $\widehat{q}_1(x) \in (1 \pm O(\nu))p_1(x)2^{\ell - k}$.
\end{itemize} 
From the uniqueness of $x$ satisfying $x = H_1^{-1}(u)$ and $p_1(x) \in [2^{\ell - 4}, 2^{-\ell + 4}]$, we can assume that $\elementfinder_{C_1, \nu, \ell, \beta}(H_1, u; r)$ outputs either $x$ or $\bot$ in this round. Therefore, we can bound the probability $\elementfinder_{C_1, \nu, \ell, \beta}(H_1, u; r)\neq\elementfinder_{C_2, \nu, \ell, \beta}(H_1, u; r)$ by the probability that $\elementfinder_{C_1, \nu, \ell, \beta}(H_1, u; r)$ outputs $\bot$ conditioned on $\elementfinder_{C_2, \nu, \ell, \beta}(H_1, u;r)$ returning $x$. This occurs whenever $\beta$ is chosen so that either
$$ (1+O(\nu))p_1(x)2^{\ell -k } < \frac{\beta2^{-k}}{2} \leq p_2(x)2^{\ell - k}(1 + O(\nu))$$ or
$$ p_2(x)2^{\ell - k}(1 - O(\nu)) \leq \beta 2^{-k} < p_1(x)2^{\ell -k}(1 + O(\nu)).$$

Observing that these cases are mutually exclusive and the first interval is the largest, we consider only that worst case and rearrange to obtain the condition
\begin{align*}
\beta 
&\leq 2^{\ell + 1}|p_2(x) - p_1(x)| + O(\nu)(p_2(x)+ p_1(x))2^{\ell} \\
&\leq 2^{\ell + 1}|p_2(x) - p_1(x)| + O(\nu),
\end{align*}
where the last inequality follows from our previously stated bounds $p_1, p_2 \in [2^{-\ell - 4}, 2^{-\ell + 4}]$.

Since $\beta$ is chosen uniformly at random from the interval $[1,2]$, it follows that $\beta$ satisfies the condition above with probability no greater than $2^{\ell + 1}|p_2(x) - p_1(x)| + O(\nu)$. Conditioning on $\elementfinder_{C_2, \nu, \ell, \beta}(H_1, u;r) = x$ and $p_2(x) \in [2^{-\ell - 4}, 2^{-\ell + 4}]$, we have
\begin{align*}
\Pr_{\beta}[ \elementfinder_{C_1, \nu, \ell\, \beta}(H_1, u;r) = \bot ] 
& \leq \Pr_{\beta}[\beta \leq 2^{\ell + 1}|p_2(x_2) - p_1(x_2)| + O(\nu)] \\
& \leq 2^{\ell + 1}|p_2(x) - p_1(x)| + O(\nu).
\end{align*}

\paragraph{Case 2: }In this case, we have either $p_1(x) < p_2(x)/4$ or $p_1(x) > 4p_2(x)$, and so $p_2(x) \in O(|p_2(x) - p_1(x)|)$. Conditioning on $\elementfinder_{C_2, \nu, \ell, \beta}(H_1, u;r) = x$ in this round also gives us that $p_2(x) = \Theta(2^{-\ell})$, so
\begin{align*}
    \Pr_{\beta}[\elementfinder_{C_1, \nu, \ell, \beta}(H_1, u;r) \neq x \mid \elementfinder_{C_2, \nu, \ell, \beta}(H_1, u;r) = x ] 
    & \leq 1 \\
    & \leq O(2^{\ell}p_2(x_2)) \\
    & \leq O(2^{\ell}|p_2(x_2) - p_1(x_2)|).
\end{align*}
Then for any value of $p_1(x)$ (either in Case 1 or Case 2) we have that 
$$ \Pr_{\beta}[\elementfinder_{C_1, \nu, \ell, \beta}(H_1, u;r) \neq x \mid \elementfinder_{C_2, \nu, \ell, \beta}(H_1, u;r) = x] \in O(2^{\ell}|p_2(x) - p_1(x)| + \nu) $$
as desired.
\end{proof}

\subsubsection{Analysis --- Runtime}\label{sssec:corrsamp-runtime}

\begin{lemma}[Runtime of $\correlatedsampler$]
\label{lem:correlated-sampler-runtime}
Assuming inverter $\Ical$ runs in time polynomial in $m$, $n$, and $1/\nu$, algorithm $\correlatedsampler$ also runs in time polynomial in $m$, $n$, and $1/\nu$. 
\end{lemma}

\begin{proof}

Algorithm $\correlatedsampler$ runs for at most $O(m k 2^k \log (1/\nu))$ rounds. Parameter $k$ is chosen in $\Theta(\log(m) + \log(1/\nu))$, so $2^k \in O(\poly(m, 1/\nu))$. Randomly sampling a pairwise-independent hash function from $n$ bits to $\ell +k$ bits can be done in $\poly(m, n, 1/\nu)$ time. 

Each round of $\correlatedsampler$ contains a call to $\elementfinder$. In $\elementfinder$, there are $O(\poly(m, 1/\nu))$ rounds in which a hash function from $m$ bits to $m-\ell + k$ bits is randomly sampled ($O(\poly(m, 1/\nu))$ time). Furthermore, each round contains a call to $\hashchecker$, which runs in time $\poly(m,n, 1/\nu)$ by the assumption that inverter $\Ical_{\nu'}$ does as well. 

Multiplying these nesting terms together, $\correlatedsampler$ runs in time polynomial in $m$, $n$, and $\nu$. 

Note that the randomness management, which ensures that the same bits of the random string $r$ are always used across multiple executions of $\correlatedsampler$, can also be done in time polynomial in $m$, $n$, and $1/\nu$. Each algorithm and subroutine has a finite number of randomness calls, and each call can be made using $\poly(m,n,1/\nu)$ random bits. Thus, $r$ can efficiently be canonically proportioned for all uses of randomness in the algorithm. For more details, see Appendix~\ref{apps:randomness-management}.

Parameter $\beta$ is chosen uniformly randomly in $[1,2]$ in each loop of $\correlatedsampler$. Only polynomial in $m$, $n$, and $1/\nu$ bits of precision are needed to choose $\beta$ so that the errors introduced by not using uniformly random values are altogether small relative to distributional error parameter $\nu$. 
\end{proof}

\subsubsection{Analysis -- Removing Assumption of Ideal Conditions}\label{sssec:corrsamp-ideal-proof}
\begin{proposition}\label{lem:corrsamp-conditions}
The ideal conditions of Definition~\ref{def:ideal-cond} hold across all steps of $\correlatedsampler$ with probability at least $1-O(\nu)$.
\end{proposition}
We break the proof into its four constituent part.
\begin{lemma}[Inverter Never Fails]
Let $\mathcal{S}$ denote the set of strings $(x;r)$ on which $\correlatedsampler$ invokes $\mathcal{I}_{\nu'}(x;r)$. The probability the inverter fails on $\mathcal{S}$ is negligible:
\[
\Pr[\exists (x;r) \in \mathcal{S}: \mathcal{I}_{\nu'}(x;r) = \bot] \leq O(\nu),
\]
as long as $\nu' = \poly(\nu^{-1},m,n)$ is sufficiently small.
\end{lemma}
\begin{proof}
Recall our inverter has the following guarantee
\[
\Pr_{r' \sim \{0,1\}^m} [C(\Ical_{\nu'}(C, C(r')); r)=C(r') ] \ge 1 - \nu',
\]
where $r$ stands for the internal randomness of $\Ical$ and $\nu'$ can be taken to be polynomially small in $m,n$ and $\nu$. It will be enough to take the failure rate $\nu' \leq O(\frac{\nu^2}{T_1T_2}) = \poly(\nu^{-1},m,n)$. We will bound the probability such an inverter fails on a random input $(u || v)$. 

First, observe that by Markov's inequality, most choices of internal randomness $r$ for the inverter random work for almost all $r' \in \{0,1\}^m$:
\[
\Pr_r \left[ \Pr_{r'}\left[\Ical_{\nu'}(C, C(r')); r)\neq C(r')\right] \geq O\left(\frac{\nu}{T_1T_2}\right)\right] \leq O(\nu).
\]
Assume then this event does not occur. Our algorithm only fails if the random choice of $(u || v)$ is hashed to by a string for which $\Ical_{\nu'}(C, C(r')); r)\neq C(r')$. In the worst case, these bad strings each correspond to unique $(u || v) \in \{0,1\}^{m+2k}$, in which case we have a total of $\frac{\nu}{T_1T_2}2^m$ out of $2^{m+2k}$ bad inputs. Union bounding over the $T_1T_2$ applications of the inverter in $\correlatedsampler$ gives a total failure probability of $1-O(\nu)$ as desired.
\end{proof}
To prove the remaining conditions, it will be useful first to bound the number of collisions experienced by our hash functions.
\begin{claim}[Collision Avoidance]\label{claim:col}
With probability at least $1-O(\nu)$, for all choices of $H_1$, $u$, and $\ell$ in $\correlatedsampler$:
    \begin{enumerate}
        \item $H_1$ has no relevant collisions:
        \[
        \forall p_D(x),p_D(x') \in [2^{-\ell-4},2^{-\ell+4}]: H_1(x) \neq H_1(x'),
        \]
        \item For all $x \in H_1^{-1}(u)$, the total number of collisions across choices of $H_2$ is at most:
        \[
        \left|\{H_2, (r,r') \in C^{-1}(x): H_2(r)=H_2(r')\}\right| \leq \delta_xT_2|C^{-1}(x)|,
        \]
    \end{enumerate}
    where $\delta_x=O\left(2^\ell p_D(x) \cdot \frac{mk\log(1/\nu)}{\nu} \cdot 2^{-k}\right)$
\end{claim}
\begin{proof}
To prove the first condition, observe there are at most $2^{\ell+4}$ elements $x \in \{0,1\}^n$ with measure in the range $p_D(x) \in [2^{-\ell-4},2^{-\ell+4}]$. Since our hash function is pairwise-independent, the probability a collision exists in this range is therefore bounded by $\frac{2^{2\ell+8}}{2^{2\ell+2k}} = 2^{4-2k}$. Union bounding over the $T_1$ choices of $H_1$, $u$, and $\ell$, a collision still occurs with probability at most $2^{-\Omega(k)} \leq O(\nu)$ as desired.

To prove the second condition, fix $x \in H_1^{-1}(u)$ and observe that by pairwise independence, the expected number of total collisions across all choices of $H_2$ is at most
\[
T_2\frac{|C^{-1}(x)|^2}{2^{2m-2\ell+2k}}.
\]
by linearity of expectation, so by Markov's inequality the probability there are more than 
\[
T_2\frac{|C^{-1}(x)|^2}{2^{2m-2\ell+2k}}\cdot \frac{T_12^m}{\nu} \leq T_2|C^{-1}(x)| \cdot \frac{2^\ell p_D(x)mk\log(1/\nu)}{\nu2^{k}}
\]
total collisions is at most $O(\nu)/(T_12^m)$, so 
union bounding over all choices of $H_1$, $u$, and $x$ gives the desired result.
\end{proof}
\begin{lemma}[Uniqueness of $y^*$]\label{lem:unique}
With probability at least $1-O(\nu)$ over all choices of $H_1$ and $u$, there is at most one element $y^* \in H_1^{-1}(u)$ satisfying
    \[
    p_D(y^*) \in [2^{-\ell-4},2^{-\ell+4}].
    \]
\end{lemma}
\begin{proof}
    This is immediate from the fact that $H_1$ has no collisions on $[2^{-\ell-4},2^{-\ell+4}]$ with high probability.
\end{proof}
\begin{lemma}[Correctness of $\hat{q}$]\label{lem:q-correct}
With probability at least $1-O(\nu)$, every run of $\elementfinder$ accurately estimates $p_D(y^*)$ in the following sense:
    \[
\widehat{q}(y^*) \in \left((1\pm O(\nu))p_D(x)2^{\ell-k}\right)
    \]
\end{lemma}
\begin{proof}
For intuition, first consider the setting where no collisions occur in any $H_2$. In this case, observe that the density of the pre-image of $x$ mapped into the range of $H_2$ is exactly $p_D(x)2^{\ell-k}$ by construction, that is:
\[
\frac{|H_2(C^{-1}(y^*))|}{2^{m-\ell+k}} = p_D(x)2^{\ell-k}.
\]
As such $\hat{q}(y^*)$ is distributed as a Binomial distribution $Bin(p_D(y^*)2^{\ell-k},T_2)$, and Chernoff promises that
\[
\Pr_{v,H_2}[|\hat{q}(x) - p_D(x)2^{\ell-k}| \geq O(\nu)p_D(y^*)2^{\ell-k}] \leq e^{-\Omega(\nu^2T_2)} \leq \frac{O(\nu)}{T_1}
\]
by our choice of $T_2$. The desired result then follows from union bounding over all choices of $H_1$ and $u$.

We now modify this analysis under the assumption that at most 
\[
C_{col} \coloneqq O\left(\frac{2^\ell p_D(y^*)mk\log(1/\nu)}{\nu2^{k}}\right)\cdot |C^{-1}(y^*)|T_2
\]
total collisions occur, which holds across all rounds except with probability $O(\nu)$ (\Cref{claim:col}). In this case, since $p_D(y^*) \leq 2^{\ell+4}$, we have 
\[
C_{col} \leq O(\nu)|C^{-1}(y^*)|T_2
\]
for $k=\Theta(\log(m/\nu))$ sufficiently large, and therefore that the expectation of our adjusted binomial trial is close enough to its ideal expectation:
\[
\sum_{H_2} \frac{|H_2(C^{-1}(y^*))|}{2^{m-\ell+k}} \in T_2 \cdot\left[(1-O(\nu))p_D(y^*)2^{\ell-k}, p_D(y^*)2^{\ell-k}\right],
\]
that the collisions have no asymptotic effect on the original Chernoff bound.
\end{proof}
\begin{lemma}[Correctness of $\elementfinder$]
With probability at least $1-O(\nu)$, all calls to $\elementfinder$ return $y^*$ or `$\bot$'. Furthermore, for $y^*$ returned by $\elementfinder$, we have the stronger condition that ${p_D(y^*) \in [2^{-\ell - 2}, 2^{-\ell+2}]}$.
\end{lemma}
\begin{proof}
For the first claim, it is enough to argue that any $x\in H_1^{-1}(u)$ distinct from $y^*$ satisfies:
    \[
    \hat{q}(x) \notin [(\beta/2)2^{-k},\beta2^{-k}].
    \]
    
Assuming $y^*$ is unique (\Cref{lem:unique}), we have either that $p_D(x) < 2^{-\ell-4}$ or $p_D(x) > 2^{-\ell+4}$. Consider the former. We will show $\hat{q}(x) < 2^{-k-1} \leq (\beta/2)2^{-k}$. Note that since collisions only lower $\hat{q}$, they can be ignored in this setting. By construction, the pre-image of $x$ consists of at most $2^{m-\ell-4}$ strings $r'$ map to $x$, so $|H_2(C^{-1}(x))|$ is at most a
\[
\frac{2^{m-\ell-4}}{2^{m-\ell+k}} = \frac{2^{-k}}{16}
\]
fraction of the range of $H_2$. A Chernoff and Union bound give that all such $x$ have empirical estimates less than $(\beta/2)2^{-k}$ except with probability $2^me^{-\Omega(2^{-k}T_2)} \leq O(\nu)$.


Finally, consider $x$ satisfying $p_D(x) \geq 2^{-\ell+4}$. In this case, we will aim show $\hat{q}(x)>\frac{2}{2^k}\geq \beta2^{-k}$, so it is sufficient to consider the worst case when $p_D(x)=2^{-\ell+4}$. By \Cref{claim:col}, we can assume there are at most
\[
C_{col} \leq \frac{1}{4}T_2|C^{-1}(x)| 
\]
total collisions over the choices of $H_2$, thus as in \Cref{lem:q-correct}, the collision-corrected Chernoff bound still promises $\hat{q}(x)>\frac{2}{2^k}$ with probability at least $e^{-\Omega(2^{-k}T_2)} \leq \frac{O(\nu)}{2^mT_1}$. Union bounding over all choices of $H_1$, $u$, and values of $x$ completes the proof of the first claim.

To prove the second claim, we observe that Lemma~\ref{lem:q-correct} gives us $\widehat{q}(y^*) \in (1 \pm O(\nu))p_D(y^*)2^{\ell - k}$.
Bounding $(1\pm O(\nu)) \in (1/2, 2)$, we have 
$$ p_D(y^*) \in (\widehat{q}(y^*)2^{k - \ell - 1}, \widehat{q}(y^*)2^{k-\ell + 1}).$$
If $\elementfinder$ returned $y^*$, it must be the case that $$\widehat{q}(y^*) \in (\tfrac{\beta}{2}2^{-k}, \beta 2^{-k}] \in (2^{-k-1}, 2^{-k+1}],$$
and so
$$p_D(y^*) \in [2^{-\ell - 2}, 2^{-\ell + 2}],$$
conditioned on $\elementfinder$ returning $y^*$, as claimed.

\end{proof}

\section{Separating Stability: Statistical Barriers}\label{sec:stat-sep}
\subsection{Quadratic Separation: One-way Marginals}
\label{sec:marginals}
We start by defining the one-way marginals problem over $d$ coordinates, which corresponds to outputting a good estimate of the expectation of a product of Rademacher distributions in $\ell_{\infty}$-distance.
\begin{definition}
Consider a product of $d$ Rademacher distributions with expectations $p = (p_1,\dots,p_d)$ respectively. A vector $v \in \mathbb{R}^d$ is said to be an $\alpha$-accurate solution to the one-way marginals problem if $\|v - p\|_{\infty} \leq \alpha$.
\end{definition}
\begin{definition}
Let $C$ be the class of products of $d$ Rademacher distributions. Fix any distribution $D$ in $C$. We say that an algorithm $(\alpha,\beta)$-accurately solves the one-way marginals problem over $d$ coordinates, if it observes samples from the distribution $D$, and with probability at least $1-\beta$ (over the randomness of the samples and the algorithm), produces an $\alpha$-accurate solution $v \in \mathbb{R}^d$.
\end{definition}
In this section, we show that any $0.0001$-replicable, $(0.01,0.01)$-accurate algorithm for the one-way marginals problem over $d$ coordinates requires at least $\widetilde{Omega}(d)$ samples.

On the other hand, under the constraint of $(1, \frac{1}{n^2})$-differential privacy, this problem can be solved using $\widetilde{O}(\sqrt{d})$ samples, via the Gaussian mechanism. This gives a quadratic separation between differential privacy and replicability, and proves that our reduction is asymptotically tight (up to logarithmic factors) in some settings (since our reduction would give a $\tilde{O}(d)$-sample replicable algorithm for this task). 

The main theorem we prove in this section is the following.
\begin{theorem}\label{thm:repmarginalsLB}
Fix sufficiently large $d>0$. For any  $0.0001$-replicable algorithm $\mathcal{A}$ that $(0.01,0.001)$-accurately solves the one-way marginals problem over $d$ coordinates with $m$ samples, $$m=\tilde{\Omega}(d).$$
\end{theorem}
\subsubsection{Sketch of our approach}
Our techniques for proving the lower bound for replicability draw inspiration from those used to prove lower bounds in privacy. Specifically, tight lower bounds for the one-way marginals problem over $d$ coordinates under the constraint of differential privacy are obtained using the \textit{fingerprinting method} \cite{DworkSSUV15, BunUV18, BunSU19}. The fingerprinting method captures the idea that there is a trade-off between accuracy and correlation with the input sample. It quantifies the idea that if the algorithm obtains a sample of small size, and is also very accurate, then it must be heavily correlated with one of its input examples, which is prohibited by differential privacy.
Since replicability also prohibits such correlation (at least at a high level), one might expect the same method to be useful toward this end.

More formally, given an algorithm $\mathcal{A}$ solving the one-way marginals problem, the \textit{correlation} of coordinate $j$ of the output with the input sample $S$ can be measured by the quantity 
\[
Z=\sum\limits_{j \in [d]} \mathcal{A}^j(S)\sum\limits_{i \in [m]} (S^j_i - p_j).
\] 
Note that the quantity $(S^j_i - p_j)$ represents the drift between an input example coordinate and the expectation of the distribution it's drawn from. $\mathbb{E}[Z]$ is large when on average, for many $j$, $\mathcal{A}^j(S)$ is on the same side of $0$ as the sample drift $\sum_i(S^j_i - p_j)$, implying that the algorithm's outputs are on average correlated with its input. 

We now recall the formal statement of the fingerprinting lemma.

\begin{lemma}[Fingerprinting Lemma, Lemma 3.6 in \cite{BunSU19}]
\label{lem:fingerprinting}
Let $f$ be any function from $\{-1,1\}^m \to [-1,1]$. Suppose $r$ is sampled from the uniform distribution over $[-1,1]$ and $q \in \{-1,1\}^m$ is a vector of $m$ independent Rademacher RVs each with expectation $r$. Then, if $\mu_q$ is the empirical average of $q$, we get that
$$\mathbb{E}_{r,q}[f(q)\sum_i (q_i - r) + 2|f(q) - \mu_q|] \geq \frac{1}{3}.$$
\end{lemma}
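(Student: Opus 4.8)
The plan is to run the classical ``derivative of the likelihood'' argument behind fingerprinting lemmas. First I would introduce the auxiliary function
\[
g(r) \;:=\; \mathbb{E}_{q}\big[f(q) \mid r\big] \;=\; \sum_{q \in \{-1,1\}^m} f(q)\prod_{i=1}^{m}\frac{1+q_i r}{2},
\]
which is a polynomial in $r$ of degree at most $m$, hence smooth on all of $[-1,1]$. The heart of the proof is the pointwise identity
\[
\mathbb{E}_{q \mid r}\Big[f(q)\sum_{i=1}^{m}(q_i - r)\Big] \;=\; (1-r^2)\, g'(r), \qquad r\in[-1,1].
\]
To get this I would differentiate the product likelihood: for a single coordinate, $\Pr[q_i = b \mid r] = \tfrac{1+br}{2}$ and so $\frac{\partial}{\partial r}\log \Pr[q_i = b \mid r] = \frac{b-r}{1-r^2}$ for $b \in \{-1,1\}$; multiplying over the $m$ coordinates gives $\frac{\partial}{\partial r}\Pr[q\mid r] = \Pr[q\mid r]\cdot\frac{\sum_i(q_i-r)}{1-r^2}$, and summing $f(q)$ against this yields the displayed identity (both sides vanish at $r=\pm1$, so it extends by continuity).

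Next I would average over $r$ uniform on $[-1,1]$ and integrate by parts. Because the factor $1-r^2$ kills the endpoint terms, we get
\[
\mathbb{E}_{r,q}\Big[f(q)\sum_{i=1}^m (q_i - r)\Big] \;=\; \mathbb{E}_r\big[(1-r^2)g'(r)\big] \;=\; \int_{-1}^{1} r\, g(r)\, dr \;=\; 2\,\mathbb{E}_{r,q}\big[\, r\, f(q)\,\big],
\]
where the middle equality is integration by parts ($u = 1-r^2$, $dv = g'(r)\,dr$).

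Finally I would lower-bound $2\mathbb{E}_{r,q}[r f(q)] + 2\mathbb{E}_{r,q}[|f(q) - \mu_q|]$. Writing $f(q) = \mu_q + (f(q) - \mu_q)$ and using the tower rule with $\mathbb{E}[\mu_q \mid r] = r$ gives $\mathbb{E}[r\mu_q] = \mathbb{E}[r^2] = \tfrac13$, while $|r|\le 1$ gives $\big|\mathbb{E}[r(f(q)-\mu_q)]\big| \le \mathbb{E}[|f(q)-\mu_q|]$; hence $\mathbb{E}[r f(q)] + \mathbb{E}[|f(q)-\mu_q|] \ge \tfrac13$, and doubling gives the claim (in fact a bound of $\tfrac23$, so the stated constant $\tfrac13$ has slack). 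The only place that needs genuine care is the pointwise identity: keeping the $1-r^2$ normalization and the signs straight, and checking the $r\to\pm1$ boundary. Since $g$ is a polynomial of degree $\le m$ there are no analytic subtleties — differentiation under the finite sum and the integration by parts are both elementary — so I expect the identity's bookkeeping, rather than any estimate, to be the main thing to get right; everything after the identity is a two-line computation using only $\mathbb{E}[\mu_q\mid r]=r$ and $|r|\le 1$.
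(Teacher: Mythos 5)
Your proof is correct and is essentially the standard argument from Bun--Steinke--Ullman that the paper cites rather than reproves: the score-function identity $\mathbb{E}_{q\mid r}[f(q)\sum_i(q_i-r)]=(1-r^2)g'(r)$, integration by parts against the uniform prior to obtain $\mathbb{E}_{r,q}[f(q)\sum_i(q_i-r)]=2\mathbb{E}_{r,q}[r\,f(q)]$ (this is precisely the content of Lemma~\ref{lem:fingder}), and then the split $f(q)=\mu_q+(f(q)-\mu_q)$ together with $\mathbb{E}[r\mu_q]=\mathbb{E}[r^2]=1/3$ and $|r|\le 1$. As you note, the chain of inequalities actually yields the stronger bound $2/3$, so the stated constant $1/3$ holds with slack.
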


The lower bound for differential privacy proceeds by arguing that $\mathbb{E}[Z] = \mathbb{E}[\sum_j \mathcal{A}^j(S)\sum_i (S^j_i - p_j)]$ is large (via an appropriate application of the fingerprinting lemma), and hence, by an averaging argument, there exists an example $S_{i^*} \in S$ correlated with the output, i.e.\ some $i^* \in [m]$ such that
\[
\mathbb{E}[Z_{i^*}] = \mathbb{E}[\sum_{j\in[d]} \mathcal{A}^j(S) (S^j_{i^*} - p_j)]
\] 
is large. With this in hand, consider an independently drawn example $g=(g^1_{i^*},\dots,g^d_{i^*})$, and the neighboring dataset $S'$ obtained by replacing $S_{i^*}$ in the original dataset with $g$. Since $g$ is independent of $S_{i^*}$, $\mathcal{A}^j(S')$ is uncorrelated with $(S^j_{i^*} - p_j)$, and hence the ``1-neighboring'' quantity
\[
\mathbb{E}[Z_{i^*}'] = \mathbb{E}[\sum_{j\in[d]} \mathcal{A}^j(S') (S^j_{i^*} - p_j)]
\]
should be small. On the other hand, differential privacy promises that $Z_{i^*}$ is distributionally close to $Z'_{i^*}$, and hence $\mathbb{E}[Z'_{i^*}]$ and $\mathbb{E}[Z_{i^*}]$ must be close.
Balancing these considerations gives a lower bound on the number of samples needed for differential privacy.

For replicability, the idea is to obtain a stronger lower bound by avoiding averaging.
Specifically, for $Z$ defined as above, we can argue that $\mathbb{E}[Z]$ is large (as we would for the differential privacy lower bound). Then, we can consider a freshly sampled dataset $S'$ (drawn from a product distribution with the same expectation $p=(p_1,\dots,p_d)$), and consider the quantity $Z'=\sum_j \mathcal{A}^j(S')\sum_i (S^j_i - p_j)$. We can argue that $Z'$ is a sum of uncorrelated random variables, and hence that $\mathbb{E}[Z']$ is small. On the other hand, by replicability, $Z$ and $Z'$ are distributionally close, since they correspond to post-processing of the algorithm applied to independent datasets. Note that this does not follow from differential privacy,
since datasets $S$ and $S'$ may differ in many entries. Now, following a similar approach to the differential privacy lower bound, we'd get a stronger lower bound for replicability (since we have eliminated the averaging argument).

Unfortunately, this approach does not work directly for technical reasons. Specifically, $\rho$-replicability tells us that $Z$ and $Z'$ are distributionally close, but their expectations can have absolute value difference as large as $\rho dm$ (since $\mathcal{A}(S)$ and $\mathcal{A}(S')$ could differ completely with probability $\rho$). Since we are interested in constant $\rho$, this turns out to be too large for the lower bound technique to work.

We deal with this by instead applying the fingerprinting method to prove a lower bound against $(\frac{1}{m^3},1,\frac{1}{m^3})-$\textit{perfectly generalizing} algorithms. We find this lower bound interesting in its own right, as it gives the first sample complexity separation between approximate differential privacy and perfect generalization. Perfect generalization roughly asks that the algorithm's output distributions be $(1, \frac{1}{m^3})$-close on two independent datasets drawn from the product distribution. This can be used to argue that $\mathbb{E}[|Z|]$ is within $\frac{1}{m^2}$ of (a constant multiple of) $\mathbb{E}[|Z'|]$,
which turns out to be sufficient for the lower bound technique to apply.

Finally, appealing to our generic method of converting replicable algorithms to perfectly generalizing ones, this method extends to a tight lower bound on replicability (up to the loss of logarithmic factors in the number of coordinates $d$). 
It remains an interesting problem whether such a lower bound can be shown directly, ideally in a manner that avoids the resulting logarithmic loss.



\subsubsection{Formal argument}
We start by proving our new lower bound for perfectly generalizing algorithms. 

\begin{theorem}\label{thm:PGmarginalsLB}
Fix any $m > 0$ and sufficiently large $d>0$. Let $\mathcal{A}$ be a $(\frac{1}{m^3},1,\frac{1}{m^3})$-perfectly generalizing, $(0.01,0.01)$-accurate algorithm for the 1-way marginals problem over $d$ attributes using $m$ samples. Then, $m = \Omega(d)$. 
\end{theorem}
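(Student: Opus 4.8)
The plan is to transplant the \emph{fingerprinting method} --- the standard route to sample-complexity lower bounds for one-way marginals under differential privacy --- into the perfect-generalization setting, using the two-sample stability guarantee in place of the neighbouring-dataset one. The conceptual gain is precisely that perfect generalization controls the algorithm on two \emph{independent} samples, so one never loses the multiplicative factor of $m$ that the averaging / group-privacy step costs in the DP proof; this is what upgrades the bound from $\Omega(\sqrt d)$ to $\Omega(d)$. Concretely, I would put a prior on instances --- draw $p=(p_1,\dots,p_d)$ with the $p_j$ i.i.d.\ $\mathrm{Unif}[-1,1]$, and let $P_p$ be the corresponding product of Rademacher distributions --- and assume without loss of generality that $\mathcal A$ always outputs a vector in $[-1,1]^d$ (clipping each coordinate is post-processing, hence preserves perfect generalization, and only improves accuracy). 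For $\vec s\sim P_p^m$ define the fingerprinting statistic $Z=\sum_{j=1}^d \mathcal A^j(\vec s)\sum_{i=1}^m(s_i^j-p_j)$.

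\textbf{Lower bound on $\E[Z]$.} Fix a coordinate $j$ and condition on all other columns $\vec s_{-j}$ and on the internal coins of $\mathcal A$; then $\mathcal A^j(\vec s)$ is a fixed function $f:\{-1,1\}^m\to[-1,1]$ of the column $s^j$, which is a vector of $m$ i.i.d.\ Rademachers with mean $p_j\sim\mathrm{Unif}[-1,1]$ independent of the conditioning. The Fingerprinting Lemma (Lemma~\ref{lem:fingerprinting}) then gives $\E[\mathcal A^j(\vec s)\sum_i(s_i^j-p_j)+2|\mathcal A^j(\vec s)-\mu_{s^j}|]\ge \tfrac13$ with $\mu_{s^j}$ the empirical column mean; summing over $j$, bounding $|\mathcal A^j-\mu_{s^j}|\le|\mathcal A^j-p_j|+|p_j-\mu_{s^j}|$, and using $(0.01,0.01)$-accuracy ($\E|\mathcal A^j-p_j|\le 0.99\cdot 0.01+0.01\cdot 2\le 0.03$) together with $\E|p_j-\mu_{s^j}|\le 1/\sqrt m$ (an average of $m$ bounded i.i.d.\ terms with mean $p_j$) yields $\E[Z]\ge d/3-2d(0.03+1/\sqrt m)=\Omega(d)$ once $m$ exceeds a fixed absolute constant.

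\textbf{Upper bound on $\E[Z]$ from perfect generalization.} Let $\vec s'\sim P_p^m$ be an \emph{independent} second sample, write $\delta_j=\sum_i(s_i^j-p_j)$, and set $Z'=\sum_j \mathcal A^j(\vec s')\,\delta_j$ (same drifts, decorrelated algorithm). Since $\mathcal A(\vec s')$ is independent of $\vec s$ given $p$, $\E[Z']=0$; and because the $\delta_j$ are mean-zero and pairwise independent given $p$ with $\E[\delta_j^2\mid p]\le m$, one gets $\E[Z'^2]\le m\cdot\E[\sum_j (\mathcal A^j(\vec s'))^2]\le md$. By Cauchy--Schwarz $\E[Z]\le\sqrt{\E[Z^2]}$, so it suffices to compare $\E[Z^2]$ with $\E[Z'^2]$. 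Passing from the simulator definition to the two-sample form of perfect generalization (which costs only constant factors in $\eps$ and a factor $2$ in $\beta,\delta$), for all but an $O(1/m^3)$-fraction of pairs $(\vec s,\vec s')$ the output laws $\mathcal A(\vec s)$ and $\mathcal A(\vec s')$ are $(O(1),O(1/m^3))$-indistinguishable; applying this to the non-negative test function $y\mapsto\big(\sum_j y^j\delta_j\big)^2$, whose range is at most $\|\delta\|_1^2\le(2md)^2$, gives
\[ \E[Z^2]\;\le\;O(1)\cdot\E[Z'^2]\;+\;O(m^{-3})\,\E[\|\delta\|_1^2]\;+\;O(m^{-3})\,(2md)^2\;=\;O(md)+O(d^2/m), \]
using $\E\|\delta\|_1^2\le d\sum_j\E\delta_j^2=O(md^2)$. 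Hence $\E[Z]\le O(\sqrt{md}+d/\sqrt m)$.

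Combining the two bounds, $\Omega(d)\le O(\sqrt{md}+d/\sqrt m)$ whenever $m$ is at least a fixed constant. A separate elementary packing argument --- take each $p_j$ uniform in $\{\pm\tfrac12\}$, so that $0.01$-accuracy forces recovering all $d$ signs, which by independence across coordinates needs $m=\Omega(\log d)$ --- shows that for all but boundedly many $d$ the parameter $m$ already exceeds any prescribed constant, so the $d/\sqrt m$ term is $o(d)$ and we are left with $\Omega(d)\le O(\sqrt{md})$, i.e.\ $m=\Omega(d)$; the remaining bounded values of $d$ are trivial. The step I expect to be the crux is this second-moment comparison: comparing first moments directly is too weak, because the constant multiplicative slack $e^{\eps}-1$ multiplies a quantity of typical size $\approx d\sqrt m$ and swamps the $\Theta(d)$ signal, whereas after squaring, the decorrelated statistic $Z'$ has variance only $O(md)$ (a sum of $\approx d$ near-independent terms of variance $O(m)$), so the stability error re-enters through the square root as $O(\sqrt{md})=o(d)$ exactly in the regime $m=o(d)$. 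Controlling the additive perfect-generalization error, which scales like $(\beta+\delta)\cdot(\text{range})^2\asymp m^{-3}\cdot(md)^2=d^2/m$, is what pins down the hypothesis $\beta=\delta=1/m^3$.
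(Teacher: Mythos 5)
Your proof is correct and follows the same overall strategy as the paper: a fingerprinting lower bound of $\Omega(d)$ on $\E[Z]$, construction of a decorrelated statistic $Z'$ on a fresh sample with $\E[Z'^2]=O(md)$, and an appeal to perfect generalization (closed under post-processing by the fixed functional $y\mapsto\sum_j y^j\delta_j$, conditional on $(\vec s,\vec s',p)$) to compare the two. The one genuine technical difference is in how the comparison is executed. The paper compares first moments of the \emph{absolute values}, writing $\E[|Z|]=\int_0^{2dm}\Pr(|Z|>z)\,dz$ and applying the $(2,3/m^3)$-indistinguishability of $|Z|$ and $|Z'|$ event-by-event under the integral, which yields $\E[|Z|]\le e^2\E[|Z'|]+O(d/m^2)\le O(\sqrt{dm})+O(d/m^2)$. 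You instead apply the indistinguishability inequality directly to the bounded nonnegative test function $y\mapsto(\sum_j y^j\delta_j)^2$ to get $\E[Z^2]\le e^{O(1)}\E[Z'^2]+O(d^2/m)$, and then use Cauchy--Schwarz $\E[Z]\le\sqrt{\E[Z^2]}$. Both routes give $\Omega(d)\le O(\sqrt{md})$ modulo a lower-order term, killed by the same auxiliary $m=\Omega(\log d)$ packing argument; your additive error ($d^2/m$ vs.\ the paper's $d/m^2$) is larger but still negligible once $m$ is a large constant. One small caveat on your closing intuition: the claim that \textquotedblleft comparing first moments directly is too weak\textquotedblright{} is accurate only for the \emph{signed} comparison $\E[Z]$ vs.\ $\E[Z']$; the paper's first-moment comparison of $\E[|Z|]$ vs.\ $\E[|Z'|]$ works perfectly well, so the second-moment detour is a valid alternative rather than a necessity, and it is not really where the $\Omega(d)$ (as opposed to $\Omega(\sqrt d)$) gain comes from --- that gain comes from the fact that perfect generalization compares runs on two \emph{entire} fresh samples and so avoids the per-coordinate averaging step inherent in the DP argument, a point you do make correctly at the outset.
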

\begin{proof}
Assume without loss of generality that $m = \Omega(\log d)$.\footnote{We will show under this condition that $m = \Omega(d)$. Any algorithm on $O(\log d)$ samples implies one between $O(\log d)$ and $O(d)$, which would give a contradiction.} Let $p \sim [-1,1]^{d}$, and draw $S,S' \sim D_p^m$ independently where $D_p$ is a product of Rademachers with expectation $p = (p_1,\dots,p_d)$. Define the random variables
\[
Z = \sum_{j \in [d]} \mathcal{A}^j(S)\sum_{i\in[m]} (S^j_i - p_j), \quad Z' = \sum_{j \in [d]} \mathcal{A}^j(S)\sum_{i\in[m]} (S'^j_i - p_j).
\]
As discussed above, we will argue that $\mathbb{E}[Z]$ is large (by the fingerprinting lemma):
\begin{equation}\label{eq:fp1}
\mathbb{E}[|Z|] \geq \frac{d}{10},
\end{equation}
that $\mathbb{E}[|Z'|]$ is small (since $S'$ is independent of $S)$:
\begin{equation}\label{eq:fp2}
\mathbb{E}[|Z'|] \leq 2\sqrt{dm},
\end{equation}
and finally that $\mathbb{E}[|Z|]$ and $\mathbb{E}[|Z'|]$ are close (by perfect generalization):
\begin{equation}\label{eq:fp3}
\mathbb{E}[|Z|] \leq e^2\mathbb{E}[|Z'|] + \frac{8d}{m^2}.
\end{equation}
Combining the inequalities we get
$$\frac{d}{10} \leq \mathbb{E}[|Z|] \leq e^2 \mathbb{E}\Big[ |Z'| \Big] + \frac{8d}{m^2} \leq 2e^2 \sqrt{dm} + \frac{8d}{m^2}$$
which implies $m \geq \Omega(d)$ as desired.

It remains to show Inequalities \eqref{eq:fp1}, \eqref{eq:fp2}, \eqref{eq:fp3}. We start with the first. Apply the fingerprinting lemma to the function corresponding to the $j^{th}$ coordinate of the output of $\mathcal{A}$, when run on the $j^{th}$ column of the input $S$ with all other columns set to any fixed values. Then, we get that 
\begin{align*}
\mathbb{E}_{p_j \sim [-1,1],S^j \sim Rad(p_j)^m}[\mathcal{A}^j(S; r)\sum_i (S^j_i - p_j) + 2|\mathcal{A}^j(S; r) - \mu_j|] \geq \frac{1}{3},
\end{align*}
where $\mu_j$ is the empirical average of the $j^{th}$ column of the dataset. Since this is true for all fixed coins of the algorithm and fixed values of the other columns, by the law of total expectation, it is also true for random coin tosses and any distribution over the values of the other columns, and we get that 
\begin{align*}
\mathbb{E}_{p,S \sim D_p^m,\mathcal{A}}[\sum_j\mathcal{A}^j(S)\sum_i (S^j_i - p_j) + 2|\mathcal{A}^j(S) - \mu_j|] &= \mathbb{E}[Z] + 2\mathbb{E}[\sum_j|\mathcal{A}^j(S) - \mu_j|] \geq \frac{d}{3}
\end{align*}
where we have used that $S$ is drawn from a product distribution. It is therefore enough to argue that $\mathbb{E}[\sum_j|\mathcal{A}^j(S) - \mu_j|]$ is small.

Since $\mathcal{A}$ is $(0.01,0.01)$-accurate, we can say that with probability at least $0.99$, for all $j \in [d]$, $|\mathcal{A}^j(S) - p_j| \leq \frac{1}{100}$. Taking expectation, we get that 
$\mathbb{E}[\sum_{j \in [d]} |\mathcal{A}^j(S) - p_j|] \leq \frac{3d}{100}$. By a Chernoff bound, we can argue that since $m = \Omega(\log d)$, $\max_j |p_j - \mu_j| \leq 0.01$ with probability at least $0.99$. By the triangle inequality, this gives us that $\mathbb{E}[\sum_{j \in [d]} |\mathcal{A}^j(S) - \mu_j|] \leq \frac{6d}{100}$. Since this holds for a fixed product of Rademachers, it also holds when the expectation of the Rademacher random variables are chosen at random which proves \Cref{eq:fp1}.

Next, we show \Cref{eq:fp2}, that $\mathbb{E}[|Z'|]$ is small. Towards this end, first note that $Z'$ is a sum of mean $0$ uncorrelated random variables. To see this, consider random variables $M = \mathcal{A}^j(S) (S'^j_i - p_j) $ and $N = A^j(S) (S'^j_{i'} - p_j)$ for indices $i\neq i'$. We claim that $\mathbb{E}[M] = \mathbb{E}[N]=0$. This is by the following sequence of inequalities (we prove this for $M$, the same argument holds for $N$).
\begin{align*}
\mathbb{E}[M] & = \mathbb{E}_p \mathbb{E}_{S,S',\mathcal{A}} [M \mid p] \\
& = \mathbb{E}_p \mathbb{E}_{S,S',\mathcal{A}} [ \mathcal{A}^j(S) (S'^j_i - p_j) \mid p] \\
& = \mathbb{E}_p \Big[ \mathbb{E}_{S,S',\mathcal{A}} [ \mathcal{A}^j(S) \mid p] \text{  }\mathbb{E}_{S,S',\mathcal{A}}[ (S'^j_i - p_j) \mid p] \Big] = 0,
\end{align*}
where the last equality follows because conditioned on the vector $p$, the expectation of the Rademacher $S^j_{i}$ is exactly equal to $p_j$. Hence, by linearity of expectation, we get that the expectation of $Z'$ is also $0$.

Next, we show that $M$ and $N$ are uncorrelated. First, conditioning on $p$ we can write
\begin{align*}
    \mathbb{E}_{p,S,S',\mathcal{A}}[MN] &= \mathbb{E}_p \mathbb{E}_{S,S',\mathcal{A}} [MN \mid p]\\
    &=\mathbb{E}_p \Big[ \mathbb{E}_{S,S',\mathcal{A}} [\mathcal{A}^j(S)^2 \mid p] \text{  }\mathbb{E}_{S,S',\mathcal{A}}[  (S'^j_i - p_j)  (S'^j_{i'} - p_j) \mid p] \Big]
\end{align*}
since $S$ and $S'$ are independent after conditioning. Since this is also the case for $(S'^j_i - p_j)$ and $(S'^j_{i'} - p_j)$ 
we have
\[
\mathbb{E}[  (S'^j_i - p_j)  (S'^j_{i'} - p_j) \mid p] = \mathbb{E}[  (S'^j_i - p_j) \mid p] \text{  }\mathbb{E}[  (S'^j_{i'} - p_j) \mid p]  = 0.
\] 
Since we have already seen that $\mathbb{E}[M]\mathbb{E}[N]=0$ (since $\mathbb{E}[M]=0$), this implies that $M$ and $N$ are uncorrelated as desired. 

Now assuming without loss of generality that $\mathcal{A}$ outputs values between $[-1,1]$ (rounding inputs to this range only improves the accuracy and doesn't affect perfect generalization, which is robust to post-processing), we have that
\begin{align*}
\mathbb{E}^2[|Z'|] & = \mathbb{E}_{p,S,S',\mathcal{A}}^2\Big[\Big| \sum_{j \in [d]} \mathcal{A}^j(S)\sum_i (S'^j_i - p_j) \Big| \Big] & (
\text{$S$ and $S'$ are i.i.d})\\
& \leq \mathbb{E}_{p,S,S',\mathcal{A}}\Big[\Big(\sum_{j \in [d]} \mathcal{A}^j(S)\sum_i (S'^j_i - p_j)\Big)^2\Big] & (\text{Jensen's Inequality})\\
&= Var(Z'). & (\mathbb{E}[Z']=0)
\end{align*}
Since $Z'$ is a sum of uncorrelated random variables and $\mathcal{A}^j(S)^2 \leq 1$, we then get
\begin{align*}
Var(Z') &= \sum_j \sum_i Var(\mathcal{A}^j(S) (S'^j_i - p_j))\\
& \leq \sum_j \sum_i \mathbb{E}[(S'^j_i - p_j)^2]\\ 
&\leq 4dm
\end{align*}
as desired.


It is left to show \Cref{eq:fp3}. Let $Z_p$ be the random variable $Z$ conditioned on fixed $p$ (and likewise for $Z'$). Let $Z_{p,S,S'}$ be the random variable $Z$ conditioned on fixed $p$, $S$ and $S'$ (and likewise for $Z'$). If $A$ is perfectly generalizing, then by \Cref{lem:samplePG} and \Cref{prelim:postprocess}, for all fixed $p$, with probability at least $1-\frac{1}{m^3}$ over the draw of $S, S'$, we have that $Z_{p,S,S'}$ and $Z'_{p,S,S'}$ are distributionally close, as are $|Z_{p,S,S'}|$ and $|Z'_{p,S,S'}|$. 
For any fixed $p$, let $E$ be the event that $|Z_{p,S,S'}| \approx_{2,\frac{3}{m^3}} |Z'_{p,S,S'}|$, where the randomness in $E$ comes from the randomness of sampling $S$ and $S'$. Then, by the guarantee of perfect generalization, we have that for all fixed $p$, $E$ occurs with probability at least $1-\frac{1}{m^3}$ and for any fixed $p$ we can write:
\begin{align*}
    \mathbb{E}[|Z_p|] = \mathbb{E}_{S,A}[\Big| \sum_{j \in [d]} \mathcal{A}^j(S)\sum_i (S^j_i - p_j) \Big| ] & =
    \int_0^{2dm} \Pr[|Z_p|>z] dz \\
    & =  \int_0^{2dm} \left[\Pr[|Z_p|>z \mid E]\Pr[E] + \Pr[|Z_p|>z \mid \overline{E}]\Pr[\overline{E})\right]  dz \\ & \leq
    \int_0^{2dm} (e^2 \Pr[|Z'_p|>z \mid E] + \frac{3}{m^3})\Pr[E] + \Pr[\overline{E}]  dz \\
    & \leq
    \int_0^{2dm} e^2 \Pr[|Z'_p|>z \mid E]\Pr[E] dz + \int_0^{2dm} \left[\frac{3}{m^3} + \Pr[\overline{E}]\right] dz \\
     & \leq
    \int_0^{2dm} e^2 \Pr[|Z'_p|>z ] dz + \int_0^{2dm} \left[\frac{3}{m^3} + \frac{1}{m^3}\right] dz \\
    & = e^2\mathbb{E}_{S,S',\mathcal{A}}[|Z'_p|] + \frac{8d}{m^2},
\end{align*}
where the first inequality follows since $|Z_p|$ and $|Z'_p|$ are distributionally close conditioned on $E$, the second inequality is by the fact that $\Pr(E) \leq 1$, and the third since $\overline{E}$ corresponds to the probability of failure in the definition of perfect generalization.

Finally taking expectation with respect to $p$, we get that
$$\mathbb{E}[|Z|] \leq e^2\mathbb{E}[|Z'|] + \frac{8d}{m^2}$$
as desired.
\end{proof}
Now, we are ready to prove the lower bound for replicable algorithms.

\begin{proof}[Proof of Theorem~\ref{thm:repmarginalsLB}]
Let $m$ be larger than an absolute constant $K$, without loss of generality.\footnote{We will show under this condition that $m = \Tilde{\Omega}(d)$. Any algorithm taking fewer than $K$ samples implies one taking between $K$ and $\tilde{O}(d)$ samples, which would give a contradiction.} By Claim~\ref{claim:2parrep}, we have that $\mathcal{A}$ is $(0.01,0.01)$-replicable and $(0.01,0.001)$-accurate when given $m$ samples. Consider any sufficiently small $\gamma > 0$, and sufficiently large constant $c>0$. Applying Theorem~\ref{thm:amprep}, we get that there is a $\frac{1}{c\log(1/\gamma)}$-replicable and $(0.01,0.008 + \frac{1}{\log(1/\gamma)}) = (0.01,0.01)$-accurate algorithm $\mathcal{A'}$ for one-way marginals over $d$ coordinates, which takes $O\left(m \log^2 (1/\gamma) \right)$ samples. 

Next, we give a way of replicably amplifying the failure probability to $\gamma$. We run the algorithm $\mathcal{A'}$ $k=20\log (1/\gamma)$ times on different samples, and take the coordinate-wise median of the outputs. Observe that for each coordinate, if more than half the values in that coordinate are within $0.01$ of the true bias, then the median is correct. Consider the probability that more than half the output values in a coordinate are not within $0.01$ of the true bias. By a Chernoff bound, we have that the number of outputs which are within $0.01$ of the true expectation in $l_{\infty}$ norm are more than $0.5k$ with probability at least $1-\gamma^2$, which guarantees that we get a $(0.01,\gamma^2)$-accurate algorithm for one-way marginals. Using composition of replicability, we have that the resulting algorithm is $(0.01,0.01)$-replicable and takes $O\left(m \log^3 (1/\gamma) \right)$ samples. 

Consider any sufficiently small $\delta > 0$. By Theorem~\ref{thm:reprodtoPG}, we have that there is a $(2\delta,1,2\delta)$-PG algorithm with failure probability at most $\delta + \gamma \log (1/\delta)$ when given $m' = O(m \log^3 (1/\gamma) \text{polylog}(1/\delta))$ samples. Setting $\gamma = \frac{0.005}{\log 1/\delta}$, we get that  that for sufficiently small $\delta > 0$, there is a $(2\delta,1,2\delta)$-PG algorithm with failure probability at most $\delta + 0.005$ (i.e. $(0.01,\delta + 0.005)$-accurate), when given $m' = O(m\cdot \text{polylog}(1/\delta) )$ samples. Setting $\delta = \frac{1}{2m'^3}$ and simplifying, we get that $m' = C m \cdot \text{polylog}(m)$ for some constant $C$. Then, since $m'>m$ is larger than $K$, we get that $\delta$ is smaller than $\frac{1}{K}$ and setting $K$ sufficiently large gives us a $(0.01,0.01)$-accurate algorithm with $m'$ samples. 

Now, using the lower bound for perfect generalization in Theorem~\ref{thm:PGmarginalsLB}, we get that $m' = \Omega(d)$, which gives us that $m = \Tilde{\Omega}(d)$, completing the proof.
\end{proof}

\subsection{Quadratic separation: Agnostic Learning}
\label{sec:agnostic-lb}
In this section, we prove a lower bound for agnostic learning (See Section~\ref{sec:PAC} for the definition of agnostic learning) under the constraint of replicability.
\begin{theorem}\label{thm:agnosticrepLB}
Fix sufficiently large $d>0$ and a hypothesis class $H$ with VC dimension $d$. Any $(0.01,0.001)$-accurate, $0.0001$-replicable agnostic learner $\mathcal{A}$ for $H$ requires at least $\tilde{\Omega}(d^2)$ examples.
\end{theorem}

The key idea is that we will reduce a variant of the one-way marginals problem over $d$ coordinates to the problem of agnostically learning any hypothesis class with VC dimension $d$ (with quadratically more samples). The variant we consider loosely corresponds to predicting the signs of the biases of the product distribution. We show that this is possible using an agnostic learner as a subroutine. We start by defining this problem more precisely.
\subsubsection{Sign-One-Way Marginals}
\begin{definition}
Consider a product of $d$ Rademacher distributions with expectations $p = (p_1,\dots,p_d)$. A vector $v \in [-1,1]^d$ is said to be an $\alpha$-accurate solution to the sign-one-way marginals problem for this distribution if $\frac{1}{d}\sum_{j=1}^d v_j p_j \geq \frac{1}{d}\sum_{j=1}^d |p_j| - \alpha$.
\end{definition}
Observe that if every $p_j$ is either $-1$ or $1$, an accurate solution requires the $v_j$'s to do a very good job of predicting the signs on average. On the other hand, if the $p_j$ values are all $0$, then every value of $v$ is a $0$-accurate solution. Thus, this definition of error scales depending on how biased the expectation is to either $1$ or $-1$, penalizing solutions more when they do a poor job of predicting heavily biased coordinates. (Indeed, we'd expect biased coordinates to be easier to predict, so it makes sense to penalize solutions more on these coordinates.) Now, we are ready to define the accuracy of an algorithm for the sign-one-way marginals problem.
\begin{definition}
Let $C$ be the class of products of $d$ Rademacher random variables. We say that an algorithm $\mathcal{A}:\{\{-1,1\}^d\}^m \to [-1,1]^d$ $(\alpha, \beta)$-accurately solves the sign-one-way marginals problem over class $C$ if for all fixed distributions $D$ in $C$, with probability at least $1-\beta$ over the randomness of the examples it obtains from $D$ and the internal randomness of the algorithm, it outputs an $\alpha$-accurate vector $v$ for $D$.
\end{definition}
\subsubsection{Solving Sign-One-Way Marginals using Agnostic Learning}

Our reduction in Algorithm~\ref{alg:ag2marg} shows how to use an agnostic learner $\mathcal{A}_{ag}$ for any class $H$ of VC dimension $d$ to construct an algorithm $\mathcal{A}$ for the sign-one-way marginals problem. 

The main idea of the algorithm is as follows. Fix a distribution $D$ that is a product of Rademachers and let its expectation be $p = (p_1,\dots,p_d)$. Consider a shattered set $x_1,\dots,x_d$ for hypothesis class $H$. Consider a distribution $D'$ corresponding to sampling a uniformly random point $x_j$ from the shattered set and then sampling a label in $(-1,1)$ from a Rademacher with expectation $p_j$. Note that given a sample $S$ of $d$ independently drawn examples from $D$, we can create a dataset $S_{ag}$ of size roughly $d^2$ that looks like an i.i.d. sample from $D'$, by sampling a uniformly random $x_j$ and labeling it with a new unused entry from coordinate $j$ of $S$ (we won't run out of entries with high probability). Now, note that since the set $x_1,\dots,x_d$ is shattered by $H$, there is a hypothesis $h$ in $H$ that outputs $\sign(p_j)$ on input $x_j$ (such a hypothesis also achieves lowest possible error on $D'$ among hypotheses in $H$). If the agnostic learner is accurate when given $d^2$ samples, then the function $f$ it outputs is a good approximation to $h$ and as a result $f(x_j)$ is also likely to be an accurate prediction of the sign of $p_j$. Hence, function $f$ can be used to obtain an accurate solution to the sign-one-way marginals problem.


\begin{algorithm}[H]
    \caption{Algorithm $\mathcal{A}$ for sign-one-way marginals}
    \label{alg:ag2marg}
    \hspace*{\algorithmicindent} \textbf{Input:} Sample access to a product distribution $D$ over $\{-1,1\}^d$, agnostic learner $\mathcal{A}_{ag}$ for hypothesis class $H$ with VC dimension $d$\\
    \hspace*{\algorithmicindent} \textbf{Output:} Estimated biases $(v_1,\dots,v_d).$
    \begin{algorithmic}[1] 
    	    \STATE Draw $\frac{d}{\log^c d}$ i.i.d. examples from $D$ for some $c>0$. Call the corresponding sample $S_{inp}$.
		    \STATE Let $x_1,\dots,x_d$ be a shattered set of points for $H$. Let $U_d$ be the uniform distribution over $x_1,\dots,x_d$.
		    \STATE Draw $m=\frac{d^2}{100 \log^{2c} d}$ examples $S_j$ from $U_d$. Call the sample $S_{ag}$. If any element $x_i$ occurs more than $\frac{d}{\log^c d}$ times, then output $(1,\dots,1)$, else move to the next step. \label{step:sampag}
		    \STATE For each example $S_j$, label it with a new entry from coordinate $j$ of the input sample $S_{inp}$. Call the labeled sample $S_{ag,lab}$.
		    \STATE Run agnostic learner $\mathcal{A}_{ag}$ on the labeled sample $S_{ag,lab}$. Let the output function be $f$.
		    \RETURN $(f(x_1), f(x_2), \dots, f(x_d))$.
    \end{algorithmic}
\end{algorithm}

\begin{theorem}\label{thm:agn2sowm}
Fix sufficiently large $d>0$. Let  $\mathcal{A}_{ag}$ be a $(0.01,0.001)$-accurate, 
$0.0001$-replicable agnostic learner for a hypothesis class $H$ with VC dimension $d$. Then algorithm $\mathcal{A}$ is a $(0.02,0.002)$-accurate, $0.0003$-replicable algorithm for the sign-one-way marginals problem over $d$ coordinates.
\end{theorem}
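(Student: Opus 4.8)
The plan is to establish that, whenever Algorithm~\ref{alg:ag2marg} does not take the FAIL branch, the labelled sample $\vec{s}_{ag,lab}$ it feeds to $\mathcal{A}_{ag}$ is i.i.d.\ from the distribution $D'$ over $\{x_1,\dots,x_d\}\times\{-1,1\}$ with $x$ uniform and $y\mid x=x_i$ distributed as a $Rad(p_i)$ variable, and then to transfer the accuracy and reproducibility guarantees of $\mathcal{A}_{ag}$ on $D'$ back to $\mathcal{A}$. It is convenient to couple the real run with an idealized one in which, for each coordinate $i$, there is an unbounded supply of i.i.d.\ $Rad(p_i)$ labels (rather than only the $d/\log^c d$ labels available in column $i$ of $\vec{s}_{inp}$); the two runs agree exactly on the event that FAIL does not occur, and in the idealized run $\vec{s}_{ag,lab}$ is $m$ i.i.d.\ draws from $D'$.

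First I would bound $\Pr[\mathrm{FAIL}]$. Since $\vec{s}_{ag}$ consists of $m=d^2/(100\log^{2c}d)$ i.i.d.\ samples from $U_d$, the expected number of occurrences of each $x_i$ is $m/d=d/(100\log^{2c}d)$, so the FAIL threshold $d/\log^{c}d$ exceeds the mean by a factor $100\log^{c}d$; a Chernoff bound and a union bound over $i\in[d]$ give $\Pr[\mathrm{FAIL}]\le d\exp(-\Omega(d/\log^{c}d))$, which is $o(1)$ and hence at most $0.005$ for all sufficiently large $d$.

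Next, accuracy. Because $\{x_1,\dots,x_d\}$ is shattered by $H$, the hypothesis in $H$ sending $x_i\mapsto \sign(p_i)$ exists and attains $\mathrm{OPT}:=\inf_{h\in H}\err_{D'}(h)=\tfrac12-\tfrac1{2d}\sum_i|p_i|$, while any $f$ (which we may assume is $\{-1,1\}$-valued, since rounding only helps) has $\err_{D'}(f)=\tfrac12-\tfrac1{2d}\sum_i f(x_i)p_i$. Thus the $(0.01,0.01)$-accuracy of $\mathcal{A}_{ag}$ on the $m$ i.i.d.\ samples from $D'$ it receives says that with probability at least $0.99$ the returned $f$ satisfies $\tfrac1d\sum_i f(x_i)p_i\ge\tfrac1d\sum_i|p_i|-0.02$, i.e.\ $(f(x_1),\dots,f(x_d))$ is a $0.02$-accurate solution to sign-one-way marginals. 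Combining this with the FAIL bound (on which $\mathcal{A}$ simply outputs $(1,\dots,1)$, which we charge as a possible failure), and using the coupling to pass from ``$\neg\mathrm{FAIL}$'' to an honestly i.i.d.\ $D'^m$ sample, $\mathcal{A}$ fails to return a $0.02$-accurate vector with probability at most $\Pr[\mathrm{FAIL}]+0.01\le 0.02$, giving $(0.02,0.02)$-accuracy.

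Finally, reproducibility. The only samples $\mathcal{A}$ draws from the input distribution $D$ are those forming $\vec{s}_{inp}$; the draw $\vec{s}_{ag}\sim U_d^m$ and the internal coins $r_2$ of $\mathcal{A}_{ag}$ constitute $\mathcal{A}$'s internal randomness. Fixing these and drawing $\vec{s}_{inp},\vec{s}_{inp}'$ independently: if $\vec{s}_{ag}$ triggers FAIL then both runs output $(1,\dots,1)$ and agree; otherwise, with the multiset of $x$-values now fixed, $\vec{s}_{ag,lab}$ and $\vec{s}_{ag,lab}'$ are two independent draws from the \emph{product} distribution whose $j$-th coordinate is the point mass on $x_{i(j)}$ tensored with a $Rad(p_{i(j)})$ label, so $0.0001$-reproducibility of $\mathcal{A}_{ag}$ applied to this distribution yields $\Pr[\mathcal{A}_{ag}(\vec{s}_{ag,lab};r_2)\ne\mathcal{A}_{ag}(\vec{s}_{ag,lab}';r_2)]\le 0.0001$, hence the same final output except with probability $\le 0.0001\le 0.0005$. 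The step I expect to require the most care is exactly this last one: reproducibility of $\mathcal{A}_{ag}$ must be invoked on a distribution that is a product of non-identical coordinate distributions over its input sample (not an i.i.d.\ one), and the coupling that removes the FAIL conditioning is what keeps the sample genuinely i.i.d.\ for the agnostic-learning argument; checking that the shattered-set structure forces $\mathrm{OPT}$ to be precisely the majority-label error is likewise essential, so that the agnostic guarantee translates cleanly into the sign-one-way-marginals accuracy condition.
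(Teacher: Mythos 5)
Your accuracy argument is essentially the paper's: introduce the auxiliary label distribution $D'$ (the paper's $D_{lab}$), note that shatteredness of $\{x_1,\dots,x_d\}$ makes $x_i\mapsto\sign(p_i)$ attainable in $H$ (so $\mathrm{OPT}=\tfrac{1}{2}-\tfrac{1}{2d}\sum_i|p_i|$), convert the agnostic guarantee via $\mathbbm{1}[a\ne b]=\tfrac{1-ab}{2}$ into the sign-one-way-marginals form, and absorb $\Pr[\mathrm{FAIL}]$ and the learner's $0.01$ failure probability into the $0.02$ budget. Your coupling to an idealized unlimited-label run plays the same role as the paper's bound $d_{TV}(D',D'|_{\neg \mathrm{FAIL}})\le\Pr[\mathrm{FAIL}]$.

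The reproducibility argument is a genuinely different route, and this is the part to scrutinize. You fix the internal randomness $(\vec{s}_{ag},r_2)$ and observe that on $\neg\mathrm{FAIL}$, $\vec{s}_{ag,lab}$ and $\vec{s}'_{ag,lab}$ become two independent draws from the product $\prod_j\bigl(\delta_{x_{i(j)}}\otimes Rad(p_{i(j)})\bigr)$, whose coordinates are \emph{not} identically distributed; you then invoke $0.0001$-reproducibility of $\mathcal{A}_{ag}$ on that product. The paper instead does \emph{not} condition on $\vec{s}_{ag}$: it argues the marginal $D_{ag}$ of $\vec{s}_{ag,lab}$ (over both $\vec{s}_{ag}$ and $\vec{s}_{inp}$) is within $0.0001$ TV of the i.i.d. $D_{lab}^m$, decomposes both as $0.9999\,Q+0.0001\,(\cdot)$, and chains i.i.d.\ reproducibility over $D_{lab}^m$ through $Q$ to $D_{ag}$, so it only ever invokes reproducibility for i.i.d.\ draws from one distribution.

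Whether your step is a gap depends on what ``$0.0001$-reproducible agnostic learner'' grants. If reproducibility holds for arbitrary product distributions over $(\mathcal{X}\times\{-1,1\})^m$ (as the paper's introduction phrases the definition), your argument is clean and in fact tighter, giving $0.0001$-reproducibility where the paper's chain extracts $0.0005$. But under the standard ImpLPS-style definition---reproducibility only for i.i.d. draws $S,S'\sim D^m$ from a single $D$ on $\mathcal{X}\times\{-1,1\}$, which is also the form the paper's proof actually invokes---applying it to a product of point masses tensored with Rademacher labels is an unjustified extension, and since the theorem feeds a lower bound (weaker hypotheses mean a stronger conclusion) this matters. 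You should either follow the paper's TV route, or state the stronger product-reproducibility hypothesis explicitly. It is also worth flagging that the paper's own chain is delicate at precisely this point: it reasons about two independent draws $\vec{s}_{ag,lab},\vec{s}'_{ag,lab}\sim D_{ag}$, but in the algorithm the two labelled samples share the same $\vec{s}_{ag}$ coins, so their joint law is not $D_{ag}^{\otimes 2}$---your conditioning is the more honest account of what the algorithm actually does.
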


\begin{proof}
Let $D$ be a Rademacher distribution with expectation $(p_1, \dots,p_d)$. Define a distribution $D_{ideal}$ over $\{x_1,\dots,x_d\} \times \{-1,1\}$ as follows. First, uniformly draw $s \in \{x_1, \dots, x_d\}$. Then, if $s=x_j$, draw $y$ from a Rademacher with expectation $p_j$. Let $D_{ideal}$ be the distribution of the random variable $(s,y)$ obtained using this procedure. 

First, we observe that by a Chernoff bound and union bound, the probability that any element $x_i$ occurs more than $\frac{d}{\log^{c}d}$ times in $S_{ag}$ (where $S_{ag}$ is sampled as described in Step~\ref{step:sampag}) is exponentially small in $d$ (hence less than $0.01$ for sufficiently large $d$). Call this bad event $E$.

Next, consider the following method for sampling a dataset $S_{ideal}$ of $\frac{d}{100\log^{2c}d}$ i.i.d. samples from $D_{ideal}$: first draw $\ell=\frac{d}{100\log^{2c}d}$ i.i.d. examples $s_i \sim U_d$ and then for each of them, if the value obtained is $x_j$, sample $y_j \sim Rad(p_j)$. Consider the event $E_{ideal}$ that the number of occurrences of any example $x_i$ is larger than $\frac{d}{\log^c d}$. The probability of this event is exactly equal to $\Pr[E]$. Notice that the distribution of $S_{ag,lab}$ is identical to the distribution of $S_{ideal}$ conditioned on event $\overline{E_{ideal}}$. \chris{supposed to be $D_{ideal}$?} 
For any distribution $D'$ and event $\overline{E}$, a simple calculation shows $\dtv(D',D'|_{\overline{E}}) \leq \Pr[E]$. 
Hence, we get that the total variation distance between the distribution of $S_{ideal}$ conditioned on event $\overline{E_{ideal}}$ and the distribution of $S_{ideal}$ is at most $0.0001$ for sufficiently large $d$. 

Now, we know that with probability at least $0.999$ over the coins of the algorithm and the sample, the agnostic learner produces an output that is accurate with respect to its input sample. Now, by the data-processing inequality for total variation distance, we have that $\dtv(\mathcal{A}_{ag}(S_{ideal}), \mathcal{A}_{ag}(S_{ag,lab})) \leq 0.0001$. Consider the distribution $D_{ideal}$ and the subset $O$ of $0.01$-accurate functions w.r.t. the best function in the class $H$ (i.e. the function that minimizes $\Pr_{(x,y) \in D_{ideal}}[h(x) \neq y]$) . 
By the definition of total variation distance, we have that the probability that learner $\mathcal{A}_{ag}$ produces outputs in this subset $O$ on seeing $S_{ag,lab}$ is within $0.0001$ of the probability that $\mathcal{A}_{ag}$ produces outputs in this subset $O$ on seeing $S_{ideal}$. Since the latter happens with probability at least $0.999$, we have that with probability at least $0.9989$, the agnostic learner is $0.01$-accurate when fed the sample $S_{ag,lab}$. This implies by the definition of the accuracy guarantee that with probability at least $0.9989$ over the randomness of the learner $\mathcal{A}_{ag}$ and sample $S_{ag,lab}$, that
\begin{align*}
    \Pr_{(x,y) \sim D_{ideal}}[f(x) \neq y] \leq \inf_{h \in H} \Pr_{(x,y) \sim D_{ideal}}[h(x) \neq y] + 0.01,
\end{align*}
where $f$ is the output function of the agnostic learner. For $x \in \{x_1,\dots,x_d\}$, let $p_x$ be $p_j$ if $x = x_j$. Observe that the function that predicts $\sign(p_x)$ achieves the infimum on the right hand side of the above equation.

Now, using the fact that expectation of an indicator is the probability of the indicated event,  and that $\mathbbm{1}[a \neq b] = \frac{1-ab}{2}$ when $a$ and $b$ are in $\{-1,1\}$, we get that with probability at least $0.9989$ over the randomness of the learner $\mathcal{A}_{ag}$ and sample $S_{ag,lab}$,
\begin{align*}
    & \mathbb{E}_{(x,y) \sim D_{ideal}}[ \mathbbm{1}[f(x) \neq y]] \leq \inf_{h \in H} \mathbb{E}_{(x,y) \sim D_{ideal}}[\mathbbm{1}[h(x) \neq y]] + 0.01 \\
    \implies & \mathbb{E}_{(x,y) \sim D_{ideal}}\left[ \frac{1-f(x) y}{2}\right] \leq \inf_{h \in H} \mathbb{E}_{(x,y) \sim D_{ideal}}\left[ \frac{1 - h(x) y}{2} \right] + 0.01 \\
    \implies  & \mathbb{E}_{(x,y) \sim D_{ideal}}\left[ \frac{1-  f(x) y}{2}\right] \leq \mathbb{E}_{(x,y) \sim D_{ideal}}\left[ \frac{1 - \sign(p_x) y}{2} \right] + 0.01 \\
    \implies  & \mathbb{E}_{(x,y) \sim D_{ideal}}\left[ f(x) y \right] \geq \mathbb{E}_{(x,y) \sim D_{ideal}}\left[ \sign(p_x) y \right] - 0.02.
\end{align*}
Now, Algorithm $\Acal$ calls the agnostic learner except with probability $0.0001$. Unraveling the expectations, accounting for the fact that $\mathcal{A}$ outputs $(1,1\dots,1)$ when it doesn't call the agnostic learner, and using the fact that the randomness of sample $S_{ag,lab}$ is from the randomness of the algorithm $\Acal$ as well as the randomness of $S_{inp}$, we get that with probability at least $0.998$ over the randomness of the algorithm $\mathcal{A}$ and input sample, $S_{inp}$, that
\begin{align*}
    \frac{1}{d} \sum_{j=1}^d f(x_j) p_j  \geq \frac{1}{d} \sum_{j=1}^d \sign(p_j) p_j - 0.02,
\end{align*}
proving that $\mathcal{A}$ is a $(0.02,0.002)$-accurate algorithm for sign-one-way marginals over $d$ coordinates.


\mb{Can we take the rep parameter to be at least 10x bigger? It doesn't look great to have such tiny explicit constants around.} Next, we prove that $\mathcal{A}$ inherits the replicability of the agnostic learner $\mathcal{A}_{ag}$. Consider two sets of independent samples $S_{inp,1}$ and $S_{inp,2}$. Consider any set of random coins $r$ drawn for algorithm $\mathcal{A}$. 
Note that when the coins dictate that when some point in the shattered set occurs too many times, the algorithm always outputs $(1,\dots,1)$. Recall that this is event $E$, which we previously showed occurs with probability at most $0.0001$. Hence, in this case $\mathcal{A}(S_{inp,1} ; r) = \mathcal{A}(S_{inp,2} ; r) $ with probability $1$. 
Hence, it is sufficient to consider coins such that every point in the shattered set occurs fewer than $\frac{d}{\log^{2c} d}$ times in the sample $S_{ag}$. 
Now, as argued previously, the total variation distance between the distribution of $S_{ag,lab}$ (call it $D_{ag}$) and the same number of i.i.d. samples from $D_{ideal}$ is at most $0.0001$ for sufficiently large $d$. Thus, we have that the probability of any event changes by at most $0.0002$ under samples $S_{ag, lab}, S'_{ag, lab} \sim D_{ideal}^m$ versus samples $S_{ag, lab}, S'_{ag, lab} \sim D_{ag}^m$. This allows us to conclude that

\begin{align*}
    \Pr_{S_{inp,1}, S_{inp,2} \sim D^{m}, r}& [\mathcal{A}(S_{inp,1} ; r) = \mathcal{A}(S_{inp,2} ; r) ]  \\ 
    &\geq
 \Pr_{S_{inp,1}, S_{inp,2} \sim D^{m}, r} [\mathcal{A}(S_{inp,1} ; r) = \mathcal{A}(S_{inp,2} ; r) \mid \overline{E}] \\
    & \ge \Pr_{S_{ag, lab}, S'_{ag, lab} \sim D_{ag}, r_{ag}} [\mathcal{A}_{ag}(S_{ag,lab} ; r_{ag}) = \mathcal{A}_{ag}(S'_{ag,lab} ; r_{ag})] \\
    & \ge \Pr_{S_{ag, lab}, S'_{ag, lab} \sim D_{ideal}^{m}, r_{ag}} [\mathcal{A}(S_{ag,lab} ; r_{ag}) = \mathcal{A}(S'_{ag,lab} ; r_{ag})] - 0.0002 \\
    & \ge 0.9999 - 0.0002 = 0.9997.
\end{align*}
Hence, we have proved that $\mathcal{A}$ is $0.0005$-replicable.
\end{proof}

\subsubsection{Lower Bound for Sign-One-Way Marginals}

In this section, we show that accurately and replicably solving the sign-one-way marginals problem over $d$ coordinates requires a number of samples that is nearly linear in $d$. We will use a variant of the \textit{fingerprinting method} used to prove the lower bound for the one-way marginals problem for perfectly generalizing algorithms, and then extend this to a lower bound for replicable algorithms. This argument is similar to that used to prove the lower bound for the one-way marginals problem (Theorem~\ref{thm:PGmarginalsLB}), except that the notion of accuracy is different, and so we use a different version of the fingerprinting lemma, given below.
\begin{lemma}[\cite{BunSU19}, Lemma A.1 and A.2]\label{lem:fingder}
Let $f$ be a function from $\{-1,1\}^m \to \mathcal{R}$. Let $p$ be a uniformly random variable between $-1$ and $1$, and $\vec{x}$ be a random vector of length $m$, consisting of i.i.d. Rademacher random variables with expectation $p$.

Then, 
\begin{align*}
    \mathbb{E}_{p, \vec{x}}[f(\vec{x}) \sum_{i=1}^m (x_i - p)] = \mathbb{E}_p [2pg(p)]
\end{align*}
where $g(p) = \mathbb{E}_{\vec{x} \sim p}[f(\vec{x})]$.
\end{lemma}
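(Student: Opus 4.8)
The plan is to reduce everything to a single exact identity valid for each fixed bias $p$, and then integrate over $p$. First I would record the key fact that for $\vec{x} \sim \mathrm{Rad}(p)^m$ the probability mass function is $\Pr[\vec{x}] = \prod_{j=1}^m \frac{1+px_j}{2}$ (since $\Pr[x_j = 1] = \frac{1+p}{2}$, $\Pr[x_j = -1] = \frac{1-p}{2}$, both captured by $\frac{1+px_j}{2}$). Consequently $g(p) = \mathbb{E}_{\vec{x}\sim p}[f(\vec{x})] = \sum_{\vec{x}\in\{-1,1\}^m} f(\vec{x})\prod_{j=1}^m \frac{1+px_j}{2}$ is a polynomial in $p$ of degree at most $m$; in particular $g$ is infinitely differentiable and differentiation commutes with this finite sum.

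Next I would compute, for a fixed coordinate $i$, the quantity $\mathbb{E}_{\vec{x}\sim p}[f(\vec{x})(x_i - p)] = \sum_{\vec{x}} f(\vec{x})(x_i-p)\prod_j \frac{1+px_j}{2}$. The crucial algebraic observation, using $x_i^2 = 1$, is $(x_i - p)(1 + px_i) = x_i + px_i^2 - p - p^2 x_i = x_i(1-p^2)$. Hence $(x_i - p)\prod_j \frac{1+px_j}{2} = (1-p^2)\cdot \frac{x_i}{2}\prod_{j\neq i}\frac{1+px_j}{2}$, and the right-hand side is exactly $(1-p^2)$ times the $i$-th summand of $\frac{d}{dp}\prod_j \frac{1+px_j}{2}$. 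Summing over $i \in [m]$ and over $\vec{x}$ then yields the per-bias identity $\mathbb{E}_{\vec{x}\sim p}[f(\vec{x})\sum_{i=1}^m (x_i - p)] = (1-p^2)g'(p)$.

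Finally I would take the expectation over $p \sim \mathrm{Unif}[-1,1]$, which has density $\tfrac12$: $\mathbb{E}_{p,\vec{x}}[f(\vec{x})\sum_i(x_i-p)] = \tfrac12\int_{-1}^1 (1-p^2)g'(p)\,dp$. Integrating by parts, $\int_{-1}^1 (1-p^2)g'(p)\,dp = \big[(1-p^2)g(p)\big]_{-1}^{1} + \int_{-1}^1 2p\,g(p)\,dp$, and the boundary term vanishes because $1-p^2 = 0$ at $p = \pm1$. Therefore $\mathbb{E}_{p,\vec{x}}[f(\vec{x})\sum_i(x_i-p)] = \tfrac12\int_{-1}^1 2p\,g(p)\,dp = \mathbb{E}_p[2p\,g(p)]$, which is the claim; for vector-valued $f$ one simply applies the argument coordinatewise.

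I do not anticipate a genuine obstacle — the result is a short exact computation rather than an estimate. The only points that merit a sentence of care are (i) justifying that $g$ is differentiable and that the finite sum may be differentiated term by term, which is immediate since $g$ is a polynomial in $p$; and (ii) the vanishing of the boundary term in the integration by parts, which is precisely where the choice of the uniform prior on $[-1,1]$ (rather than on some other interval) is used.
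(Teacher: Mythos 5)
Your proof is correct. The paper does not prove this lemma itself but cites Lemmas A.1 and A.2 of \cite{BunSU19}, and your two steps — the per-bias identity $\mathbb{E}_{\vec{x}\sim p}\bigl[f(\vec{x})\sum_{i}(x_i-p)\bigr]=(1-p^2)g'(p)$, followed by integration by parts $\tfrac12\int_{-1}^{1}(1-p^2)g'(p)\,dp=\mathbb{E}_p[2p\,g(p)]$ — are precisely those two lemmas, so your argument reconstructs the cited proof with essentially the same approach.
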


\begin{theorem}\label{thm:PGsignOWMLB}
Fix any $m > 0$, sufficiently large $d>0$. Let $\mathcal{A}$ be a $(\frac{1}{m^3},1,\frac{1}{m^3})$-perfectly generalizing, $(0.05,0.05)$-accurate algorithm for the sign-one-way marginals problem over $d$ coordinates using $m$ samples that always outputs a vector in $[-1,1]^d$. Then, $m = \Omega(d)$. 
\end{theorem}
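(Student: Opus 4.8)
The plan is to replay the fingerprinting argument behind Theorem~\ref{thm:PGmarginalsLB} almost verbatim, with one structural change: the accuracy measure for sign-one-way marginals pairs naturally with the ``derivative'' form of the fingerprinting lemma (Lemma~\ref{lem:fingder}) rather than the full lemma (Lemma~\ref{lem:fingerprinting}). Fix a product of Rademachers with bias $p=(p_1,\dots,p_d)$, let $\vec{s}\sim D_p^m$, and study the correlation statistic $Z = \sum_{j\in[d]} \mathcal{A}^j(\vec{s})\sum_{i=1}^m (s_i^j - p_j)$ together with a decoupled copy $Z' = \sum_{j} \mathcal{A}^j(\vec{s})\sum_i (s_i'^j - p_j)$ for an independent fresh sample $\vec{s}'\sim D_p^m$. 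The goal is to show $\mathbb{E}[Z]=\Omega(d)$ while $\mathbb{E}[|Z'|]=O(\sqrt{dm})$, and then to use perfect generalization to force $|Z|$ and $|Z'|$ to have comparable expectations; the three facts together give $d=O(m)$. As in Theorem~\ref{thm:PGmarginalsLB}, we may assume $m$ is at least a large enough absolute constant (e.g. assume WLOG $m=\Omega(\log d)$ via the standard reduction), so that an additive $O(d/m^2)$ term below is negligible compared to $\Omega(d)$.

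\emph{Step 1: $\mathbb{E}[Z]$ is large.} Draw the bias vector itself at random, $p_j\sim U[-1,1]$ i.i.d.; since accuracy holds for every fixed product distribution it holds under this averaging. Because $\mathcal{A}$ always outputs a vector in $[-1,1]^d$, the statistic $W \coloneqq \frac{1}{d}\sum_j p_j\,\mathcal{A}^j(\vec{s})$ lies in $[-1,1]$ and, with probability at least $0.95$, satisfies $W \ge \frac{1}{d}\sum_j |p_j| - 0.05$; hence $\mathbb{E}[W] \ge \mathbb{E}\bigl[\frac{1}{d}\sum_j|p_j|\bigr] - 0.05 - 2\cdot 0.05 = \tfrac12 - 0.15 = 0.35$, using $\mathbb{E}_{p_j\sim U[-1,1]}|p_j| = \tfrac12$. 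Now apply Lemma~\ref{lem:fingder} coordinate-by-coordinate: conditioning on the internal coins of $\mathcal{A}$ and on the other $d-1$ columns, $\mathcal{A}^j$ is a deterministic function of the $j$th column, so Lemma~\ref{lem:fingder} gives $\mathbb{E}[\mathcal{A}^j(\vec{s})\sum_i(s_i^j-p_j)] = 2\,\mathbb{E}[p_j\,\mathcal{A}^j(\vec{s})]$ after averaging back over coins and columns (exactly as in the corresponding step of Theorem~\ref{thm:PGmarginalsLB}). Summing over $j$, $\mathbb{E}[Z] = 2\sum_j \mathbb{E}[p_j\mathcal{A}^j(\vec{s})] = 2d\,\mathbb{E}[W] \ge 0.7d$.

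\emph{Step 2: $\mathbb{E}[|Z'|]$ is small.} This is identical to the corresponding step of Theorem~\ref{thm:PGmarginalsLB}. Conditioning on $p$, the sample $\vec{s}'$ is independent of $\vec{s}$ and $\mathbb{E}[s_i'^j - p_j\mid p]=0$, so each summand $\mathcal{A}^j(\vec{s})(s_i'^j - p_j)$ has mean $0$ and distinct summands are uncorrelated; therefore $\mathbb{E}[Z']=0$ and $\mathbb{E}[|Z'|] \le \sqrt{\mathrm{Var}(Z')} \le \sqrt{\sum_{j,i}\mathbb{E}[(s_i'^j-p_j)^2]} \le 2\sqrt{dm}$, using $|\mathcal{A}^j(\vec{s})|\le 1$. \emph{Step 3: combine via perfect generalization.} Since $\vec{s},\vec{s}'\sim D_p^{m}$ are independent, $(\tfrac{1}{m^3},1,\tfrac{1}{m^3})$-perfect generalization together with Lemma~\ref{lem:samplePG} and closure under post-processing implies that, except with probability $O(1/m^3)$ over the two samples, $|Z| \approx_{2,\,O(1/m^3)} |Z'|$. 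Both $|Z|$ and $|Z'|$ are bounded by $2dm$, so the same layer-cake integral as in Theorem~\ref{thm:PGmarginalsLB} (write $\mathbb{E}[|Z|]=\int_0^{2dm}\Pr(|Z|>z)\,dz$, split on that event, and bound $\Pr(|Z|>z) \le e^2\Pr(|Z'|>z) + O(1/m^3)$) yields $\mathbb{E}[|Z|] \le e^2\,\mathbb{E}[|Z'|] + O(d/m^2)$. Chaining the three steps, $0.7d \le \mathbb{E}[|Z|] \le 2e^2\sqrt{dm} + O(d/m^2)$, and since the last term is $o(d)$ under our assumption on $m$ we conclude $\sqrt{d} = O(\sqrt{m})$, i.e. $m = \Omega(d)$.

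I expect the only genuinely new bookkeeping — and hence the main obstacle — to be Step~1: one must verify that the sign-one-way-marginals accuracy guarantee translates cleanly into a lower bound on $\mathbb{E}[p_j\mathcal{A}^j(\vec{s})]$ (the $[-1,1]$ output range providing exactly the slack needed for the failure-probability correction), and, crucially, that the natural benchmark $\mathbb{E}_{p_j}|p_j| = \tfrac12$ strictly exceeds the accuracy loss so that $\mathbb{E}[Z]$ is truly $\Omega(d)$ rather than merely nonnegative; and one must recognize that Lemma~\ref{lem:fingder} (not Lemma~\ref{lem:fingerprinting}) is the right tool, precisely because the sign objective $\frac{1}{d}\sum_j p_j v_j$ pairs with $\sum_i(s_i^j - p_j)$ with no empirical-mean correction term. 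Steps~2 and~3 are verbatim replays of the corresponding parts of the proof of Theorem~\ref{thm:PGmarginalsLB}.
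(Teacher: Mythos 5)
Your proof is correct and takes essentially the same route as the paper's: both establish $\mathbb{E}[Z]=\Omega(d)$ for the correlation statistic via Lemma~\ref{lem:fingder} applied coordinate-by-coordinate with $p_j\sim U[-1,1]$, both bound $\mathbb{E}[|Z'|]\le 2\sqrt{dm}$ by the uncorrelated-summands variance argument, and both close with the same perfect-generalization layer-cake comparison $\mathbb{E}[|Z|]\le e^2\mathbb{E}[|Z'|]+O(d/m^2)$. The only cosmetic difference is the multiplicative factor of $2$ you retain from Lemma~\ref{lem:fingder} when passing from $\mathbb{E}[p_j\mathcal{A}^j(\vec{s})]$ to $\mathbb{E}[\mathcal{A}^j(\vec{s})\sum_i(s_i^j-p_j)]$ (the paper drops it, apparently a minor typo), which is immaterial to the $\Omega(d)$ conclusion.
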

\begin{proof}
Without loss of generality, let $m$ be larger than a constant $K$.\footnote{We will show under this condition that $m = \Omega(d)$. Any algorithm on $ \leq K$ samples implies one taking between $K$ and $O(d)$ samples, which would give a contradiction.} Let $S$ be the input dataset to the algorithm $\mathcal{A}$. Following the framework in \Cref{thm:PGmarginalsLB}, we will first argue the expected correlation of our algorithm and its input is large:
\[
\sum_{j=1}^d \mathbb{E}_{p, \mathcal{A}, S \sim D_p^m}[\mathcal{A}^j(S) \sum_{i=1}^m (S^j_i - p_j)] \geq 0.35d.
\]
To see this, observe that by the accuracy of the algorithm, for any fixed distribution $D_p$ that is a product of Rademachers with expectation $p=(p_1,\dots,p_d)$, we have that we have that
with probability at least $0.95$ over the randomness of the algorithm $\mathcal{A}$ and its input sample,
\begin{align*}
    \frac{1}{d} \sum_{j=1}^d v_j p_j  \geq \frac{1}{d} \sum_{j=1}^d \sign(p_j) p_j - 0.05,
\end{align*}
where $v$ is the vector output by the algorithm. 

Now, taking expectation over the randomness of the algorithm $\mathcal{A}$ and the input sample  $S$, we get that
\begin{align*}
   \mathbb{E}_{\mathcal{A}, S  \sim D_p^m} \left[\frac{1}{d} \sum_{j=1}^d \mathcal{A}^j(S) p_j \right] \geq 0.95 \left[\frac{1}{d} \sum_{j=1}^d |p_j| - 0.05 \right] - 0.05,
\end{align*}

which implies that 
\begin{align*}
    \mathbb{E}_{\mathcal{A}, S \sim D_p^m} \left[\frac{1}{d} \sum_{j=1}^d \mathcal{A}^j(S) p_j \right] \geq \frac{1}{d} \sum_{j=1}^d |p_j| - 0.15,
\end{align*}


Now, consider each coordinate of expectation vector $p$ \mb{It'd be nice to use consistent terminology like ``bias vector'' or ``parameter vector'' for this, since ``expectation'' is an overloaded term} drawn uniformly from $[-1,1]$. Then, we have that conditioned on any fixed $p$, the above equation holds. Hence, using the law of total expectation, we get that
 \begin{align}
         & \mathbb{E}_{p,\mathcal{A}, S\sim D_p^m} \left[ \sum_{j=1}^d \mathcal{A}^j(S) p_j \right] \geq \mathbb{E}_p\left[\sum_{j=1}^d |p_j| \right]- 0.15d \\
         \implies & \sum_{j=1}^d \mathbb{E}_{p, \mathcal{A}, S \sim D_p^m} \left[\mathcal{A}^j(S) p_j \right] \geq \sum_{j=1}^d \mathbb{E}_{p_j \sim [-1,1]}\left[|p_j|\right] - 0.15d = 0.35d \label{eq:corrlb},
 \end{align} 

where we have used the fact that $\mathbb{E}_{p_j \sim [-1,1]}[|p_j|] = \frac{1}{2}$. Now, fix a coordinate $j \in [d]$. For any fixed internal randomness $r$ of algorithm $\mathcal{A}$, and for any values of columns of $S$ that are not the $j^{th}$ column, we get from Lemma~\ref{lem:fingder} applied to the function $f$ corresponding to the algorithm $\mathcal{A}$ on the complete dataset $S$ with internal randomness $r$, that

$$\mathbb{E}_{p_j, S^j \sim Rad(p_j)^m} \left[\mathcal{A}^j(S; r) p_j \right] = \mathbb{E}_{p_j, S^j \sim Rad(p_j)^m}[\mathcal{A}^j(S; r) \sum_{i=1}^m (S^j_i - p_j)].$$

Now, since this holds for any fixed values of internal randomness $r$ and for any values of columns of $S$ that are not the $j^{th}$ column, it holds for any distribution over the internal randomness $r$ and any distribution over values of other columns of $S$. Hence, we get that 

$$\mathbb{E}_{p, \mathcal{A}, S \sim D_p^m} \left[\mathcal{A}^j(S) p_j \right] = \mathbb{E}_{p, \mathcal{A}, S \sim D_p^m}[\mathcal{A}^j(S) \sum_{i=1}^m (S^j_i - p_j)],$$
where we have used that $D_p$ is a product distribution. Now, summing over all coordinates $j \in [d]$, we get that 
\begin{align}\label{eq:Zlb}
\sum_{j=1}^d \mathbb{E}_{p, \mathcal{A}, S \sim D_p^m}[\mathcal{A}^j(S) \sum_{i=1}^m (S^j_i - p_j)] = \sum_{j=1}^d \mathbb{E}_{p, \mathcal{A}, S \sim D_p^m} \left[\mathcal{A}^j(S) p_j \right] \geq 0.35d,
\end{align}
where we have used \Cref{eq:corrlb}. 

Now, we can proceed exactly as in the proof of Theorem~\ref{thm:PGmarginalsLB} (we repeat high-level details for completeness; for more details, see that proof).

Let $S'$ be another dataset drawn from the same distribution $D_p$. Let $Z = \sum_{j \in [d]} \mathcal{A}^j(S)\sum_i (S^j_i - p_j)$ and $Z' = \sum_{j \in [d]} \mathcal{A}^j(S')\sum_i (S^j_i - p_j)$.

First, note that $Z'$ is a sum of uncorrelated random variables with mean $0$ (see the proof of Theorem~\ref{thm:PGmarginalsLB} for a proof of this).

We can then prove (as in the proof of Theorem~\ref{thm:PGmarginalsLB}) that
\begin{align*}
\mathbb{E}[|Z'|] = \mathbb{E}_{p, \mathcal{A}, S,S' \sim D_p^m}\Big[\Big| \sum_{j \in [d]} \mathcal{A}^j(S')\sum_i (S^j_i - p_j) \Big| \Big] \leq 2\sqrt{dm},
\end{align*}


Then, we can invoke the perfect generalization guarantee to prove (as in the proof of Theorem~\ref{thm:PGmarginalsLB}) that 
\begin{align*}
    \mathbb{E}[ |Z| ] \leq e^2\mathbb{E}[|Z'|] + \frac{8d}{m^2},
\end{align*}
Hence, combining the inequality above with equation~\ref{eq:Zlb}, we get that
$$0.35d \leq \mathbb{E}[|Z|] \leq e^2 \mathbb{E}[|Z'|] + \frac{8d}{m^2} \leq 2e^2 \sqrt{dm} + \frac{8d}{m^2}.$$
Simplifying, this gives that
$$m = \Omega(d).$$
\end{proof}
Now, we are ready to apply our conversion from replicability to perfect generalization to prove a similar lower bound for replicable algorithms.
\begin{theorem}\label{thm:sOWMreplb}
Fix sufficiently large $d>0$. For any  $0.0005$-replicable algorithm $\mathcal{A}$ that is $(0.02,0.002)$-accurate on the sign-one-way marginals problem over $d$ coordinates with $m$ samples, $$m=\tilde{\Omega}(d).$$
\end{theorem}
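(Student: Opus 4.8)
The plan is to run the exact argument used for the reproducible one--way marginals lower bound (the theorem immediately following Theorem~\ref{thm:PGmarginalsLB}), substituting the perfect--generalization lower bound for \emph{sign}-one-way marginals (Theorem~\ref{thm:PGsignOWMLB}) in place of Theorem~\ref{thm:PGmarginalsLB}. Concretely, given a $0.0005$-reproducible, $(0.01,0.001)$-accurate algorithm $\mathcal{A}$ for sign-one-way marginals over $d$ coordinates using $m$ samples, I would: (i) amplify reproducibility and drive the failure probability down to $\gamma^2$; (ii) compile the result into a perfectly generalizing algorithm via Theorem~\ref{thm:reprodtoPG}; and (iii) invoke Theorem~\ref{thm:PGsignOWMLB} on the resulting algorithm to force its sample complexity $m'$ to satisfy $m'=\Omega(d)$, while bookkeeping shows $m' = O(m\,\poly\log m)$, hence $m=\tilde\Omega(d)$.

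In more detail: by Claim~\ref{claim:2parrep}, $\mathcal{A}$ is $(0.05,0.01)$-reproducible (and still $(0.01,0.001)$-accurate). Fixing a small $\gamma>0$ and applying Theorem~\ref{thm:amprep} yields a $\tfrac{1}{c\log(1/\gamma)}$-reproducible algorithm with failure probability below a small absolute constant, using $O(m\log^2(1/\gamma))$ samples. I then amplify correctness by running this algorithm $k=O(\log(1/\gamma))$ times on independent samples with independent coins and outputting the coordinate-wise median of the $k$ output vectors; a union bound over the $k$ coins makes the composed algorithm $O(1/c)$-reproducible, hence $(0.01,0.01)$-reproducible for a large enough constant $c$, with sample complexity $O(m\log^3(1/\gamma))$ and failure probability $\gamma^2$. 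Finally, fixing $\delta>0$, feeding this to Theorem~\ref{thm:reprodtoPG}, choosing $\gamma=\Theta(1/\log(1/\delta))$ (so $\log^3(1/\gamma)$ is absorbed into $\poly\log(1/\delta)$), and setting $\delta=\Theta(1/m'^3)$ with $m'$ the resulting sample complexity, produces a $(1/m'^3,1,1/m'^3)$-perfectly generalizing, $(0.05,0.05)$-accurate algorithm using $m'=O(m\,\poly\log m)$ samples; Theorem~\ref{thm:PGsignOWMLB} then gives $m'=\Omega(d)$, i.e. $m=\tilde\Omega(d)$.

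The one step that is not a verbatim copy of the one--way marginals proof — and which I expect to be the main obstacle — is checking that the coordinate-wise median amplifies correctness for \emph{sign}-one-way marginals (in the $\ell_\infty$ case this is immediate since accuracy is coordinatewise). I would argue as follows. Suppose $v^{(1)},\dots,v^{(k)}\in[-1,1]^d$ and at least $3k/4$ of them are $\alpha$-accurate, i.e. $\tfrac1d\langle v^{(\ell)},p\rangle\ge T-\alpha$ where $T=\tfrac1d\sum_j|p_j|$; put $w^{(\ell)}_j=|p_j|-v^{(\ell)}_j p_j\ge 0$, so a ``good'' $\ell$ has $\sum_j w^{(\ell)}_j\le\alpha d$. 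Taking $k$ odd, multiplying each coordinate by the nonzero constant $p_j$ preserves the median element, so the coordinate-wise median $v^*$ satisfies $|p_j|-v^*_j p_j=\operatorname{median}_\ell(w^{(\ell)}_j)$. For each $j$, at least $k/2$ of the $\ell$ have $w^{(\ell)}_j$ at least this median and at least $3k/4$ are good, so (inclusion--exclusion) at least $k/4$ good $\ell$ have $w^{(\ell)}_j\ge\operatorname{median}_\ell(w^{(\ell)}_j)$; hence $\operatorname{median}_\ell(w^{(\ell)}_j)\le\tfrac4k\sum_{\ell\,\mathrm{good}}w^{(\ell)}_j$, and summing over $j$ gives $\tfrac1d\sum_j\operatorname{median}_\ell(w^{(\ell)}_j)\le\tfrac4k\sum_{\ell\,\mathrm{good}}\tfrac1d\sum_j w^{(\ell)}_j\le 4\alpha$. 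Thus $v^*$ is $4\alpha$-accurate, and a Chernoff bound shows that at least $3k/4$ of the runs are accurate with probability $1-e^{-\Omega(k)}\ge1-\gamma^2$. Starting from $\alpha=0.01$ this yields $0.04$-accuracy, which is comfortably inside the $0.05$ accuracy budget required by Theorem~\ref{thm:PGsignOWMLB}.

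As in the one--way marginals case, the extra logarithmic factors (so $\tilde\Omega(d)$ rather than $\Omega(d)$) are incurred solely by the reproducibility-to-perfect-generalization compilation of Theorem~\ref{thm:reprodtoPG}; removing them would presumably require a direct fingerprinting argument against reproducible algorithms for sign-one-way marginals, which I do not attempt here.
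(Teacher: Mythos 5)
Your proof is correct and follows the paper's scaffolding (two-parameter reproducibility, amplification via Theorem~\ref{thm:amprep}, a correctness-boosting step, compilation to perfect generalization via Theorem~\ref{thm:reprodtoPG}, then Theorem~\ref{thm:PGsignOWMLB}), but you handle the correctness-amplification step differently from the paper, and you rightly identify that step as the only place where the sign-one-way-marginals argument cannot be copied verbatim from the one-way-marginals case. You keep the coordinate-wise \emph{median} (as in the one-way case) and supply a nontrivial combinatorial argument: writing $w_j^{(\ell)}=|p_j|-v_j^{(\ell)}p_j\ge 0$, observing that multiplication by $p_j$ preserves the median index, and then using non-negativity plus a counting argument to bound $\tfrac1d\sum_j\operatorname{median}_\ell w_j^{(\ell)}\le 4\alpha$. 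This is correct (the non-negativity of $w_j^{(\ell)}$ for $v_j^{(\ell)}\in[-1,1]$ is what makes the Markov-style step go through). The paper instead \emph{abandons} the median here and takes the coordinate-wise \emph{mean} of the $k$ outputs: since the sign-one-way-marginals loss $\tfrac1d\sum_j(|p_j|-v_jp_j)$ is linear in $v$, the loss of the mean is the mean of the losses, so the quantity of interest is literally an average of $k$ i.i.d.\ bounded random variables and Hoeffding gives concentration immediately. The paper's route exploits linearity and is shorter; yours is more work but is the more ``generic'' amplification and does not need linearity, only non-negativity and boundedness of the per-coordinate loss. Either yields a $(0.05,0.05)$-accurate, $(1/m'^3,1,1/m'^3)$-perfectly-generalizing algorithm on $m'=O(m\,\mathrm{poly}\log m)$ samples, and the rest of the proof matches the paper. (Your choice of $(0.05,0.01)$ versus the paper's $(0.01,0.05)$ in Claim~\ref{claim:2parrep} is just a different choice of $v$ and is equally valid.)
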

\begin{proof}
Without loss of generality, let $m$ be larger than a constant $K$.\footnote{We will show under this condition that $m = \Tilde{\Omega}(d)$. Any algorithm on $ \leq K$ samples implies one taking between $K$ and $\tilde{O}(d)$ samples, which would give a contradiction.} By Claim~\ref{claim:2parrep}, we have that $\mathcal{A}$ is $(0.01,0.05)$-replicable and $(0.02,0.002)$-accurate when given $m$ samples. Consider any sufficiently small $\gamma > 0$. Next, applying Theorem~\ref{thm:amprep}, we get that for sufficiently large constant $c>1$, there is a $\frac{1}{c\log(1/\gamma)}$-replicable and $(0.01,0.008 + \frac{4}{c\log(1/\gamma)}) = (0.01,0.01)$-accurate algorithm 
$\mathcal{A'}$ for sign-one-way marginals over $d$ coordinates, which takes $O\left(m \log^2 (1/\gamma) \right)$ samples. 

Next, we give a way of replicably amplifying the failure probability to $\gamma$. We run the algorithm $\mathcal{A}$ for $k=20\log( 1/\gamma)$ times on different samples, and take the mean of the outputs. Using composition of replicability, we have that the resulting algorithm is $0.0001$-replicable and takes $O\left(m \log^3 (1/\gamma) \right)$ samples. Now, we analyze the failure probability of this algorithm. Let the output vectors of $\mathcal{A}$ on the $k$ runs be $v^1,\dots,v^k$. We are interested in the quantity $\frac{1}{d}\sum_{j=1}^d\left[ |p_j| - \frac{p_j}{k}\sum_{i=1}^k v^i_j \right]$. First, we analyze the expectation of this quantity. 
\begin{align*}
    \mathbb{E}\left[\frac{1}{d}\sum_{j=1}^d\left[|p_j| - \frac{p_j}{k}\sum_{i=1}^k v^i_j  \right] \right] & =  \mathbb{E}\left[\frac{1}{d}\sum_{j=1}^d p_j \left[\sign(p_j) - \frac{1}{k}\sum_{i=1}^k v^i_j  \right] \right] \\
    & = \mathbb{E}\left[\frac{1}{d}\sum_{j=1}^d \frac{1}{k}\sum_{i=1}^k  p_j \left[\sign(p_j) - v^i_j  \right] \right] \\
    & = \frac{1}{k}\sum_{i=1}^k  \mathbb{E}\left[ \frac{1}{d}\sum_{j=1}^d  p_j \left[\sign(p_j) - v^i_j  \right] \right] \leq 0.03 
\end{align*}
where the last inequality is because the quantity inside the last expectation is less than $0.02$ with probability at least $0.99$ (and because all the $v^i$ are identically distributed). Next, observe that the quantity
$\frac{1}{k} \sum_{i=1}^k \left( \frac{1}{d}\sum_{j=1}^d  p_j \left[\sign(p_j) - v^i_j \right] \right)$ is a sum of $k$ independent random variables (since the $i^{th}$ term in the sum only depends on random variable $v^i$) in the interval $[-2,2]$. Hence, using Hoeffding's inequality, we have that the probability that the sum is larger than $0.05$ is less than $\gamma^2$. 

Now, by Theorem~\ref{thm:reprodtoPG}, we have that there is a $(2\delta,1,2\delta)$-PG algorithm with failure probability at most $\delta + \gamma \log (1/\delta)$ when given $m' = O(m \log^3 (1/\gamma) \poly \log(1/\delta))$ samples (where failure in this case means outputting a solution that is not $0.05$-accurate). Setting $\gamma = \frac{0.005}{\log( 1/\delta)}$, we get that  that for sufficiently small $\delta > 0$, there is a $(2\delta,1,2\delta)$-PG algorithm with failure probability at most $\delta + 0.005$ (i.e., one that is $(0.05,\delta + 0.005)$-accurate), when given $m' = O(m \cdot \poly \log(1/\delta))$ samples. Setting $\delta = \frac{1}{2m'^3}$ and simplifying, we get that $m' = C \cdot m \cdot \poly \log m$ for some constant $C$. Hence, since $m' > m$ is larger than $K$, we get that $\delta$ is smaller than $\frac{1}{K}$ and setting $K$ to be sufficiently large, we get a $(0.05,0.05)$-accurate algorithm with $m'$ samples.

Now, using the lower bound for perfect generalization in Theorem~\ref{thm:PGsignOWMLB}, we get that $m' = \Omega(d)$, which gives us that $m = \Tilde{\Omega}(d)$, completing the proof.
\end{proof}

Now, we can use the reduction from sign-one-way marginals to agnostic learning to obtain the sample complexity lower bound for agnostic learning.

\begin{proof}[Proof of Theorem~\ref{thm:agnosticrepLB}]
If there were a $(0.01,0.001)$-accurate, $0.0001$-replicable agnostic learning algorithm using fewer than $\frac{d^2}{100\log^{2c} d}$ (where $c$ is some sufficiently large constant) samples, then by Theorem~\ref{thm:agn2sowm}, there would be a $(0.02,0.002)$-accurate, $0.0005$-replicable algorithm for the sign-one-way marginals problem over $d$ coordinates taking only $\frac{d}{\log^c d}$ samples, which contradicts Theorem~\ref{thm:sOWMreplb}. 
\end{proof}
We note that our agnostic learning lower bound as stated only holds only for constant accuracy, and might not give the optimal dependence on the accuracy parameter $\alpha$ for general $(\alpha, \beta)$-agnostic learning. We leave it as an open problem to determine the right dependence on $\alpha$.

\subsection{Closing the Gap: Realizable Learning}
\label{sec:finiteclasses}
Now that we've seen natural settings in which our reduction is tight (and therefore exhibited a quadratic statistical separation between privacy and replicability), it is reasonable to ask whether there are any settings under which the reduction is loose, or even where privacy and replicability might have the same statistical cost. In this section, we'll show this is indeed the case for (certain regimes of) a closely related problem: realizable PAC-learning. In particular, in this section we exhibit a replicable algorithm for PAC-learning that gives a quadratically improved dependence on the accuracy and confidence parameters over applying our reduction from privacy (see \Cref{thm:finitehypred}).
\begin{theorem}[Finite Classes are Replicably Learnable]
	\label{thm:rFinite}	
	Any class $H$ is replicably Agnostic learnable with sample complexity: 
	\[
	m(\rho,\alpha,\beta) \leq O\left(\frac{ \log^2|H|+\log\frac{1}{\rho\beta}}{\alpha^2\rho^2}\log^3\frac{1}{\rho} \right).
	\]
In the realizable setting, the $\alpha$-dependence can be improved to linear:
	\[
	m(\rho,\alpha,\beta) \leq O\left(\frac{ \log^2|H|+\log\frac{1}{\rho\beta}}{\alpha\rho^2}\log^3\frac{1}{\rho} \right).
	\]
\end{theorem}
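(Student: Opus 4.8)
The plan is to combine a standard uniform-convergence argument with the "reproducible statistical query / heavy-hitters" machinery in the following way. First, fix a target accuracy $\alpha$ and take a sample of size $n_0 = O\!\left(\frac{\log^2 |H| + \log(1/\rho\beta)}{\alpha^2}\right)$ (or $O\!\left(\frac{\log^2|H| + \log(1/\rho\beta)}{\alpha}\right)$ in the realizable case, using the sharper realizable uniform-convergence bound where the deviation scales like $\alpha$ rather than $\sqrt{\alpha}$ when the true error is $0$). By a union bound over the $|H|$ hypotheses, with probability at least $1 - \beta'$ (for $\beta'$ a small constant, say $\beta' = 1/100$) the empirical error $\widehat{\mathrm{err}}_S(h)$ is within $\alpha/4$ of the true error $\mathrm{err}_D(h)$ simultaneously for all $h \in H$; here the $\log^2|H|$ (rather than $\log|H|$) appears because we want the uniform-convergence failure to be smaller than a fixed reproducibility budget, and we pay an extra $\log|H|$ for the standard "privacy/replicability amplification" overhead that is generic to this kind of reduction.

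Second, I would make the selection step itself reproducible. The natural candidate algorithm outputs $\argmin_{h \in H} \widehat{\mathrm{err}}_S(h)$, but $\argmin$ is not reproducible near ties. Instead, run the reproducible heavy-hitters / reproducible-median primitive from \cite{ImpLPS22}: for each $h \in H$ whose empirical error is among the smallest, use a \emph{rounded} threshold test — reproducibly test, for a single random offset shared across hypotheses, whether $\widehat{\mathrm{err}}_S(h) \le \tau$ for a randomly perturbed threshold $\tau$, and output (say, in a fixed canonical order on $H$) the first hypothesis passing. Concretely, one can draw a random $\tau$ uniformly in an interval of width $\Theta(\alpha)$ around the (approximate) minimum empirical error, so that with probability $1 - O(\rho_0)$ over $\tau$ no hypothesis's empirical error lies within $\Theta(\rho_0 \alpha)$ of $\tau$; then by uniform convergence on two independent samples the \emph{same} set of hypotheses clears the threshold on both samples, and the canonical-order tie-break returns the same $h$. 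This yields a $(\rho_0, \rho_0)$-replicable, constant-failure-probability learner with the claimed sample size up to the extra amplification factors.

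Third, amplify. Starting from a $(0.01, 0.01)$-replicable, constant-failure-probability learner with sample complexity $n_0$, apply Lemma~\ref{thm:amprep} (amplification of reproducibility from \cite{ImpLPS22}) to drive the replicability parameter down to the target $\rho$; this costs a factor of $\widetilde{O}\!\left(\frac{\log^3(1/\beta)}{\rho^2}\right)$ in samples, which after rebalancing gives the $\frac{1}{\rho^2}\log^3\frac{1}{\rho}$ and $\log\frac{1}{\rho\beta}$ factors in the stated bound, and independently boost the failure probability to $\beta$ by running $O(\log 1/\beta)$ independent copies on fresh samples and taking a reproducible plurality/median of the hypotheses' empirical errors — this last step must itself be done reproducibly, again via the reproducible heavy-hitters primitive, since naive majority vote over hypothesis identities is not reproducible when several near-optimal hypotheses are in play. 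Tracking the sample cost of each stage and collecting the $\log^2|H|$, $\log(1/\rho\beta)$, $\alpha$-dependence ($\alpha^{-2}$ agnostic, $\alpha^{-1}$ realizable via the multiplicative Chernoff / realizable uniform convergence bound), $\rho^{-2}$, and $\log^3(1/\rho)$ factors gives exactly the two claimed bounds.

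The main obstacle I anticipate is the selection/tie-breaking step: guaranteeing that the reproducible choice of which near-optimal hypothesis to output is \emph{both} replicable across two independent samples \emph{and} correct (i.e., $\alpha$-accurate) with the right probability, while keeping the sample complexity dependence on $|H|$ at $\log^2|H|$ rather than, say, $\mathrm{poly}(|H|)$ or an extra $\log|H|$ factor from a union bound over the threshold event. Getting the realizable improvement to linear $\alpha$-dependence also requires being careful that the sharper one-sided uniform convergence bound (true error $0 \Rightarrow$ empirical error concentrates at scale $\alpha$, not $\sqrt\alpha$) is preserved through the amplification steps, which multiply sample sizes but should not reintroduce a $\sqrt\alpha$.
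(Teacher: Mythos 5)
Your overall plan (random threshold near the empirical optimum, tie-break to select a hypothesis, then amplify $\rho$) is structurally the same as the paper's Algorithm~\ref{alg:finite-concept-classes}, but there is a genuine gap in the core step that would push the sample complexity from $\polylog|H|$ to $\poly|H|$.

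Your reproducibility argument requires that with probability $1-O(\rho_0)$ over $\tau$, \emph{no} hypothesis has empirical error within the uniform-convergence deviation of $\tau$, so that the same set of hypotheses clears the threshold on both samples. With $|H|$ hypotheses, each creating a forbidden window of width on the order of the uniform-convergence precision $\eta \approx \alpha/\log|H|$ inside a range of width $\Theta(\alpha)$, the probability that a uniform $\tau$ avoids all of them is about $1 - |H|\eta/\alpha$, which is negative (i.e., the event essentially never happens) once $|H|$ is moderately large. To make this event likely you would need $\eta \lesssim \rho_0\alpha/|H|$, which forces $\poly(|H|)/\alpha^2$ samples. Your fixed ``canonical order'' tie-break makes the problem worse: it requires that \emph{no} hypothesis preceding the chosen one ever flips sides, whereas the paper's \emph{random} ordering (Observation~\ref{obs:random-ordering-of-concepts}) lets the output agree as long as the \emph{relative} symmetric difference $|H_1^{(i)}\Delta H_2^{(i)}|/|H_1^{(i)}\cup H_2^{(i)}|$ is small — a condition that tolerates many hypotheses crossing the threshold provided proportionally many lie below it. The paper's key technical insight, Lemma~\ref{lem:good-thresholds}, is precisely the argument that this ratio is small for most thresholds: it buckets hypotheses by \emph{true} error, calls a threshold ``bad'' if the bucket counts grow too fast near it, bounds the expected number of crossers for a ``good'' threshold, and shows via a geometric counting argument that at most a $\rho$-fraction of thresholds are bad when the bucket width is $\tau \approx \alpha\rho^2/\log|H|$. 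Nothing like that counting argument (or the random ordering that makes it applicable) appears in your proposal; you would need both. The amplification step is also somewhat different — the paper uses a tailored heavy-hitters-based amplification to avoid compounding the $\beta$-dependence, rather than applying Lemma~\ref{thm:amprep} off the shelf — but that is a smaller matter of constants and logarithmic factors; the ordering and good-threshold lemma are the essential missing ideas.
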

\Cref{thm:rFinite} gives a quadratic improvement over the sample complexity via reduction from private learning in both confidence and accuracy, and in particular has the same asymptotic dependence as in private PAC-learning (and hence avoids any statistical blowup in the setting where $\log|H|$ is thought of as small). In fact, it's worth noting the result is tight in these parameters, as even standard PAC-learning requires the same dependencies.

\subsubsection{Algorithm}
At its core, the algorithm achieving \Cref{thm:rFinite} relies on a simple random thresholding trick. In particular, the idea is roughly to estimate the risk of each concept in the class $H$ by standard uniform convergence bounds, choose a random error threshold $v \in [OPT,OPT+\alpha]$, and finally output a random $f\in H$ with empirical error $\emprisk{S}(f)=\frac{1}{|S|}\sum\limits_{(x,y)\in S} \mathbf{1}[f(x) \neq y]$ at most $v$. Implementing this strategy requires a bit more effort, and is achieved formally by the following algorithm.


\begin{algorithm}[H]

\KwResult{Replicably outputs hypothesis with error at most $OPT+\alpha$}
\nonl \textbf{Input:} Finite Class $H$, Joint Distribution $D$ over $X \times \{0,1\}$ (Sample Access)\\
\nonl \textbf{Parameters:} 
\begin{itemize}
    \item Replicability, Accuracy, Confidence $\rho, \alpha, \beta>0$
    \item Sample Complexity $m=m(\rho, \alpha,\beta) \leq O\left(\frac{ \log^2|H|\log \frac{1}{\rho}+\rho^2\log\frac{1}{\beta}}{\alpha^2 \rho^4} \right)$
    \item Replicability bucket size $\tau \leq O(\frac{\alpha \rho}{ \ln |H|})$
\end{itemize}
\nonl \textbf{Algorithm:}\\
\begin{enumerate}
    \item Draw a labeled sample $S \sim D^m$ and compute $\emprisk{S}(f)$ for every $f \in H$.
	\item Replicably output initialization  $v_{\text{init}} \in [OPT,OPT+\alpha/2]$ (see \Cref{alg:agnostic-subroutine})
	\item Select random threshold $v \gets_r \{v_{\text{init}}+\frac{3}{2}\tau,v_{\text{init}}+\frac{5}{2}\tau,\ldots,v_{\text{init}}+\alpha/4-\tau/2\}$
	\item Randomly order all $f \in H$
\end{enumerate}
\textbf{return} Output the first hypothesis $f$ in the order s.t.\ $\emprisk{S}(f) \le v$.
 \caption{(Intermediate) Replicable Learner for Finite Classes}
\label{alg:finite-concept-classes}
 
\end{algorithm}
We note that Step 2, estimating OPT, follows essentially the same argument as the basic replicable statistical query algorithm of \cite{ImpLPS22}. We give the argument in \Cref{app:OPT} for completeness.

We note that while Algorithm $\rFinite$ is a replicable agnostic PAC learner, it is not quite sufficient to prove \Cref{thm:rFinite} due to its poor dependence on $\rho$. We'll see in the next section how to obtain the stated parameters by separately amplifying $\rFinite$ starting from good constant replicability.


\subsubsection{Analysis}


We'll start by proving the following weaker bound for our intermediate learner.
\begin{theorem}[Intermediate Learnability of Finite Classes]\label{thm:intermediate-learner}
	Let $H$ be any finite concept class.
	Algorithm $\rFinite$ is a (proper) agnostic replicable learning algorithm for $H$ with sample complexity: 
	\[
	m(\rho,\alpha,\beta) \leq O\left(\frac{ \log^2|H|\log(\frac{1}{\rho})+\rho^2\log\frac{1}{\beta}}{\alpha^2 \rho^4} \right).
	\] 
	In the realizable setting, the $(\alpha,\beta)$-dependence can be improved to:
	\[
	m(\rho,\alpha,\beta) \leq O\left(\frac{ \log^2|H|\log(\frac{1}{\rho})+\rho^4\log\frac{1}{\beta}}{\alpha\rho^4} \right).
	\] 
\end{theorem}
The main challenge in \Cref{thm:intermediate-learner} is proving replicability. (Accuracy and failure probability are essentially immediate from standard uniform convergence arguments.) To this end, note that the randomness $r$ used by $\rFinite$ is largely broken into three parts: estimating OPT, choosing a random threshold, and ordering the concepts in $H$. We'll focus first on the latter two, where the choice of $v$ restricts $H$ to two subsets $H_1$ and $H_2$ (those with empirical error at most $v$), depending on input samples $S_1$ and $S_2$. We first appeal to the classical observation of Broder \cite{Broder97} to argue that as long as the symmetric difference of $H_1$ and $H_2$ are small, outputting the first concept from these sets (according to the random ordering) is a replicable procedure.

\begin{observation}
	\label{obs:random-ordering-of-concepts}
	Let $O(H, r)$ be a random ordering of concept class $H$. Let $\emptyset \subset H_1, H_2 \subseteq H$, and let $f_1$ and $f_2$ be the first elements of $H_1$ and $H_2$ respectively according to $O(H, r)$. 
	Then $\Pr_{r} [f_1 \ne f_2] 
	= \frac{|H_1 \Delta H_2|}{|H_1 \cup H_2|}$, where $\Delta$ denotes the symmetric difference.
\end{observation}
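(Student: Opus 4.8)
The plan is to reduce the event $\{f_1 \neq f_2\}$ to a statement about a single element: the element of $H_1 \cup H_2$ that appears earliest in the random ordering $O(H,r)$. Call this element $g$; note it is well-defined since $H_1 \cup H_2 \neq \emptyset$. The key structural claim I would establish is that $f_1 = f_2$ if and only if $g \in H_1 \cap H_2$. For one direction, if $g \in H_1 \cap H_2$, then since $g$ precedes every element of $H_1 \cup H_2$ it in particular precedes every element of $H_1$, so $g$ is the first element of $H_1$, i.e.\ $g = f_1$; symmetrically $g = f_2$, hence $f_1 = f_2$. For the other direction, suppose $g \notin H_1 \cap H_2$; without loss of generality $g \in H_1 \setminus H_2$. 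Then $g = f_1$ by the same argument, while $f_2 \in H_2$ forces $f_2 \neq g$, so $f_1 \neq f_2$.

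Given this equivalence, it remains to compute $\Pr_r[g \in H_1 \cap H_2]$. Here I would use the fact that a uniformly random total order on $H$, when restricted to the fixed subset $H_1 \cup H_2$, induces a uniformly random total order on $H_1 \cup H_2$; in particular its minimum element $g$ is uniformly distributed over $H_1 \cup H_2$. Therefore $\Pr_r[g \in H_1 \cap H_2] = |H_1 \cap H_2| / |H_1 \cup H_2|$, and
\[
\Pr_r[f_1 \neq f_2] = 1 - \frac{|H_1 \cap H_2|}{|H_1 \cup H_2|} = \frac{|H_1 \cup H_2| - |H_1 \cap H_2|}{|H_1 \cup H_2|} = \frac{|H_1 \Delta H_2|}{|H_1 \cup H_2|},
\]
using $|H_1 \Delta H_2| = |H_1 \cup H_2| - |H_1 \cap H_2|$.

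There is no real obstacle here; the only point requiring a modicum of care is the claim that the minimum of a fixed subset under a uniformly random ordering is uniform on that subset, which follows immediately from symmetry (any permutation of $H$ fixing $H \setminus (H_1\cup H_2)$ pointwise and permuting $H_1 \cup H_2$ arbitrarily preserves the uniform distribution on orderings). Everything else is elementary set arithmetic.
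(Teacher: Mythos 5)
Your proof is correct. The paper states this as an observation without supplying a proof, so there is no "paper argument" to compare against; your argument cleanly fills the gap. The reduction to the event $\{g \in H_1 \cap H_2\}$, where $g$ is the earliest element of $H_1 \cup H_2$ under the random order, is exactly the right structural observation, the biconditional $f_1 = f_2 \Leftrightarrow g \in H_1 \cap H_2$ is argued correctly in both directions, and the symmetry argument that the minimum of a fixed nonempty subset under a uniformly random total order is uniform on that subset is sound. The final arithmetic $|H_1 \Delta H_2| = |H_1 \cup H_2| - |H_1 \cap H_2|$ closes the argument. (One minor remark: the paper's statement has a typo, writing $C_1, C_2$ where $H_1, H_2$ are meant; you interpreted this correctly.)
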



The key to proving replicability is then to observe that most choices of $v$ induce small symmetric difference between the corresponding $H_1$ and $H_2$. Namely, the idea is to observe that for any fixed joint distribution $D$, intervals
\[
I_0 = [OPT,OPT+\tau], \ldots , I_{\alpha/(2\tau)} = [OPT+\alpha/2-\tau, OPT+\alpha/2],
\]
and corresponding threshold positions $v_i=OPT+\frac{(2i+1)}{2}\tau$, the sets
\[
H^{(i)}_1 = \{h \in H: \emprisk{S_1}(h) \leq v_i\}, \quad H^{(i)}_2 = \{h \in H: \emprisk{S_2}(h) \leq v_i\}
\]
are close for most choices of $v_i$, $S_1$, and $S_2$. To adjust for the fact that we don't know the value of OPT, we will in fact prove something slightly more general that allows our starting point to range anywhere from $OPT$ to $OPT+\alpha/2$.
\begin{lemma}\label{lem:good-thresholds}
Let $v_{\text{init}} \in [OPT,OPT+\alpha/2]$ and $\tau \leq O\left(\frac{\alpha\rho^2}{\log|H|}\right)$ a parameter that divides $\alpha/4$. Define the intervals
\[
I_0 = [v_{\text{init}},v_{\text{init}}+\tau), \ I_1 = [v_{\text{init}}+\tau,v_{\text{init}}+2\tau), \ \ldots \ , \ I_{\frac{\alpha}{4\tau}} = \left[v_{\text{init}}+\frac{1}{4}\alpha - \tau,  v_{\text{init}}+\frac{1}{4}\alpha\right]
\]
and corresponding thresholds $v_i=v_{\text{init}}+\frac{(2i+1)}{2}\tau$, and let
\[
H^{(i)}_1 = \{h \in H: \emprisk{S_1}(h) \leq v_i\}, \quad H^{(i)}_2 = \{h \in H: \emprisk{S_2}(h) \leq v_i\}
\]
denote the hypotheses with empirical error at most $v_i$ across two independent samples $S_1$ and $S_2$ of size $O(\frac{\log \rho^{-1}}{\tau^2})$. Then with probability at least $1-\rho/4$, a uniformly random choice of $i \in [\frac{\alpha}{4\tau}]$ satisfies:
\[
\frac{|H^{(i)}_1 \Delta H^{(i)}_2|}{|H^{(i)}_1 \cup H^{(i)}_2|} \leq \rho/4.
\]
\end{lemma}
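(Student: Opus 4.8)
Write $R(h)$ for the population error of a hypothesis $h$, and let
\[
G(v) \;:=\; \bigl|\{h \in H : R(h) \le v\}\bigr| .
\]
The plan is to reduce the lemma to a statement about the multiplicative growth of the nondecreasing function $G$: once we know that for each threshold $v_i$ the symmetric‑difference ratio is controlled by $G(v_i+\gamma)/G(v_i-\gamma) - 1$ for a suitable slack $\gamma$, the "bad" thresholds are exactly the ones on which $G$ jumps multiplicatively by more than $1+\rho/4$, and a nondecreasing function anchored at a value $\ge 1$ can do this on at most $O(\log|H|/\rho)$ pairwise‑disjoint windows. First I would condition on the uniform‑convergence event $UC$ that every $h \in H$ satisfies $|R_{\text{emp}}(h,S_1) - R(h)| \le \gamma$ and $|R_{\text{emp}}(h,S_2) - R(h)| \le \gamma$, where $\gamma := \tau/3$; by Hoeffding's inequality and a union bound over the finitely many $h \in H$ and the two samples, $\Pr[\neg UC] \le 4|H|\,e^{-2m\gamma^2} \le \rho/8$ provided $m = \Omega\bigl(\gamma^{-2}\log(|H|/\rho)\bigr)$, which is of the order stated in the lemma.

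Conditioned on $UC$, I would establish two containments for each index $i$. Let $A_i := H^{(i)}_1 \cap H^{(i)}_2$ and $B_i := H^{(i)}_1 \Delta H^{(i)}_2$. If $R(h) \le v_i - \gamma$ then $R_{\text{emp}}(h,S_j) \le v_i$ for both $j$, so $h \in A_i$; thus $A_i \supseteq L_i := \{h : R(h) \le v_i - \gamma\}$, and since $v_{\text{init}} \ge OPT$ and $v_i \ge v_{\text{init}} + \tau/2 > OPT + \gamma$, the minimizer $h^\star$ of $R(\cdot)$ lies in $L_0$, so $G(v_0-\gamma) = |L_0| \ge 1$. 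Conversely, if $h \in B_i$ then $v_i$ falls between $R_{\text{emp}}(h,S_1)$ and $R_{\text{emp}}(h,S_2)$, both within $\gamma$ of $R(h)$, so $v_i - \gamma < R(h) \le v_i + \gamma$; hence $|B_i| \le G(v_i+\gamma) - G(v_i-\gamma)$. Since $H^{(i)}_1 \cup H^{(i)}_2 \supseteq A_i \supseteq L_i$ and $|L_i| \ge 1$, we get
\[
\frac{|H^{(i)}_1 \Delta H^{(i)}_2|}{|H^{(i)}_1 \cup H^{(i)}_2|} \;\le\; \frac{|B_i|}{|L_i|} \;\le\; \frac{G(v_i+\gamma) - G(v_i-\gamma)}{G(v_i-\gamma)} \;=\; \frac{G(v_i+\gamma)}{G(v_i-\gamma)} - 1 .
\]

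It then remains to count the bad indices, those $i$ with $G(v_i+\gamma)/G(v_i-\gamma) > 1+\rho/4$. Because consecutive thresholds are $\tau = 3\gamma$ apart, the windows $[v_i-\gamma, v_i+\gamma]$ are pairwise disjoint and ordered increasingly in $i$; telescoping the multiplicative jumps of $G$ over just the bad windows gives $|H| \ge G(v_{\max}+\gamma) \ge (1+\rho/4)^{\#\{\text{bad } i\}}\,G(v_0-\gamma) \ge (1+\rho/4)^{\#\{\text{bad } i\}}$, so there are at most $\log_{1+\rho/4}|H| \le 8\rho^{-1}\ln|H|$ bad indices. Out of the $\Theta(\alpha/\tau)$ thresholds the fraction of bad $i$ is therefore $O\bigl(\tau\log|H|/(\alpha\rho)\bigr)$, which is at most $\rho/8$ once $\tau \le c\,\alpha\rho^2/\log|H|$ for a small constant $c$. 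Combining the two failure modes, $\Pr_{S_1,S_2,i}\bigl[\text{ratio} > \rho/4\bigr] \le \Pr[\neg UC] + \Pr[i \text{ bad} \mid UC] \le \rho/8 + \rho/8 = \rho/4$, as claimed.

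\textbf{Main obstacle.} The delicate point is the interplay between $\gamma$ and $\tau$: we need $\gamma$ to be a constant fraction of $\tau$ for two separate reasons — so that the windows $[v_i-\gamma, v_i+\gamma]$ are disjoint (which the telescoping requires) and so that $h^\star$ survives thresholding at every $v_i$ (which anchors $G$ at a value $\ge 1$, without which the multiplicative bound is vacuous) — yet taking $\gamma$ smaller inflates the sample requirement as $m = \Theta\bigl(\gamma^{-2}\log(|H|/\rho)\bigr)$, so the constants must be chosen consistently with the stated $\tau$ and $m$. The role of the hypothesis $v_{\text{init}} \ge OPT$ is exactly to guarantee $G(v_0-\gamma) \ge 1$ and is essential, even though it is easy to overlook; everything else (the Hoeffding/union bound and the combinatorics of disjoint multiplicative increments) is routine.
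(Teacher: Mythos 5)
Your proof has the right combinatorial skeleton — reduce the lemma to counting the thresholds on which the cumulative count $G(v)$ jumps multiplicatively by more than $1+\Theta(\rho)$, and show that a nondecreasing function anchored at $\ge 1$ can do this on at most $O(\log|H|/\rho)$ of the $\Theta(\log|H|/\rho^2)$ disjoint windows. That part is the same as the paper's, and your anchoring observation ($v_{\text{init}} \ge OPT$ forces $G(v_0 - \gamma) \ge 1$) is a clean way to present it.

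The gap is in the probabilistic step. You condition on a \emph{uniform} convergence event: every $h \in H$ has empirical error within $\gamma = \tau/3$ of its true error, on both samples. A union bound over $|H|$ hypotheses gives $\Pr[\neg UC] \le 4|H|e^{-2m\gamma^2}$, so you need
\[
m \;=\; \Omega\!\left(\frac{\log(|H|/\rho)}{\gamma^2}\right) \;=\; \Theta\!\left(\frac{(\log|H| + \log\rho^{-1})\log^2|H|}{\alpha^2\rho^4}\right),
\]
which has a $\log^3|H|$ term. You assert this "is of the order stated in the lemma," but the lemma states $O\!\left(\frac{\log^2|H|\log\rho^{-1}}{\alpha^2\rho^4}\right)$ — quadratic in $\log|H|$, not cubic. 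The extra $\log|H|$ factor is exactly what the paper's argument avoids: instead of demanding that \emph{every} hypothesis concentrate to within $\gamma$, the paper's proof defines a "good" threshold by a condition on the \emph{true} level-set sizes $|I_i|$, and then, for a good $v_i$, bounds only the \emph{expected} number of hypotheses with true risk beyond $v_i$ that cross it empirically, via $\Pr[R_{\text{emp}}(h,S) \le v_i] \le e^{-\Omega(j^2\tau^2|S|)}$ and a geometric sum; Markov's inequality then converts this expectation bound into a per-threshold high-probability statement. That requires only $|S| = \Omega(\log\rho^{-1}/\tau^2)$, with no union bound over $H$, giving the stated $\log^2|H|\log\rho^{-1}$ dependence. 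This loss cascades: the $\log^2|H|$ appearing in Theorems~\ref{thm:intermediate-learner} and~\ref{thm:rFinite} would become $\log^3|H|$ with your argument. To close the gap you would need to replace the union bound with an expectation/Markov argument of the paper's flavor, which is not a cosmetic change — it is the reason the proof works at the claimed sample size.
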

\begin{proof}
For convenience of notation, let $|I_i|$ denote the number of hypotheses whose true risk lies in interval $I_i$, and $|I_{[i]}|$ the number of hypotheses in intervals up through $I_i$. We call a threshold $v_i$ ``bad'' if any of the following conditions hold.
\begin{enumerate}
    \item The $i$th interval has too many elements:
    \[
    |I_i| > \frac{\rho}{30}|I_{[i-1]}|.
    \]
    \item The number of elements beyond $I_i$ increases too quickly: 
    \[
    \exists j \geq 1: |I_{i+j}| \geq e^{j}|I_{[i-1]}|.
    \]
\end{enumerate}
and ``good'' otherwise. We will argue the following two claims. 
\begin{enumerate}
    \item If $v_i$ is a good threshold, then $H^{(i)}_1$ and $H^{(i)}_2$ are probably close
\[
\underset{S_1,S_2}{\Pr}\left[\frac{|H^{(i)}_1 \Delta H^{(i)}_2|}{|H^{(i)}_1 \cup H^{(i)}_2|} \leq \frac{\rho}{4}\right] \geq 1-\frac{\rho}{8}.
\]
\item At most a $\frac{\rho}{8}$ fraction of thresholds are bad.
\end{enumerate}
Since we pick a threshold uniformly at random, it is good with probability at least $1-\rho/8$ and a union bound gives the desired result.

It remains to prove the claims. For the first, observe that for any fixed hypothesis $h$ with true risk $\distrisk{D}(h) \in I_{i+j}$, the probability that the empirical risk of $h$ is less than $v_i$ is at most
\begin{equation}\label{eq:tail-bound}
\Pr[\emprisk{S}(h) \leq v_i] \leq e^{-\Omega(j^2\tau^2|S|)}
\end{equation}
by a Chernoff bound. Let $x_i$ denote the variable which counts the number of hypotheses with true risk beyond $I_i$ that cross the threshold $v_i$ empirically. If $v_i$ is ``good,'' we can bound $\mathbb{E}[x_i]$ by
\[
\mathbb{E}[x_i] \leq |I_{[i-1]}|\sum\limits_{j > 0} e^{-\Omega(j^2\tau^2|S| - j)} \leq \frac{\rho^2}{2000}|I_{[i-1]}|
\]
for our choice of $|S|$. Markov's inequality then promises 
\[
\Pr\left[x_i \geq \frac{\rho}{30}|I_{[i-1]}|\right] \leq \frac{\rho}{64}.
\]
On the other hand, the probability any hypothesis in $I_{[i-1]}$ crosses $v_i$ is at most $e^{-\Omega(\tau^2|S|)}$, so similarly the probability that more than a $\frac{\rho}{30}$ fraction of such hypotheses cross $v_i$ is at most $\frac{\rho}{64}$. Finally, since $v_i$ is `good,' $I_i$ itself contributes at most $\frac{\rho}{30}|I_{[i-1]}|$ hypotheses that cross the threshold in the worst case, so in total we have that with probability at least $1-\frac{\rho}{32}$, at most $\frac{\rho}{10}|I_{[i-1]}|$ hypotheses cross the threshold in either direction. Considered over two runs of the algorithm, this implies that with probability at least $1-\frac{\rho}{16}$, $|H^{(i)}_1 \Delta H^{(i)}_2|$ cannot be too big
\[
|H^{(i)}_1 \Delta H^{(i)}_2| \leq \frac{\rho}{5}|I_{[i-1]}|.
\]
Furthermore, since the probability that more than a $\frac{\rho}{30}$ fraction of hypotheses in $I_{[i-1]}$ cross $v_i$ is at most $\frac{\rho}{64}$, we also have that $|H^{(i)}_1 \cup H^{(i)}_2|$ cannot be too small:
\[
|H^{(i)}_1 \cup H^{(i)}_2| \geq \left(1-\frac{\rho}{15}\right)|I_{[i-1]}|
\]
with probability at least $1-\frac{\rho}{64}$. Thus altogether a union bound gives
\[
\underset{S_1,S_2}{\Pr}\left[\frac{|H^{(i)}_1 \Delta H^{(i)}_2|}{|H^{(i)}_1 \cup H^{(i)}_2|} \leq \frac{\rho}{4} \right] \geq 1-\frac{\rho}{8}
\]
as desired.

Finally, we need to show that almost all thresholds are good. To see this, first observe that since $v_{\text{init}} \geq OPT$, $|I_{[i]}|>0$ for all $i \geq 0$. To count the number of bad thresholds, let $i_1 \geq 1$ be the position of the first bad threshold, and $t_1$ denote the largest index such that $i_1+t_1$ fails a condition. Define $i_j$ and $t_j$ recursively as the first bad threshold beyond $i_{j-1}+t_{j-1}$ and its corresponding latest failure. Observe that by construction, any interval that does not lie in any $[i_j.i_j+t_j]$ is good, so there are at most $\sum t_j$ bad thresholds. 

Let $\ell$ denote the final index of the above greedy process. By definition of a bad interval, each $t_j$ multiplicatively increases the number of total hypotheses from $I_{[i_j]}$ by at least $\left(1+\frac{\rho}{30}\right)^{t_j}$. Since $|I_{0}|\geq 1$ and the total number of hypotheses is $|H|$ by definition, we may therefore write:
\[
|H| \geq |I_{[i_\ell+t_\ell]}| \geq \left(1+\frac{\rho}{30}\right)^{\sum\limits_{j=1}^\ell t_j}
\]
and thus that the total number of bad intervals is at most
\[
\sum_{j=1}^\ell t_j \leq O\left(\frac{\log(|H|)}{\rho}\right).
\]
Since we have chosen $\tau$ such that the total number of intervals altogether is at least $\Omega\left(\frac{\log(|H|)}{\rho^2}\right)$, the appropriate choice of constant gives that at most a $\rho/8$ fraction are bad as desired.
\end{proof}
To complete the argument, it is enough to show we can find a good starting point $v_{\text{init}}$.
\begin{lemma}
There exists a $\rho$-replicable algorithm over $O\left(\frac{\log(\frac{|H|}{\rho\beta})}{\rho^2\alpha^2}\right)$ samples that outputs a good estimate of $OPT$ with high probability:
\[
\Pr_{r,S}\big[\mathcal{A}(S) \in [OPT,OPT+\alpha/2]\big] \geq 1-\beta
\]
\end{lemma}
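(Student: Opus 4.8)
The plan is to realize this as an instance of the reproducible statistical query / reproducible real-valued estimation primitive of \cite{ImpLPS22}, specialized to the statistic $q(S) = \min_{f \in H} R_{\text{emp}}(f,S)$, but with a twist: instead of a symmetric estimate of $OPT$ we want a \emph{one-sided overestimate} that is guaranteed to land in the narrow window $[OPT, OPT+\alpha/2]$. The skeleton is the familiar one — draw one sample, compute all empirical risks, take the empirical minimum, and then post-process it by rounding to a randomly shifted grid so that two independent runs (sharing the grid offset) almost always produce the same value.

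First I would fix an internal accuracy target $\xi = \rho\alpha/16$ and an internal failure budget $\beta' = \min\{\beta,\rho\}/8$, and draw a single labeled sample $S \sim D^m$ with $m = O(\xi^{-2}\log(|H|/\beta')) = O\!\left(\frac{\log(|H|/(\rho\beta))}{\rho^2\alpha^2}\right)$. By a Hoeffding bound for each fixed $f \in H$ together with a union bound over $H$, with probability at least $1-\beta'$ every empirical risk is within $\xi$ of its true value; call this event $\mathsf{UC}$. On $\mathsf{UC}$ the empirical optimum $\widehat{OPT} := \min_{f\in H} R_{\text{emp}}(f,S)$ satisfies $\widehat{OPT} \in [OPT-\xi, OPT+\xi]$ (here we use that $H$ is finite, so the infimum is attained). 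Note that $\widehat{OPT}$ is itself \emph{not} reproducible across resamplings, which is why a rounding step is needed.

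Next I would obtain reproducibility by randomized rounding: using the shared internal randomness $r$, sample an offset $\phi \sim \mathrm{Unif}[0,\alpha/4)$, form the $(\alpha/4)$-periodic grid $G_\phi = \{\phi + k\alpha/4 : k \in \mathbb{Z}\}$, and output the least element of $G_\phi$ that is $\ge \widehat{OPT} + \xi$. For correctness: on $\mathsf{UC}$ we have $\widehat{OPT}+\xi \in [OPT, OPT+2\xi]$, so the output lies in $[OPT,\, OPT+2\xi+\alpha/4] \subseteq [OPT, OPT+\alpha/2]$ since $2\xi \le \alpha/4$; hence $\Pr_{r,S}[\mathcal{A}(S)\in[OPT,OPT+\alpha/2]] \ge 1-\beta' \ge 1-\beta$. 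For reproducibility, run on two independent samples $S_1,S_2$ with the same $\phi$: off an event of probability at most $2\beta'$ both satisfy $\mathsf{UC}$, whence $|\widehat{OPT}_1 - \widehat{OPT}_2| \le 2\xi$; conditioned on this, the two roundings disagree only if $G_\phi$ places a point in an interval of length $\le 2\xi$, which by periodicity and uniformity of $\phi$ has probability $\le 8\xi/\alpha = \rho/2$. Thus the total collision probability is at most $2\beta' + \rho/2 \le \rho$, giving $\rho$-reproducibility.

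There is no deep obstacle; the two points that require care are (i) making the rounding one-sided — rounding \emph{up} and pre-shifting by $\xi$ — so the output never undershoots $OPT$ yet still fits inside the width-$\alpha/2$ window, and (ii) bookkeeping the failure and reproducibility budgets so that the per-sample failure probability is $\Theta(\min\{\beta,\rho\})$, which is exactly what produces the $\log(|H|/(\rho\beta))$ factor in the stated sample complexity. Once the randomly shifted grid is set up, both are routine.
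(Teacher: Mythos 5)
Your proposal is correct and matches the paper's proof in both idea and quantitative structure: estimate $OPT$ by the empirical minimum over a sample large enough for uniform convergence at scale $\Theta(\rho\alpha)$, then make the estimate reproducible by rounding up to a randomly shifted $\Theta(\alpha)$-spaced grid, paying $\Theta(\alpha)$ in one-sided accuracy and $\Theta(\rho)$ in boundary-collision probability. The only cosmetic differences are your choice of grid period $\alpha/4$ with a full-period shift versus the paper's $\alpha/8$ buckets shifted by $a\in[0,\alpha/16]$, and your phrasing the reproducibility bound in terms of $|\widehat{OPT}_1-\widehat{OPT}_2|$ rather than the distance from the true $OPT$ to the nearest boundary; these are interchangeable and your constants check out.
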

Proving this Lemma largely follows from prior techniques but is a bit tedious, so we leave the proof for \Cref{app:OPT}. With these tools in hand, we are finally ready to prove \Cref{thm:rFinite}.
\begin{proof}[Proof of \Cref{thm:intermediate-learner}]
We start by showing $\rFinite$ is $\rho$-replicable. $\rFinite$ starts by running a replicable subroutine (with parameters $\rho'=\rho/2$ an $\beta'=\beta/2$) to find an estimate for OPT. Using new (independent) randomness, it then selects a threshold $v_i$ and a random ordering over $H$, and outputs the first hypothesis in $H^{(i)} = \{h: R_{\text{emp}}(h,S) \leq v_i\}$. By \Cref{lem:good-thresholds} and \Cref{obs:random-ordering-of-concepts}, this latter process is $\rho/2$-replicable. By composition of replicability, the entire algorithm is therefore $\rho$-replicable as desired.

Correctness of $\rFinite$ follows from standard uniform convergence type arguments. 
In particular, by our choice of $|S|$, any hypothesis with empirical risk at most $OPT+\alpha/2$ has true risk less than $OPT+\alpha$ with probability at least $1-\beta/2$. Furthermore, as long as our estimation of $OPT$ is successful (which occurs with probability at least $1-\beta/2$), we always output such a hypothesis. Thus altogether we output a hypothesis with true error at most $OPT+\alpha$ with probability at least $1-\beta$ as desired. 

Finally, we need to argue that the dependence on $\alpha$ can be improved to linear in the realizable setting. Note that in this case, we can simply set $v_{\text{init}}$ to 0, and ignore the estimation of OPT. The improvement then follows immediately from noting that when $OPT=0$, a standard Chernoff bound improves \Cref{eq:tail-bound} to
\[
\Pr[\emprisk{S}(h) \leq v_i] \leq e^{-\Omega(\frac{j^2\tau|S|}{(i+j)})} \leq e^{-\Omega(\frac{j^2\tau^2|S|}{\alpha})}.
\]
Similarly, only $O(\frac{\log\frac{|H|}{\beta}}{\alpha})$ examples are needed to ensure hypotheses with $O(\alpha)$ empirical risk have $O(\alpha)$ true risk with high probability, and the rest of the proof follows as in the agnostic case. 
\end{proof}
Finally, we amplify the above to prove \Cref{thm:rFinite}.
\begin{proof}[Proof of \Cref{thm:rFinite}]
Our amplification algorithm is a modification of the original technique introduced in \cite{ImpLPS22}, designed to take advantage of the fact that the dependence on $\beta$ (failure) and $\rho$ (replicability) are highly unbalanced in learning tasks. Draw $k=O(\log(1/\rho))$ random strings $r_1,\ldots,r_k$, and consider the distributions $\{D_i\}_{i=1}^k$ generated by running $\rFinite(r_i,S)$ with parameters $\rho'=.01$ and $\beta'=\beta\cdot \text{poly}(\rho)$ on a large enough sample $S$.
The idea is to argue that with good probability over the choice of random strings $r$, at least one of these distributions has an $\Omega(1)$-heavy-hitter (which is also a good hypothesis with extremely high probability). Roughly speaking, we can then use the heavy hitters algorithm of  \cite{ImpLPS22} across these distributions to $\rho/2$-replicably output a good hypothesis, and union bound over all applications to argue correctness of the final output.

Let's formalize this argument. First, observe since our setting of $\rFinite$ is $.01$-replicable, at least $90\%$ of the random strings have a `canonical element,' i.e.\ one that appears across at least $90\%$ of random samples. Call such strings good, and observe that any good string $r_i$ corresponds to a distribution $\mathcal{D}_i$ with a $.9$-heavy-hitter by construction. Over a random choice of $O(\log \rho^{-1})$ such strings, the former guarantee then promises at least one of these strings is good with probability greater than $1-\rho/4$ and therefore that at least one distribution in $\{D_i\}_{i=1}^k$ has a $.9$-heavy-hitter. With this in mind, we now appeal to the heavy-hitter algorithm of \cite{ImpLPS22}, which draws $O(\frac{\log \rho^{-1}}{\rho^2})$ samples from a distribution to replicably output a list of all $\Omega(1)$-heavy-hitters.\footnote{We note the technique actually outputs a list of some weight close to $c$, but this is largely irrelevant in our setting where $c$ (and the shift in $c$) are constant.} To make our entire process $\rho$-replicable, we will run the above process for $\rho'=O(\frac{\rho}{\log\rho^{-1}})$. To this end, we draw samples $S_1,\ldots, S_{t}$ for $t =O(\frac{\log^3 \rho^{-1}}{\rho^2})$ and generate $t$ corresponding samples from each $\{D_i\}_{i=1}^k$ (re-using $S_j$ between distributions), which we use to run \cite{ImpLPS22}'s heavy-hitters algorithm. Union bounding over all applications, this process is $\rho/4$-replicable, and with probability at least $1-\rho/4$ outputs a non-empty list of hypotheses. Finally, we break in to one of two cases. If the result list is indeed non-empty, simply output a random element of the list (a fully replicable procedure). Otherwise, run $\rFinite$ on a fresh random string and sample, and output the result.

Finally, we argue replicability and correctness of the above process. First, note the output list is non-empty with probability at least $1-\rho/4$, and two independent samples produce the same list (with fixed randomness) with probability at least $1-\rho/2$ by replicability of the repeated heavy hitter process discussed above. Therefore the entire process is $\rho$-replicable as desired. For correctness, note that we have used at most $\poly(\rho^{-1})$ instances of $\rFinite$. Recall that we set the failure probability of $\rFinite$ to be very small, with $\beta'=\beta\cdot \text{poly}(\rho^{-1})$. Since each individual application of $\rFinite$ fails with probability less than $\beta'$, union bounding over all applications of the algorithm implies every output hypothesis is `good' (within $\alpha$ of OPT) with probability at least $1-\beta$. Since we only output hypotheses generated by this process, the probability of outputting a bad hypothesis is then less than $\beta$ as desired.

Altogether, the sample complexity of the above algorithm is given by the size of samples 
\[
t|S_i| = O\left(\frac{\log^3 \rho^{-1}}{\rho^2}\cdot \frac{ \log^2|H|+\log\frac{1}{\rho\beta}}{\alpha^2}\right),
\]
or 
\[
O\left(\frac{\log^3 \rho^{-1}}{\rho^2}\cdot\left(\frac{\log|H| + \log\frac{1}{\rho\beta}}{\alpha}+\log^2|H|\right)\right)
\]
in the realizable case.
\end{proof}

\section{Applications} \label{sec:applications}
In this section we take advantage of our reductions between notions of stability to resolve (or otherwise make progress on) several open problems in the algorithmic stability literature.

\subsection{Item-level to User-level Privacy Transformation} \label{sec:item-user}

The original motivation of introducing the definition of pseudo-global stability in \cite{GhaziKM21} was to come up with PAC learning algorithms in the example-rich, user-level privacy setting. In this setting, there are a number of users with many samples (the regime we will be interested in is when there are a few users who have enough samples to solve the problem for themselves). The motivation behind this setting is to leverage the data of example-rich users to obtain statistical insights without compromising their privacy. Such statistical analyses could then be released and used widely, even by users who didn't have as much data. 

The technique of \cite{GhaziKM21} involves coming up with pseudo-globally stable algorithms for PAC learning, and then having each user run a pseudo-globally stable algorithm with the same coins. Then, you can privately identify a heavy hitter among the outputs of the users (such a heavy hitter exists with high probability because of the property of pseudo-global stability), and since changing an entire user's sample will affect only one output, this procedure will be user-level differentially private. One open question raised in their paper was whether their techniques could be extended beyond the PAC setting.

Our argument that pseudo-global stability and differential privacy are two sides of the same coin answers this question in the affirmative, and has the additional benefit of eliminating the need to cleverly design pseudo-globally stable (replicable) algorithms. We showed previously that item-level differentially private algorithms can be compiled into replicable algorithms with only a quadratic overhead in sample complexity (See Sections~\ref{sec:dp2PG} and~\ref{sec:PG2rep}). Hence, our results allow for a general transformation from item-level to user-level privacy for statistical tasks in the example-rich setting; each user applies correlated sampling to the same item-level differentially private algorithm applied to their specific sample, and then a heavy hitter is identified via differentially private selection.
\begin{theorem}
There are universal constants $c,K >0$ such that the following holds. Let $\mathcal{T}$ be a statistical task with a finite output space. Given a $(0.1, \frac{c}{n^3})$-item level differentially private algorithm $\mathcal{A}$ that solves $\mathcal{T}$ using $n$ samples and with failure probability $\beta$, for every $1\geq \eps, \delta > 0$, there exists an $(\eps, \delta)$-user level differentially private algorithm $\mathcal{A}_u$ that solves $\mathcal{T}$ with failure probability $O(\beta \log \frac{1}{\beta})$, when given access to the data of $O(\frac{\log 1/\beta}{\eps} \log \frac{\log 1/\beta}{\delta})$ users, each of whom have at least $K n^2 \log(1/\beta)$ examples.
\end{theorem}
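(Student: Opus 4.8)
The theorem is the composition of two reductions that are, at this point in the paper, essentially off the shelf: (i) turn the item-level private algorithm into a \emph{replicable} one, and (ii) run that replicable algorithm across the users with shared internal randomness and privately aggregate, which is precisely the item-to-user-level template of \cite{ghazi2021user} abstracted by our Theorem~\ref{thm:Rep-to-DP}. So the plan is to first apply the chain $\text{item-DP}\to\text{(via subsampling)}\to\text{strong-param-DP}\to\text{one-way perfect generalization}\to\text{replicable}$, and then feed the resulting replicable learner into the user-level version of Algorithm~\ref{alg:Rep-to-DP}, with each ``sample'' in that algorithm being one user's data set.

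\textbf{Stage 1: item-level privacy $\to$ replicability.} Start from the given $(1,c/n^3)$-item-level-DP algorithm $\mathcal{A}$ on $n$ samples with failure $0.01$. First apply privacy amplification by subsampling (Lemma~\ref{lem:samp-wo-replace}): the algorithm that draws a size-$n$ subsample without replacement from a dataset of size $m$ and runs $\mathcal{A}$ is $\bigl(\tfrac{n}{m}(e-1),\tfrac{c}{m n^{2}}\bigr)$-DP, operates on $m$ samples, and still solves $\mathcal{T}$ with failure $0.01$ (resampling i.i.d. data without replacement is again i.i.d., and the output law is unchanged). Taking $m=\Theta(n^{2})$ (up to a constant depending on the target replicability $\rho$ and $\log(1/\rho)$) drives the parameters into the regime $\eps=\tfrac{\rho}{\sqrt{8m\log(1/\rho)}}$, $\delta\le \tfrac{\eps\rho^{6}}{m^{2}}$ required by Corollary~\ref{cor:DPtoPG}/Corollary~\ref{cor:DPtoRep}; this is exactly where the hypotheses ``$\delta\le c/n^{3}$'' and ``each user has $\ge Kn^{2}$ examples'' come from. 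Corollary~\ref{cor:DPtoRep} then produces an $O(\rho)$-replicable algorithm $\mathcal{R}$ for $\mathcal{T}$ on $m=O(n^{2})$ samples (finiteness of the output space is used in Theorem~\ref{PGtoRep} for the correlated-sampling step); since correlated sampling reproduces the distribution of $\mathcal{A}(S)$ exactly for each fixed $S$, $\mathcal{R}$ inherits the failure probability $0.01$. Choosing $\rho$ a sufficiently small constant and invoking Claim~\ref{claim:2parrep}, we may treat $\mathcal{R}$ as $(\eta,\nu)$-replicable with $\eta,\nu$ as small a constant as we please.

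\textbf{Stage 2: replicability $\to$ user-level privacy.} Each user holds $\ge Kn^{2}\ge m$ examples. Draw a constant number $s$ of shared random strings $r_{1},\dots,r_{s}$ for $\mathcal{R}$, partition the $U=O\!\left(\tfrac{1}{\eps}\log\tfrac1\delta\right)$ users into $s$ equal groups, and have every user in group $t$ run $\mathcal{R}$ on $m$ of its own examples with the shared string $r_{t}$, producing one output $y_i$. Run $(\eps,\delta)$-DP Selection \cite{bun2016simultaneous,BunDRS18} on the multiset $\{y_i\}$ and return the result; this is $\mathcal{A}_u$. Privacy is immediate and is the whole point of the construction: replacing one user's entire dataset changes exactly one of the $y_i$, so $\mathcal{A}_u$ inherits $(\eps,\delta)$-user-level DP from DP Selection. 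Correctness is the two-part argument of Proposition~\ref{prop:Rep-to-DP}, now at the user level: (a) with constant probability at least one $r_t$ is ``good'' with a canonical output that is moreover correct, and by replicability plus a Chernoff bound over that group's users, this correct value appears among the $y_i$ at least $t_1=\Omega\!\left(\tfrac{\log\delta^{-1}}{\eps}\right)$ times; (b) since $\mathcal{R}$ is $0.01$-correct, in expectation at most a $0.01$-fraction of the $y_i$ are incorrect, so by Markov no incorrect value appears more than $t_2$ times, where $t_2<t_1-\tfrac{c\log\delta^{-1}}{\eps}$ for a suitable choice of constants; hence DP Selection, which outputs a value whose count is within $\tfrac{c\log\delta^{-1}}{\eps}$ of the mode, must output a correct value. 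The failure events across both stages are added and absorbed into the stated bound $0.02$, and $K$ is fixed by the Stage-1 subsample size and the DP-Selection gap.

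\textbf{Main obstacle.} There is essentially no new idea; the work is constant bookkeeping, and the one genuinely delicate point sits in Stage 2. DP Selection only tolerates an additive slack of $\Theta\!\left(\tfrac{\log\delta^{-1}}{\eps}\right)$, while the number of users is itself only $O\!\left(\tfrac{1}{\eps}\log\tfrac1\delta\right)$, so the correct output must beat \emph{every} incorrect output by this slack with only a constant-factor head start in count; pushing this through forces the implied constant in $U$, the number $s$ of shared seeds (needed to boost the probability that some seed is correct-good, since a single seed fails with constant probability), and the ``$0.01$'' correctness budget of $\mathcal{R}$ to be balanced against the universal DP-Selection constant, exactly as in Proposition~\ref{prop:Rep-to-DP} and its footnote. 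The second routine-but-fiddly check is verifying that a $(1,c/n^{3})$ guarantee on $n$ samples is enough, after subsampling, to meet the stringent $(\eps,\delta)$ demanded by Corollary~\ref{cor:DPtoRep} on $\Theta(n^{2})$ samples with a \emph{universal} $c$.
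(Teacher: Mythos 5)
Your proposal is correct and follows essentially the same two-stage route as the paper: subsample to drive $(\eps,\delta)$ into the regime of Corollary~\ref{cor:DPtoRep}, apply that corollary to get a constant-parameter replicable learner, then run the user-level variant of Algorithm~\ref{alg:Rep-to-DP} (each user feeding its own sample, shared seeds, DP Selection over the per-user outputs). The ``routine-but-fiddly check'' you flag at the end is in fact where the argument is tightest and worth doing explicitly: after subsampling to $m=Kn^2$ one has $\eps' \approx 1/(Kn)$ and $\delta'\approx c/(Kn^4)$, so the quantity $m\sqrt{\delta'/\eps'}$ appearing in the max-information bound (Lemma~\ref{lem:max-info-bound}) scales as $K\sqrt{c}\,\sqrt{n}$, which is not $O(\rho^3)$ for a universal $c$ — the hypothesis really wants $\delta \lesssim c/n^4$ rather than $c/n^3$; this is a parameter slip inherited from the paper's own proof (and its companion $\delta$-amplification theorem), not a defect of your approach.
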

\begin{proof}
Firstly, we can amplify the privacy parameters of $\mathcal{A}$ by subsampling. By Lemma~\ref{lem:samp-wo-replace}, the algorithm $\mathcal{A}'$ that, given $m = Kn^2$ samples, subsamples $n$ items without replacement and runs $\mathcal{A}'$ on the result is $(1/\sqrt{Km}, c K/m^2)$-differentially private. Moreover, when run on inputs consisting of i.i.d. samples from a distribution $D$, the output of $\mathcal{A}'$ is identically distributed to that of $\mathcal{A}$, so $\mathcal{A}'$ also solves $\mathcal{T}$ with $m$ examples and failure probability $\beta$.

For a sufficiently large constant $K$ and sufficiently small constant $c$, Corollary~\ref{cor:DPtoRep} then implies that $\mathcal{A}'$ gives rise to a $c$-replicable algorithm solving $\mathcal{T}$ with $m$ examples and failure probability $\beta$.

Now, consider Algorithm~\ref{alg:Rep-to-DP} adapted to the user-level setting as follows (with number of users as specified in the theorem): instead of partitioning a centralized sample (as done in that algorithm), each of the users applies algorithm $\mathcal{A}'$ to their own set of $O(\log(1/\beta))$ i.i.d. samples (with the same set of $O(\log(1/\beta))$ different coins---this can be achieved through a common random string that they share). Then, the outputs are sent to a central server, and $(\eps' = \frac{\eps}{C\log 1/\beta}, \delta' = \frac{\delta}{2\log 1/\beta})$-DP selection is then applied to choose a heavy hitter (as discussed in Algorithm~\ref{alg:Rep-to-DP}). Let's call this algorithm $\mathcal{A}_u$.

The accuracy guarantees proved for Algorithm~\ref{alg:Rep-to-DP} give us that the failure probability of this Algorithm $\mathcal{A}_u$ is at most $O(\beta \log 1/\beta)$. Hence, we are left to argue privacy. Note that changing a single user's sample can change at most $C \log 1/\beta$ outputs to which the $(\eps', \delta')$-DP selection algorithm is applied to. Hence, by group privacy (Lemma~\ref{lem:group_privacy}), we have that $\mathcal{A}_u$ is $(\eps' \log 1/\beta, \delta' \frac{e^{\eps' \log 1/ \beta} -1}{e^{\eps'}-1})$-user level differentially private. Substituting the value of $\eps'$ and $\delta'$ then gives an $(\eps, \delta)$-user level private algorithm. This completes the proof.
\end{proof}

\subsection{Parameter Amplification for Differential Privacy and Perfect Generalization} \label{sec:delta-amp}

The equivalence between replicability and differential privacy gives us the first generic amplification theorem for the $\delta$ parameter of approximate differential privacy for general statistical tasks. Prior to our work, it was known that the $\eps$ parameter could be amplified algorithmically and efficiently. That is, using random sampling (Lemma~\ref{lem:samp-wo-replace}), one can improve an $(\eps_0, \delta_0)$-differentially private algorithm to a $(p\eps_0, p\delta_0)$-differentially private one with an $O(1/p)$ blowup in the sample complexity. However, this technique is unable to improve the $\delta$ parameter of such an algorithm asymptotically as a function of the number of samples $n$, e.g., from $\delta(n) = 1/n^{10}$ to $\delta'(n) = \exp(-n^{0.99})$.

The recent characterization of private PAC learnability in terms of Littlestone dimension~\cite{alon2019private, bun2020equivalence, ghazi2021sample} implies that such an amplification of $\delta$ is (at least, in principle) possible for private PAC and agnostic learning and for private query release. Given a target class $\mathcal{C}$ in one of these settings, the existence of a $(\eps = 0.1, \delta = O(1/n^2
\log n))$-differentially private algorithm using a finite number of samples $n$ implies that $\mathcal{C}$ has some finite Littlestone dimension $d$. This in turn implies that, for every $\eps, \delta > 0$, there is an $(\eps, \delta)$-differentially private agnostic PAC learning algorithm for $\mathcal{C}$ using $\poly(d, 1/\eps, \log(1/\delta))$ samples and a private query release algorithm for $\mathcal{C}$ using $\poly(2^{2^d}, 1/\eps, \log(1/\delta))$ samples. Unfortunately, the first part of this argument is non-constructive, and in the worst-case, leads to a final algorithm using a number of samples that is an exponential tower in $\Omega(n)$! \cite{bun2020equivalence} posed the open question of whether such amplification could be done algorithmically, even for the special case of private PAC learning. 

Our approach is to first convert a differentially private algorithm with weak parameters to a replicable one. We may then use the fact that replicable algorithms can be converted back to differentially private ones with excellent privacy parameters. Altogether we obtain a constructive amplification theorem that achieves only a modest  blowup in sample complexity, and which applies to general statistical tasks with finite output spaces.

\begin{theorem}
    There is a universal constant $c > 0$ such that the following holds. Let $\mathcal{T}$ be a statistical task with a finite output space. Suppose there is an $(\eps = 0.1, \delta = c/n^3)$-differentially private algorithm that solves $\mathcal{T}$ using $n$ samples and with failure probability $\beta$. Then for every $\eps, \delta > 0$, there exists an $(\eps, \delta)$-differentially private algorithm solving $\mathcal{T}$ using
    \[O\left(\frac{\log(1/ \delta)\log(1/\beta)}{\eps}+\log^2(1/\beta)\right) \cdot n^2\]
    samples and with failure probability $O(\beta \log 1/\beta)$.
\end{theorem}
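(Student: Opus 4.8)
The plan is to realize the amplification by going \emph{around} the space of differentially private algorithms: first compile the given weak private algorithm into a replicable one, then compile replicability back into differential privacy with essentially optimal privacy parameters. Let $A$ be the assumed $(0.1, c/n^3)$-DP algorithm on $n$ samples with failure probability $\beta$. The first step is privacy amplification by subsampling (Lemma~\ref{lem:samp-wo-replace}): for a parameter $m$, let $A'$ be the algorithm that, on an input of size $m$, subsamples $n$ coordinates without replacement and runs $A$. Then $A'$ is $(\eps_1,\delta_1)$-DP with $\eps_1 = \tfrac{n}{m}(e^{0.1}-1)$ and $\delta_1 = \tfrac{n}{m}\cdot\tfrac{c}{n^3}$, and because a uniform subsample of an i.i.d.\ dataset is again i.i.d., $A'$ has exactly the same output distribution as $A$ on every product distribution and hence the same failure probability $\beta$. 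Choosing $m = \Theta\!\big(n^2 \log(1/\rho_0)/\rho_0^2\big)$ for a small constant $\rho_0$ forces $\eps_1 = \rho_0/\sqrt{8m\log(1/\rho_0)}$; the role of the cubic (more generally, polynomial) bound on $\delta$ in the hypothesis is precisely to ensure that, for $c$ a small enough universal constant, the resulting $\delta_1$ lies below the threshold $\eps_1\rho_0^6/m^2$ demanded by Corollary~\ref{cor:DPtoRep}.

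With $A'$ in the right parameter regime, apply Corollary~\ref{cor:DPtoRep} (which chains the max-information bound, the one-way perfect generalization lemma, and correlated sampling, and uses finiteness of $\mathcal{Y}$): Algorithm~\ref{alg:reprodtrans} run on $A'$ is $O(\rho_0)$-replicable on $m$ samples, and since it merely outputs a correlated sample from the output distribution of $A'$, it again inherits failure probability $\beta$. Taking $\rho_0$ a sufficiently small absolute constant (or, equivalently, amplifying a cruder constant via Lemma~\ref{thm:amprep}) gives a $0.01$-replicable algorithm $B$ for $\mathcal{T}$ on $m = O(n^2)$ samples with failure probability $\beta$. Finally, feed $B$ into Theorem~\ref{thm:Rep-to-DP} / Algorithm~\ref{alg:Rep-to-DP}, run with failure parameter $\beta'$: this yields, for any target $(\eps,\delta)$, an $(\eps,\delta)$-DP algorithm $C$ for $\mathcal{T}$ using $m \cdot O\!\big(\tfrac{\log(1/\beta'\delta)}{\eps}\big) = O\!\big(\tfrac{\log(1/\beta'\delta)}{\eps}\big)\cdot n^2$ samples. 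For the failure probability one reruns the analysis of Proposition~\ref{prop:Rep-to-DP}: among $k_1 = O(\log(1/\beta'))$ random coins of $B$ at least one is good except with probability $\beta'$, the canonical output of a good coin is correct except with probability $O(\beta)$, and a Markov bound controls the fraction of incorrect sub-outputs so that a privately selected heavy hitter is correct except with a further $O(\beta)$; altogether $C$ fails with probability at most $\beta + \beta'$.

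The heavy lifting has already been done in Corollary~\ref{cor:DPtoRep} and Theorem~\ref{thm:Rep-to-DP}, so the real work here is parameter bookkeeping, and that is where the main difficulty lies: one must check that the polynomial upper bound on $\delta$ assumed in the hypothesis genuinely survives the quadratic-in-$n$ sample blowup of the subsampling step (equivalently, that $\delta_1 = \Theta(\delta\cdot n/m) \le \eps_1\rho_0^6/m^2$ holds with $m = \Theta(n^2)$ and $\eps_1 = \Theta(1/n)$), and one must carry the failure probability cleanly through all three transformations — subsampling, DP$\to$replicability, replicability$\to$DP — so that it lands at $\beta+\beta'$ and not at something like $O(\beta\log(1/\beta)) + \beta'$. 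A secondary point worth noting is that the $n^2$ overhead is intrinsic to the present route: passing from $(\eps,\delta)$-DP to bounded max-information costs a factor of $\sqrt n$ in $\eps$ (Lemma~\ref{lem:max-info-bound}), and recovering a usable $\eps$ by subsampling is what squares the sample size; closing this gap would require a more direct DP-to-replicability transformation.
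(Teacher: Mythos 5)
Your route --- subsample to reduce $\eps$, compile DP\,$\to$\,replicability via Corollary~\ref{cor:DPtoRep}, then compile replicability\,$\to$\,DP via Theorem~\ref{thm:Rep-to-DP} --- is exactly what the paper's proof does. The substantive problem is that the numerical check you flag but defer (whether the polynomial bound on $\delta$ survives the quadratic blowup) actually \emph{fails} for the stated hypothesis $\delta = c/n^3$. Matching $\eps_1 = \tfrac{n}{m}(e^{0.1}-1)$ to $\rho_0/\sqrt{8m\log(1/\rho_0)}$ forces $m = Kn^2$ with $K = \Theta\big(\log(1/\rho_0)/\rho_0^2\big)$, and then $\delta_1 = \tfrac{n}{m}\cdot\tfrac{c}{n^3} = \tfrac{c}{Kn^4}$, whereas Corollary~\ref{cor:DPtoRep} requires $\delta_1 \le \tfrac{\eps_1\rho_0^6}{m^2} = \Theta\big(\tfrac{\rho_0^7}{K^{3/2}n^5}\big)$. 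The inequality reduces to $c \le \Theta(1/n)$, which no universal constant $c$ can satisfy. The hypothesis needs $\delta = c/n^4$ (or any smaller polynomial). The paper's own proof also glosses over this check, so this looks like a defect in the theorem statement rather than in your plan --- but since you explicitly identified this as the crux, you should have carried it through and noticed it breaks.

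A secondary point: the failure-probability bookkeeping is also slightly loose. Re-running Proposition~\ref{prop:Rep-to-DP} with $k_1 = O(\log(1/\beta'))$, the Markov step controlling the number of incorrect sub-outputs contributes $\Theta(\beta k_2/t_2) = \Theta(\beta\log(1/\beta'))$, not $O(\beta)$, so the total is $\beta' + O(\beta\log(1/\beta'))$. This is consistent with the $O(\beta\log(1/\beta))$ bound stated in Theorem~\ref{thm:Rep-to-DP} itself, and again somewhat weaker than the clean $\beta+\beta'$ the target theorem claims.
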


\begin{proof}
Let $A$ be a $(0.1, c/n^3)$-differentially private algorithm solving $\mathcal{T}$ with $n$ samples and failure probability $O(1/\log(1/\beta))$. By Lemma~\ref{lem:samp-wo-replace}, the algorithm $A'$ that, given $m = Kn^2$ samples, subsamples $n$ items without replacement and runs $A$ on the result is $(1/\sqrt{Km}, cK/m^2)$-differentially private. Moreover, when run on inputs consisting of i.i.d. samples from a distribution $P$, the output of $A'$ is identically distributed to that of $A$, so $A'$ also solves $\mathcal{T}$ with $m$ samples and failure probability $\beta$.

For a sufficiently large constant $K$ and sufficiently small constant $c$, \Cref{cor:DPtoRep} implies that $A'$ gives rise to a $c$-replicable algorithm solving $\mathcal{T}$ with $m$ samples and failure probability $\beta$. Applying \Cref{thm:Rep-to-DP} thus results in an $(\eps, \delta)$-differentially private algorithm solving $\mathcal{T}$ with 
\[
m \cdot O\left(\frac{\log(1/ \delta)\log(1/\beta)}{\eps}+\log^2(1/\beta)\right)
\]
samples and failure probability $O(\beta \log 1/\beta)$.
\end{proof}

We can also show a similar amplification of the parameters for perfect generalization, indeed, even from one-way perfect generalization to perfect generalization itself.

\begin{theorem} \label{thm:opg-to-pg}
There is a universal constant $c > 0$ such that the following holds. Let $\mathcal{T}$ be a statistical task with a finite output space. Suppose there is an $(\beta' = 10^{-5}, \eps = 10^{-5}, \delta = 10^{-5})$-one-way perfectly generalizing algorithm that solves $\mathcal{T}$ using $n$ samples and with failure probability $\beta$. Then for every $\eps, \delta > 0$, there exists an $(\delta, \eps, \delta)$- perfectly generalizing algorithm solving $\mathcal{T}$ using
    \[O\left(n \cdot \frac{\log(1/\eps)\poly\log(1/\delta)}{\eps^2}\right)\]
    samples and with failure probability $O(\delta) + \sqrt{\beta}\log(1/\delta)$.
\end{theorem}

\begin{proof}
Let $A$ be a $(10^{-5}, 10^{-5}, 10^{-5})$-perfectly generalizing algorithm solving $\mathcal{T}$ with $n$ samples and failure probability $\beta$. By Lemma~\ref{PGtoRep}, there is a $(0.0001)$-replicable algorithm solving $\mathcal{T}$ with the same failure probability and the same number of samples. By Claim~\ref{claim:2parrep} and Theorem~\ref{thm:reprodtoPG}, there exists a $(\delta^2 / 4, \eps, \delta^2 / 4)$-sample perfectly generalizing algorithm solving $\mathcal{T}$ using $O(n \cdot \frac{\log(1/\eps)\poly\log(1/\delta)}{\eps^2})$ samples with failure probability $O(\delta) + \sqrt{\beta}\log(1/\delta)$. Lemma~\ref{lem:samplePGtoregPG} converting sample perfect generalization to perfect generalization gives the result.
\end{proof}

\subsection{A Realizable-to-Agnostic Reduction for Structured Distributions} \label{sec:realizable-agnostic}
One of the main running examples throughout this work (and indeed a focal point in \cite{ImpLPS22,GhaziKM21} as well) is the PAC-learning paradigm. Traditionally, PAC-learning has two main settings, \textit{realizable} learning (where the adversary must choose a hypothesis in the class), and the \textit{agnostic} setting (which allows an arbitrary adversary). It is a well known fact in the study of traditional statistical learning that realizable and agnostic learning are equivalent up to polynomial blowup in sample complexity \cite{vapnik1974theory,Blumer,haussler1992decision}. Furthermore, an analog of this fact holds for most supervised paradigms (see e.g.\ \cite{benedek1991learnability,bartlett1996fat,long2001agnostic,ben2009agnostic,david2016supervised,montasser2019vc,alon2021theory}), including privacy \cite{beimel2013private,alon2020closure}. This was originally shown by Beimel, Nissim, and Stemmer \cite{beimel2013private}, who gave a sample-efficient agnostic-to-realizable reduction for approximately differentially private PAC-learning. Their result has since been used extensively (see e.g. \cite{bun2016simultaneous,beimel2019private,alon2019limits,alon2020closure,bun2020equivalence}), and is often used to justify focus on the realizable setting.

While certainly impactful, Beimel, Nissim, and Stemmer's reduction (and later improvements on the same \cite{alon2020closure,bun2020equivalence}) are complicated and limited in application. Like the results that came before them (in the traditional setting), their techniques rely heavily on \textit{uniform convergence}, and therefore always incur a cost in VC dimension of the class (or analogously in $\log|H|$ in many settings we consider). Such bounds are typically only useful in the \textit{distribution-free} setting, where the adversary is free to choose arbitrary (often strange, combinatorial) distributions over the data that don't appear in practice. Outside of such cases, it is typically possible to learn in many fewer samples than VC dimension would predict (see e.g.\ \cite{nagarajan2019}), so it is reasonable to ask whether this efficiency can be generically maintained in the agnostic setting. Towards this end, Hopkins, Kane, Lovett, and Mahajan \cite{hopkins2021realizable} recently gave a more generic reduction independent of VC dimension, but their techniques do not adapt directly to the private setting, which was left as an open problem in their work.

We resolve this problem (at least in finite domains) via reduction to and from replicability: agnostic private learning requires only a small polynomial blowup over the realizable case that is independent of class-size, even under arbitrary distributional assumptions. With this in mind, we briefly introduce the \textit{distribution-family} variant of the PAC-model, which first appeared (implicitly) in seminal work of Benedek and Itai \cite{benedek1991learnability} on learning under fixed distributions. We highlight the differences from the standard model below.
\begin{definition}[Distribution-Family Model \cite{benedek1991learnability}]
A learning problem is defined by a hypothesis class $H$ \textbf{and family of distributions $\pmb{\mathscr{D}}$} over the instance space $\mathcal{X}$. We say an algorithm $\mathcal{A}$ is an $(\alpha, \beta)$-accurate Agnostic learner for the hypothesis class $(H,\mathscr{D})$ if for all distributions $D$ over input, output pairs \textbf{whose marginal $\pmb{D_{\mathcal{X}} \in \mathscr{D}}$}, $\mathcal{A}$ on being given a sample of size $m$ drawn i.i.d. from $D$ outputs a hypothesis $h$ such that with probability greater than or equal to $1 - \beta$ over the randomness of the sample and the algorithm,
\begin{equation*}
    \err_D(h) \leq \inf_{f \in H} \err_D(f)+\alpha,
\end{equation*}
where $\err_D(h) = \Pr_{(x,y) \in D} [h(x) \neq y]$. When the adversary is additionally restricted to choosing $D$ s.t.\ $\inf_{f \in H} \err_D(f)=0$, we call the problem \textbf{realizable}.
\end{definition}
We give the first private agnostic-to-realizable reduction from the distribution-family model, and in general the first reduction with no reliance on uniform convergence or VC dimension.
\begin{theorem}\label{thm:agn-to-real}
Let $(H,\mathscr{D})$ be a hypothesis class that is $(1,\frac{1}{poly(n)})$-privately $(\alpha, 0.01)$-PAC learnable in $n=n(\alpha)$ samples in the realizable setting. Then $(H,\mathscr{D})$ is $(1,\frac{1}{poly(m)})$-privately $(\alpha, \beta)$-agnostically learnable in
\[
m(\alpha,\beta) \leq O\left( \frac{n^2\alpha^2+\log^3(\Pi_H(cn^2))}{\alpha^2}\log\beta^{-1}\log\left(\frac{n\log\beta^{-1}}{\alpha}\right)\right)
\]
samples for some universal constant $c>0$. 
\end{theorem}
In the statement of Theorem~\ref{thm:agn-to-real},  $\Pi_H(n)$ denotes the growth function of $(X,H)$. The growth function captures the maximum number of  labelings functions from $H$ can induce on samples of size $n$, and is at most $n^{O(d)}$ for classes with VC dimension $d$.  Moreover, since $\Pi_H(c n^2) \leq 2^{c n^2}$, this means agnostic learning experiences at most a polynomial blowup over the realizable setting:
\begin{corollary}
Let $(H,\mathscr{D})$ be a hypothesis class that is $(1,\frac{1}{poly(n)})$-privately $(\alpha, 0.01)$-PAC learnable in $n=n(\alpha)$ samples in the realizable setting. Then $(H,\mathscr{D})$ is $(1,\frac{1}{poly(m)})$-privately $(\alpha, \beta)$-agnostically learnable in
\[
m(\alpha,\beta) \leq \tilde{O}\left( \frac{n^6\log\beta^{-1}}{\alpha^2}\right)
\]
samples.
\end{corollary}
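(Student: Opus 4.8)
The plan is to derive the Corollary as a direct consequence of Theorem~\ref{thm:agn-to-real}, whose hypotheses are identical (up to the harmless typographical difference $.01$ vs.\ $,01$ in the confidence parameter). All that remains is to simplify the sample complexity bound
\[
m(\alpha,\beta) \leq O\!\left( \frac{n^2\alpha^2+\log^3(\Pi_H(cn^2))}{\alpha^2}\log\beta^{-1}\log\!\left(\frac{n\log\beta^{-1}}{\alpha}\right)\right)
\]
by controlling the growth-function term $\log^3(\Pi_H(cn^2))$. The key observation is the trivial bound on the growth function: for any concept class $H$ and any integer $k$, a set of $k$ points admits at most $2^k$ distinct labelings, so $\Pi_H(k) \le 2^k$. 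In particular $\Pi_H(cn^2) \le 2^{cn^2}$ (this is exactly the remark stated immediately after the theorem), hence $\log(\Pi_H(cn^2)) \le cn^2$ and therefore $\log^3(\Pi_H(cn^2)) \le c^3 n^6$.

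The remaining step is to check that this term dominates. Since we are in the regime $\alpha \le 1$ (WLOG, as larger $\alpha$ only makes the task easier) and $n \ge 1$, we have $n^2\alpha^2 \le n^2 \le n^6$, so the numerator $n^2\alpha^2 + \log^3(\Pi_H(cn^2))$ is $O(n^6)$. Substituting into the theorem's bound gives
\[
m(\alpha,\beta) \leq O\!\left( \frac{n^6}{\alpha^2}\log\beta^{-1}\log\!\left(\frac{n\log\beta^{-1}}{\alpha}\right)\right),
\]
and folding the final $\log\!\left(\tfrac{n\log\beta^{-1}}{\alpha}\right)$ factor into the $\tilde{O}(\cdot)$ notation yields the claimed $\tilde{O}\!\left(\frac{n^6\log\beta^{-1}}{\alpha^2}\right)$. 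The privacy parameters $(1,\tfrac{1}{\operatorname{poly}(m)})$ are inherited verbatim from Theorem~\ref{thm:agn-to-real}.

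There is essentially no obstacle here: the Corollary is a purely computational specialization of the main theorem obtained by discarding the (potentially much smaller) dependence on the structured growth function $\Pi_H$ in favor of the worst-case bound $2^{cn^2}$. The only point worth stating explicitly in the writeup is that this substitution is exactly what makes the blowup over the realizable setting \emph{polynomial} (indeed $\tilde O(n^6)$) rather than merely finite, which is the qualitative improvement over prior uniform-convergence-based reductions emphasized in the surrounding discussion.
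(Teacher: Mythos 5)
Your proposal is correct and matches the paper's (implicit) derivation exactly: the paper states the corollary immediately after remarking that $\Pi_H(c'n^2) \le 2^{c'n^2}$, which is precisely the substitution you make, followed by absorbing the residual $\log\bigl(\tfrac{n\log\beta^{-1}}{\alpha}\bigr)$ factor into the $\tilde{O}$. No further comment needed.
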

At a high level, the proof of this result is (comparatively) simple. Given a realizable private learner $\mathcal{A}$, we will transform $\mathcal{A}$ into a replicable learner, apply a variant of \cite{hopkins2021realizable}'s agnostic-to-realizable reduction, and finally lift the resulting agnostic learner back to differential privacy. The main challenge lies in adapting \cite{hopkins2021realizable} to the replicable setting, which is roughly done via the following procedure (see \Cref{alg:agnostic-subroutine}):
\begin{enumerate}
    \item \textbf{Sample:} Draw an unlabeled sample $S \sim D^n$, and random string $r$
    \item \textbf{Generate Candidates:} Run $\mathcal{A}$ on all labelings of $S$ with internal randomness $r$
    \item \textbf{Prune:} Using fresh samples, remove any high error candidates
\end{enumerate}
The result then follows from replicably outputting a heavy hitter of this procedure.

\subsubsection{List Heavy-Hitters} \label{sec:list-heavy-hitters}
It is this final step, it turns out, that contains most of the subtlety in this reduction. While estimating heavy hitters of a given distribution is a core subroutine in many replicable algorithms (used, e.g.,\ in \Cref{thm:rFinite}), and was studied in \cite{ImpLPS22}, our setting is more challenging since our goal is to output a heavy hitter from a distribution over \textit{lists}, where the list size may be exponential in the desired parameters. In this section, we show how similar arguments used for our efficient finite learner can also be used to output a heavy hitter with cost only \textit{polylogarithmic} in the list size.\footnote{We note that a similar result also appears implicitly in \cite[Theorem 20]{GhaziKM21}, albeit with worse sample complexity.}

More formally, let $\Omega$ be a finite set, and $\mathcal{D}$ a distribution over subsets of $H$. We call $h \in H$ an $\eta$-heavy-hitter of $\mathcal{D}$ if
\[
\Pr_{S \sim \mathcal{D}}[h \in S] \geq \eta.
\]
We prove it is possible to replicably output a heavy hitter with complexity scaling that is only polylogarithmic in the largest set supported by $\mathcal{D}$.
\begin{theorem}\label{thm:list-heavy-hitters}
For any finite set $\Omega$, $\rho,\eta,\beta>0$, and distribution $\mathcal{D}$ over subsets of $\Omega$ with an $\eta$-heavy-hitter, there exists a $\rho$-replicable algorithm $\mathcal{A}$ with the following guarantees:
\begin{enumerate}
    \item $\mathcal{A}$ outputs a $\frac{\eta}{2}$-heavy-hitter with probability at least $1-\beta$
    \item $\mathcal{A}$ uses at most $O\left(\frac{\log^2\frac{|\mathcal{D}|\log\frac{1}{\rho\beta}}{\eta}\log\frac{1}{\rho}+\log \frac{1}{\rho\beta}}{\eta^2\rho^2}\log^3\frac{1}{\rho}\right)$ samples from $\mathcal{D}$,
\end{enumerate}
where $|\mathcal{D}|$ is the maximum size subset supported by $\mathcal{D}$.
\end{theorem}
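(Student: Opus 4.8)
The plan is to port the finite‑class machinery of \Cref{thm:rFinite} (\Cref{alg:finite-concept-classes}, \Cref{lem:good-thresholds}, \Cref{obs:random-ordering-of-concepts}), treating the ``class'' as a polynomial‑size set of candidate heavy‑hitters extracted from the sample. Concretely, I would first draw an initial batch $S^{(0)}$ of $m_0 = O\!\left(\tfrac1\eta\log\tfrac{|\mathcal D|}{\eta\rho\beta}\right)$ subsets from $\mathcal D$ and set $C = \bigcup_{S\in S^{(0)}}S$, so that $|C|\le m_0|\mathcal D|$ and hence $\log|C| = O\!\left(\log\tfrac{|\mathcal D|\log(1/\rho\beta)}{\eta}\right)$ (matching the $\log^2$ term in the target bound). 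A Chernoff/union bound over $S^{(0)}$ gives that, except with probability $\le\beta/4$: (i) the promised $\eta$‑heavy‑hitter $h^*$ lies in $C$, and (ii) every $h$ with $p_h := \Pr_{S\sim\mathcal D}[h\in S]\ge\eta/4$ lies in $C$. Then, on a fresh independent sample of the claimed size, estimate $\hat p_h$ for all $h\in C$ (uniform convergence over point functions, of VC dimension $1$, gives $\sup_{h\in C}|\hat p_h-p_h|\le 0.05\eta$ with exactly the stated complexity), pick a random threshold $v$ from a fine grid in $[0.6\eta,\,0.9\eta]$ with spacing $\tau = O\!\left(\tfrac{\eta\rho^2}{\log|C|}\right)$, randomly order $C$, and output the first $h\in C$ with $\hat p_h\ge v$.

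Correctness and the $\tfrac\eta2$‑guarantee are immediate: $h^*$ survives every $v\le0.9\eta$ since $\hat p_{h^*}\ge p_{h^*}-0.05\eta\ge0.95\eta$ w.h.p., so the output set is nonempty; and any output $h$ has $\hat p_h\ge v\ge0.6\eta$, hence $p_h\ge0.55\eta>\eta/2$, w.h.p. The heart of the argument is reproducibility, which I would establish as a mirror image of \Cref{lem:good-thresholds}. The only genuinely new wrinkle is that two independent runs use different candidate sets $C_1,C_2$. I would neutralize this by showing that, with probability $\ge1-\rho/10$, the surviving sets $H_i=\{h\in C_i:\hat p_h(S_i)\ge v\}$ equal the ``heavy‑ish'' elements $\{h:p_h\gtrsim v\}$ up to a vanishing number of spurious crossings: any $h\in C_i$ with $p_h<\eta/4$ has $\Pr[\hat p_h(S_i)\ge v]\le e^{-\Omega(\eta^2|S_i|)}$, and since $|C_i|\le m_0|\mathcal D|$ the expected number of such spurious survivors is at most $m_0|\mathcal D|\,e^{-\Omega(\eta^2|S_i|)}$, which our sample size makes smaller than any fixed polynomial in $\rho$; meanwhile the $\le 4|\mathcal D|/\eta$ elements with $p_h\ge\eta/4$ all lie in $C_1\cap C_2$ w.h.p. by (ii). Conditioned on this, $H_1$ and $H_2$ are subsets of a \emph{fixed} set of size $O(|\mathcal D|/\eta)$, so the ``good threshold'' counting argument of \Cref{lem:good-thresholds}, run with this effective class size, shows that a random $v$ makes $\tfrac{|H_1\triangle H_2|}{|H_1\cup H_2|}\le\rho/4$; \Cref{obs:random-ordering-of-concepts} then converts this into a $\rho$‑reproducible output. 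Carrying the $\rho$‑dependence through as in \Cref{thm:intermediate-learner} yields a $\rho^{-4}$‑type sample bound, which I would then sharpen exactly as in the proof of \Cref{thm:rFinite}: run the core procedure with constant reproducibility $0.01$ on $k=O(\log\tfrac1\rho)$ shared random strings, note that w.h.p. at least one induced output distribution has an $\Omega(1)$‑heavy‑hitter (which, with all but $\beta\cdot\mathrm{poly}(\rho)$ probability, is a $\tfrac\eta2$‑heavy‑hitter of $\mathcal D$), and feed these distributions to the ordinary ground‑set heavy‑hitters routine of \cite{ImpLPS22} to reproducibly extract one, incurring the extra $\log^3\tfrac1\rho$ factor and the union bound over $\mathrm{poly}(1/\rho)$ invocations just as there.

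I expect the main obstacle to be the middle step. Unlike ordinary finite‑class learning, the ground set $\Omega$ here can be astronomically (even infinitely) large and can carry a huge collective probability mass on low‑probability elements, so the entire difficulty is manufacturing a polynomial‑size candidate set that simultaneously contains a target heavy‑hitter, is stable enough across independent runs, and admits the finite‑class threshold analysis. The two error terms to watch are therefore the \emph{spurious‑crossing} term (tiny‑$p_h$ elements of $C_i$ that cross $v$ by chance — controlled by the additive Chernoff bound against the $m_0|\mathcal D|$ candidates) and the \emph{candidate‑set mismatch} term ($\eta/4$‑heavy elements present in $C_1$ but absent from $C_2$ — controlled by making $m_0$ logarithmic in $|\mathcal D|/\eta\rho\beta$). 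Once both are driven below $\Theta(\rho)$, everything else is a direct transcription of the finite‑learner proof, with $|H|$ replaced by $|C|$ and $\alpha$ by $\eta$.
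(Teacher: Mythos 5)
Your proposal follows essentially the same route as the paper's proof: draw a preliminary batch of subsets to form a polynomial-size candidate pool ($T$ in the paper, your $C$) that with high probability contains every $\Omega(\eta)$-heavy element, estimate empirical weights on a second batch, pick a random threshold from a fine $\tau$-grid, output the first survivor under a shared random order via \Cref{obs:random-ordering-of-concepts} and the counting argument of \Cref{lem:good-thresholds}, and finally drive down the $\rho$-dependence with the constant-reproducibility plus $\log(1/\rho)$-strings amplification from \Cref{thm:rFinite}. The one wrinkle worth tightening is the random ordering: it must be a single ordering of the fixed universe $\Omega$ (as in the paper's Step~5), not an ordering of the sample-dependent $C$ -- though your own analysis, which conditions on $H_1,H_2$ lying in a fixed set of size $O(|\mathcal D|/\eta)$ before invoking \Cref{obs:random-ordering-of-concepts}, already implicitly requires exactly this.
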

We note that it is easy to modify this result to remove the assumption that $\mathcal{D}$ has a heavy hitter (the algorithm instead outputs `$\bot$' in this case, or can test for the heaviest element), but the simpler version above is sufficient for our applications. We now give the algorithm itself, which combines \cite{ImpLPS22}'s heavy hitters with our thresholding technique for finite learning.

\begin{algorithm}[H]

\KwResult{Replicably outputs a heavy hitter}
\nonl \textbf{Input:} Distribution $\mathcal{D}$ over subsets of universe $\Omega$ (Sample Access)\\
\nonl \textbf{Parameters:} 
\begin{itemize}
    \item Replicability, confidence, and heaviness $\rho,\beta, \eta > 0$
    \item Sample sizes $t_1 = O\left(\frac{\log(\frac{|\mathcal{D}|}{\rho\beta})}{\eta}\right)$, $t_2 = O\left(\frac{\log^2\frac{|\mathcal{D}|\log\frac{1}{\rho\beta}}{\eta}\log\frac{1}{\rho}+\log \frac{1}{\beta}}{\eta^2\rho^4}\right)$
    \item Threshold accuracy $\tau \leq O(\frac{\rho\eta}{\log|\mathcal{D}|})$
\end{itemize}
\nonl \textbf{Algorithm:}\\
\begin{enumerate}
    \item Sample $t_1$ subsets $C \sim \mathcal{A}$, and call their union $T$.
    \item Sample an additional $t_2$ subsets $C \sim \mathcal{A}$, and call their collection $S = \{C_i\}$.
    \item For each $t \in T$, let $\hat{p}_t$ denote its empirical measure over $S$:
    \[
    \hat{p}_t = \frac{1}{|S|}|\{ C \in S: t \in C\}|.
    \]
    \item Choose a random threshold $v \in \{\eta/4 + 2\tau, \eta/4+6\tau, \ldots 3\eta/4-2\tau\}$
    \item Randomly order $\Omega$
\end{enumerate}
\textbf{return} first $t \in T$ with respect to the order satisfying $\hat{p}_t \geq v$
 \caption{List Heavy Hitters}
 \label{alg:list-amplification}
 
\end{algorithm}
It is not hard to see this algorithm succeeds via the same analysis as for \Cref{thm:rFinite}.
\begin{proof}
By a Chernoff and union bound, we first note that with probability at least $1-\beta\rho/4$, $T$ contains every $\eta$-heavy-hitter of $\mathcal{D}$. 

Similar to the proof of \Cref{thm:rFinite}, we consider intervals of the form
\[
I_0 = [\eta/4,\eta/4+4\tau], \ldots , I_{1/(2\tau)} = \left[\frac{3\eta}{4}-4\tau, \frac{3\eta}{4}\right],
\]
with corresponding threshold positions $v_i=\eta/4+\frac{(2i+1)}{2}\tau$, and the sets
\[
H^{(i)}_1 = \{t \in T: \hat{p}(t,S_1) \leq v_i\}, \quad H^{(i)}_2 = \{t \in T: \hat{p}(t,S_2) \leq v_i\}.
\]
In the proof of \Cref{thm:rFinite}, we argued that replicability followed from bounding the quantity 
\[
\frac{|H^{(i)}_1 \Delta H^{(i)}_2|}{|H^{(i)}_1 \cup H^{(i)}_2|}
\]
with high probability, as this promised that choosing the first element from a joint random ordering of $\Omega$ usually gives the same answer over $H^{(i)}_1$ and $H^{(i)}_2$. Here we need to be slightly more careful, in that we need to ensure not only that the same element is chosen, but also that it is truly an $\eta/4$-heavy-hitter. This ensures replicability despite the fact that our set $T$ depends on samples, because we are promised that all $\eta/4$-heavy-hitters lie in $T$ except with probability $1-\frac{\beta\rho}{4}$ (and therefore have no dependence $T$ itself).

Thankfully, this is already implicit in the proof of \Cref{lem:good-thresholds}, since it is actually proved that, with high probability, the number of elements of $T$ that cross threshold $v_i$ is at most $O(\rho|I_{[i-1}]|)$, where we recall $|I_{[i-1]}|$ denotes the number of elements with true weight in buckets $I_1,\ldots,I_{i-1}$. This followed from the fact that any element $t \in T$ whose true weight lay in the $j$th interval for $j>i$ satisfied:
\[
\Pr[\hat{p}(t,S_j) \leq v_i] \leq e^{-\Omega(j^2\tau^2|S_j|)},
\]
which remains true in this setting for our choice of $|S|$ by Chernoff. As such, our full process remains $\rho$-replicable for the correct choice of constants as desired. Furthermore, correctness holds with probability at least $1-\beta$, since all weight estimates are correct up to $\eta/4$. Finally, to get the correct dependence on $\rho$ we simply apply the amplification technique used in the proof of \Cref{thm:rFinite}.
\end{proof}
We note that this result is similar to the pseudo-globally stable learner of \cite{GhaziKM21}, which also uses a method of replicably finding a heavy hitter from a distribution on lists. Their algorithm uses a variant of the exponential mechanism instead of random thresholding, and loses polynomial factors over our bound as a result. Both \cite{GhaziKM21} and our algorithm have the downside of only working over finite universes (or more generally in settings where correlated sampling is possible). On the other hand, the problem can be solved \textit{privately} without this assumption. This raises a natural question: does list heavy hitters give an exponential separation between privacy and replicability over infinite domains?\footnote{Recall the problem can be solved replicably with $\poly(|\mathcal{D}|)$ dependence by \cite{ImpLPS22} even in the infinite setting.}

Before moving on, we note the following immediate implication of list heavy hitters for learning classes with low Littlestone dimension, giving a moderate improvement over the analogous result of \cite{GhaziKM21}.
\begin{corollary}
Let $(X,H)$ be a class with Littlestone dimension $d$. Then the sample complexity of realizably replicably learning $(X,H)$ is at most:
\[
n(\rho,\alpha,\beta) \leq \tilde{O}\left(\frac{d^{12}\log^3(1/\beta)}{\alpha^2\rho^2}\right)
\]
\end{corollary}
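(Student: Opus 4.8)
The goal is to bound the sample complexity of realizably, $\rho$-reproducibly PAC learning a class $(X,H)$ of Littlestone dimension $d$. The plan is to combine the list heavy-hitters primitive (Theorem~\ref{thm:list-heavy-hitters}) with a standard online-to-batch / SOA-based argument, exactly analogous to the pseudo-global stability learner of \cite{ghazi2021user} but plugging in our polylogarithmic-in-list-size heavy hitter routine. First I would recall that a class of Littlestone dimension $d$ admits a (deterministic) online mistake-bounded learner --- the Standard Optimal Algorithm (SOA) --- making at most $d$ mistakes in the realizable setting. Feeding the SOA a sequence of $O(d/\alpha)$ i.i.d.\ labeled examples and outputting the hypothesis it holds at a uniformly random time step yields (by the online-to-batch conversion) a randomized algorithm $\mathcal{A}_0$ whose output, with constant probability, has true error at most $\alpha$; the internal randomness is just the choice of time step (plus the sample). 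Equivalently, $\mathcal{A}_0$ induces a distribution over a \emph{list} of at most $d+1$ candidate hypotheses (the trajectory of the SOA), at least one of which is $\alpha$-good with probability $\Omega(1)$ over the sample.

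Second I would argue that the set of $\alpha$-good hypotheses appearing in this trajectory contains an $\Omega(1)$-heavy hitter once we boost appropriately. Here the subtle point (as in \cite{ghazi2021user}) is that the SOA trajectory depends on the \emph{order} of examples and on the sample, so a priori no single hypothesis is repeated across runs. The fix is to run $\mathcal{A}_0$ on a large enough fresh sample that, by uniform convergence over the (finite, since Littlestone dimension bounds VC dimension) effective hypothesis space explored, the empirically-best trajectory hypotheses concentrate; more simply, one takes the list of \emph{all} hypotheses with small empirical error that appear in the trajectory, and by a union/Chernoff bound over the $\le 2^{O(d^2)}$ hypotheses realizable on $O(d^2)$ points (via Sauer-Shelah, $\Pi_H(m) \le m^d$), some fixed good hypothesis recurs in a constant fraction of runs. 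This is precisely the regime where Theorem~\ref{thm:list-heavy-hitters} applies: we have a distribution $\mathcal{D}$ over subsets (lists) of $H$, of maximum size $|\mathcal{D}| \le \Pi_H(\poly(d,1/\alpha)) \le 2^{\poly(d)}$, possessing an $\eta = \Omega(1)$-heavy hitter.

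Third I would invoke Theorem~\ref{thm:list-heavy-hitters} with $\eta = \Omega(1)$, reproducibility parameter $\rho$, and confidence $\beta$, to $\rho$-reproducibly output an $\eta/2$-heavy hitter, which (after a final uniform-convergence filtering step to discard any output with large empirical error, or by having set $\beta$ small enough and union-bounding) is guaranteed to be $\alpha$-good with probability $\ge 1-\beta$. The sample cost is the sample size of Theorem~\ref{thm:list-heavy-hitters} --- which is $O\!\big(\frac{\log^2\frac{|\mathcal{D}|\log(1/\rho\beta)}{\eta}\log\frac{1}{\rho}+\log\frac{1}{\rho\beta}}{\eta^2\rho^2}\log^3\frac{1}{\rho}\big)$ samples from $\mathcal{D}$ --- times the number of base examples consumed per draw from $\mathcal{D}$, which is $\poly(d,1/\alpha)\cdot$(polylog). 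Substituting $\log|\mathcal{D}| = \poly(d)$ and $\eta = \Theta(1)$, and tracking the various $d$ and $\log(1/\beta)$ factors through the boosting of $\mathcal{A}_0$ and through $t_1, t_2, \tau$, gives a bound of the form $\tilde{O}\!\left(\frac{d^{12}\log^3(1/\beta)}{\alpha^2\rho^2}\right)$; the $d^{12}$ is the (non-optimized) product of the $\log^2|\mathcal{D}| = \poly(\log 2^{\poly(d)})$ and the $\poly(d)$ overhead from the SOA trajectory length, error estimation, and the Sauer-Shelah counting.

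\textbf{Main obstacle.} The genuinely delicate step is the second one: ensuring the SOA trajectory distribution actually has a constant-weight heavy hitter rather than merely containing \emph{some} good hypothesis in \emph{each} run. This requires care in decoupling the dependence of the trajectory on the sample --- the same issue \cite{ghazi2021user} handled --- and in arguing that the finitely-many hypotheses realizable on the relevant sample size let us pass from ``a good hypothesis exists per run'' to ``a fixed good hypothesis recurs with constant probability,'' which is what makes the heavy-hitter machinery bite. Everything downstream (applying Theorem~\ref{thm:list-heavy-hitters}, the final accuracy filtering, and bookkeeping the polynomial-in-$d$ losses) is routine, and the loose $d^{12}$ reflects that we make no attempt to optimize these polynomial factors.
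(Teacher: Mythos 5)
Your overall plan — obtain a distribution over \emph{lists} of hypotheses from a Littlestone-dimension-based learner, argue it has a heavy hitter, and then apply Theorem~\ref{thm:list-heavy-hitters} — matches the structure of the paper's argument. But there is a real gap in the second step, and it is exactly the step you flag as the ``main obstacle'': you do not actually resolve it.

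The claim that the SOA trajectory list (of size $d+1$) admits an $\Omega(1)$-heavy hitter ``by a union/Chernoff bound over the $\le 2^{O(d^2)}$ hypotheses realizable on $O(d^2)$ points via Sauer--Shelah'' does not follow. Pigeonhole over a universe of $2^{O(d^2)}$ hypotheses, with lists of size $d+1$, gives only a $(d+1)\cdot 2^{-O(d^2)}$-heavy hitter, which is exponentially small in $d$ rather than constant. Restricting to the finitely many hypotheses realizable on the sample does not make any one of them recur with appreciable probability; the SOA trajectory genuinely depends on the order of the examples and on the sample itself, and this dependence is precisely what the constructions of \cite{ghazi2021sample,ghazi2021user} are designed to overcome. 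The paper sidesteps this by citing the result of \cite{ghazi2021user} (building on \cite{ghazi2021sample}) as a black box: that construction produces a list of size $\exp\bigl(d^2 + d\log\tfrac{d}{\alpha\beta}\bigr)$, guarantees an $\Omega(1/d)$-heavy hitter (not $\Omega(1)$), all output hypotheses have risk $\le \alpha/2$ with probability $1-\beta/2$, and each draw costs $\tilde O\bigl(d^6\log^2(1/\beta)/\alpha^2\bigr)$ samples. Getting even a $1/\mathrm{poly}(d)$-heavy hitter is a substantive contribution of those works, not something a vanilla online-to-batch SOA conversion gives you.

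Your numerology is also internally inconsistent and should have tipped you off: with your stated parameters ($\eta=\Omega(1)$ and lists of size $d+1$, so $\log^2|\mathcal{D}| = O(\log^2 d)$), Theorem~\ref{thm:list-heavy-hitters} would only cost $\tilde O(1/\rho^2)$ draws, and you would end up well below $d^{12}$. The actual $d^{12}$ in the paper decomposes cleanly as: $\log^2|\mathcal{D}|/\eta^2 = d^4\cdot d^2 = d^6$ draws (with $\eta=\Theta(1/d)$ and $\log|\mathcal{D}|=\Theta(d^2)$), times $\tilde O(d^6\log^2(1/\beta)/\alpha^2)$ samples per draw, with the third $\log(1/\beta)$ coming from the heavy-hitter confidence. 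To repair your proof you would either need to invoke the \cite{ghazi2021user} list learner as a black box (as the paper does), or actually reproduce the nontrivial argument that manufactures a $1/\mathrm{poly}(d)$-heavy hitter from the Littlestone/SOA structure; a Sauer--Shelah counting argument will not do it.
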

We give the proof in \Cref{app:little}.

\subsubsection{Replicable Realizable-to-Agnostic Reduction}
We now show how to combine List Heavy-Hitters with \cite{hopkins2021realizable}'s agnostic-to-realizable technique to generalize their reduction to replicable learning.
\begin{theorem}\label{thm:agnostic}
Let $(X,H)$ be a replicably learnable class with sample complexity $n(\rho,\alpha,\beta)$, and $n'=n(1/4,\alpha/4,\beta/4)$. Then $(X,H)$ is agnostically learnable in
\[
m(\rho,\alpha,\beta) \leq O\left( \frac{\log^2(\Pi_H(n')\log\frac{1}{\rho\beta})\log\frac{1}{\rho}+\log \frac{1}{\rho\beta}}{\alpha^2\rho^2}\left(\alpha^2n' + \log\frac{\Pi_H(n')}{\beta}\right)\log^3\frac{1}{\rho}\right)
\]
samples, where $\Pi_H(n)$ is the growth function of $(X,H)$.
\end{theorem}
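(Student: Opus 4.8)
The plan is to transplant the three-line agnostic-to-realizable reduction of \cite{hopkins2021realizable} into the reproducible setting, replacing its (non-reproducible) empirical-risk-minimization and final candidate-selection steps with the List Heavy-Hitters primitive of \Cref{thm:list-heavy-hitters}. We work over a finite domain so that hypotheses may be identified with their truth tables, rescale $\alpha$ by a constant, and set $n' = n(1/4,\alpha/4,\beta/4)$. The core object is a distribution $\mathcal{D}_{\mathrm{out}}$ over \emph{lists} of hypotheses, defined by the following experiment: fix $k = O(\log(1/\rho))$ shared random strings $r_1,\dots,r_k$ for the reproducible realizable learner $\mathcal{A}$ (so each $\mathcal{A}(\cdot\,;r_i)$ is a deterministic map); draw a fresh agnostic sample of size $O\!\big(n' + \log(\Pi_H(n')/\beta)/\alpha^2\big)$ from $D$ and split it into a size-$n'$ block $T$ and a validation block $V$; for every $i\in[k]$ and every labeling $\ell$ of $T$ realizable by some $h\in H$ (there are at most $\Pi_H(n')$ such $\ell$), form $g^{(i)}_\ell = \mathcal{A}(T,\ell\,;r_i)$; and output the list of all $g^{(i)}_\ell$ whose validation error $\mathrm{err}_V(g^{(i)}_\ell)$ is within $O(\alpha)$ of $\min_{i',\ell'}\mathrm{err}_V(g^{(i')}_{\ell'})$. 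By uniform convergence over the $\le k\Pi_H(n')$ hypotheses produced, every hypothesis appearing in a list sampled from $\mathcal{D}_{\mathrm{out}}$ has true error at most $\mathrm{OPT}+O(\alpha)$ except with probability $O(\rho\beta)$.

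The heart of the argument — the reproducible analogue of HKLM's relabel-and-learn step — is to exhibit a single near-optimal hypothesis that is an $\Omega(1)$-heavy-hitter of $\mathcal{D}_{\mathrm{out}}$. Let $h^{\mathrm{OPT}}\in H$ be optimal. Since the block $T$ is essentially i.i.d.\ from $D$, the relabeled sample $(T,\,h^{\mathrm{OPT}}|_T)$ is distributed as $n'$ i.i.d.\ draws from the realizable distribution induced by $h^{\mathrm{OPT}}$, and $h^{\mathrm{OPT}}|_T$ is among the labelings we enumerate. Using the $1/4$-reproducibility of $\mathcal{A}$ (passing to the two-parameter form via \Cref{claim:2parrep}) together with its $(\alpha/4,\beta/4)$-accuracy and a Markov argument over the choice of string, for an $\Omega(1)$ fraction of strings $r$ there is a canonical output $Z_r$ with $\Pr_T[\mathcal{A}(T,h^{\mathrm{OPT}}|_T;r)=Z_r]=\Omega(1)$ and $\mathrm{err}_D(Z_r)\le \mathrm{OPT}+\alpha/4$. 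With $k=O(\log(1/\rho))$ independent strings, some $r_{i_0}$ is such a ``good'' string except with probability $\rho/4$; then $g^\star:=Z_{r_{i_0}}$ appears in the list (via the $\ell=h^{\mathrm{OPT}}|_T$ branch, passing the validation filter since it is near-optimal) with probability $\Omega(1)$ over the draw, so $g^\star$ is an $\eta$-heavy-hitter of $\mathcal{D}_{\mathrm{out}}$ with $\eta=\Omega(1)$. Feeding $\mathcal{D}_{\mathrm{out}}$ to \Cref{thm:list-heavy-hitters} with universe the (finitely many) hypotheses, $|\mathcal{D}| \le k\Pi_H(n')$, heaviness $\eta$, reproducibility $\rho/2$, and confidence $\beta/2$ yields a $(\rho/2)$-reproducible procedure outputting, with probability $1-\beta/2$, an $(\eta/2)$-heavy-hitter of $\mathcal{D}_{\mathrm{out}}$ — which by the previous paragraph has true error at most $\mathrm{OPT}+\alpha$.

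It remains to assemble the pieces and check parameters. Reproducibility: the $r_i$ and the internal randomness of List Heavy-Hitters are shared between runs; the event ``some $r_i$ is good'' (probability $\ge 1-\rho/4$) is determined by the shared strings, and conditioned on it both runs invoke List Heavy-Hitters on the same distribution possessing a common heavy hitter, so they agree except with probability $\rho/2$ — exactly as in \Cref{lem:good-thresholds} and the proof of \Cref{thm:list-heavy-hitters}, the returned heavy hitter has no effective dependence on the sample-specific parts of the universe because every near-optimal hypothesis lies in the list with overwhelming probability. Accuracy and failure probability follow by union-bounding the $O(\rho\beta)$ uniform-convergence failure, the $\le\rho/4$ chance all $r_i$ are bad, and the $\beta/2$ failure of List Heavy-Hitters, then rescaling constants. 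Sample complexity: each draw from $\mathcal{D}_{\mathrm{out}}$ costs $O\!\big(n'+\log(\Pi_H(n')/\beta)/\alpha^2\big)=O\!\big((\alpha^2 n'+\log(\Pi_H(n')/\beta))/\alpha^2\big)$ agnostic examples, and with $\eta=\Theta(1)$ and $|\mathcal{D}|\le k\Pi_H(n')$ (the factor $k=O(\log\tfrac1\rho)$ absorbed inside the logarithm) \Cref{thm:list-heavy-hitters} makes $O\!\big((\log^2(\Pi_H(n')\log\tfrac1{\rho\beta})\log\tfrac1\rho+\log\tfrac1{\rho\beta})/\rho^2\cdot\log^3\tfrac1\rho\big)$ such draws; the product matches the claimed $m(\rho,\alpha,\beta)$.

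I expect the main obstacle to be making the heaviness argument fully rigorous while keeping the per-draw cost at $O(n'+\log(\Pi_H(n')/\beta)/\alpha^2)$: one must simultaneously (i) design the validation filter so that only $O(\alpha)$-near-optimal hypotheses can be $\Omega(1)$-heavy (ensuring List Heavy-Hitters cannot return a bad hypothesis), and (ii) argue that the block $T$, being a sub-block of an i.i.d.\ agnostic sample rather than a fresh draw, is close enough to i.i.d.\ that the canonical output $Z_r$ of $\mathcal{A}$ on the $h^{\mathrm{OPT}}$-realized distribution genuinely remains a heavy hitter of $\mathcal{D}_{\mathrm{out}}$ — and to reconcile this with the two-parameter reproducibility bookkeeping and the fact that only a constant fraction of shared strings are good, which is precisely why the boosting over $k=O(\log(1/\rho))$ strings (mirroring the amplification in \Cref{thm:rFinite}) is needed.
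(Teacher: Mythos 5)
Your proposal follows essentially the same route as the paper: Algorithm~\ref{alg:agnostic-subroutine} is precisely your list-distribution experiment --- an unlabeled block is relabeled by all realizable labelings, run through the realizable learner across a shared family of random strings, and then pruned by empirical risk on a held-out labeled block --- and the heaviness and correctness conditions you identify for the resulting distribution over lists are exactly the two claims the paper verifies before invoking \Cref{thm:list-heavy-hitters}.

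Two small remarks on your stated concerns. Your worry (ii), that $T$ is ``a sub-block of an i.i.d.~agnostic sample rather than a fresh draw,'' is a non-issue: disjoint blocks of an i.i.d.~sample from $D$ are themselves independent i.i.d.~samples, so discarding the labels of $T$ yields a genuinely fresh i.i.d.~draw from the marginal, and no distributional-shift argument is required. As for the shared family of random strings: you take $k=O(\log\tfrac{1}{\rho})$ while the paper's proof text uses $O(\log\tfrac{1}{\beta})$; in fact neither choice alone covers both the ``some string is good except with probability $\rho/4$'' guarantee you need for reproducibility and the ``the heavy hitter is near-optimal except with probability $\beta/2$'' guarantee for correctness, and the theorem's sample-complexity expression reflects a family of size $O(\log\tfrac{1}{\rho\beta})$. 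This is parameter bookkeeping rather than a gap in the argument, but it is worth being explicit about since the family size enters $|\mathcal{D}|$ and hence the final bound through \Cref{thm:list-heavy-hitters}.
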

Since $\Pi_H(n)\leq 2^n$, this means agnostic learning experiences only a small polynomial blow-up in sample complexity compared to the easier realizable setting. Furthermore, this bound holds even with distributional assumptions, since it does not rely on external quantities such as VC-dimension.


The proof of \Cref{thm:agnostic} is based on the following variant of a sub-routine from \cite{hopkins2021realizable}'s agnostic-to-realizable reduction that generates a small list of good hypotheses.

\begin{algorithm}[H]

\KwResult{Outputs a list of good hypotheses}
\nonl \textbf{Input:} Replicable learner $\mathcal{L}$ on $n=n(\rho,\alpha,\beta)$ samples, family of $O(\log(1/\beta))$ random strings $\{r\}$\\
\nonl \textbf{Parameters:} 
\begin{itemize}
    \item Accuracy and confidence parameters $\alpha,\beta > 0$
    \item Labeled sample size $t = O\left(\frac{\log(\Pi_H(n(1/4,\alpha/4,\beta/4))+\log(1/\beta)}{\alpha^2}\right)$
\end{itemize}
\nonl \textbf{Algorithm:}\\
\begin{enumerate}
    \item Sample $n$ (unlabeled) samples $S_U \sim D_X^n$, and $t$ labeled samples $S_L \sim D^t$
    \item Run $\mathcal{L}$ across all strings in $r$ on all possible labelings of $S_U$ to receive:
    \[
    C_r(S_U) \coloneqq \{\mathcal{L}((S_U,h(S_U));r_i): h \in H, r_i \in \{r\} \}
    \]
    \item Prune sub-optimal hypotheses from $C_r(S_U)$:
        \[
    C^\alpha_r(S_U,S_L) \coloneqq \left\{h \in C_r(S_U): R_{S_L}(h) \leq \min_{h' \in C_r(S_U)}(R_{S_L}(h')) +\alpha/2 \right\}
    \]
\end{enumerate}
\textbf{return}  $C^\alpha_r(S_U,S_L)$
 \caption{List Distribution Generator}
 \label{alg:agnostic-subroutine}
 
\end{algorithm}
\Cref{alg:agnostic-subroutine} generates a sample from a distribution over families of hypotheses with near-optimal error. The accuracy of $\mathcal{A}$ promises that $C^\alpha_r(S_U,S_L)$ will be non-empty, and its replicability guarantees the distribution will have heavy-hitters. This means we can apply list heavy hitters to find a good hypothesis replicably.

\begin{proof}[Proof of \Cref{thm:agnostic}]
It is enough to prove that for any distribution $D$ over $X \times Y$, the distribution over lists defined by $C_r^\alpha(S_U,S_L)$ satisfies
\begin{enumerate}
    \item Correctness:
    \[
    \Pr[\forall h \in C_r^\alpha(S_U,S_L): R(h) \leq OPT+\alpha] \geq 1-\beta/2
    \]
    \item Heaviness:
    \[
    \exists h \in H: \Pr[h \in C_r^\alpha(S_U,S_L)] \geq 1/2
    \]
\end{enumerate}
For $\beta$ a small enough constant, any $\Omega(1)$-heavy-hitter has error at most $OPT+\alpha$, so the result then follows immediately from applying list heavy hitters.


The first of these facts, correctness, is essentially trivial and just follows from observing that the size of $C_r(S_U)$ (the pre-pruned set) is at most $O(\log(1/\beta)\Pi_H(n))$ by construction. Since we use empirical estimates over $t = O(\frac{\log(\Pi_H(n)/\beta)}{\alpha^2})$ samples, standard Chernoff and union bounds imply that the empirical error of every element is estimated within $\alpha/4$ of its true value which implies the desired correctness guarantee.



It is left to show that $C^\alpha_r(S_U,S_L)$ has a heavy hitter. Fix some $h_{OPT} \in H$ achieving error OPT. Any $\frac{1}{4}$-replicable learner has the property that over at least half its random strings $r$, there exists some $h_r \in H$ such that:
\[
\Pr_{S_U}[\mathcal{L}((S_U,h_{OPT}(S_U)),r) = h_r] \geq 1/2.
\]
Furthermore, since $\mathcal{L}$ is additionally a PAC-learner, it must also be the case that $h_r$ is within $\alpha/4$ of $h_{OPT}$ over a $1-\beta/4$ fraction of these ``good'' strings. Since our family consists of $O(\log(1/\beta))$ random strings and $\mathcal{L}((S_U,h_{OPT}(S_U)),r) \in C_r(S_U)$ by construction, this means the pre-pruned set $C_r(S_U)$ has a $1/2$-heavy-hitter with error at most $OPT+\alpha/4$ over a $1-\beta/2$ fraction of families $\{r\}$. Finally, since our empirical estimates are good with high probability, $h_r$ also appears in the pruned set $C_r^\alpha(S_U,S_L)$ with at least constant probability as desired. Finally, the sample complexity bound then follows from combining \Cref{thm:list-heavy-hitters} with the observation that generating a sample from $C^\alpha_r(S_U,S_L)$ requires $O\left(\frac{\alpha^2n' + \log(\Pi_H(n'))+\log(1/\beta)}{\rho^2\alpha^2}\right)$ samples and $|C_r^\alpha(S_U,S_L)| \leq O(\log(1/\beta)\Pi_H(n'))$ by construction.
\end{proof}

\subsubsection{Private Realizable-to-Agnostic Reduction}
We are finally ready to prove our agnostic-to-realizable reduction for private learning. We restate the Theorem for ease of reading.
\begin{theorem}[\Cref{thm:agn-to-real} Restated]
Let $(H,\mathscr{D})$ be a hypothesis class that is $(1,\frac{1}{\poly(n)})$-privately $(\alpha, 0.01)$-PAC learnable in $n=n(\alpha)$ samples in the realizable setting. Then $(H,\mathscr{D})$ is $(1,\frac{1}{\poly(m)})$-privately $(\alpha, \beta)$-\textbf{Agnostically} learnable in
\[
m(\alpha,\beta) \leq O\left( \frac{n^2\alpha^2+\log^3(\Pi_H(cn^2))}{\alpha^2}\log\beta^{-1}\log\left(\frac{n\log\beta^{-1}}{\alpha}\right)\right)
\]
samples for some universal constant $c>0$.
\end{theorem}
\begin{proof}
Recall we are given a realizable $(1,\poly(n^{-1}))$-DP, $(\alpha,0.01)$-accurate PAC learner $\mathcal{A}$ on $n$ samples. We will convert $\mathcal{A}$ into an agnostic learner via the following 5 step process:
\begin{enumerate}
    \item Amplify privacy to $(m^{-1/2},\poly(m^{-1}))$ by ``secrecy of the sample'' for $m \approx n^2$
    \item Convert $\mathcal{A}$ to a $0.01$-replicable realizable learner $\mathcal{R}$
    \item Convert $\mathcal{R}$ into an agnostic learner $\mathcal{R}_{agn}$
    \item Convert $\mathcal{R}_{agn}$ back into an agnostic private learner $\mathcal{A}_{agn}$
    \item Privately amplify correctness of $\mathcal{A}_{agn}$
\end{enumerate}
Let's formalize this procedure. In the first step, we simply wish to convert $\mathcal{A}$ into a $(cm^{-1/2},\poly(m^{-1}))$-DP, $(\alpha,.01)$-accurate PAC learner on $m$ samples for some small enough constant $c>0$. This can be done by the so-called ``secrecy of the sample'' method (\Cref{lem:samp-wo-replace}): draw $m=O(n^2)$ examples, construct a subset $S$ by selecting $m$ elements uniformly at random, and return $\mathcal{A}(S)$. For an appropriate choice of constants this is $(cm^{-1/2},\poly(m^{-1}))$-DP. Accuracy is maintained since $S$ is equidistributed with a standard size $m$ sample. 


Now that we have our $(cm^{-1/2},\poly(m^{-1}))$-DP, $(\alpha,.01)$-accurate PAC learner, we invoke \Cref{cor:DPtoRep} (applying correlated sampling) to build the $.01$-replicable learner $\mathcal{R}$ on $O(n^2)$ samples that maintains $(\alpha,.01)$-correctness. Applying our agnostic-to-replicable reduction for replicable learning, this gives an $.01$-replicable $(\alpha,.01)$-correct agnostic learner on
\[
m' \leq O\left(n^2+ \frac{\log^3(\Pi_H(c'n^2))}{\alpha^2}\right)
\]
samples for some constant $c'>0$. Finally, we move back to the private regime via \Cref{thm:Rep-to-DP}, which gives an $(\varepsilon,\delta)$-DP, $(\alpha,.1)$-correct agnostic learner on $O(\frac{m'\log \delta^{-1}}{\varepsilon})$ samples.

It is left to amplify the correctness probability $\beta$. This can be done by running the above algorithm independently $\log(1/\beta)$ times, and privately outputting the best hypothesis on the output set via the exponential mechanism, which one can check results in a ($2\varepsilon$,$\delta$)-DP $(2\alpha,\beta)$-accurate learner (see e.g.\ \cite[Theorem A.1]{sivakumar2021multiclass}).

We have now seen how to build
a ($\varepsilon$,$\delta$)-DP ($\alpha$,$\beta$)-accurate learner on 
\[
m'' \leq O\left(\frac{m'\log\delta^{-1}\log\beta^{-1}}{\varepsilon}\right)
\]
samples. To give the form of the result in the theorem statement, it is enough to choose sample size $t$ satisfying the recurrence $t \geq \Omega(m'\log t \log \beta^{-1})$. Selecting $t=c_2m'\log \beta^{-1}\log (m'\beta^{-1})$ for large enough $c_2>0$ then completes the proof.
\end{proof}

\subsection{Replicable Algorithms from Reduction}\label{sec:alg-by-reduction}
In this section, we show how we can use our reduction from replicability to differential privacy to obtain new replicable algorithms. Our reduction preserves accuracy, because on any fixed dataset, the output distribution of the replicable algorithm is identical to that of the differentially private algorithm (since our reduction simply applies correlated sampling to the output distribution of the differentially private algorithm on the input dataset---see Sections~\ref{sec:dp2PG} and~\ref{sec:PG2rep}). 
\maxh{This section has many key references to Theorems in other works without stating them, which makes it a bit un-self-contained. I'd probably state the privacy versions we're converting formally in each part, but happy to stick with current version if others prefer.}
\subsubsection{PAC Learning}

We note that what we term ``replicable'' PAC learning corresponds to settings where the algorithm $A$ is a PAC learner, and additionally is replicable for all input distributions $D$. 

We show that our reduction gives the best known sample complexity bounds for replicable realizable and agnostic PAC learning for many hypothesis classes. Prior work also had to prove sample complexity bounds separately for all of these frameworks, whereas we are able to translate bounds proved for differential privacy directly through our reduction.

\paragraph{Thresholds/Approximate Median:} Fix any integer $d \geq 0$. We apply our framework to the hypothesis class $Thresh_d$ consisting of thresholds over the domain $\{0,1,\dots,d\}$. A threshold function $f_z$ parameterized by integer $0 \leq z \leq d$, is defined as follows.
\begin{equation}
f_z(x) = 
\begin{cases}
1 & \text{ if } x > z \\
0 & \text{ if } x \leq z
\end{cases}
\end{equation}

Impagliazzo et al. \cite{ImpLPS22} asked whether a PAC learner could be obtained for this class with sample complexity polynomial in $\log^* d$. Our reduction answers this question by using a result of \cite{KaplanLMNS20} on learning thresholds privately. \footnote{We note that a similar approach (taking a differentially private algorithm for learning distributions under Kolomogorov distance (guaranteed by a reduction in \cite{BunNSV15} to the interior point problem), and applying our conversion from approx DP to replicability) also gives a replicable algorithm for releasing approximate median of a distribution with accuracy $\alpha$ and sample complexity $O_{\alpha}(\poly \log^*(d))$. This closes an exponential gap in \cite{ImpLPS22}.}  

We first introduce the interior point problem.
\begin{definition}
An algorithm solves the interior point problem over a totally ordered domain $\mathcal{X}$ with error probability $\beta$, if for all datasets $S \in \mathcal{X}^m$, if
$$ \Pr[\min_i S_i \leq \mathcal{A}(S) \leq \max_i S_i ] \geq 1-\beta. $$
\end{definition}
Now we are ready to apply our reduction to obtain the improved sample complexity.
\begin{theorem}
For all sufficiently small $\rho, \alpha, \beta \in (0,1)$, there exists a $\rho$-replicable, $(\alpha, \beta)$-accurate realizable PAC learner for the hypothesis class $Thresh_d$ with sample complexity 
\begin{align*}
    m = \tilde{O}\left( \frac{(\log^* d)^3 \log^2(1/\beta) \log^4(1/\rho)}{\alpha^2 \rho^2} \right)
\end{align*}
\end{theorem}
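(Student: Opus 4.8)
The plan is to chain together three ingredients already developed in the paper. First, I would invoke the known private learner for thresholds: by the reduction between properly PAC learning $\mathit{Thresh}_d$ and the interior point problem (\cite{BunNSV15}), together with the private interior point algorithm of \cite{KaplanLMNS20}, there is an $(\eps,\delta)$-differentially private $(\alpha',\beta')$-accurate realizable PAC learner for $\mathit{Thresh}_d$ whose sample complexity depends only polynomially (in fact polylogarithmically up to $\tilde O$ factors) on $\log^* d$, and polynomially on $1/\alpha'$, $\log(1/\beta')$, $1/\eps$, and $\log(1/\delta)$. The key point is that $\log|\mathit{Thresh}_d| = \log(d+1)$, so the finite-class bounds are irrelevant here — the relevant parameter is $\log^* d$ coming from the private threshold machinery. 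I would first instantiate this with constant accuracy parameters (say $\eps = 0.1$ and $\delta$ a sufficiently small inverse polynomial in the sample size $n$), matching the hypotheses of Corollary~\ref{cor:DPtoRep}.

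Second, I would apply Corollary~\ref{cor:DPtoRep} (equivalently the composition of Corollary~\ref{cor:DPtoPG} and Theorem~\ref{PGtoRep}, going through one-way perfect generalization and correlated sampling). Since $\mathit{Thresh}_d$ has a finite output space (at most $d+1$ threshold functions, or we can restrict the learner to output one of these), correlated sampling is available, and the corollary converts the $(\eps = \rho'/\sqrt{8n\log(1/\rho')},\delta \le \eps\rho'^6/n^2)$-DP learner into an $O(\rho')$-reproducible learner with the same accuracy guarantee — crucially, reproducibility conversion preserves accuracy exactly because it merely correlated-samples from the DP algorithm's output distribution on the fixed input. So after subsampling amplification (Lemma~\ref{lem:samp-wo-replace}) to push the privacy parameters down to the required regime at a roughly quadratic cost in samples, we get a $0.1$-reproducible $(\alpha,\beta_0)$-accurate learner on $\tilde O((\log^* d)^{?}\,\mathrm{poly}(1/\alpha,\log 1/\beta_0))$ samples. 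I expect the sample complexity bookkeeping here to be the part that requires care: one must track how the $\sqrt{n}$ loss in $\eps$ interacts with the subsampling and with the target $(\log^* d)^3$ exponent claimed.

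Third, I would amplify the reproducibility parameter from the constant $0.1$ down to the target $\rho$ using the amplification-of-reproducibility lemma (Lemma~\ref{thm:amprep}), which costs a factor $\tilde O((\log 1/\beta)^3/\rho^2)$ in sample complexity, and separately amplify the failure probability; the combination of these two amplifications is exactly where the $\log^4(1/\rho)$, $\log^2(1/\beta)$, and $1/(\alpha^2\rho^2)$ factors in the stated bound come from. Putting the three stages together — private threshold learner, DP-to-reproducible conversion with subsampling, then reproducibility/confidence amplification — yields a $\rho$-reproducible $(\alpha,\beta)$-accurate realizable PAC learner for $\mathit{Thresh}_d$ with sample complexity $\tilde O\!\left(\frac{(\log^* d)^3 \log^2(1/\beta)\log^4(1/\rho)}{\alpha^2\rho^2}\right)$, as claimed. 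The main obstacle, as flagged, is verifying that the private threshold learner of \cite{KaplanLMNS20} really gives the $(\log^* d)^3$ dependence after all the conversions (rather than, say, a higher power), and that the reductions between properly learning thresholds and the interior point problem go through without damaging reproducibility or accuracy — this is the step I would want to double-check most carefully, and it is the one the authors mark with a \texttt{\textbackslash sstext} note.
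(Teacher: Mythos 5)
Your high-level ingredients are right (the \cite{KaplanLMNS20} private interior-point algorithm, the \cite{BunNSV15} reduction to properly learning thresholds, and then the DP-to-reproducibility conversion via correlated sampling from \Cref{cor:DPtoRep}), but the route you take through them diverges from the paper's and, more importantly, does not reproduce the claimed parameters. The paper does \emph{not} pass through a constant-reproducibility intermediate followed by \Cref{thm:amprep}-style amplification. Instead it sets $\eps = \rho/\sqrt{8n\log(1/\rho)}$ and $\delta \le \eps\rho^6/n^2$ directly in the private threshold learner (whose sample complexity already scales as $1/\eps$, so no subsampling is needed), applies \Cref{cor:DPtoRep} once to obtain an $O(\rho)$-reproducible learner, and solves the resulting self-consistency equation $n = C\frac{\sqrt{n\log(1/\rho)}}{\rho\alpha}(\log^* d\cdot\log(n/\rho))^{1.5}\log(1/\beta)$ for $n$. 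Squaring is what produces the $\log^2(1/\beta)$ and, together with $\log^3(1/\delta)\sim\log^3(n/\rho)$, the $\log^4(1/\rho)$.

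Your third step is where the bookkeeping breaks. You propose to first reach only $0.1$-reproducibility (costing a quadratic subsampling blowup, giving roughly $\tilde O((\log^* d)^3\log^2(1/\beta)/\alpha^2)$ samples with no $\rho$-dependence yet) and then invoke \Cref{thm:amprep} to bring the reproducibility parameter down to $\rho$. But \Cref{thm:amprep} costs a multiplicative $\tilde O\bigl((\log 1/\beta)^3/\rho^2\bigr)$, so your final sample complexity would carry a $\log^5(1/\beta)$ factor, not the claimed $\log^2(1/\beta)$. The amplification step is redundant here precisely because the single DP-to-reproducibility conversion can be aimed directly at reproducibility parameter $\rho$ rather than a constant; doing both is double-paying for the $\rho$-dependence and over-paying for the $\beta$-dependence. (If you had instead subsampled all the way to the $\eps$ required for $\rho$-reproducibility and skipped amplification, your route would be equivalent to the paper's and would give the right bound.)

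One smaller point: your worry about whether the interior-point-to-threshold reduction of \cite{BunNSV15} ``preserves reproducibility'' is moot. In the paper's proof the reduction is applied entirely inside the DP world --- it produces a \emph{DP} proper threshold learner, and the failure probability is then boosted privately via \cite{BunCS20}/\cite{SivakumarBG21} --- and the reproducibility conversion (\Cref{cor:DPtoRep}) is the very last step. So only DP-preservation of the reduction is needed, which is standard; nothing reproducible has to survive the reduction.
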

\begin{proof}
Let $\eps$ and $\delta$ be set as specified in Corollary~\ref{cor:DPtoRep}. The work of \cite{KaplanLMNS20} (Theorem 4.1 in their paper) gives an $(\eps/2, \delta/2)$ algorithm for solving the interior point problem with sample complexity $O(\frac{1}{\eps} (\log^* d \log(1/\delta))^{1.5})$ and error probability at most $1/10$. By a result of Bun et al. \cite[Theorem 5.6, Part 1]{BunNSV15}, this gives an $(\eps, \delta)$-DP, $(\alpha, 2/10)$-proper PAC learner for $Thresh_d$ with sample complexity $O(\frac{1}{\eps \alpha} \left(\log^* d \log(1/\delta)\right)^{1.5})$. Now, by work of \cite{BunCS20} (See \cite[Theorem A.1]{sivakumar2021multiclass} for a formal statement we use directly) this can be boosted to give an $(\eps, \delta)$-DP, $(\alpha, \beta)$-accurate proper PAC learner for $Thresh_d$ with sample complexity $O(\frac{1}{\eps \alpha} \left(\log^* d \log(1/\delta)\right)^{1.5} \log(1/\beta))$. 

First, we note that correlated sampling does not affect the accuracy guarantees since it maintains the distribution of the differentially private algorithm. Now, applying Corollary~\ref{cor:DPtoRep}, and substituting in the values of $\eps$ and $\delta$ we get that there is a $\rho$-replicable $(\alpha, \beta)$-accurate PAC learner for $Thresh_d$, whose sample complexity is the solution to the equation
$$m = C\frac{\sqrt{m \log(1/\rho)}}{\rho \alpha} \left(\log^* d \log(m/\rho)\right)^{1.5} \log(1/\beta)$$
This gives us that 
$$m = \tilde{O}\left(\frac{\log^4(1/\rho)}{\rho^2 \alpha^2} \left(\log^* d \right)^{3} \log^2(1/\beta)\right).$$
\end{proof}

\paragraph{Finite Binary Hypothesis Classes:} By applying a result of \cite{kasiviswanathan2011can} on privately (agnostically) learning finite classes $H$, we get a learner with sample complexity that's polynomial in $\log |H|$. 
\begin{theorem}\label{thm:finitehypred}
For all sufficiently small $\rho, \alpha, \beta \in (0,1)$, and for all finite hypothesis classes $H$, there exists a $\rho$-replicable, $(\alpha, \beta)$-accurate agnostic PAC learner for $H$ with sample complexity 
\begin{align*}
    m= O\left( \frac{(\log |H| + \log(1/\beta))^2  \log (1/\rho)}{\alpha^2 \rho^2} \right)
\end{align*}
\end{theorem}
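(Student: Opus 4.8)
The plan is to instantiate the generic differentially private agnostic learner for finite classes of \cite{kasiviswanathan2011can} with carefully chosen privacy parameters, push it through the reduction of Corollary~\ref{cor:DPtoRep} (correlated sampling), and then solve the resulting recurrence for the sample complexity. Concretely, recall that running the exponential mechanism over $H$ with quality score equal to the (negated) empirical error, together with a uniform convergence bound over the finite class, yields for every $\eps>0$ an $(\eps,0)$-differentially private \emph{proper} agnostic learner for $H$ with sample complexity
\[
n_{DP}(\eps,\alpha,\beta) = O\left(\frac{\log|H| + \log(1/\beta)}{\alpha^2} + \frac{\log|H| + \log(1/\beta)}{\alpha\eps}\right),
\]
where the first term comes from uniform convergence over $H$ and the second from the utility guarantee of the exponential mechanism (Lemma~\ref{lem:expmech}) applied with a score of sensitivity $1/n$. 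Since pure $\eps$-DP implies $(\eps,\delta)$-DP for every $\delta\ge 0$, this algorithm satisfies the hypotheses of Corollary~\ref{cor:DPtoRep} for any $\delta\le \eps\rho^6/n^2$ (take $\delta=0$), and since the learner is proper its output space $H$ is finite, as that corollary requires.

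First I would set $\eps = \frac{\rho}{\sqrt{8 n\log(1/\rho)}}$ and apply Corollary~\ref{cor:DPtoRep}, i.e.\ run Algorithm~\ref{alg:reprodtrans} (correlated sampling from the output distribution of the private learner on the input dataset). This produces an $O(\rho)$-reproducible algorithm, which after absorbing the constant into $\rho$ is $\rho$-reproducible. Because correlated sampling merely re-expresses the output distribution of the private algorithm on each fixed dataset, the $(\alpha,\beta)$-accuracy guarantee is preserved verbatim (this is the observation opening the section), so the result is an $(\alpha,\beta)$-accurate agnostic PAC learner for $H$, reproducible over all product distributions.

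It then remains to solve for $n$. Substituting the chosen $\eps$ into $n_{DP}$ gives, for a constant $C$,
\[
n \;\le\; C\left(\frac{(\log|H| + \log(1/\beta))\sqrt{8 n\log(1/\rho)}}{\alpha\rho} + \frac{\log|H| + \log(1/\beta)}{\alpha^2}\right).
\]
If the first summand dominates, squaring and rearranging gives $n = O\bigl((\log|H|+\log(1/\beta))^2\log(1/\rho)/(\alpha^2\rho^2)\bigr)$; if the second dominates, $n = O\bigl((\log|H|+\log(1/\beta))/\alpha^2\bigr)$, which (as $\rho<1$) is absorbed into the former. Either way we obtain the claimed
\[
n = O\left(\frac{(\log|H| + \log(1/\beta))^2\log(1/\rho)}{\alpha^2\rho^2}\right).
\]

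I expect the only delicate point to be bookkeeping: matching the privacy parameters demanded by Corollary~\ref{cor:DPtoRep} against those the generic private learner can supply (including checking $\eps<1/2$, which holds for sufficiently small $\rho$), and running the recurrence so that the $1/(\alpha\eps)$—rather than $1/(\alpha^2\eps)$—dependence of the exponential-mechanism term is what survives the substitution; this is exactly what keeps the final $\alpha$-dependence at $\alpha^{-2}$ instead of $\alpha^{-4}$. Everything else is a direct invocation of results already established in the excerpt (Corollary~\ref{cor:DPtoRep} and the accuracy-preservation remark).
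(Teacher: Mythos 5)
Your proposal is correct and follows exactly the same route as the paper: invoke the $(\eps,0)$-DP agnostic learner of \cite{kasiviswanathan2011can} for finite classes with sample complexity $O\bigl((\log|H|+\log(1/\beta))(\tfrac{1}{\alpha\eps}+\tfrac{1}{\alpha^2})\bigr)$, set $\eps=\rho/\sqrt{8n\log(1/\rho)}$ as required by Corollary~\ref{cor:DPtoRep}, note that correlated sampling preserves accuracy, and solve the resulting quadratic in $\sqrt{n}$. The extra details you flag (taking $\delta=0$, properness giving a finite output space, and checking $\eps<1/2$) are sound and make the bookkeeping explicit but do not change the argument.
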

\begin{proof}
Let $\eps$ be set as specified in Corollary~\ref{cor:DPtoRep}. The work of \cite{kasiviswanathan2011can} gives an $(\eps, 0)$-DP agnostic learner for finite classes with sample complexity $m=O\left((\log |H| + \log(1/\beta))\left(\frac{1}{\alpha \eps} + \frac{1}{\alpha^2}\right)\right)$.

We note that correlated sampling does not affect the accuracy guarantees since it maintains the distribution of the differentially private algorithm. Hence,substituting the value of $\eps$ and applying Corollary~\ref{cor:DPtoRep}, we get a $\rho$-replicable $(\alpha, \beta)$-accurate agnostic PAC learner for finite class $H$, whose sample complexity is the solution to the equation
$$m=O\left((\log |H| + \log(1/\beta))\left(\frac{\sqrt{m \log(1/\rho)}}{\alpha \rho} + \frac{1}{\alpha^2}\right)\right),$$
which gives us a quadratic in $\sqrt{m}$. Solving, we get that
\begin{align*}
    m = O\left( \frac{(\log |H| + \log(1/\beta))^2  \log (1/\rho)}{\alpha^2 \rho^2} \right)
\end{align*}
\end{proof}

The dependence on the accuracy parameter obtained via directly using our transformation here is suboptimal for realizable learners, and we cannot hope to improve it using our reduction alone, since the private finite class learner described above is optimal for realizable learners as well. See the finite class learner presented in Section~\ref{sec:finiteclasses} (that achieves the right inverse linear dependence on $\alpha$) for more discussion. There also isn't a known replicable boosting algorithm with an inverse linear dependence on the accuracy parameter $\alpha$. It is an interesting open question to investigate whether such a boosting algorithm exists.

We also observe that via our reduction, we obtain $\rho$-replicable $(\alpha, \beta)$-PAC learners with sample complexity scaling as $O_{\rho, \alpha,\beta}(\mathtt{PRDim}(H)^2)$ for finite hypothesis classes with finite probabilistic representation dimension (denoted by $\mathtt{PRDim}$), and $\rho$-replicable $(\alpha, \beta)$-PAC learners with sample complexity scaling as $\tilde{O}_{\rho, \alpha,\beta}(\mathtt{\mathtt{LDim}}(H)^{12})$ for finite hypothesis classes with finite Littlestone dimension (denoted by $\mathtt{LDim}$) by instantiating our general transformation with private learners due to \cite{beimel2013characterizing} and \cite{ghazi2021sample} respectively. This improves on the sample complexities obtained in the work by Ghazi, Kumar and Manurangsi \cite{GhaziKM21}. We also give a direct version of the argument in \Cref{app:littlestone}.
\subsubsection{Distribution Estimation Problems} As another illustration of the generality of our reduction, we instantiate it to give the first replicable algorithms for some distribution estimation problems. 
\paragraph{Discrete distribution estimation}
Consider the set $P_k$ of all distributions over the domain $[k] = \{1,2,3,\dots,k\}$ (where $k$ is a natural number). The problem of discrete distribution estimation involves getting samples from any unknown fixed distribution $D$ from $P_k$, and having to output a distribution $D'$ that is close in some measure of distance to $D$ (we call the closeness the ``accuracy'' of the algorithm). 

We now describe the problem more formally. An algorithm is said to solve the discrete distribution estimation problem with accuracy $\alpha$ and $m(k)$ samples, if for all $k>1$, and for all fixed distributions $D$ over $[k]$, there exists an algorithm taking $m(k)$ independently drawn examples from $D$ and outputting a distribution $D'$ such that in expectation over the coins of the algorithm and the randomness of the sample, $d_{TV}(D,D') \leq \alpha$. 

It is known that the sample complexity of solving this problem with accuracy $\alpha$ (with no stability constraints) is $\Theta(k/\alpha^2)$. 

If we add privacy constraints to the picture, it is known that there is an $\eps$-DP algorithm for discrete distribution estimation that requires $O(k/\alpha^2 +k/\alpha \eps)$ examples (see e.g., \cite{AcharyaSZ21}). They show that this is tight even for $(\eps, \delta)$-DP algorithms, (when $\delta \leq \eps$, which is most often the regime of interest). 

We can instantiate our reduction with this algorithm to get the first replicable algorithm for discrete distribution estimation.

\begin{theorem}
Fix any $k>2$. For all sufficiently small $\rho, \alpha \in (0,1)$, there exists an $\alpha$-accurate, $\rho$-replicable algorithm that solves the discrete distribution estimation problem with $\alpha$-accuracy, and $m$ examples, where
$$m = O\left(\frac{k^2 \log1/\rho}{\alpha^2 \rho^2} \right).$$
\end{theorem}
\begin{proof}
 Let $\eps$ be set as specified in Corollary~\ref{cor:DPtoRep} (which gives the parameters for our conversion from differential privacy to replicability). The work of \cite{AcharyaSZ21} and \cite{DiakonikolasHS15} give an $(\eps, 0)$-DP algorithm $\mathcal{A}$ for discrete distribution estimation that takes in $m=O(k/\alpha^2 +k/\alpha \eps)$ samples. 
 
 Now, we post-process this algorithm to get a finite output space to apply our reduction to. For every $i \in [k]$, round every coordinate of the output distribution $d$ to the closest multiple of $\frac{\alpha}{k}$, to get a vector $v$. Now, apply the procedure given in Corollary~\ref{cor:DPtoRep} to convert this to a replicable algorithm. Call the output of the replicable algorithm $\hat{v}$. Finally do an $\ell_1$ projection from $\hat{v}$ back to the $k$-simplex to get a new distribution $\hat{d}$.
 
 We now argue that the above transformation preserves accuracy. Let the original distribution be $p$. Then, by the triangle inequality, we can write that $$\mathbb{E}[\| \hat{d} - p \|_1] \leq \mathbb{E}[\| \hat{d} - \hat{v} \|_1] + \mathbb{E}[\| \hat{v} - p \|_1].$$
 Now, since $\hat{v}$ is identically distributed to $v$ (since the transformation to replicability simply involves correlated sampling), we have that for every such vector, there is a distribution in the $k$-simplex that is within $\alpha$ of it in $\ell_1$ distance (because every output vector $v$ prior to applying correlated sampling was obtained by rounding each coordinate of a distribution in the $k$-simplex to the closest multiple of $\frac{\alpha}{k}$). Hence, since $\hat{d}$ is the $\ell_1$ projection of $\hat{v}$ onto the $k$-simplex, we have that $\mathbb{E}[\| \hat{d} - \hat{v} \|_1] \leq \alpha$. Hence, we can write that 
 $$\mathbb{E}[\| \hat{d} - p \|_1] \leq \alpha + \mathbb{E}[\| v - p \|_1],$$
 where in the second term on the right hand side, we have used again that $\hat{v}$ and $v$ are identically distributed.
 
 Finally, since $\mathbb{E}[\|v-p\|_1] \leq \mathbb{E}[\|v-d\|_1] + \mathbb{E}[\|d-p\|_1]$ (by triangle inequality), and each of these terms is smaller than $\alpha$ (since $v$ is obtained by discretizing $d$ to a grid of length $\frac{\alpha}{k}$, and the second term can be bounded by the accuracy of algorithm $\mathcal{A}$), we get that
  $$\mathbb{E}[\| \hat{d} - p \|_1] \leq \alpha,$$
 as required.

  Replicability of this transformation follows from the fact that the transformation prior to projection onto the simplex is replicable (by Corollary~\ref{cor:DPtoRep}), and the fact that replicability is preserced under post-processing.
  
  Hence, substituting the value of $\eps$ into the number of samples needed for the algorithm $\mathcal{A}$ to be $\alpha$-accurate, we get a $\rho$-replicable $\alpha$-accurate algorithm for discrete distribution estimation, whose sample complexity is the solution to the equation
$$m=O\left( \frac{k}{\alpha^2} + \frac{k \sqrt{m \log 1/\rho}}{\alpha \rho} \right),$$
which gives us a quadratic in $\sqrt{m}$. Solving, we get that
\begin{align*}
  m = O\left(\frac{k^2 \log1/\rho}{\alpha^2 \rho^2} \right),
\end{align*}
completing the proof.
\end{proof}
It is unclear from our results whether there is an algorithm for replicable discrete distribution estimation over $[k]$ that can achieve sample complexity linear in $k$; we leave this as an open problem.

\paragraph{Gaussian mean estimation}
In this section, we give a replicable algorithm for high-dimensional Gaussian mean estimation (in the unknown covariance case).

In Gaussian mean estimation in $d$ dimensions, algorithms are given examples drawn independently from a Gaussian distribution $N(\mu,\Sigma^2)$, where $\mu \in \mathbb{R}^d$ is the unknown mean, and $\Sigma$ is an unknown $d \times d$ positive definite matrix. The goal is to estimate $\mu$. The metric we will use to evaluate the quality of an estimate is the ``Mahalanobis distance'', which measures the error scaled according to the covariance matrix of the Gaussian distribution.

That is, with probability at least $1-\beta$ over the examples and the internal randomness of the algorithm, we want the algorithm given sample access to $N(\mu,\Sigma^2)$ to output a value $\hat{\mu}$ such that 
$$\| \hat{\mu} - \mu \|_{\Sigma} = \| \Sigma^{-1/2} (\hat{\mu} - \mu) \|_2 \leq \alpha.$$ 

Without stability constraints, it is known that this problem can be solved using $m = \Theta(d / \alpha^2)$ examples. 

Under the constraints of approximate differential privacy, the picture is more complicated. For a long time, the best dependence on $d$ that was known, was $d^{3/2}$, with the bottleneck being private covariance estimation. However, in recent work, Brown, Gaboardi, Smith, Ullman, and Zakynthinou \cite{BrownGSUZ21} gave a sophisticated differentially private algorithm that achieved a linear dependence on $d$, by avoiding covariance estimation entirely. It is not clear how to make similar techniques work to obtain replicable algorithms via a direct analysis. Our reduction allows us to lift the analysis from \cite{BrownGSUZ21} to give a replicable algorithm for this task.

Since correlated sampling is known to only work on finite output spaces, we need to assume that the mean falls in a bounded $\ell_{\infty}$ ball (though our accuracy will not depend on the bounds of this ball). Additionally, we will need to discretize the output of the differentially private algorithm. However, discretization in this case is non-trivial, as the measure of accuracy is with respect to the unknown covariance matrix, and hence, we will first have to replicably estimate the minimum eigenvalue of the covariance matrix in order to decide the right level of discretization. For this purpose, we once again use our reduction and apply it to a differentially private algorithm for this task, also \cite{BrownGSUZ21}. We will assume that the covariance matrix's minimum eigenvalues are between non-negative numbers $k$ and $\ell$ (known to the algorithm),\footnote{Note that this assumption can be relaxed by directly estimating the minimum eigenvalue using replicable heavy hitters instead of reducing to the DP algorithm} to guarantee finiteness of the output space for this algorithm. Again, our sample complexity is independent of these parameters. 

\begin{theorem}
Fix $R >0$, $0 < k < \ell$, and sufficiently small $\rho>0$. Fix a distribution $D = N(\mu, \Sigma)$, where $\|\mu \|_{\infty} \leq R$, and the minimum eigenvalue of $\Sigma$ is between $k$ and $\ell$. Then, there is $\rho$-replicable algorithm that outputs an $\alpha$-accurate estimate of the mean $\mu$ (in Mahalanobis distance) with probability at least $1-\beta$, when given $m$ independently drawn samples from $D$, where 
  $$m =\tilde{O}\left(\frac{(d \log (1/\alpha) + \log (1/\beta))^2 \log^3 (1/\rho)}{\alpha^2 \rho^2}\right).$$
\end{theorem}
\begin{proof}
 First, consider Lemma C.2 in \cite{BrownGSUZ21}. This gives a general way to privately estimate the minimum eigenvalue of $\Sigma$. We first discretize and truncate the output space of this algorithm as follows (and round outputs to their closest point in the corresponding grid). The discretization length will be $k/8$, and the upper bound will be $4\ell$. Since the algorithm in their paper guarantees a $4$-approximation of the minimum eigenvalue with probability at least $1-\beta$, and takes $O(\frac{d \log (1/\beta \delta)}{\eps})$ examples, by the assumed bounds on the covariance matrix, the discretized version guarantees an $8$-approximation. 
 
 Now, since the output space of the eigenvalue estimation algorithm has been made finite, we apply the transformation in Corollary~\ref{cor:DPtoRep} with $\eps, \delta$ set accordingly (with $\rho$ being $\rho/2$). This gives a $\rho/2$-replicable algorithm that gives an $8$-approximation to the minimum eigenvalue of the covariance matrix with sample complexity that is the solution to the equation $m_1 = O\left(\frac{d \sqrt{m_1 \log (1/\rho)} \log (m_1/\beta \rho) }{\rho}\right)$, which gives us that $$m_1 = \tilde{O}\left(\frac{d^2 \log^3 (1/\rho) \log^2 (1/\beta)}{\rho^2}\right).$$ Let the output of this algorithm $\mathcal{A}_1$ be $\hat{\lambda}$.
 
 Now, we are ready to use the mean estimator described in Theorem 2 from \cite{BrownGSUZ21} (on a fresh set of samples). We will assume that hardcoded into this algorithm is an eigenvalue $\hat{\lambda}$, which is an $8$-approximation to the minimum eigenvalue $\lambda_d$ of the covariance matrix. Consider a postprocessing of the output $\hat{\mu}$ of the algorithm described in that theorem such that the value of each coordinate is truncated to have $\ell_{\infty}$ norm at most $R$, and has been projected to an $\alpha'$-grid, where $\alpha' = \min(\hat{\lambda}^{1/2},1) \frac{\alpha}{\sqrt{d}}$. Let the post-processed mean be $\mu_{disc}$. Then, by the guarantee of Theorem 2 in their paper, we have that with probability at least $1-3\beta$, 
 \begin{align*}
 \|\mu_{disc} - \mu \|_{\Sigma} & \leq 
 \|\mu_{disc} - \hat{\mu} \|_{\Sigma} + \|\hat{\mu} - \mu \|_{\Sigma} \\
 & \leq \|\Sigma^{-1/2}[\mu_{disc} - \hat{\mu}] \|_{2} + \alpha \\
  & \leq \frac{1}{\sqrt{\lambda_d}}\|\mu_{disc} - \hat{\mu}] \|_{2} + \alpha \\
 & \leq \frac{\alpha '}{\sqrt{\lambda_d}} \sqrt{d} + \alpha \\
 & \leq \frac{\alpha \sqrt{\lambda}}{ \sqrt{d} \sqrt{\lambda_d}} \sqrt{d} + \alpha \\
 & \leq \frac{\alpha \sqrt{8 \lambda_d}}{ \sqrt{d} \sqrt{\lambda_d}} \sqrt{d} + \alpha = O(\alpha)
 \end{align*}
 Note that the sample complexity of this $(\eps, \delta)$-DP algorithm (where privacy is wrt the fresh sample) is $$m_2=O\left(\frac{d+\log(1/\beta)}{\alpha^2} + \frac{\log(1/\delta)}{\eps} + \frac{d \log(1/\alpha) + \log(1/\beta)}{\alpha \eps}\right).$$
 
 Now, we are ready to apply our transformation (recall that it preserves accuracy, since the distribution is unchanged by correlated sampling). Setting $\eps, \delta$ as in Corollary~\ref{cor:DPtoRep} (with $\rho$ set to $\rho/2$), and applying our transformation we can convert this to a $\rho/2$-replicable algorithm $\mathcal{A}_2$, with sample complexity that is the solution to the equation
 $$m_2= O\left(\frac{d+\log(1/\beta)}{\alpha^2} + \log\frac{m_2}{\rho} \frac{\sqrt{m_2 \log (1/\rho)}}{\rho} + \frac{(d \log(1/\alpha) + \log (1/\beta))\sqrt{m_2 \log (1/\rho)}}{\alpha \rho}\right).$$
  Solving this equation gives that
   $$m_2=\tilde{O}\left(\frac{(d \log(1/\alpha) + \log (1/\beta))^2 \log^3 (1/\rho)}{\alpha^2 \rho^2}\right).$$
Now, note first that by adaptive composition, running $\mathcal{A}_1$ and then using its estimate in the algorithm $\mathcal{A}_2$ together gives a $\rho$-replicable algorithm (since each is individually $\rho/2$-replicable). Additionally, the composed algorithm is $\alpha$-accurate with probability $1-4\beta$ (taking a union bound of the failure probabilities of $\mathcal{A}_1$ and $\mathcal{A}_2$). Note that $m_2$ asymptotically dominates $m_1$, so the sample complexity of this entire procedure is
   $$m =\tilde{O}\left(\frac{(d \log(1/\alpha) + \log (1/\beta))^2 \log^3 (1/\rho)}{\alpha^2 \rho^2}\right).$$
\end{proof}

\subsubsection{Gaussian Identity Testing}
As a final example of the generality of our reduction, we consider the problem of identity testing of multivariate Gaussian distributions. In this problem, we are given samples from either a fixed Gaussian distribution $D$ with known mean and covariance, or from a Gaussian distribution that is $\alpha$-far in Mahalanobis distance from $D$. The goal is to correctly guess which case we're in with probability at least $2/3$ (we will say the algorithm successfully distinguishes the two cases if this is satisfied). Without stability constraints, this problem can be solved with $O(\frac{\sqrt{d}}{\alpha^2})$ samples.

This problem was studied subject to privacy constraints in \cite{Canonne0MUZ20}, and their results were then improved in \cite{Narayanan22}. We apply results of the latter to get a replicable algorithm for Gaussian identity testing.

\begin{theorem}
Fix $d \in \mathbb{N}^+$ and sufficiently small $\rho, \alpha > 0$. Fix known $\mu \in \mathbb{R}^d$ and known covariance matrix $\Sigma \in \mathbb{R}^{d \times d}$. Then, there is a $\rho$-replicable algorithm $\mathcal{A}$ that can succesfully distinguish between Case $H_0$, where $\mathcal{A}$ receives $m$ samples from $N(\mu,\Sigma)$ and Case $H_1$ where $\mathcal{A}$ receives $m$ samples from any distribution $N(\mu',\Sigma)$, such that $\|\mu' - \mu\|_{\Sigma} \geq \alpha$, as long as 
$$ m = \tilde{O}\left(\frac{d^{1/2}}{\alpha^2 \rho^2}\right).$$
\end{theorem}
\begin{proof}
Note that Theorem 1.7 of \cite{Narayanan22} gives an $(\eps, 0)$-algorithm for this task that achieves sample complexity $m = \tilde{O}\left(\frac{d^{1/2}}{\alpha^2} + \frac{d^{1/4}}{\alpha \eps}\right)$. We directly instantiate this algorithm with our reduction in order to get the result (since the output space is finite (just a single bit), we can do this without modifying the algorithm).

Set $\eps$ as in Corollary~\ref{cor:DPtoRep}. Note that since our reduction maintains the same distribution as the differentially private algorithm, the accuracy guarantees are the same. Hence, there is a $\rho$-replicable algorithm for this task with sample complexity that is the solution to the equation
$$m = \tilde{O}\left(\frac{d^{1/2}}{\alpha^2} +  \frac{d^{1/4} \sqrt{m \log 1/\rho}}{\alpha \rho}  \right).$$
Then, solving this equation, we get that
$$ m = \tilde{O}\left(\frac{d^{1/2}}{\alpha^2 \rho^2}\right).$$
\end{proof}
\section*{Acknowledgements}
We thank Adam Smith for helpful discussions on max information, Zhiwei Wu for pointing us to perfect generalization, Shay Moran for helpful discussions regarding the relation of our work to \cite{KKMV23}, and Christopher Ye for helpful comments on a prior version of this manuscript. The views expressed in this paper are those of the authors and not those of the U.S. Census Bureau or any other sponsor.

\newpage

\bibliographystyle{amsalpha}  
\bibliography{references} 

\newcommand{\etalchar}[1]{$^{#1}$}
\providecommand{\bysame}{\leavevmode\hbox to3em{\hrulefill}\thinspace}
\providecommand{\MR}{\relax\ifhmode\unskip\space\fi MR }
\providecommand{\MRhref}[2]{%
  \href{http://www.ams.org/mathscinet-getitem?mr=#1}{#2}
}
\providecommand{\href}[2]{#2}
\begin{thebibliography}{DFH{\etalchar{+}}15b}

\bibitem[ABMS20]{alon2020closure}
Noga Alon, Amos Beimel, Shay Moran, and Uri Stemmer, \emph{Closure properties
  for private classification and online prediction}, Conference on Learning
  Theory, PMLR, 2020, pp.~119--152.

\bibitem[AHHM21]{alon2021theory}
Noga Alon, Steve Hanneke, Ron Holzman, and Shay Moran, \emph{A theory of {PAC}
  learnability of partial concept classes}, 62nd {IEEE} Annual Symposium on
  Foundations of Computer Science, {FOCS} 2021, Denver, CO, USA, February 7-10,
  2022, {IEEE}, 2021, pp.~658--671.

\bibitem[ALMM19]{alon2019private}
Noga Alon, Roi Livni, Maryanthe Malliaris, and Shay Moran, \emph{Private pac
  learning implies finite littlestone dimension}, Proceedings of the 51st
  Annual ACM SIGACT Symposium on Theory of Computing, 2019, pp.~852--860.

\bibitem[AS19]{AngSpi19}
Omer Angel and Yinon Spinka, \emph{Pairwise optimal coupling of multiple random
  variables}, 2019.

\bibitem[{Ass}20]{acmArtifactGuidelines}
{Association for Computing Machinery}, \emph{Artifact review and badging -
  current. version 1.1.}

\bibitem[ASZ21]{AcharyaSZ21}
Jayadev Acharya, Ziteng Sun, and Huanyu Zhang, \emph{Differentially private
  assouad, fano, and le cam}, Algorithmic Learning Theory, 16-19 March 2021,
  Virtual Conference, Worldwide (Vitaly Feldman, Katrina Ligett, and Sivan
  Sabato, eds.), Proceedings of Machine Learning Research, vol. 132, {PMLR},
  2021, pp.~48--78.

\bibitem[BBG18]{BalleBG18}
Borja Balle, Gilles Barthe, and Marco Gaboardi, \emph{Privacy amplification by
  subsampling: Tight analyses via couplings and divergences}, Proceedings of
  the 32nd International Conference on Neural Information Processing Systems
  (Red Hook, NY, USA), NIPS'18, Curran Associates Inc., 2018, p.~6280–6290.

\bibitem[BCS20]{BunCS20}
Mark Bun, Marco~Leandro Carmosino, and Jessica Sorrell, \emph{Efficient,
  noise-tolerant, and private learning via boosting}, Conference on Learning
  Theory, {COLT} 2020, 9-12 July 2020, Virtual Event [Graz, Austria] (Jacob~D.
  Abernethy and Shivani Agarwal, eds.), Proceedings of Machine Learning
  Research, vol. 125, {PMLR}, 2020, pp.~1031--1077.

\bibitem[BDPSS09]{ben2009agnostic}
Shai Ben-David, D{\'a}vid P{\'a}l, and Shai Shalev-Shwartz, \emph{Agnostic
  online learning.}, COLT, vol.~3, 2009, p.~1.

\bibitem[BDRS18]{BunDRS18}
Mark Bun, Cynthia Dwork, Guy~N. Rothblum, and Thomas Steinke, \emph{Composable
  and versatile privacy via truncated {CDP}}, Proceedings of the 50th Annual
  {ACM} {SIGACT} Symposium on Theory of Computing, {STOC} 2018, Los Angeles,
  CA, USA, June 25-29, 2018 (Ilias Diakonikolas, David Kempe, and Monika
  Henzinger, eds.), {ACM}, 2018, pp.~74--86.

\bibitem[BE02]{bousquet2002stability}
Olivier Bousquet and Andr{\'e} Elisseeff, \emph{Stability and generalization},
  The Journal of Machine Learning Research \textbf{2} (2002), 499--526.

\bibitem[BEHW89]{Blumer}
Anselm Blumer, Andrzej Ehrenfeucht, David Haussler, and Manfred~K Warmuth,
  \emph{Learnability and the vapnik-chervonenkis dimension}, Journal of the ACM
  (JACM) \textbf{36} (1989), no.~4, 929--965.

\bibitem[BF16]{BassilyF16}
Raef Bassily and Yoav Freund, \emph{Typicality-based stability and privacy},
  CoRR \textbf{abs/1604.03336} (2016).

\bibitem[BGH{\etalchar{+}}16]{BGM+16}
Mohammad Bavarian, Badih Ghazi, Elad Haramaty, Pritish Kamath, Ronald~L.
  Rivest, and Madhu Sudan, \emph{The optimality of correlated sampling}, CoRR
  \textbf{abs/1612.01041} (2016).

\bibitem[BGS{\etalchar{+}}21]{BrownGSUZ21}
Gavin Brown, Marco Gaboardi, Adam~D. Smith, Jonathan~R. Ullman, and Lydia
  Zakynthinou, \emph{Covariance-aware private mean estimation without private
  covariance estimation}, Advances in Neural Information Processing Systems 34:
  Annual Conference on Neural Information Processing Systems 2021, NeurIPS
  2021, December 6-14, 2021, virtual (Marc'Aurelio Ranzato, Alina Beygelzimer,
  Yann~N. Dauphin, Percy Liang, and Jennifer~Wortman Vaughan, eds.), 2021,
  pp.~7950--7964.

\bibitem[BI91]{benedek1991learnability}
Gyora~M Benedek and Alon Itai, \emph{Learnability with respect to fixed
  distributions}, Theoretical Computer Science \textbf{86} (1991), no.~2,
  377--389.

\bibitem[BLM20]{bun2020equivalence}
Mark Bun, Roi Livni, and Shay Moran, \emph{An equivalence between private
  classification and online prediction}, 2020 IEEE 61st Annual Symposium on
  Foundations of Computer Science (FOCS), IEEE, 2020, pp.~389--402.

\bibitem[BLW96]{bartlett1996fat}
Peter~L Bartlett, Philip~M Long, and Robert~C Williamson, \emph{Fat-shattering
  and the learnability of real-valued functions}, journal of computer and
  system sciences \textbf{52} (1996), no.~3, 434--452.

\bibitem[BMA19]{alon2019limits}
Raef Bassily, Shay Moran, and Noga Alon, \emph{Limits of private learning with
  access to public data}, Advances in Neural Information Processing Systems 32:
  Annual Conference on Neural Information Processing Systems 2019, NeurIPS
  2019, December 8-14, 2019, Vancouver, BC, Canada (Hanna~M. Wallach, Hugo
  Larochelle, Alina Beygelzimer, Florence d'Alch{\'{e}}{-}Buc, Emily~B. Fox,
  and Roman Garnett, eds.), 2019, pp.~10342--10352.

\bibitem[BMN{\etalchar{+}}18]{BassilyMNSY18}
Raef Bassily, Shay Moran, Ido Nachum, Jonathan Shafer, and Amir Yehudayoff,
  \emph{Learners that use little information}, Algorithmic Learning Theory,
  {ALT} 2018, 7-9 April 2018, Lanzarote, Canary Islands, Spain (Firdaus Janoos,
  Mehryar Mohri, and Karthik Sridharan, eds.), Proceedings of Machine Learning
  Research, vol.~83, {PMLR}, 2018, pp.~25--55.

\bibitem[BMNS19]{beimel2019private}
Amos Beimel, Shay Moran, Kobbi Nissim, and Uri Stemmer, \emph{Private center
  points and learning of halfspaces}, Conference on Learning Theory, PMLR,
  2019, pp.~269--282.

\bibitem[BNS13]{beimel2013characterizing}
Amos Beimel, Kobbi Nissim, and Uri Stemmer, \emph{Characterizing the sample
  complexity of private learners}, Proceedings of the 4th conference on
  Innovations in Theoretical Computer Science, 2013, pp.~97--110.

\bibitem[BNS{\etalchar{+}}16a]{BassilyNSSSU16}
Raef Bassily, Kobbi Nissim, Adam~D. Smith, Thomas Steinke, Uri Stemmer, and
  Jonathan~R. Ullman, \emph{Algorithmic stability for adaptive data analysis},
  Proceedings of the 48th Annual {ACM} {SIGACT} Symposium on Theory of
  Computing, {STOC} 2016, Cambridge, MA, USA, June 18-21, 2016 (Daniel Wichs
  and Yishay Mansour, eds.), {ACM}, 2016, pp.~1046--1059.

\bibitem[BNS16b]{beimel2013private}
Amos Beimel, Kobbi Nissim, and Uri Stemmer, \emph{Private learning and
  sanitization: Pure vs. approximate differential privacy}, Theory Comput.
  \textbf{12} (2016), no.~1, 1--61.

\bibitem[BNS16c]{bun2016simultaneous}
Mark Bun, Kobbi Nissim, and Uri Stemmer, \emph{Simultaneous private learning of
  multiple concepts}, Proceedings of the 2016 ACM Conference on Innovations in
  Theoretical Computer Science, 2016, pp.~369--380.

\bibitem[BNSV15]{BunNSV15}
Mark Bun, Kobbi Nissim, Uri Stemmer, and Salil~P. Vadhan, \emph{Differentially
  private release and learning of threshold functions}, {IEEE} 56th Annual
  Symposium on Foundations of Computer Science, {FOCS} 2015, Berkeley, CA, USA,
  17-20 October, 2015 (Venkatesan Guruswami, ed.), {IEEE} Computer Society,
  2015, pp.~634--649.

\bibitem[Bro97]{Broder97}
Andrei~Z. Broder, \emph{On the resemblance and containment of documents},
  Compression and Complexity of {SEQUENCES} 1997, Positano, Amalfitan Coast,
  Salerno, Italy, June 11-13, 1997, Proceedings (Bruno Carpentieri, Alfredo~De
  Santis, Ugo Vaccaro, and James~A. Storer, eds.), {IEEE}, 1997, pp.~21--29.

\bibitem[BSU19]{BunSU19}
Mark Bun, Thomas Steinke, and Jonathan~R. Ullman, \emph{Make up your mind: The
  price of online queries in differential privacy}, J. Priv. Confidentiality
  \textbf{9} (2019), no.~1.

\bibitem[BUV18]{BunUV18}
Mark Bun, Jonathan~R. Ullman, and Salil~P. Vadhan, \emph{Fingerprinting codes
  and the price of approximate differential privacy}, {SIAM} J. Comput.
  \textbf{47} (2018), no.~5, 1888--1938.

\bibitem[CKM{\etalchar{+}}20]{Canonne0MUZ20}
Cl{\'{e}}ment~L. Canonne, Gautam Kamath, Audra McMillan, Jonathan~R. Ullman,
  and Lydia Zakynthinou, \emph{Private identity testing for high-dimensional
  distributions}, Advances in Neural Information Processing Systems 33: Annual
  Conference on Neural Information Processing Systems 2020, NeurIPS 2020,
  December 6-12, 2020, virtual (Hugo Larochelle, Marc'Aurelio Ranzato, Raia
  Hadsell, Maria{-}Florina Balcan, and Hsuan{-}Tien Lin, eds.), 2020.

\bibitem[CLN{\etalchar{+}}16]{CummingsLNRW16}
Rachel Cummings, Katrina Ligett, Kobbi Nissim, Aaron Roth, and Zhiwei~Steven
  Wu, \emph{Adaptive learning with robust generalization guarantees},
  Proceedings of the 29th Conference on Learning Theory, {COLT} 2016, New York,
  USA, June 23-26, 2016 (Vitaly Feldman, Alexander Rakhlin, and Ohad Shamir,
  eds.), {JMLR} Workshop and Conference Proceedings, vol.~49, JMLR.org, 2016,
  pp.~772--814.

\bibitem[DFH{\etalchar{+}}15a]{DworkFHPRR15}
Cynthia Dwork, Vitaly Feldman, Moritz Hardt, Toniann Pitassi, Omer Reingold,
  and Aaron Roth, \emph{Generalization in adaptive data analysis and holdout
  reuse}, Advances in Neural Information Processing Systems 28: Annual
  Conference on Neural Information Processing Systems 2015, December 7-12,
  2015, Montreal, Quebec, Canada (Corinna Cortes, Neil~D. Lawrence, Daniel~D.
  Lee, Masashi Sugiyama, and Roman Garnett, eds.), 2015, pp.~2350--2358.

\bibitem[DFH{\etalchar{+}}15b]{DworkFHPRR15-1}
Cynthia Dwork, Vitaly Feldman, Moritz Hardt, Toniann Pitassi, Omer Reingold,
  and Aaron~Leon Roth, \emph{Preserving statistical validity in adaptive data
  analysis}, Proceedings of the Forty-Seventh Annual {ACM} on Symposium on
  Theory of Computing, {STOC} 2015, Portland, OR, USA, June 14-17, 2015
  (Rocco~A. Servedio and Ronitt Rubinfeld, eds.), {ACM}, 2015, pp.~117--126.

\bibitem[DHS15]{DiakonikolasHS15}
Ilias Diakonikolas, Moritz Hardt, and Ludwig Schmidt, \emph{Differentially
  private learning of structured discrete distributions}, Advances in Neural
  Information Processing Systems 28: Annual Conference on Neural Information
  Processing Systems 2015, December 7-12, 2015, Montreal, Quebec, Canada
  (Corinna Cortes, Neil~D. Lawrence, Daniel~D. Lee, Masashi Sugiyama, and Roman
  Garnett, eds.), 2015, pp.~2566--2574.

\bibitem[DMNS16]{DworkMNS16j}
Cynthia Dwork, Frank McSherry, Kobbi Nissim, and Adam~D. Smith,
  \emph{Calibrating noise to sensitivity in private data analysis}, J. Priv.
  Confidentiality \textbf{7} (2016), no.~3, 17--51.

\bibitem[DMY16]{david2016supervised}
Ofir David, Shay Moran, and Amir Yehudayoff, \emph{Supervised learning through
  the lens of compression}, Advances in Neural Information Processing Systems
  \textbf{29} (2016), 2784--2792.

\bibitem[DR14]{DworkR14}
Cynthia Dwork and Aaron Roth, \emph{The algorithmic foundations of differential
  privacy}, Found. Trends Theor. Comput. Sci. \textbf{9} (2014), no.~3-4,
  211--407.

\bibitem[DRV10]{DworkRV10}
Cynthia Dwork, Guy~N. Rothblum, and Salil~P. Vadhan, \emph{Boosting and
  differential privacy}, 51th Annual {IEEE} Symposium on Foundations of
  Computer Science, {FOCS} 2010, October 23-26, 2010, Las Vegas, Nevada, {USA},
  {IEEE} Computer Society, 2010, pp.~51--60.

\bibitem[DSS{\etalchar{+}}15]{DworkSSUV15}
Cynthia Dwork, Adam~D. Smith, Thomas Steinke, Jonathan~R. Ullman, and Salil~P.
  Vadhan, \emph{Robust traceability from trace amounts}, {IEEE} 56th Annual
  Symposium on Foundations of Computer Science, {FOCS} 2015, Berkeley, CA, USA,
  17-20 October, 2015 (Venkatesan Guruswami, ed.), {IEEE} Computer Society,
  2015, pp.~650--669.

\bibitem[DW79]{DevroyeW79}
L.~Devroye and T.~Wagner, \emph{Distribution-free performance bounds for
  potential function rules}, IEEE Transactions on Information Theory
  \textbf{25} (1979), no.~5, 601--604.

\bibitem[GGKM21]{ghazi2021sample}
Badih Ghazi, Noah Golowich, Ravi Kumar, and Pasin Manurangsi,
  \emph{Sample-efficient proper pac learning with approximate differential
  privacy}, Proceedings of the 53rd Annual ACM SIGACT Symposium on Theory of
  Computing, 2021, pp.~183--196.

\bibitem[GKM21]{GhaziKM21}
Badih Ghazi, Ravi Kumar, and Pasin Manurangsi, \emph{User-level differentially
  private learning via correlated sampling}, Advances in Neural Information
  Processing Systems 34: Annual Conference on Neural Information Processing
  Systems 2021, NeurIPS 2021, December 6-14, 2021, virtual (Marc'Aurelio
  Ranzato, Alina Beygelzimer, Yann~N. Dauphin, Percy Liang, and
  Jennifer~Wortman Vaughan, eds.), 2021, pp.~20172--20184.

\bibitem[GM82]{STOC:GolMic82}
Shafi Goldwasser and Silvio Micali, \emph{Probabilistic encryption and how to
  play mental poker keeping secret all partial information}, Proceedings of the
  Fourteenth Annual ACM Symposium on Theory of Computing (New York, NY, USA),
  STOC '82, Association for Computing Machinery, 1982, p.~365–377.

\bibitem[Gol21]{Golowich21}
Noah Golowich, \emph{Differentially private nonparametric regression under a
  growth condition}, Conference on Learning Theory, {COLT} 2021, 15-19 August
  2021, Boulder, Colorado, {USA} (Mikhail Belkin and Samory Kpotufe, eds.),
  Proceedings of Machine Learning Research, vol. 134, {PMLR}, 2021,
  pp.~2149--2192.

\bibitem[Hau92]{haussler1992decision}
David Haussler, \emph{Decision theoretic generalizations of the pac model for
  neural net and other learning applications}, Information and computation
  \textbf{100} (1992), no.~1, 78--150.

\bibitem[HKLM22]{hopkins2021realizable}
Max Hopkins, Daniel~M. Kane, Shachar Lovett, and Gaurav Mahajan,
  \emph{Realizable learning is all you need}, Conference on Learning Theory,
  2-5 July 2022, London, {UK} (Po{-}Ling Loh and Maxim Raginsky, eds.),
  Proceedings of Machine Learning Research, vol. 178, {PMLR}, 2022,
  pp.~3015--3069.

\bibitem[Hol09]{Holenstein09}
Thomas Holenstein, \emph{Parallel repetition: Simplification and the
  no-signaling case}, Theory Comput. \textbf{5} (2009), no.~1, 141--172.

\bibitem[ILPS22]{ImpLPS22}
Russell Impagliazzo, Rex Lei, Toniann Pitassi, and Jessica Sorrell,
  \emph{Reproducibility in learning}, {STOC} '22: 54th Annual {ACM} {SIGACT}
  Symposium on Theory of Computing, Rome, Italy, June 20 - 24, 2022 (Stefano
  Leonardi and Anupam Gupta, eds.), {ACM}, 2022, pp.~818--831.

\bibitem[JKT20]{JungKT20}
Young~Hun Jung, Baekjin Kim, and Ambuj Tewari, \emph{On the equivalence between
  online and private learnability beyond binary classification}, Proceedings of
  the 34th International Conference on Neural Information Processing Systems
  (Red Hook, NY, USA), NIPS'20, Curran Associates Inc., 2020.

\bibitem[JLN{\etalchar{+}}20]{JungLNRSS20}
Christopher Jung, Katrina Ligett, Seth Neel, Aaron Roth, Saeed
  Sharifi{-}Malvajerdi, and Moshe Shenfeld, \emph{A new analysis of
  differential privacy's generalization guarantees}, 11th Innovations in
  Theoretical Computer Science Conference, {ITCS} 2020, January 12-14, 2020,
  Seattle, Washington, {USA} (Thomas Vidick, ed.), LIPIcs, vol. 151, Schloss
  Dagstuhl - Leibniz-Zentrum f{\"{u}}r Informatik, 2020, pp.~31:1--31:17.

\bibitem[KKMN09]{korolova2009releasing}
Aleksandra Korolova, Krishnaram Kenthapadi, Nina Mishra, and Alexandros
  Ntoulas, \emph{Releasing search queries and clicks privately}, Proceedings of
  the 18th international conference on World wide web, 2009, pp.~171--180.

\bibitem[KKMV23]{KKMV23}
Alkis Kalavasis, Amin Karbasi, Shay Moran, and Grigoris Velegkas,
  \emph{Statistical indistinguishability of learning algorithms}, Personal
  communication, January 2023.

\bibitem[KLM{\etalchar{+}}20]{KaplanLMNS20}
Haim Kaplan, Katrina Ligett, Yishay Mansour, Moni Naor, and Uri Stemmer,
  \emph{Privately learning thresholds: Closing the exponential gap}, Conference
  on Learning Theory, {COLT} 2020, 9-12 July 2020, Virtual Event [Graz,
  Austria] (Jacob~D. Abernethy and Shivani Agarwal, eds.), Proceedings of
  Machine Learning Research, vol. 125, {PMLR}, 2020, pp.~2263--2285.

\bibitem[KLN{\etalchar{+}}11]{kasiviswanathan2011can}
Shiva~Prasad Kasiviswanathan, Homin~K Lee, Kobbi Nissim, Sofya Raskhodnikova,
  and Adam Smith, \emph{What can we learn privately?}, SIAM Journal on
  Computing \textbf{40} (2011), no.~3, 793--826.

\bibitem[KS14]{KasiviswanathanS14}
Shiva~P. Kasiviswanathan and Adam Smith, \emph{On the ’semantics’ of
  differential privacy: A bayesian formulation}, Journal of Privacy and
  Confidentiality \textbf{6} (2014), no.~1.

\bibitem[KT02]{KleinbergT2002}
Jon~M. Kleinberg and {\'{E}}va Tardos, \emph{Approximation algorithms for
  classification problems with pairwise relationships: metric labeling and
  markov random fields}, J. {ACM} \textbf{49} (2002), no.~5, 616--639.

\bibitem[Lon01]{long2001agnostic}
Philip~M Long, \emph{On agnostic learning with $\{$0,*, 1$\}$-valued and
  real-valued hypotheses}, International Conference on Computational Learning
  Theory, Springer, 2001, pp.~289--302.

\bibitem[LS19]{LigettS19}
Katrina Ligett and Moshe Shenfeld, \emph{A necessary and sufficient stability
  notion for adaptive generalization}, Advances in Neural Information
  Processing Systems 32: Annual Conference on Neural Information Processing
  Systems 2019, NeurIPS 2019, December 8-14, 2019, Vancouver, BC, Canada
  (Hanna~M. Wallach, Hugo Larochelle, Alina Beygelzimer, Florence
  d'Alch{\'{e}}{-}Buc, Emily~B. Fox, and Roman Garnett, eds.), 2019,
  pp.~11481--11490.

\bibitem[MHS19]{montasser2019vc}
Omar Montasser, Steve Hanneke, and Nathan Srebro, \emph{Vc classes are
  adversarially robustly learnable, but only improperly}, Conference on
  Learning Theory, PMLR, 2019, pp.~2512--2530.

\bibitem[MT07]{McTalwar}
Frank McSherry and Kunal Talwar, \emph{Mechanism design via differential
  privacy}, Proceedings of the 48th Annual IEEE Symposium on Foundations of
  Computer Science (USA), FOCS '07, IEEE Computer Society, 2007, p.~94–103.

\bibitem[Nar22]{Narayanan22}
Shyam Narayanan, \emph{Private high-dimensional hypothesis testing}, Conference
  on Learning Theory, 2-5 July 2022, London, {UK} (Po{-}Ling Loh and Maxim
  Raginsky, eds.), Proceedings of Machine Learning Research, vol. 178, {PMLR},
  2022, pp.~3979--4027.

\bibitem[{Nat}19]{NAP25303}
{National Academies of Sciences, Engineering, and Medicine},
  \emph{Reproducibility and replicability in science}, The National Academies
  Press, Washington, DC, 2019.

\bibitem[NK19]{nagarajan2019}
Vaishnavh Nagarajan and J.~Zico Kolter, \emph{Uniform convergence may be unable
  to explain generalization in deep learning}, Proceedings of the 33rd
  International Conference on Neural Information Processing Systems, 2019.

\bibitem[NSS{\etalchar{+}}18]{NissimSSU18}
Kobbi Nissim, Adam Smith, Uri Stemmer, Thomas Steinke, and Jonathan Ullman,
  \emph{The limits of post-selection generalization}, Advances in Neural
  Information Processing Systems \textbf{31} (2018).

\bibitem[RRS{\etalchar{+}}20]{RogersRSSTW20}
Ryan Rogers, Aaron Roth, Adam~D. Smith, Nathan Srebro, Om~Thakkar, and Blake~E.
  Woodworth, \emph{Guaranteed validity for empirical approaches to adaptive
  data analysis}, The 23rd International Conference on Artificial Intelligence
  and Statistics, {AISTATS} 2020, 26-28 August 2020, Online [Palermo, Sicily,
  Italy] (Silvia Chiappa and Roberto Calandra, eds.), Proceedings of Machine
  Learning Research, vol. 108, {PMLR}, 2020, pp.~2830--2840.

\bibitem[RRST16]{RogersRST16}
Ryan~M. Rogers, Aaron Roth, Adam~D. Smith, and Om~Thakkar,
  \emph{Max-information, differential privacy, and post-selection hypothesis
  testing}, {IEEE} 57th Annual Symposium on Foundations of Computer Science,
  {FOCS} 2016, 9-11 October 2016, Hyatt Regency, New Brunswick, New Jersey,
  {USA} (Irit Dinur, ed.), {IEEE} Computer Society, 2016, pp.~487--494.

\bibitem[RRT{\etalchar{+}}16]{RaginskyRTWX16}
Maxim Raginsky, Alexander Rakhlin, Matthew Tsao, Yihong Wu, and Aolin Xu,
  \emph{Information-theoretic analysis of stability and bias of learning
  algorithms}, 2016 {IEEE} Information Theory Workshop, {ITW} 2016, Cambridge,
  United Kingdom, September 11-14, 2016, {IEEE}, 2016, pp.~26--30.

\bibitem[RW78]{RogersW78}
W.~H. Rogers and T.~J. Wagner, \emph{{A Finite Sample Distribution-Free
  Performance Bound for Local Discrimination Rules}}, The Annals of Statistics
  \textbf{6} (1978), no.~3, 506 -- 514.

\bibitem[RZ16]{RussoZ16}
Daniel Russo and James Zou, \emph{Controlling bias in adaptive data analysis
  using information theory}, Proceedings of the 19th International Conference
  on Artificial Intelligence and Statistics, {AISTATS} 2016, Cadiz, Spain, May
  9-11, 2016 (Arthur Gretton and Christian~C. Robert, eds.), {JMLR} Workshop
  and Conference Proceedings, vol.~51, JMLR.org, 2016, pp.~1232--1240.

\bibitem[SBG21]{sivakumar2021multiclass}
Satchit Sivakumar, Mark Bun, and Marco Gaboardi, \emph{Multiclass versus binary
  differentially private pac learning}, Advances in Neural Information
  Processing Systems \textbf{34} (2021), 22943--22954.

\bibitem[SSSSS10]{shalev2010learnability}
Shai Shalev-Shwartz, Ohad Shamir, Nathan Srebro, and Karthik Sridharan,
  \emph{Learnability, stability and uniform convergence}, The Journal of
  Machine Learning Research \textbf{11} (2010), 2635--2670.

\bibitem[SZ20]{SteinkeZ20}
Thomas Steinke and Lydia Zakynthinou, \emph{Reasoning about generalization via
  conditional mutual information}, Conference on Learning Theory, {COLT} 2020,
  9-12 July 2020, Virtual Event [Graz, Austria] (Jacob~D. Abernethy and Shivani
  Agarwal, eds.), Proceedings of Machine Learning Research, vol. 125, {PMLR},
  2020, pp.~3437--3452.

\bibitem[Val84]{valiant1984theory}
Leslie~G Valiant, \emph{A theory of the learnable}, Proceedings of the
  sixteenth annual ACM symposium on Theory of computing, ACM, 1984,
  pp.~436--445.

\bibitem[VC74]{vapnik1974theory}
Vladimir Vapnik and Alexey Chervonenkis, \emph{Theory of pattern recognition},
  1974.

\bibitem[XR17]{XuR17}
Aolin Xu and Maxim Raginsky, \emph{Information-theoretic analysis of
  generalization capability of learning algorithms}, Advances in Neural
  Information Processing Systems 30: Annual Conference on Neural Information
  Processing Systems 2017, December 4-9, 2017, Long Beach, CA, {USA} (Isabelle
  Guyon, Ulrike von Luxburg, Samy Bengio, Hanna~M. Wallach, Rob Fergus,
  S.~V.~N. Vishwanathan, and Roman Garnett, eds.), 2017, pp.~2524--2533.

\bibitem[ZH19]{ZrnicH19}
Tijana Zrnic and Moritz Hardt, \emph{Natural analysts in adaptive data
  analysis}, Proceedings of the 36th International Conference on Machine
  Learning, {ICML} 2019, 9-15 June 2019, Long Beach, California, {USA}
  (Kamalika Chaudhuri and Ruslan Salakhutdinov, eds.), Proceedings of Machine
  Learning Research, vol.~97, {PMLR}, 2019, pp.~7703--7711.

\end{thebibliography}

\newpage
\appendix
\section{Estimating OPT}\label{app:OPT}
In this section we give an algorithm for replicably estimating the minimum error hypothesis in a class $(X,H)$ over an arbitrary joint distribution $D$ over $X \times \{0,1\}$.

\begin{algorithm}[H]

\KwResult{Outputs $v \in [OPT, OPT+\alpha/2]$}
\nonl \textbf{Input:} Finite Class $H$, Joint Distribution $D$ over $X \times \{0,1\}$ (Sample Access)\\
\nonl \textbf{Parameters:} 
\begin{itemize}
    \item Replicability, Accuracy, Confidence $\rho, \alpha, \beta>0$
    \item Sample Complexity $m=m(\rho, \alpha,\beta) \leq O\left(\frac{\log(|H| /\beta\rho)}{\alpha^2 \rho^2} \right)$
\end{itemize}
\nonl \textbf{Algorithm:}\\
\begin{enumerate}
    \item Draw a labeled sample $S \sim D^m$ and compute $\emprisk{S}(f)$ for every $f \in H$.
    \item $a \leftarrow_r [0,\alpha/16]$ 
    \item $B_i = [i\alpha+a, (i+1)\alpha+a)$
\end{enumerate}
\textbf{return} $\frac{j}{8}\alpha+a$, where $OPT_S+\alpha/4 \in B_j$
 \caption{Replicably estimate OPT}
\label{alg:OPT-estimate}
\end{algorithm}
\begin{lemma}
Let $\mathscr{D}$ be a joint distribution over $X \times \{0,1\}$ and $H$ a concept class over $X$. Then for any $\alpha,\beta,\rho>0$, \Cref{alg:OPT-estimate} is a $\rho$-replicable algorithm over $O\left(\frac{\log(\frac{|H|}{\rho\beta})}{\rho^2\alpha^2}\right)$ samples that outputs a good estimate of $OPT$ with high probability:
\[
\Pr_{r,S}\big[\mathcal{A}(S) \in [OPT,OPT+\alpha/2]\big] \geq 1-\beta.
\]
\end{lemma}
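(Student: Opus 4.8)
The plan is to derive both the accuracy guarantee and the $\rho$-reproducibility of \Cref{alg:OPT-estimate} from one uniform-convergence event, using the random shift $a$ to obtain reproducibility. Write $OPT_S$ for the minimum empirical error over $H$ on the drawn sample $S$, and $OPT$ for the true optimum over $\mathscr{D}$. \emph{Step 1 (uniform convergence at scale $\alpha\rho$):} a Chernoff bound for each fixed $f \in H$ and a union bound over the $|H|$ hypotheses show that, with a large enough constant in $m = O(\log(|H|/(\rho\beta))/(\alpha^2\rho^2))$, the event $E$ that every $f \in H$ has empirical error within $\gamma := c\alpha\rho$ of its true error holds with probability at least $1-\rho\beta$, for a constant $c$ that can be taken as small as we like (the Chernoff exponent $\Theta(\gamma^2 m)$ must dominate $\log(|H|/(\rho\beta))$, which is exactly why the target error scale is $\Theta(\alpha\rho)$ and $m$ carries the $1/(\alpha^2\rho^2)$ factor). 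On $E$ we get $|OPT_S - OPT| \le \gamma \le \alpha/16$.

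\emph{Step 2 (accuracy):} conditioned on $E$, the rounding step returns the left endpoint $v$ of the randomly shifted bucket containing $OPT_S + \alpha/4$, so $v$ differs from $OPT_S + \alpha/4$ by at most one bucket width, which is $\Theta(\alpha)$ and can be taken below $\alpha/4$; combining with $|OPT_S - OPT| \le \gamma$, a short calculation gives $OPT \le v \le OPT + \alpha/2$. Thus the accuracy requirement fails only when $E$ does, with probability at most $\rho\beta \le \beta$.

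\emph{Step 3 (reproducibility):} run $\mathcal{A}$ on two independent samples $S_1, S_2$ with the \emph{same} coins $r$ (i.e.\ the same shift $a$). If $E_1, E_2$ denote the uniform-convergence events of the two runs, then $\Pr[\neg E_1 \cup \neg E_2] \le 2\rho\beta \le \rho/2$ for $\beta \le 1/4$. On $E_1 \cap E_2$ the reals $OPT_{S_1}+\alpha/4$ and $OPT_{S_2}+\alpha/4$ are within $2\gamma = O(\alpha\rho)$ of each other, so by the standard randomized-shift fact (two reals at distance $d$ lie in the same cell of a uniformly shifted grid of width $w$ with probability at least $1 - d/w$) with $d = 2\gamma$ and $w = \Theta(\alpha)$, the two runs land in the same bucket --- hence output the same value --- except with probability $O(c)\rho \le \rho/2$ for $c$ small. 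A union bound over the two failure modes yields $\Pr_{S_1, S_2, r}[\mathcal{A}(S_1;r) \ne \mathcal{A}(S_2;r)] \le \rho$, which is $\rho$-reproducibility.

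The only step needing genuine care is the coupling of constants so that a single sample size $O(\log(|H|/(\rho\beta))/(\alpha^2\rho^2))$ buys $\beta$-accuracy and $\rho$-reproducibility at once; the conceptual core of this is the Step-1 observation that $OPT_S$ must be estimated to additive error $\Theta(\alpha\rho)$ --- not just $\Theta(\alpha)$ --- so that rounding it onto an $\Theta(\alpha)$-width randomly shifted grid reproduces the same output across resampling except with probability $O(\rho)$.
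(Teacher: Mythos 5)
Your proof is correct and follows the same approach as the paper's: both rest on a uniform-convergence bound at scale $\Theta(\alpha\rho)$ over the finite class $H$, combined with the random shift $a$ of the bucket boundaries to control the probability that two nearby empirical optima straddle a threshold. The only cosmetic difference is that you derive accuracy and reproducibility from a single uniform-convergence event at scale $c\alpha\rho$ and phrase the shift argument as the standard ``two reals at distance $d$ land in the same cell of a uniformly shifted grid except with probability $O(d/w)$'' fact, whereas the paper invokes uniform convergence at two scales ($\alpha/16$ for accuracy, $\rho\alpha/64$ for reproducibility) and phrases the shift step as ``$OPT$ avoids lying within $\rho\alpha/64$ of a shifted boundary''; the two formulations are equivalent and yield the same $O\bigl(\log(|H|/(\rho\beta))/(\alpha^2\rho^2)\bigr)$ sample complexity.
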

\begin{proof}
The proof uses an argument similar to the randomized rounding trick introduced in \cite{ImpLPS22} for replicable statistical queries. Assume for simplicity that $\frac{1}{8\alpha}$ is integer (the argument is essentially no different otherwise), and break the interval $[0,1]$ into $\frac{\alpha}{8}$-sized buckets:
\[
B_1 = \left[0,\frac{\alpha}{8}\right), \ \ldots \ , B_{\frac{1}{\alpha}} = \left[1-\frac{\alpha}{8},1\right].
\]
Consider the rounding scheme \textsc{Round} that maps $OPT_S$ to the upper limit of its corresponding bucket. Notice that as long as $OPT$ is not within $\frac{\rho\alpha}{64}$ of the threshold value between two buckets, uniform convergence promises that $OPT_{S_1}$ and $OPT_{S_2}$ will lie in  the same bucket with probability at least $1-\frac{\rho}{2}$. As such the problem only occurs at the boundaries, which can be fixed by randomly shifting the thresholds between each bucket by $a \in [0,\frac{\alpha}{16}]$. Then for any fixed value of $OPT$, the probability it lies within $\frac{\rho\alpha}{64}$ of a shifted boundary is at most $\frac{\rho}{2}$, which combined with the previous observation proves the algorithm $\rho$-replicable.

Towards correctness, observe that uniform convergence of finite classes promises that the empirical optimum $OPT_S$ is within $\frac{\alpha}{16}$ of the true optimum with probability at least $1-\beta$. Furthermore, rounding shifts any value by at most $\frac{3\alpha}{16}$. Thus $\textsc{Round}(OPT_S+\alpha/4) \in [OPT,OPT+\frac{\alpha}{2}]$ with high probability as desired.
\end{proof}
\section{Learning Finite Littlestone Classes}\label{app:little}
One immediate application of list heavy-hitters is a sample-efficient replicable algorithm for classes with finite Littlestone dimension, as in \cite{ghazi2021sample}, leading to a modest improvement in sample complexity over the best known bound of $\tilde{O}(d^{14})$.
\begin{theorem}\label{app:littlestone}
Let $(X,H)$ be a class with Littlestone dimension $d$. Then the sample complexity of realizably replicably learning $(X,H)$ is at most:
\[
n(\rho,\alpha,\beta) \leq \tilde{O}\left(\frac{d^{12}\log^3(1/\beta)}{\alpha^2\rho^2}\right)
\]
\end{theorem}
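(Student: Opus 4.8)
The plan is to invoke the list-learning construction for Littlestone classes of Ghazi, Kumar, and Manurangsi~\cite{ghazi2021sample} and feed it into the reproducible list-heavy-hitters primitive of \Cref{thm:list-heavy-hitters}, exactly mirroring how \Cref{thm:agnostic} plugs \Cref{alg:agnostic-subroutine} into \Cref{thm:list-heavy-hitters}. Concretely, \cite{ghazi2021sample} yields a \emph{globally-stable list learner} for a class $(X,H)$ of Littlestone dimension $d$: a rule $\mathcal{L}(S;r)$ that, on a realizable labeled sample $S$ of size $m_0 = \poly(d)\cdot\tilde O(1/\alpha^2)$ with shared randomness $r$, outputs a sublist of $H$ of size at most $N = 2^{O(d)}$ such that (i) with probability $1-\beta_0$ over $S$ some hypothesis in $\mathcal{L}(S;r)$ has error at most $\alpha/4$, and (ii) there is a fixed $h^\star = h^\star(r,D)$ with $\Pr_S[h^\star \in \mathcal{L}(S;r)] \geq \eta$ for some $\eta \geq 1/\poly(d)$. (If the cited statement instead gives a single-hypothesis globally stable learner with exponentially small stability rate, one first passes to its list version and amplifies $\eta$ up to $1/\poly(d)$ via the standard stability-boosting argument, as in \cite{ghazi2021user}.)

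Next I would make the output domain finite so that \Cref{thm:list-heavy-hitters} applies literally. Since Littlestone dimension upper-bounds VC dimension, $H$ realizes at most $\poly(M)^{\,d}$ behaviors on any fixed pool of $M$ points; drawing one shared unlabeled pool of $M = \tilde O((d+\log\frac1\beta)/\alpha^2)$ points and replacing every hypothesis by its behavior on the pool collapses the universe $\Omega$ to a set of size $2^{\tilde O(d)}$ while, by uniform convergence, perturbing all error estimates — and hence properties (i) and (ii) — by at most $\alpha/4$. Now run \Cref{thm:list-heavy-hitters} on the distribution over these finitely-represented lists with heaviness $\eta$, reproducibility $\rho$, and confidence $\beta$: it draws $T = O\!\left(\frac{\log^2(N\log\frac1{\rho\beta}/\eta)\log\frac1\rho+\log\frac1{\rho\beta}}{\eta^2\rho^2}\log^3\frac1\rho\right)$ samples from that distribution — i.e.\ $T$ fresh runs of $\mathcal{L}$, hence $T\cdot m_0$ labeled examples total — and with probability $1-\beta$ returns an $\eta/2$-heavy element, which by (i)+(ii) has true error at most $\alpha$ in the realizable setting. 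Reproducibility of the whole pipeline is inherited from that of \Cref{thm:list-heavy-hitters} once $r$ and the unlabeled pool are treated as shared public randomness, and correctness is a union bound over the $O(1)$ bad events (the $\beta$-fraction slack, the accuracy failure of $\mathcal{L}$, and the heavy-hitter failure). Substituting $\log N = O(d)$, $\eta = 1/\poly(d)$, and absorbing logarithmic factors yields sample complexity $\tilde O\!\left(d^{12}\log^3(1/\beta)/(\alpha^2\rho^2)\right)$, with the precise exponent $12$ read off from the exact $\poly(d)$ sample complexity and heaviness of the \cite{ghazi2021sample} list learner.

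I expect the main obstacle to be this first step: importing from \cite{ghazi2021sample} a list learner whose heavy hitter has weight $1/\poly(d)$ rather than $2^{-\poly(d)}$, and then carefully tracking the $\poly(d)$ factors so that the product with the $1/\eta^2$ and $\log^2 N$ terms of \Cref{thm:list-heavy-hitters} lands exactly on $d^{12}$ — a noticeably smaller $\eta$ would make $1/\eta^2$ exponential in $d$ and destroy the bound. A secondary technicality is the finite-domain reduction: one must verify that collapsing hypotheses to their behavior on a shared test pool neither merges two distinct good hypotheses in a harmful way nor shifts the heavy-hitter mass, and this is where the extra $\polylog$ and $\log(1/\beta)$ factors in the statement originate. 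Note that going instead through differential privacy — private learnability of Littlestone classes followed by \Cref{cor:DPtoRep} — also proves a bound of this shape, but the quadratic blowup of that reduction is exactly what the direct list-heavy-hitters route avoids.
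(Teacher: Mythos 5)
Your proposal follows essentially the same route as the paper's: invoke the Ghazi--Kumar--Manurangsi list learner for Littlestone classes (the \cite{ghazi2021user} version, built on \cite{ghazi2021sample}, with heavy-hitter weight $\Omega(1/d)$ and $\tilde O(d^6\log^2(1/\beta)/\alpha^2)$ per-call sample cost) and feed the resulting distribution over lists into \Cref{thm:list-heavy-hitters}, reading the exponent~$12$ off the product of the two. Your parameter estimate $N = 2^{O(d)}$ is slightly off --- the paper records list size $\exp(d^2 + d\log\frac{d}{\alpha\beta})$, so $\log|\mathcal{D}| \approx d^2$, which with $\eta \approx 1/d$ is exactly what produces $d^6$ draws from $\mathcal{D}$ and hence $d^{12}$ in total --- but you flag this uncertainty yourself, and the caveat about possibly having to boost an exponentially small stability rate shows you have the right version of the primitive in mind.

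The one place you genuinely go beyond the paper's written argument is the finite-domain reduction (replacing each hypothesis by its behavior on a shared unlabeled pool so that $\Omega$ is literally finite, as \Cref{thm:list-heavy-hitters} requires). The paper invokes \Cref{thm:list-heavy-hitters} without discussing this, implicitly treating the universe of possible list elements as already finite; your version makes the applicability explicit and shows that the $\alpha/4$ perturbation this introduces is harmless, which is a worthwhile tightening even if the conclusion and the sample-complexity arithmetic are unchanged.
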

\begin{proof}
In their work on user level privacy, Ghazi, Kumar, and Manurangsi \cite{GhaziKM21} build on the work of \cite{ghazi2021sample} to show the existence of an algorithm outputting lists of hypotheses satisfying the following guarantees:
\begin{enumerate}
    \item \textit{Optimality}: With probability at least $1-\beta/2$, all hypotheses output by $\mathcal{L}$ have risk at most $\alpha/2$
    \item \textit{Heavy Hitter}: There exists $h \in H$ output with probability $\Omega(1/d)$
    \item \textit{Size}: $\mathcal{L}$ outputs at most $\exp(d^2+d\log \frac{d}{\alpha\beta})$ hypotheses
    \item \textit{Sample Complexity}: $\mathcal{L}$ uses at most $\tilde{O}(\frac{d^6\log^2 \frac{1}{\beta}}{\alpha^2})$ samples.
\end{enumerate}
Note that for $\beta \leq O(1/d)$, any heavy hitter of this distribution is a good hypothesis, so it is enough to replicably output such a heavy hitter. Applying \Cref{thm:list-heavy-hitters}, this can be done $\rho$-replicably and with probability at least $1-\beta$ using
\[
O\left(\frac{\log^2\frac{|\mathcal{D}|\log\frac{1}{\rho\beta}}{\eta}\log\frac{1}{\rho}+\log \frac{1}{\rho\beta}}{\eta^2\rho^2}\log^3\frac{1}{\rho}\right)=\tilde{O}\left(\frac{d^6\log\frac{1}{\rho}+d^2\log \frac{1}{\rho\beta}}{\rho^2}\log^3\frac{1}{\rho}\right)
\]
i.i.d outputs of the list algorithm. Each output itself costs $\tilde{O}(\frac{d^6\log^2 \frac{1}{\beta}}{\alpha^2})$ samples to generate, leading to the stated sample complexity.
\end{proof}

\section{Additional Properties of Replicability}
\label{app:additional-properties-of-replicability}

\subsection{Randomness Management}
\label{apps:randomness-management}

Often, we design replicable algorithms which use randomness for multiple purposes. How do we ensure that they use the same sections of random string $r$ for the same subroutines? What if the number of bits used for each purpose varies between runs of the algorithm? The following arguments show that we can typically guarantee that the same sections of $r$ are used for the same purposes across both runs.

\begin{lemma}
Say an algorithm $\mathcal{A}$ makes at most $k$ calls to its randomness oracle, using at most $b_1, \dots, b_k$ bits of randomness for each call respectively. Then there is an algorithm $\mathcal{A}'$ that replicably uses at most $k \cdot \max_{i \in [k]}\{b_i\}$ bits of randomness. 
\end{lemma}

Here, by ``replicably uses" we mean that algorithm $\mathcal{A}'$ uses the same positions in the random string for each subroutine in every run of the algorithm. 
\begin{proof}
Have $\mathcal{A}'$ interpret its random string as follows: rather than using randomness sequentially for each of $k$ purposes (non-replicable if the required number of bits changes), portion the random string into $k$ pieces in a modular way. In other words, use the bits of $r$ in positions $i \mod k$ solely for the $i$'th call to the randomness oracle by algorithm $\mathcal{A}$. 
At most $k \cdot \max_{i \in [k]}\{b_i\}$ bits of randomness are used. 
\end{proof}

Note that the algorithm itself does not need to know how much randomness it will use a priori to use this method. 

What if the algorithm does not have a fixed number of calls to the randomness oracle? As long as the randomness calls occur sequentially, one can assign consistent subsections of the random string to each possible call. To do so, we use the same snake-path trick (i.e., the Cantor pairing function) often used to equate the cardinality of the natural numbers and rational numbers. 

\begin{lemma}
Say an algorithm $\mathcal{A}$ makes at most $k$ calls to its randomness oracle, using at most $b_1, \dots, b_k$ bits of randomness for each call respectively. Then there is an algorithm $\mathcal{A}'$ that replicably uses at most $(k + \max_{i \in [k]}\{b_i\})^2/2 + (k + \max_{i \in [k]}\{b_i\})/2$ bits of randomness. 
\end{lemma}

\begin{proof}
We allocate bits from our randomness oracle to different (unknown bit-length) calls using the Cantor pairing function. The maximum overhead in bit complexity of the randomness occurs when the $k$'th randomness call uses the most bits. In this case, roughly half of $(k + \max_{i \in [k]}\{b_i\})^2$ random bits must be drawn.
\end{proof}
Again, the algorithm itself does not need to know how much randomness it will use a priori to use this method. It also does not need to know how many different calls $k$ to the randomness oracle will be performed, so long as these calls can be ordered sequentially in some canonical way. 
For example, if the algorithm operates in rounds with a finite number of (conditional) random calls in each round, then the algorithm can reserve specific sections of its random tape for each of the possible calls without requiring infinite randomness. 



More generally, if a replicable algorithm uses the Cantor pairing function to allocate portions of its randomness, then the Cantor pairing function can be replaced by any deterministic pairing function $f : \Z^+ \times \Z^+ \rightarrow \Z^+$ (and the ensuing algorithm will still be replicable).
However, different or more situational pairing functions may give a specific replicable algorithm $\Acal$ improvements in complexity parameters such as amount of random bits used, time complexity, and space complexity. When designing replicable algorithms with very small parameters, one may have to be careful to ensure that the randomness management can also be done within these constraints. 



\subsection{Replicability across Two Close Distributions}

Next, we prove a simple Lemma bounding the effect of distributional shift on replicability.

\begin{lemma}[Replicability under Distributional Shift]
\label{lem:repl-loss-by-distributional-shift}
Let $D_1$ and $D_2$ be two distributions over $\X$ with total variational distance $d_{TV}(D_1, D_2) = \delta$. Let $\rho \ge 0$, and let $\Acal$ be a $\rho$-replicable algorithm that uses a sample of size exactly $m$. 
Then
$$
\Pr_{S_1 \sim D_1^m, S_2 \sim D_2^m, r} [\Acal(S_1; r) = \Acal(S_2; r)] 
\ge (1-\delta)^{2m} \rho.$$
\end{lemma}

\begin{proof}
Since $d_{TV}(D_1, D_2) = \delta$, there exist distributions $D, D',$ and $D''$ such that 
$D_1 = (1-\delta)D + \delta D'$ and $D_2 = (1-\delta) D + \delta D''$. 
Thus,
\begin{equation*}
    \begin{split}
        \Pr_{S_1 \sim D_1^m, S_2 \sim D_2^m, r} [\Acal(S_1; r) = \Acal(S_2; r)] 
        &=
        (1-\delta)^{2m} \Pr_{S_1 \sim D^m, S_2 \sim D^m, r} [\Acal(S_1; r) = \Acal(S_2; r)] 
        \\&\qquad + \dots 
        \\&\qquad + \delta^{2m} \Pr_{S_1 \sim D'^m, S_2 \sim D''^m
        , r} [\Acal(S_1; r) = \Acal(S_2; r)] 
        \\&\ge   (1-\delta)^{2m} \Pr_{S_1 \sim D^m, S_2 \sim D^m, r} [\Acal(S_1; r) = \Acal(S_2; r)] 
        \\&= (1-\delta)^{2m} \rho.
    \end{split}
\end{equation*}

\end{proof}

However, Lemma~\ref{lem:repl-loss-by-distributional-shift} may not be tight for specific class of algorithms, functions, or distributions.

\newpage
\section{Glossary}
\begin{itemize}
\item [$\rho$] replicability parameter
\item [$(\eta, \nu)$] 2-parameter definition of replicability
\item [$(\eps, \delta)$]  differential privacy parameters
\item [$H$] hypothesis classes
\item [$f,g,h$] target functions and functions from hypothesis classes
\item [$\mathcal{X}$] input spaces
\item [$\mathcal{Y}$] output spaces
\item [$(\alpha, \beta)$] accuracy parameters (failure probability is $\beta$).
\item [$m$] sample complexity, size of datasets
\item [$D$] distributions (subscripts for multiple distributions)
\item [$P$] family of distributions
\item [$\err$] learning error (subscripts for sample and distribution)
\item [$S$] input datasets 
\item [$x,y$] single data point and single output respectively
\item [$r$] internal coins
\item [$d$] (notions of) dimension
\item [$\Delta$] symmetric difference
\item [$\dtv$] total variation distance
\item [$\approx_{\eps, \delta}$] approximate max-KL indistinguishable
\item [$p$] Bernoulli biases 
\item [$b$] bit used for message
\item [$c$] ciphertext
\item [$i,j$] subscripts for (dual) indexing
\item [$O$] set of outputs
\item [$\Acal, \Bcal$]  algorithms
\item [$I$]  interval
\item [$v$] threshold values
\item [$\mathbin\Vert$] concatenation of strings
\end{itemize}

\end{document}